\renewcommand\theHALG@line{\thealgorithm.\arabic{ALG@line}}
\newcommand{\MAB}{\textnormal{MAB}}
\newcommand{\E}{\mathbb{E}} 
\newcommand{\R}{R} 
\newcommand{\His}{H}
\newcommand{\ro}{\pmb{P}}
\newcommand{\delt}{\pmb{\delta}}
\newcommand{\Mmu}{\pmb{\mu}}
\newcommand{\roSub}{P}
\newcommand{\MmuSub}{\mu}
\newcommand{\p}{\pmb{p}}
\newcommand{\q}{\pmb{q}}
\newcommand{\cv}{\pmb{c}}
\newcommand{\uv}{\pmb{u}}
\newcommand{\call}{\mathcal{L}}
\newcommand{\calr}{\mathcal{R}}
\newcommand{\calu}{\mathcal{U}}
\newcommand{\err}{err}
\newcommand{\OPT}{\textnormal{OPT}}
\newcommand{\MER}{\textnormal{MER}}
\newcommand{\DP}{\textnormal{DP}} 
\newcommand{\DPS}{\textnormal{DP}^\star} 
\newcommand{\ORS}{\textnormal{LCB}^\star}
\newcommand{\AORS}{\textnormal{A-LCB}^\star}
\newcommand{\OR}{\textnormal{LCB}}
\newcommand{\LPOR}{\textnormal{L-LCB}}
\newcommand{\Balg}{\textnormal{MOP}}
\newcommand{\ind}{\mathds{1}}
\newcommand{\DOAlg}{match} 
\newcommand{\mainAlg}{EES} 
\newcommand{\LongDOAlg}{Lmatch}
\newcommand{\SSO}{SOB} 
\DeclarePairedDelimiter\ceil{\lceil}{\rceil}
\DeclarePairedDelimiter\floor{\lfloor}{\rfloor}
\newcommand{\PIOPT}{\textnormal{PICO}}
\newtheorem{theorem}{Theorem}
\newtheorem{lemma}{Lemma}
\newtheorem{property}{Property}
\newtheorem{definition}{Definition}
\newtheorem{proposition}{Proposition}
\newtheorem{corollary}{Corollary}
\newtheorem{example}{Example}
\newtheorem{remark}{Remark}
\newtheorem{observation}{Observation}
\newtheorem{assumption}{Assumption}
\DeclarePairedDelimiter\abs{\lvert}{\rvert}%
\newcommand\norm[1]{\left\lVert#1\right\rVert}
\DeclareMathOperator*{\argmax}{arg\,max}
\DeclareMathOperator*{\argmin}{arg\,min}
\algnewcommand{\IIf}[1]{\State\algorithmicif\ #1\ \algorithmicthen}
\algnewcommand{\EndIIf}{\unskip\ \algorithmicend\ \algorithmicif}
\def\munderbar#1{\underline{\sbox\tw@{$#1$}\dp\tw@\z@\box\tw@}}
\newenvironment{proofof}[1]{\begin{proof}[\textnormal{\textbf{Proof of \Cref{#1}}}]}{\end{proof}} 
\begin{document}

\title{Learning with Exposure Constraints in Recommendation Systems}


\author{Omer Ben{-}Porat%
\thanks{%
    {The Faculty of Data and Decision Sciences, Technion - Israel Institute of Technology   (\url{omerbp@technion.ac.il})}}
\and Rotem Torkan%
\thanks{%
    {The Faculty of Data and Decision Sciences, Technion - Israel Institute of Technology   (\url{torkan.rotem@campus.technion.ac.il})}}
}



\maketitle    
\begin{abstract}
Recommendation systems are dynamic economic systems that balance the needs of multiple stakeholders. A recent line of work studies incentives from the content providers' point of view. Content providers, e.g., vloggers and bloggers, contribute fresh content and rely on user engagement to create revenue and finance their operations. In this work, we propose a contextual multi-armed bandit setting to model the dependency of content providers on exposure. In our model, the system receives a user context in every round and has to select one of the arms. Every arm is a content provider who must receive a minimum number of pulls every fixed time period (e.g., a month) to remain viable in later rounds; otherwise, the arm departs and is no longer available. The system aims to maximize the users' (content consumers) welfare. To that end, it should learn which arms are vital and ensure they remain viable by subsidizing arm pulls if needed. We develop algorithms with sub-linear regret, as well as a lower bound that demonstrates that our algorithms are optimal up to logarithmic factors.
\end{abstract}

\section{Introduction}
Recommendation systems (RSs) are the principal ingredient of  many online services and platforms like Youtube, Quora, Substack, and Medium. 
Algorithmically, those platforms treat the task of recommendation as a matching problem. RSs match a user's context, i.e., their past interactions, demographics, etc., to an item from a predetermined list of items, e.g., news articles, which will hopefully satisfy that user. The quality of a user-content match is initially unclear, so many data-driven approaches have been proposed to determine a matching's quality; for instance, collaborate filtering \cite{Konstan97}, matrix completion \cite{Ramlatchan18}, and online learning \cite{bouneffouf2020survey}. 
However, due to their rapid adoption in commercial applications, many RSs are now dynamic economic systems with multiple stakeholders, facing challenges beyond dissolving uncertainty in matching. Fairness \cite{Biswas21,Gomez22,Patil21,hossain2021fair},  misinformation \cite{hassan2019trust}, user incentives \cite{kremer2014implementing,ben2022modeling}, and privacy \cite{jeckmans2013privacy} are only some of the challenges RSs face.

A recent body of research addresses tradeoffs among stakeholders \cite{burke2017multisided,mansoury2021fairness,burke2018balanced}.
Online platforms have three main stakeholders: The commercial company that runs the platform, content consumers, and content providers. Content consumers, which we refer to as users  for simplicity, enjoy the RSs' content. Content providers, e.g., bloggers and vloggers, create value for RSs by providing relevant and attractive content constantly. Content providers benefit from RSs too, both explicitly by, e.g., obtaining revenue flow from advertisements when users watch their content and interact with it; and implicitly when they gain popularity on the RS that can later be directed to other channels (such as affiliate programs, marketing campaigns, etc.). Observing that an RS's recommendation policy affects content providers, several works address content providers' incentives and develop RSs that incorporate their strategic behavior \cite{craigContentProviderAware, craigprincipled}. For instance, \citet{OmerMosheItay20} consider content provider deviations to produce more revenue-effective items to increase their revenue. Furthermore, \citet{googleOmer20} model the dependency of content providers on user engagement to create revenue and finance their operation. However, these works and others operate under complete information assumptions, entirely neglecting the user-content uncertainty. 

In this work, we propose a model that accounts for both content providers' strategic behavior \textit{and} user-content uncertainty. We consider a contextual multi-armed bandit problem, where the arms are content providers, and in each round, a new user context arrives at the system. The rounds are divided into phases, for instance, a week, month, quarter, or year. The reward of every arm in each phase is the number of pulls it gets, where arm pulls model exposure or user engagement. Arms require a minimum  reward in a phase to produce new content for the following phase; otherwise, the arm ceases to create new content and departs. The minimum number of pulls represents the cost of revenue, i.e., the cost of producing and delivering the content to consumers. We assume that contents become obsolete from phase to phase, so, in every round, the system can only recommend the content of arms that are still viable. The system is fully aware of the exposure thresholds but has no information about each arm's desirability: Some of the arms are of low quality, and their departure does not affect the user welfare. 

We model uncertainty in the stochastic sense; namely, we assume that the context arrival distribution and user-arm match qualities are initially unknown but can be learned through experimentation. Since failing to satisfy an arm's exposure constraint in a phase leads to its irrevocable departure, the system should not only balanced exploration and exploitation but also protect the reward of arms it perceives as vital for long-term user welfare. We use the standard notion of regret to quantify the system's performance, where the baseline is an algorithm that is fully aware of the context arrival distribution and the user-arm match qualities. 

\subsection{Our Contribution}
Our contribution is three-fold. First, we simultaneously address user-content uncertainty and content providers' strategic behavior, an area that has been under-explored in current literature to the best of our knowledge. 
Coupling these two elements brings us closer to real-world RSs' challenges. The second contribution is technical: We devise low-regret algorithms for our model. We begin by assuming that both context arrival distribution and user-arm reward distribution are known. As it turns out, even this task, which we term the stochastic optimization task and study in Section~\ref{sec_stochastic}, is non-trivial. We propose two algorithms that differ in their runtime and performance guarantees. Those algorithms are later leveraged to craft a $\tilde O(T^{\nicefrac{2}{3}})$-regret for several parameter regimes when arrival and reward distributions are unknown. We also provide a lower bound that demonstrates our algorithms are optimal up to logarithmic factors. Our third contribution is a deeper understanding of the notion of \textit{arm subsidy}. Arm subsidy is when we deliberately pull a sub-optimal arm to ensure it remains viable, thereby trading short-term and long-term welfare.  Although our model is stylized, this view allows us to analyze the different motivations for arm subsidy (see Subsection~\ref{subsec:armsubsidy}) and conclude it is vital for long-term social welfare, strengthening the 
ideas argued by previous work \cite{googleOmer20}. 

\subsection{Related Work}
Our motivation stems from previous work on  strategic behavior in Machine Learning \cite{Feng2020,sellke2021price,leonardos2021exploration,hardt2016strategic,gast2020linear,jagadeesan2022competition,Zhang2022EC}, and more specifically on strategic publishing \cite{mosheorencacm,OmerMosheItay20}. This line of work adopts a game-theoretic perspective and analyzes equilibrium behavior \cite{MosheOmer18,HronJiri22}. In particular, \citet{MosheOmer18} suggest that content providers can create many types of content, for instance, articles on sports, economics, etc., and that they are willing to alter the content they create to increase their exposure. This strategic modeling initiates a game, and their paper seeks recommendation policies that induce pure equilibrium profiles. 
\citet{googleOmer20} study a special case where every content provider has two options: Either to provide their default content (e.g., economic posts for economic-focused bloggers) or to leave the platform (stop creating new content). Indeed, their modeling is similar in spirit to ours; however, \citet{googleOmer20} assume complete information, whereas we do not. Consequently, the questions we ask  and the algorithms we develop differ significantly.

Our work is also related to two recent lines of work on multi-armed bandits. First, to bandits with strategic arms \cite{Braverman19, Feng2020}. \citet{Braverman19} assume that pulling an arm generates revenue for that arm, which she can split and pass to the principal pulling the arms and keep some to itself. \citet{Feng2020} introduce a similar model where arms can manipulate the rewards they generate to appear more attractive. In contrast, in our model, the rewards generated from arm pulls cannot be manipulated. 

Second, our model is related to fair content exposure in bandits \cite{ron2021corporate,Patil21}. \citet{Patil21} consider a bandit setting with exposure constraints the system must satisfy. Their motivation is fairness, where the regulator demands that each arm be pulled a minimal number of times, potentially harming the user welfare. However, in this paper the system need not satisfy the exposure constraints of all arms. In our work, the exposure constraints result from the providers' (namely, arms) economic considerations. The system will only satisfy the constraints of providers who are essential for maximizing the social welfare of the users. 

From a broader perspective, our work relates to bandits with complex rewards schemes, i.e.,  abandonment elements \cite{ben2022modeling,cao2019dynamic,YangZix22}, and  non-stationary rewards \cite{levine2017rotting,besbes2014stochastic,seznec2019rotting,pike2019recovering,Leqi21,Kleinberg18}. Our work is also related to multi-stakeholder recommendation systems \cite{burke2017multisided,ZhanRouhan2021, Biswas21,beutel2019fairness,mehrotra2018towards,Singh2018} and fairness in machine learning \cite{Mehrabi21,WangYifan22,Chenn20}. 

\section{Model}\label{sec:model}
    
We adopt the blogging motivation as our ongoing example. 
There is a set of arms (content providers) $K=[k]$, and a sequence of user contexts of length $T$. We assume that contexts compactly represent users by their \textit{type}, e.g., age, location, etc. Namely, the system cannot distinguish two users of the same context. We denote by $U$ the set of user contexts, $U=[n]$, and use the terms context and user type interchangeably.

In each round $t$ ,$1\leq t \leq T$, a user of type $u_t$ arrives at the system. We assume a stationary arrival distribution $\ro$ over $U$; namely, we assume that $u_1, \dots ,u_T \sim \ro^T$. After observing the user type $u_t \in U$, the system pulls an arm, denoted by $a_t\in K$. The arm pull models displaying content $a_t$ to the user of that round. Afterward, the system gets feedback, for instance, a like or a rating, modeling the user's \textit{utility} from that content, which we denote by $v_{t}$. We further assume that for every user type $u\in U$ and every arm $a\in K$, the associated utility is a random variable drawn from a distribution $D_{u,a}$. We let $\Mmu$ denote the matrix of expected utilities, where $\MmuSub_{u,a}$ is the expected utility of a user type $u\in U$ from an arm $a\in K$. In other words, $\MmuSub_{u,a}=\E_{v\sim D_{u,a}}[v]$. For simplicity, we assume that the distributions $D_{u,a}$ for every $a\in K$ and $u\in U$ are supported in the $[0,1]$ interval; hence, $v_t \in [0,1]$ for every $1\leq t\leq T$ almost surely.

An online decision making algorithm is a function from the history $\left(u_{t'},a_{t'},v_{t'}\right)_{t'=1}^{t-1}$ and a user type to an arm. That is, the algorithm selects the arm to pull given the history and the context $u_t$ in round $t$. The \textit{reward} of an algorithm $A$ is the expected sum of user utilities. Formally, 
\begin{equation}\label{eq:utilitysumbefore}
\E[r^A]=\E\left[\sum_{t=1}^T v_{t}^A\right]=\E\left[ \sum_{t=1}^T \mu_{u_t,a_t^A} \right].
\end{equation}
When the algorithm $A$ is clear from the context, we remove the superscript $A$.

Thus far, we have defined a standard contextual $\MAB$ setting with stochastic utilities, a discrete set of contexts (or types), and stochastic context arrival. In the next subsection, we present a novel component that captures the strategic behavior of content providers.

\subsection{Exposure Constraints}

Our model departs from previous works by considering the following \textit{exposure constraint}:
Each arm requires a minimal amount of exposure, i.e., the number of times the system pulls it in a given time frame, to keep creating fresh content. 
Arms are content providers who depend on interactions with users for income (or on exposure as a proxy); thus, an arm $a\in K$ that receives less exposure than $\delta_a$ might face difficulties in paying their periodic bills. This constraint represents the providers' need for exposure and the tendency to churn the system in lack thereof.

Specifically, every arm requires a predetermined number of exposures in every \textit{phase}, where phase lengths are fixed and comprised of $\tau$ rounds. Namely, at the end of every phase, each arm $a\in K$ examines the number of exposures it received, and if it is less than some given and publicly known threshold $\delta_a \in \{0,1,\dots \tau\}$, then arm $a$ departs and cannot be pulled anymore.\footnote{If the exposure thresholds are private, linear regret is inevitable; see Subsection~\ref{sec:thmdelta}.} To exemplify, $\tau$ could be the number of days in a month, where each day is a round. An arm $a\in K$ could require at least $\delta_a$ impressions every month to keep creating fresh content in the following months, otherwise, it churns the system. Observe that we have $\nicefrac{T}{\tau}$ phases. 
 
We further denote by $K_t\subseteq K$ the subset of arms that are still available in round $t$.\footnote{Alternatively, one could say that  all arms are always available, but the utility from arms $K\setminus K_t$ is 0 almost surely at round $t$. Indeed, in such a case, pulling an arm from $K\setminus K_t$ is weakly dominated by any other arm selection.} Mathematically, the exposure constraint is modeled as follows. An arm $a \in K_t$ is viable in round $t$ if in every phase prior to the current phase $\floor{\nicefrac{t}{\tau}}+1$,
arm $a$ received at least $\delta_a$ pulls. Namely, if 
\[
\forall j, 1\leq j \leq \floor{\nicefrac{t}{\tau}}: \sum_{t=(j-1)\cdot\tau+1}^{j\cdot\tau}\ind_{[a=a_t]}\geq \delta_a,
\]
where $\ind_{[a=a_t]}$ indicates the event that $a=a_t$. Rewriting the expected reward defined in Equation~\eqref{eq:utilitysumbefore} with the above exposure constraints, we have
\begin{equation}\label{eq:utilitysum}
\E[r^A]=\E\left[\sum_{t=1}^T v^A_{t}\right]=\E\left[\sum_{t=1}^T \mu_{u_t,a^A_t}\cdot \ind_{[a^A_t \in K_t^A]}\right] .
\end{equation}

\subsection{Arm Subsidy}\label{subsec:armsubsidy}
A surprising phenomenon that our model exhibits is \textit{arm subsidy}: Deliberately pulling a sub-optimal arm to maintain its exposure constraint. Indeed, such short-term harm is sometimes required to achieve long-term social welfare (i.e., the expected reward in Equation~\eqref{eq:utilitysum}). To illustrate this concept and the reasons it is vital, we present the following examples.

\begin{example}\label{example1221}
\textup{Consider the instance with $k=2$ arms, $n=2$ user types, arrival distribution $\ro=(0.5,0.5)$, phase length $\tau =100$, exposure thresholds $\delt=(40,40)$, and expected utility matrix $\Mmu = \bigl( \begin{smallmatrix}1 & 0\\ 0 & 1\end{smallmatrix}\bigr).$
The exposure thresholds suggest that each arm requires $40$ pulls in each phase of length $100$ rounds. 
To showcase why arm subsidy is needed, let us assume that the algorithm knows the utility matrix $\Mmu$. Consider a myopic algorithm that pulls the best arm for every user type and overlooks the exposure constraints. The myopic algorithm pulls arm $i$ for user type $i$, for $i\in \{1,2\}$, since it yields the best utility, resulting in an expected utility of 1 in every round (where the expectation is taken over the user type). 
Apparently, the myopic algorithm is optimal; however, the exposure constraints complicate the setting. At some phase, fewer than $\delta_i=40$ type $i$-users will arrive for some $i\in \{1,2\}$.\footnote{Formally, let $E_j$ denote the event that less than 40 type $i$-users (for some $i\in\{1,2\}$) arrive at phase $j$. Then, $\Pr[E_j]\approx 0.035$; hence, the probability one arm will depart  after at most 200 phases is $\Pr\left[\cup_{j=1}^{200} E_j\right]\approx 0.999$.} W.l.o.g., assume that $i=1$; thus, the myopic algorithm would not satisfy arm $1$'s exposure constraint. For all rounds in the remaining phases, the only available arm is arm 2, and all type $1$-users receive a zero utility; therefore, the reward is halved and user welfare has severely deteriorated.}
\end{example}

In the instance given in Example~\ref{example1221}, an algorithm might decide to subsidize arm $1$ by pulling it for some type $2$ users, deliberately harming the short-term reward to ensure that arm $1$ will not depart (and similarly for arm $2$). In such a case, the subsidy stems from the variance of the user arrival---protecting arms against rare events. However, subsidies might be required more often, as we demonstrate in Example~\ref{example1331} below.

\begin{example}\label{example1331}
\textup{Consider the instance with $k=2$ arms, $n=2$ user types, arrival distribution $\ro=(0.5,0.5)$, phase length $\tau =100$, exposure thresholds $\delt=(10,60)$, and expected utility matrix $\Mmu = \bigl( \begin{smallmatrix}1 & 0\\ 0 & 1\end{smallmatrix}\bigr).$
Notice that arm $1$ ($2$) requires $10$ ($60$) pulls in each phase of length $100$ rounds to remain viable. Furthermore, as in the previous example, assume that the algorithm knows the utility matrix $\Mmu$.
The myopic algorithm selects arm $i$ for type $i$-users. However, in a typical phase, there will be less than $\delta_2=60$ type $2$-users, causing arm $2$ to depart.\footnote{Let $E_j$ be the event that more than 60 type $2$-users arrive at phase $j$. Consequently, $\Pr[E_j]< 0.03$, and the probability that arm $2$ will depart after at most two phases is $1-\Pr\left[E_1 \cap E_2\right]\approx 0.999$.} 
Therefore, without subsidy, arm $2$ departs early and the myopic algorithm receives a reward of $\approx 0.5 T$. 
In contrast, consider an algorithm that subsidizes arm $2$; namely, it pulls arm $2$ for roughly $20\%$ of type $1$-users, ensuring both exposure constraints are satisfied and acts myopically otherwise. Such an algorithm receives an expected reward of $\approx 0.9 T$ (since roughly $10\%$ of the rounds are subsidy rounds yielding a zeroed utility, and other rounds yield utility 1).} 
\end{example}
For the instance in Example~\ref{example1331}, optimal algorithms subsidize arm $2$ continuously due to structural reasons---pulling it only when it is optimal is not enough for keeping it viable. 

Examples~\ref{example1221} and \ref{example1331} might mislead into thinking that subsidy is always beneficial; however, this is not the case. To see this, consider the following Example~\ref{example1441}. 
\begin{example}\label{example1441}
\textup{
Consider a modification of the instance of Example~\ref{example1331},
where the arrival distribution is $\ro=(0.9,0.1)$ and all other parameters are the same, namely $k=2, n=2, \tau =100, \delt=(10,60), \Mmu = \bigl( \begin{smallmatrix}1 & 0\\ 0 & 1\end{smallmatrix}\bigr)$. In this case, type $2$-users are scarce, and an algorithm that subsidizes arm $2$ with at least $\delta_2=60$ users per phase receives a reward of $\approx 0.5T$. However, an algorithm that forgoes arm $2$ receives a reward of $\approx 0.9T$; thus, subsidizing arm $2$ significantly harms the reward.}
\end{example}

\subsection{Algorithmic Tasks} \label{secAlgTask}

First and foremost, we study the learning task, in which the goal is to find an online decision-making algorithm that minimizes regret (formally explained later in Subsection~\ref{secRegret4325}). The algorithm knows the number of arms $k$, number of user types $n$, phase length $\tau$, number of rounds $T$, and exposure thresholds $\delt$; however, it is unaware of the expected utilities $\Mmu$ and the arrival distribution $\ro$.

A second, auxiliary task that is crucial for both quantifying regret and the learning algorithm we propose in Section~\ref{sec:learningalg} is the stochastic optimization task. In this task, the algorithm is aware of $\ro,\Mmu$, and we want to maximize the expected reward in Equation~\eqref{eq:utilitysum}. Each user type $u_t$ is known in the Bayesian sense, i.e., the algorithm knows $\ro$ and that $u_t\sim \ro$, but observes the realized value only in round $t$ and not before. Once the algorithm observes $u_t$ in round $t$, it can compute the utility $\mu_{u_t,a}$  for every $a\in K$ since it knows $\Mmu$. 

We denote $\OPT$ as the algorithm that maximizes the expected reward in the stochastic optimization task. Naturally, $\OPT$ serves as a benchmark for the learning task as well. 
In contrast to standard stochastic $\MAB$ settings, where the optimal algorithm is myopic---pulling the best arm in every round (or the best arm per the given context in that round)---$\OPT$ is more complex. To see this, recall that in Examples~\ref{example1221} and \ref{example1331}, the optimal policy is not myopic as it has to decide whether and how to subsidize. To that end, we study the stochastic optimization task in Section~\ref{sec_stochastic}.

\paragraph{Reward Notation}
To clarify when we refer to algorithms for the stochastic optimization task, we adopt a different notation for the reward, albeit calculated similarly (see Equation~\eqref{eq:utilitysum}). We use the tilde notation, i.e., $\E[\tilde r^A]$, to denote the reward for a stochastic optimization algorithm $A$.


\subsection{Parametric Assumptions}\label{subsec:parametric}
Similarly to previous works, we consider the number of rounds $T$ to be the scaling factor. Namely, we assume the other parameters are either constants or functions of $T$. More specifically, we assume that the number of arms $k$ and the number of user types $n$ are constants, and that $\roSub_u$ is a constant that is bounded away from zero for every user type $u\in U$.

Furthermore, we make two simplifying assumptions, allowing us to focus on regimes we believe are more appropriate to the problem at hand. Importantly, we relax these two assumptions in Section~\ref{sec:extension}. First, we ensure that exploration can be fast enough.
\begin{assumption}\label{assumption:gamma}
There exists a constant $\gamma \in (0,\nicefrac{1}{k}]$, such that
\begin{equation}\label{eq:assumption1}
   \sum_{a\in K} \max(\delta_a, \gamma \cdot \tau) \leq \tau. 
\end{equation}
\end{assumption}
For instance, if $\delta_a \leq \frac{\tau}{k}$ for every $a\in K$, then Inequality~\eqref{eq:assumption1}  is satisfied with $\gamma = \frac{1}{k}$. We remark that $\gamma$ is treated as a constant that does not depend on $\tau$. With this assumption, one can satisfy all arms' exposure constraints in a phase while exploring each arm at least $\floor{\gamma \tau}$ rounds. Second, we assume that the phase length $\tau$ is not too large.
\begin{assumption}\label{assumption:tau}
The phase length $\tau$ is upper bounded by $O(T^{\nicefrac{2}{3}})$. 
\end{assumption}
Assumption~\ref{assumption:tau} is necessary to focus on algorithms that are stationary, i.e., operate the same in every phase (as we formally describe in Subsection~\ref{subsec:PIC algclass}). Before we end this subsection, we stress again that Section~\ref{sec:extension} relaxes Assumption~\ref{assumption:gamma} and Assumption~\ref{assumption:tau}.

\subsection{Regret}\label{secRegret4325}
Recall that Subsection~\ref{secAlgTask} defines $\OPT$ as the optimal algorithm for the stochastic optimization task. To quantify the success of an online decision-making algorithm in the learning task, we adopt the standard notion of \textit{regret}, where the baseline is $\OPT$. Formally, the regret $\R^A$ of an algorithm $A$ is the difference between its reward and the reward of $\OPT$, i.e., $\E[\R^A]=\E[\tilde{r}^{\OPT}]-\E[r^A]$. Observe that $\E[\tilde{r}^{\OPT}]$ is constant (given the instance parameters); thus, minimizing the regret is equivalent to maximizing the reward.

Having defined the notion of regret, we now show that planning for subsidy is essential. Optimal algorithms for the standard multi-armed bandit algorithms that are not modified for exposure constraints, e.g., $UCB$ \cite{slivkins2019introduction}, do not subsidize arms at all. 
The following Theorem~\ref{the_12321} suggests that avoiding subsidy as well as blind subsidy, ensuring all arms receive their minimal exposure regardless of the rewards, results in high regret.
\begin{theorem} \label{the_12321}
Any algorithm that either
\begin{enumerate}
    \item maintains all arms always, i.e., has $K_1=K_2=\dots=K_T$ with probability 1; or,
    \item avoids subsidy
\end{enumerate}
suffers a regret of $\Omega(T)$ in some instances.
\end{theorem}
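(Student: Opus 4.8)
The plan is to prove the two lower bounds separately, each by constructing a simple family of instances on which the restricted class of algorithms necessarily pays a constant loss per round. In both cases I want to reuse the concrete examples already developed in the excerpt, since those come with the probabilistic estimates (the footnote calculations bounding the probability that too few users of a given type arrive) that drive the departures.

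For the first part --- algorithms that maintain all arms always, i.e.\ force $K_1=\dots=K_T$ with probability $1$ --- I would use an instance in the spirit of Example~\ref{example1441}, where one arm has a high exposure threshold but its ``own'' user type is scarce. Concretely, take $k=2$, $n=2$, $\ro=(0.9,0.1)$, $\delt=(10,60)$, and $\Mmu=\bigl(\begin{smallmatrix}1&0\\0&1\end{smallmatrix}\bigr)$. To keep arm $2$ viable with probability $1$, the algorithm must guarantee at least $\delta_2=60$ pulls of arm $2$ in \emph{every} phase; but in a typical phase only about $10$ type-$2$ users arrive, so roughly $50$ of those pulls per phase must be subsidy pulls on type-$1$ users, each of which converts a utility of $1$ into a utility of $0$. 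Summing the per-phase loss of $\Theta(\tau)$ over the $\nicefrac T\tau$ phases yields a reward gap of $\Omega(T)$ against $\OPT$, which (as noted after Example~\ref{example1441}) simply forgoes arm $2$ and earns $\approx 0.9T$. The key point I must make rigorous is that the ``with probability $1$'' requirement is unforgiving: since arrivals are i.i.d.\ and a phase with fewer than $60$ type-$2$ arrivals has strictly positive probability, any algorithm that keeps arm $2$ alive almost surely must oversubscribe it in the good phases too, and I would quantify this via a lower bound on the expected number of subsidy pulls using the binomial tail estimate already cited in the footnote of Example~\ref{example1331}.

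For the second part --- algorithms that avoid subsidy altogether --- I would instead invoke an instance of the type in Example~\ref{example1331}, take $k=2$, $n=2$, $\ro=(0.5,0.5)$, $\delt=(10,60)$, $\Mmu=\bigl(\begin{smallmatrix}1&0\\0&1\end{smallmatrix}\bigr)$. Here the optimal policy \emph{does} subsidize arm $2$ (pulling it for about $20\%$ of type-$1$ users) and attains reward $\approx 0.9T$. An algorithm that never subsidizes pulls arm $2$ only when it is myopically optimal, i.e.\ only on type-$2$ users; but in a typical phase strictly fewer than $\delta_2=60$ type-$2$ users arrive, so arm $2$'s exposure constraint fails and it departs. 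After its departure type-$2$ users yield zero utility forever, so the no-subsidy algorithm collects only $\approx 0.5T$, again a gap of $\Omega(T)$. The argument is driven by the footnote bound $\Pr[\text{at least }60\text{ type-}2\text{ arrivals in a phase}]<0.03$, which makes early departure overwhelmingly likely; I would formalize ``avoids subsidy'' as ``pulls arm $a$ only in rounds where $a\in\argmax_{a'\in K_t}\mu_{u_t,a'}$,'' and then bound the survival probability of arm $2$ across the first few phases by a product of these small per-phase probabilities.

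I expect the main obstacle to be pinning down a precise, robust definition of ``avoids subsidy'' that is both faithful to the informal notion and strong enough to force the departure in part~2, while also making sure the argument does not accidentally rule out $\OPT$-like behavior for the wrong reason. A secondary technical care point is handling the concentration estimates cleanly: I need lower bounds on the probability of ``too few'' arrivals (to force departures or subsidies) that hold for every phase and are bounded away from zero uniformly in $\tau$, which follows from a Gaussian/Berry--Esseen-type approximation to the binomial but should be stated as a clean constant-probability event rather than recomputed. Once those two events and the resulting per-phase $\Theta(\tau)$ loss are in hand, multiplying by the number of phases $\nicefrac T\tau$ to obtain the $\Omega(T)$ regret is routine.
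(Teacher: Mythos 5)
Your proposal is correct and follows essentially the same route as the paper: in both cases one exhibits a concrete instance forcing a $\Theta(\tau)$ per-phase loss and multiplies over the $\nicefrac{T}{\tau}$ phases, and your part~2 uses exactly the paper's instance (that of Example~\ref{example1331}, with the non-subsidizing algorithm losing arm $a_2$ early and dropping to reward $\approx 0.5T$). The only difference is in part~1, where the paper uses an even simpler single-type instance ($n=1$, $\Mmu=(1,0)$, $\delt=(10,20)$) in which the forced $\delta_2=20$ pulls of a worthless arm each cost $1$ outright with no concentration argument needed, whereas your Example~\ref{example1441}-style instance also works but requires the (easy) bound on the number of type-$2$ arrivals per phase.
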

The proof of Theorem~\ref{the_12321} appears in {\ifnum\Includeappendix=1{Appendix~\ref{subsec: proof thm 1}}\else{the appendix}\fi}. This theorem calls for devising algorithms that incorporate careful subsidies while exploring and exploiting. In the next section, we take the first step toward that goal and study the stochastic optimization task.

\section{Stochastic Optimization} \label{sec_stochastic}



In the stochastic optimization task, the goal is to maximize the expected reward $\tilde{r}$ (recall Subsection~\ref{secAlgTask}). In this section, we propose several algorithms and techniques for this task. First, in Subsection~\ref{subsec:PIC algclass}, we highlight a class of approximately optimal algorithms that are rather simple to analyze. Second, in Subsection~\ref{subsec:dynProg}, we develop an approximately optimal dynamic programming-based algorithm, termed $\DP^\star$, with a runtime of $O(T+\tau^{k})$.\footnote{Recall that $k,n$ are considered constants; thus, we hide them inside the big O notation.} Third, in Subsection~\ref{subsec:lcb_approx}, we devise $\OR^\star$, which has weaker performance guarantees but a much faster runtime of $O(T+ \tau ^ 3)$. We stress that the proposed algorithms in this section are stochastic optimization algorithms, explicitly assuming that $\ro,\Mmu$ are known. 


\subsection{Phase-Independent and Committed Algorithms}\label{subsec:PIC algclass}
In this subsection, we highlight a narrow class of simple algorithms yet potentially approximately optimal, as we prove in Theorem~\ref{theoremOAlg1}. 
We say that an algorithm is \textit{phase-independent} if it pulls an arm in any round based on the sequence of users arriving on that phase only, ignoring previous phases. More formally,\footnote{In fact, phase-independent algorithms are also independent of the realized rewards. However, since we care about maximizing the expectation $\E[\tilde{r}]$, we can treat the expected rewards and not the realized ones.}
\begin{property}[Phase independence] \label{prop_PI12343}
An algorithm $A$ is phase-independent if for every two phases $i,j$ such that $0\leq i < j < \frac{T}{\tau}$ and every $t$ such that $1\leq t\leq \tau$ it holds that,
\[
\text{if } \left(u_{i\cdot \tau + l} \right)_{l=1}^{t} = \left(u_{j\cdot \tau + l} \right)_{l=1}^{t} \text{, then } a^A_{i\cdot \tau + t} = a^A_{j\cdot \tau + t}.
\]
\end{property}
Phase-independent algorithms are easier to analyze, since we optimize over $\tau$ and not $T>>\tau$ rounds. Another useful property is the following. We say that an algorithm is \textit{committed} to a subset of arms if it pulls arms from that subset only \textit{and} satisfies the exposure constraints of every arm in that subset. Put differently,
\begin{property}[Committed]\label{prop_commited}
An algorithm $A$ is committed to an arm subset $Z\subseteq K$ if for all $t$, $1\leq t \leq T$ it always holds that 
\begin{itemize}
    \item $a_t^A \in Z$; and,
    \item $Z\subseteq K^A_t$.
\end{itemize}
\end{property}
For an algorithm $A$ to be committed to $Z$, it must satisfy the two conditions of Property~\ref{prop_commited} with a probability of $1$ (over the randomization of user type arrivals and rewards). Further, the second condition, i.e., $Z\subseteq K^A_t$, means that $A$ must ensure that the arms $Z$ receive their exposure minimums and subsidize if needed. 

Naturally, an optimal algorithm need not be committed. Two extreme cases are the first phase, where we can temporarily exploit arms with high rewards that are too costly to subsidize; and the last phase, where we have no incentive to subsidize at all. Committed algorithms cannot pick arms outside their committed subset (the first condition of Property~\ref{prop_commited}). Due to similar reasons, optimal algorithms need not be phase-independent. 
However, as we prove in Theorem~\ref{theoremOAlg1}, focusing on phase-independent and committed algorithms suffices for finding good approximations. 

For convenience, we denote by $\tilde r^A_{i:j}$ the reward of an algorithm $A$ in rounds $i,\dots,j$; namely, $\tilde r^A_{i:j} = \sum_{t=i}^j v_t^A$. Clearly, $\tilde r^A_{i:j}$ depends on the arms pulled in rounds $1,\dots ,i-1$. In particular, $\tilde r^A_{1:\tau}$ is the reward of the first phase. Since we focus on phase-independent and committed algorithms (recall Properties~\ref{prop_PI12343} and~\ref{prop_commited}), it holds that 
\begin{equation}\label{eq:phaseToT}
   \E[\tilde{r}^A]=\frac{T}{\tau}\E_{\q\sim \ro^{\tau}}[\tilde{r}_{1:\tau}^A],  
\end{equation}
where $\ro^\tau$ is the distribution over i.i.d. user types. 

Let $\PIOPT$ be any algorithm that
maximizes the expected reward among all the algorithms satisfying Properties~\ref{prop_PI12343} and~\ref{prop_commited}. Namely, 
\[
\PIOPT \in \argmax_{\substack{A\text{ satisfies } \text{Prop. \ref{prop_PI12343} and~\ref{prop_commited}}}} \E[\tilde{r}^{A}].
\]
In addition, we denote by $\PIOPT(Z)$ the optimal algorithm that maintains Properties~\ref{prop_PI12343} and~\ref{prop_commited} and commits to the arm subset $Z\subseteq K$. The following Theorem~\ref{theoremOAlg1} asserts that $\PIOPT$ is almost optimal.
\begin{theorem}\label{theoremOAlg1}
It holds that
$\E[\tilde{r}^{\PIOPT}]\geq \E[\tilde{r}^{\OPT}]-k\cdot \tau.$
\end{theorem}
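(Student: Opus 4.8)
The plan is to show that $\OPT$ loses at most $k\cdot\tau$ reward by being converted into a phase-independent, committed algorithm. The key observation is that $k\cdot\tau$ is precisely the reward obtainable in a single phase (at most $\tau$ rounds, each contributing utility at most $1$, but the factor $k$ gives us slack to absorb edge effects). So I would aim to construct a phase-independent committed algorithm $A$ whose expected reward differs from $\E[\tilde r^{\OPT}]$ by at most $k\cdot\tau$, which immediately yields the bound since $\PIOPT$ is optimal in its class.

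First I would analyze the behavior of $\OPT$ across phases. Let $Z^\star$ denote the set of arms that $\OPT$ keeps viable throughout all $T$ rounds (those that never depart under $\OPT$'s execution, at least with high probability, or in expectation). The intuition from the discussion preceding the theorem is that $\OPT$ may deviate from committed/phase-independent behavior only in the \emph{first} phase (temporarily exploiting expensive-to-subsidize arms) and the \emph{last} phase (declining to subsidize since no future remains). My plan is to define $A$ to mimic $\OPT$'s ``steady-state'' behavior on the committed arm set $Z^\star$, applied identically to every phase based only on that phase's user arrivals. Concretely, I would let $A$ commit to $Z^\star$ and replay, in each phase, the conditional policy that $\OPT$ uses in a representative interior phase, so that $A$ satisfies both Property~\ref{prop_PI12343} and Property~\ref{prop_commited}.

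The core accounting step is then to bound $\E[\tilde r^{\OPT}]-\E[\tilde r^{A}]$. Using the phase decomposition in Equation~\eqref{eq:phaseToT} for $A$, and decomposing $\OPT$'s reward phase-by-phase as well, the two algorithms agree on the bulk of the interior phases by construction, so the differences concentrate in a constant number of boundary phases and in the cost of forcing commitment. Each single phase contributes at most $\tau$ to the reward (since utilities lie in $[0,1]$ and there are $\tau$ rounds), and the gap between committing to $Z^\star$ versus $\OPT$'s possibly larger momentary arm set can be charged against at most $k$ such phase-sized quantities—one per arm that $\OPT$ exploits-then-drops or subsidizes differently at the boundaries. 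This is where the factor $k$ enters. The main obstacle will be making the ``representative interior phase'' argument rigorous: because $\OPT$'s per-phase policy can in principle depend on the entire prior history (which arms have departed, how close each arm is to its threshold across accumulated phases), it is not literally phase-independent, and I must argue that stripping this cross-phase dependence and enforcing commitment to $Z^\star$ costs only $O(k\tau)$ rather than scaling with the number of phases $T/\tau$. I expect this is handled by an averaging or exchange argument: the expected per-phase reward of $\OPT$ on viable arms is, up to the boundary corrections, achievable phase-independently, and any arm outside $Z^\star$ that $\OPT$ exploits can only do so in a bounded (constant, or at most $k$) number of phases before departing, each such phase bounded by $\tau$. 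Closing this gap cleanly—rather than merely bounding it by the trivial $T$—is the crux of the proof.
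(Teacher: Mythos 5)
Your high-level target is right, and you correctly locate the source of the $k\tau$ term: at most $k$ arms can ever depart, so departures can affect at most $k$ phases, each worth at most $\tau$ in reward. This matches the counting step in the paper (Proposition~\ref{appendix_shortsition2} shows that the expected probability mass of ``departure phases,'' taken over a uniformly random phase index and prefix, is at most $\nicefrac{k}{\nicefrac{T}{\tau}}$). However, the step you yourself flag as the crux is genuinely missing, and the device you propose for it --- committing to a fixed survivor set $Z^\star$ and replaying $\OPT$'s policy from a ``representative interior phase'' --- does not work as stated. There is no single representative phase: $\OPT$'s behavior in phase $i$ depends on the realized prefix $\p$ of user arrivals, which is random, and the set of arms still viable is itself a random variable, so $Z^\star$ is not well defined. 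More importantly, you never explain how the replayed policy can be made \emph{committed} in the sense of Property~\ref{prop_commited}, i.e., satisfy all exposure constraints of the chosen arm set with probability $1$. Replaying $\OPT$'s phase policy verbatim will, on some arrival realizations, violate an exposure constraint and cause a departure --- that is exactly why $\OPT$ is not committed in the first place.

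The paper resolves both issues with a quantifier switch plus an explicit override construction. Rather than exhibiting one representative phase, it proves that for \emph{every} phase index $i$ and every prefix $\p$, the contribution to $\OPT$'s expected phase-$i$ reward coming from arrival realizations on which no arm departs is at most $\E_{\q}[\tilde r^{\PIOPT}_{1:\tau}]$ (Proposition~\ref{appendix_shortsition1}). This is shown by contradiction: if some $(i,\p)$ violated it, one could build a phase-independent committed algorithm $\Balg$ that simulates $\OPT(\p\oplus\cdot)$ round by round within a single phase but overrides the simulated pull with a forced subsidy whenever continuing to mimic would make some exposure constraint unsatisfiable in the remaining rounds of the phase; the override only triggers on realizations where $\OPT$ itself would lose an arm, so $\Balg$ matches $\OPT$'s reward exactly on the no-departure realizations while being committed, contradicting the optimality of $\PIOPT$. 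Averaging this bound over a uniformly random phase and adding back the departure-phase mass (at most $k$ phases, each contributing at most $\tau$) yields the theorem. Your plan lacks both the per-$(i,\p)$ domination argument and the subsidy-override mechanism that makes the comparator committed; without them the bound does not close. A smaller point: attributing the loss to the ``first and last'' boundary phases is misleading --- the $k$ lossy phases are those in which departures occur, and these can fall anywhere in the horizon.
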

The proof of Theorem~\ref{theoremOAlg1} appears in {\ifnum\Includeappendix=1{Appendix~\ref{Asubsec:pico appr opt}}\else{the appendix}\fi}. Note that $\E[\tilde{r}^{\PIOPT}]$ is typically in the order of $T$. Since $\tau k$ is sub-linear in $T$ (recall Assumption~\ref{assumption:tau}) , this approximation suffices.

While Theorem~\ref{theoremOAlg1} claims approximate optimality, it does not explain how to implement $\PIOPT$. In the next Subsections~\ref{subsec:dynProg} and \ref{subsec:lcb_approx} we propose an exact and an approximate implementation of $\PIOPT$.



\subsection{Dynamic Programming}\label{subsec:dynProg}
\begin{algorithm}[t]
\caption{Dynamic Programming $\PIOPT$ ($\DP^\star$)}
\label{alg:meranddp}
\begin{algorithmic}[1]
\Statex \textbf{Input}: $Z\subseteq K,U,\delt,\tau,T,\Mmu,\ro$
\State pick $Z^\star \in \argmax_{Z\subseteq K}\MER(\emptyset, Z)$ \label{algline:DPbestZ} \Comment{offline computation}
\State follow $\DP(Z^\star)$  \label{algline:followDPZ}
\Procedure{$\DP$}{$Z$} \label{algline:procDPZ}
\State compute $\MER(\emptyset, Z)$ \label{algline:offlinemer}
\State $H^0 \gets \emptyset$ \label{algline:initH}
\For{$t=1,\dots \tau$}\label{algline:forloop}
    \State observe $u_t$ \label{algline:observeu}
    \State  $a_t\leftarrow \argmax_{a\in Z} \left\{\MmuSub_{u_t,a}+\MER(\His^{t-1} \cup \{a\},Z)
    \right\}$ \label{algline:argmax}
    \State pull arm $a_t$
    \State $\His^t \gets \His^{t-1} \cup \{a_t\}$ \label{algline:updateH}
\EndFor
\State go to line~\ref{algline:initH} \label{algline:repeat}\Comment{repeat the process for phases $2,\dots ,\nicefrac{T}{\tau}$}
\EndProcedure
\Function{$\MER$}{$\His,Z$} \label{algline:procmfr}
  \If {$|\His|=\tau$} \label{algline:stoppingif}
\If {$\forall a\in Z: \His(a)\geq \delta_a$} return 0\label{alline:ifsatisfied}
\Else {} return $-\infty$\label{mfr_line5NEW}
\EndIf
\Else {} return $\E_{u\sim \ro}\left[\max_{a\in Z} \left\{ \MmuSub_{u,a}+ \MER(\His \cup \{a\}) \right\}\right] $ \label{algline:MERreturn}
\EndIf
\EndFunction
\end{algorithmic}
\end{algorithm}

In this subsection, we develop $\DP^\star$, which implements $\PIOPT$ with runtime of $O(T+\tau^{k})$. 
$\DP^\star$, which is implemented in Algorithm~\ref{alg:meranddp} comprises calls to the $DP(Z)$ procedure and the $MER(H,Z)$ function (Lines~\ref{algline:procDPZ} and~\ref{algline:procmfr}). We begin by introducing a few useful notations. 

Throughout the execution of an algorithm, let the \textit{phase history} $\His$ represent the prefix of pulls in the current phase. More specifically, we denote by $\His^t$ the multi-set of arms pulled in rounds $1,2,\dots t$ in that phase, for $1\leq t\leq \tau$. Consequently, $\abs{\His^t} = t$. Furthermore, we slightly abuse the notation and denote by $\His^t(a)$ the number of times arm $a$ was pulled so far in that phase. For example, after the fifth round of some phase (i.e., $t=5$), $\His^5$ could be $\{a_1,a_1,a_1,a_2,a_4\}$, and so $\His^5(a_1)=3,\His^5(a_2)=\His^5(a_4)=1$, etc. 

\subsubsection{Maximum Expected Reward Function ($\MER$)}
We now explain the $\MER$ function, see Line~\ref{algline:procmfr} in Algorithm~\ref{alg:meranddp}. $\MER$, which is recursive, receives a subset $Z\subseteq K$ and a phase history $H$ as input. $\MER$ computes the highest possible \textit{remaining} expected reward $\E_{\q}\left[\tilde{r}_{\abs{H}+1:\tau}\right]$ of an algorithm committed to $Z$ provided that the phase history is $H$. 

$\MER$ begins with the stopping condition in Line~\ref{algline:stoppingif}. The if clause asks whether the phase history is complete (i.e., if there were $\tau$ arm pulls in that phase). If so, it examines the phase history and checks whether all arms in $Z$ satisfy their exposure constraints (the if clause in Line~\ref{alline:ifsatisfied}). In such a case, it returns zero as the sum of the remaining rewards as there are no further rounds in the phase. Otherwise, if there exists an arm whose exposure constraint is unsatisfied by the end of the phase, it returns $-\infty$ (Line~\ref{mfr_line5NEW}). Clearly, if the exposure constraints are unsatisfied, the complete phase history $H$ could not have been obtained by an algorithm committed to $Z$. 

The recursive step appears in Line~\ref{algline:MERreturn}. $\MER(H,Z)$ relies on the extended phase histories of $H$ after one round, i.e., $\His \cup \{a\}$ for every $a\in  Z$. For instance, if the user type arriving at round $\abs{H}+1$ is $u$ and arm $a$ is selected, then 
$ 
\MER(H,Z)=\MmuSub_{u,a}+\MER(\His \cup \{a\}) ,
$ 
where the first term in the right-hand side represents the reward in round $\abs{H}+1$ and the second term, per the inductive step, represents the maximal expected remaining reward $\tilde r_{\abs{H}+2:\tau}$ of any algorithm committed to $Z$ provided that the phase history is $\His \cup \{a\}$. To compute $\MER(H,Z)$, we take expectation over all possible user types. For each user type, we pick the arm that maximizes the expected remaining reward conditioned on that user type.

\subsubsection{Dynamic Programming Procedure ($\DP(Z)$)}
We now turn to explain $\DP(Z)$, see Line~\ref{algline:procDPZ} in Algorithm~\ref{alg:meranddp}. It receives an arm subset $Z\subseteq K$, and executes $\MER(\emptyset, Z)$ (Line~\ref{algline:offlinemer}). By doing so, as we prove later in Theorem~\ref{thm:dp}, we obtain a recipe of the optimal algorithm. In Line~\ref{algline:initH}, it initializes the phase history. Then, the for loop in Line~\ref{algline:forloop} handles the arm pulls of the phase. The procedure observes $u_t$, picks an arm $a_t$ and updates the phase history in Lines~\ref{algline:observeu}-\ref{algline:updateH}. When the phase ends, we move to Line~\ref{algline:repeat}, and repeat Lines~\ref{algline:initH}-\ref{algline:updateH} for the remaining phases. 

Notice that $\DP(Z)$ satisfies Propoerties~\ref{prop_PI12343} and \ref{prop_commited}. It is phase-independent since it operates in phases, restarting after every phase (Line~\ref{algline:repeat}). To claim that $\DP$ is committed to $Z$, we need to show the two conditions in Property~\ref{prop_commited}. The first condition holds since $\DP$ pulls arms from $Z$ in every round (Line~\ref{algline:argmax}). The second condition holds since it follows $\MER$, and $\MER$ sets $-\infty$ for completing a phase without satisfying the exposure constraints of the arms $Z$ (the else clause in Line~\ref{mfr_line5NEW}).

Having explained $\DP(Z)$, we now present its formal guarantees.
\begin{theorem}\label{thm:dp}
For every $Z\subseteq K$, it holds that
\[ \MER(\emptyset,Z)=\E_{\q}[\tilde{r}_{1:\tau}^{\DP(Z)}]=\E_{\q}[\tilde{r}_{1:\tau}^{\PIOPT(Z)}].
\]
\end{theorem}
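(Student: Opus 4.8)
The plan is to read $\MER(\cdot,Z)$ as the value function of a finite-horizon dynamic program over a single phase and to establish the two claimed equalities through two matching backward inductions. Fix $Z\subseteq K$. I regard a phase history $H$ — equivalently the vector of pull-counts $(H(a))_{a\in Z}$, whose sum $\abs{H}$ is the current round index — as the state; transitions are deterministic once the arm is chosen, the per-round reward is $\MmuSub_{u,a}$, and a terminal state violating some exposure constraint of $Z$ is assigned value $-\infty$, exactly as coded in Lines~\ref{algline:stoppingif}--\ref{algline:MERreturn}. Because the theorem concerns the per-phase reward $\tilde r_{1:\tau}$ directly, no appeal to Equation~\eqref{eq:phaseToT} is needed to set it up; I will only use phase-independence at the very end to relate $\DP(Z)$ and $\PIOPT(Z)$ as members of the same optimization class.

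For the upper bound I show by backward induction on $\abs{H}$, from $\abs{H}=\tau$ down to $\abs{H}=0$, that for \emph{every} algorithm $A$ committed to $Z$ and every history $H$ reachable under $A$, $\E_{\q}[\tilde r_{\abs{H}+1:\tau}^{A}\mid H]\le\MER(H,Z)$. In the base case $\abs{H}=\tau$ the left-hand side is $0$, and since $A$ is committed the history satisfies $H(a)\ge\delta_a$ for all $a\in Z$, so $\MER(H,Z)=0$ as well. For the step, conditioning on the user $u$ of round $\abs{H}+1$ and on $A$'s (possibly randomized, history-dependent) choice $a$, linearity of expectation and the inductive hypothesis give $\E_{\q}[\tilde r_{\abs{H}+1:\tau}^{A}\mid H]\le\E_{u\sim\ro}[\MmuSub_{u,a}+\MER(H\cup\{a\},Z)]\le\E_{u\sim\ro}[\max_{a'\in Z}\{\MmuSub_{u,a'}+\MER(H\cup\{a'\},Z)\}]=\MER(H,Z)$. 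Taking $H=\emptyset$ and noting that $\PIOPT(Z)$ is committed to $Z$ by definition yields $\E_{\q}[\tilde r_{1:\tau}^{\PIOPT(Z)}]\le\MER(\emptyset,Z)$.

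For the matching direction I show, again by backward induction on $\abs{H}$, that $\DP(Z)$ attains equality: $\E_{\q}[\tilde r_{\abs{H}+1:\tau}^{\DP(Z)}\mid H]=\MER(H,Z)$. The step is identical to the one above, except that $\DP(Z)$ selects $a_t\in\argmax_{a\in Z}\{\MmuSub_{u_t,a}+\MER(H^{t-1}\cup\{a\},Z)\}$ (Line~\ref{algline:argmax}), so both inequalities become equalities. Before this induction is meaningful I must verify that $\DP(Z)$ is genuinely committed, i.e.\ that its greedy choices never enter an infeasible terminal state; this feasibility-preservation argument is the main delicate point. Assumption~\ref{assumption:gamma} implies $\sum_{a\in Z}\delta_a\le\tau$, so a feasible completion exists and $\MER(\emptyset,Z)>-\infty$; and whenever $\MER(H,Z)>-\infty$, the recursion $\MER(H,Z)=\E_{u\sim\ro}[\max_{a}\{\cdots\}]$ together with $\roSub_u$ being bounded away from zero forces, for \emph{every} user type $u$ in the support, the existence of an arm $a$ with $\MER(H\cup\{a\},Z)>-\infty$. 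Since $\DP(Z)$ picks the argmax it always steps to a finite-value state, so by induction it reaches a complete history meeting all exposure constraints of $Z$, and is therefore committed.

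Combining the pieces closes the loop. The achievability induction gives $\E_{\q}[\tilde r_{1:\tau}^{\DP(Z)}]=\MER(\emptyset,Z)$; the upper bound gives $\E_{\q}[\tilde r_{1:\tau}^{\PIOPT(Z)}]\le\MER(\emptyset,Z)$; and since $\DP(Z)$ is a phase-independent algorithm committed to $Z$, while $\PIOPT(Z)$ is by definition a maximizer over exactly this class (per-phase reward, by phase-independence), we also have $\E_{\q}[\tilde r_{1:\tau}^{\PIOPT(Z)}]\ge\E_{\q}[\tilde r_{1:\tau}^{\DP(Z)}]$. These three relations pin all three quantities to $\MER(\emptyset,Z)$. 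The only real subtleties I anticipate are the feasibility-preservation step above and the bookkeeping ensuring the upper-bound induction covers arbitrary (history-dependent, randomized) committed policies rather than only Markovian ones, so that the dynamic-programming value is genuinely optimal and not merely optimal within a restricted policy class.
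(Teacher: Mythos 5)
Your proof is correct and follows essentially the same route as the paper's: a backward induction on the phase history showing that $\MER(\cdot,Z)$ is the optimal value function over committed policies, with $\DP(Z)$ achieving it by taking the argmax at each step. Your explicit feasibility-preservation argument (that the argmax never steps into a $-\infty$ state) and your care to cover history-dependent randomized policies in the upper-bound induction make rigorous two points the paper handles only briefly, via Assumption~\ref{assumption:gamma} and the definition of $\PIOPT(H,Z)$.
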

The proof of Theorem~\ref{thm:dp} appears in {\ifnum\Includeappendix=1{Appendix~\ref{Asubsec:thmdp}}\else{the appendix}\fi}.
Since $\DP(Z)$ satisfies Properties~\ref{prop_PI12343} and~\ref{prop_commited}, Equation~\eqref{eq:phaseToT} and Theorem~\ref{thm:dp} also hint that $E[\tilde{r}^{\DP(Z)}]=\E[\tilde{r}^{\PIOPT(Z)}].$
The correctness of Theorem~\ref{thm:dp} follows from the optimality of $\MER$. In every round, after observing the user type in Line~\ref{algline:observeu}, we pull the arm that balances the instantaneous utility $\mu_{u_t,a}$ and the expected reward of the future rounds. 

\subsubsection{Finding the Optimal Arm Subset}
We are ready to explain $\DP^\star$. In Line~\ref{algline:DPbestZ}, it picks the best subset $Z^\star$ to commit to, i.e., the one maximizing the expected reward. Afterward, in Line~\ref{algline:followDPZ}, it executes the $\DP$ procedure when initiated with $Z^\star$. 
Due to Theorem~\ref{thm:dp} and Equation~\eqref{eq:phaseToT},
\begin{equation}\label{eq:optimalDP}
\E[\tilde{r}^{\DP^\star}]:=\max_{Z\subseteq K}\E[\tilde{r}^{\DP(Z)}]=\E[\tilde{r}^{\PIOPT}].
\end{equation}

\subsubsection{Runtime}
The following Proposition~\ref{MERruntime} asserts that Line~\ref{algline:DPbestZ}, namely computing $\MER(H,Z)$ for every $H$, can be done in $poly(\tau)$ time.
\begin{proposition}\label{MERruntime}
Finding $Z^\star$ in Line~\ref{algline:DPbestZ} takes  $O(2^k\tau (\tau+k)^{k-1} \cdot kn)$ steps.
\end{proposition}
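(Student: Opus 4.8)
The plan is to account for the cost as (number of subsets $Z\subseteq K$) $\times$ (cost of computing $\MER(\emptyset,Z)$ for one subset), since the outer maximization $\argmax_{Z\subseteq K}\MER(\emptyset,Z)$ in Line~\ref{algline:DPbestZ} only adds an $O(2^k)$ comparison step on top, which is absorbed into the final bound. There are $2^k$ subsets, so the whole task reduces to showing that a single evaluation of $\MER(\emptyset,Z)$ costs $O\bigl(\tau(\tau+k)^{k-1}\,kn\bigr)$. The first thing I would establish is that, although $\MER(\His,Z)$ is written as a function of the multiset $\His$, its value depends on $\His$ \emph{only through the count vector} $\bigl(\His(a)\bigr)_{a\in K}$, not on the order in which arms were pulled. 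This follows by backward induction on $\tau-\abs{\His}$: the base case (Lines~\ref{algline:stoppingif}--\ref{mfr_line5NEW}) compares each $\His(a)$ against $\delta_a$ and so reads the counts only, and in the recursive step (Line~\ref{algline:MERreturn}) each child $\His\cup\{a\}$ has a count vector determined entirely by that of $\His$. Consequently $\MER(\cdot,Z)$ can be evaluated by a memoized, bottom-up dynamic program whose table is indexed by count vectors, so each distinct count vector is computed exactly once.

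Next I would count the states. For a fixed $Z$ the committed algorithm never pulls arms outside $Z$, so the reachable histories are multisets over $Z$; bounding loosely by all length-$k$ count vectors, the states are the nonnegative integer vectors $(c_1,\dots,c_k)$ with $\sum_a c_a\leq\tau$. I would count these level by level, where level $t$ collects the vectors with $\sum_a c_a=t=\abs{\His}$. The number of vectors at level $t$ is $\binom{t+k-1}{k-1}\leq (t+k-1)^{k-1}\leq(\tau+k)^{k-1}$, and there are $\tau+1=O(\tau)$ levels. Hence the number of states for a fixed $Z$ is at most $O\bigl(\tau(\tau+k)^{k-1}\bigr)$.

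It then remains to bound the work per state. Computing $\MER(\His,Z)$ from its already-memoized children amounts to evaluating $\E_{u\sim\ro}\bigl[\max_{a\in Z}\{\MmuSub_{u,a}+\MER(\His\cup\{a\},Z)\}\bigr]$: for each of the $n$ user types the inner maximization scans at most $\abs{Z}\leq k$ arms, each requiring one table lookup, and the outer expectation is a weighted sum of these $n$ values with weights $\roSub_u$. This is $O(nk)$ arithmetic operations per state. Multiplying the three factors gives $2^k\cdot O\bigl(\tau(\tau+k)^{k-1}\bigr)\cdot O(nk)=O\bigl(2^k\tau(\tau+k)^{k-1}\,kn\bigr)$, as claimed.

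The only genuinely delicate point is the state count, specifically obtaining the exponent $k-1$ rather than $k$. A naive bound on the total number of count vectors, $\binom{\tau+k}{k}\leq(\tau+k)^k$, would give the weaker exponent $k$; the sharper bound comes from separating ``states per level'' $(\tau+k)^{k-1}$ from the ``number of levels'' $O(\tau)$, which is exactly where the $\tau\cdot(\tau+k)^{k-1}$ shape of the claimed bound originates. The remaining steps — the order-independence that legitimizes memoization, and the $O(nk)$ per-state cost — are routine once the recursion in $\MER$ is read off directly.
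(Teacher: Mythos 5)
Your proposal is correct and follows essentially the same route as the paper: $2^k$ subsets, times the number of reachable histories (counted as multisets via stars and bars, level-by-level, giving $\tau\cdot\binom{\tau+k-1}{k-1}\leq\tau(\tau+k)^{k-1}$), times $O(kn)$ work per history for the expectation over $n$ types and the maximization over at most $k$ arms. The only addition is your explicit order-independence/memoization remark, which the paper leaves implicit since $\His$ is already defined as a multiset.
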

The last ingredient of the runtime is the execution of $\DP(Z^\star)$ in Line~\ref{algline:followDPZ}. Every round of $\DP(Z^\star)$ takes roughly $k$ steps; therefore, executing $\DP(Z^\star)$ requires runtime of $O(Tk)$. Overall, we conclude that executing $\DP^\star$ takes $O(Tk+2^k\tau (\tau+k)^{k-1} \cdot kn)$. While exponential in $k$, recall that we focus on the regime where $n,k$ are constants. In such a case, the runtime is actually $O(T+\tau^k)$.\footnote{Importantly, as we discuss in Subsection~\ref{subsec:pick z}, the $2^k$ factor required for finding $Z^\star$ is inevitable as the problem is NP-complete.} A runtime of $O(T+\tau^k)$ is feasible when the phase length is short, e.g., if $\tau = \log T$; however, for some applications, e.g., if $\tau=T^{\nicefrac{2}{3}}$ and $k=100$, $O(\tau^k)$ is still impractical. In the next subsection, we propose a faster algorithm that trades performance with runtime.

\subsection{Lower Confidence Bound Approximation} \label{subsec:lcb_approx}
In this subsection, we develop the Lower Confidence Bound algorithm (denoted $\OR^\star$). Its performance only approximates that of $\PIOPT$, but it is much more efficient than $\DP^\star$ (runtime of $O(\tau^3)$). To motivate $\OR^\star$, recall that stochastic optimization is an online task. The problem would have been easier if we knew the user type arrival vector $\q$ (see Equation~\eqref{eq:phaseToT}): We could optimize the expected reward in an offline, deterministic fashion (that we later name $\DOAlg$). Despite that we do not know $\q$, we can use concentration inequalities to create a lower confidence bound for the user type arrival. $\OR$ implements these two techniques: Concentration and offline matching.

A useful notation for this section is that of $c(\q)$. We let $c(\q)\in \mathbb N^n$ be the user type aggregates such that $c(\q)_u = \sum_{i=1}^\tau \ind_{q_t=u}$. 

\subsubsection{Offline Matching}
We begin by explaining the offline matching problem. The input is a subset $Z\subseteq K$, all the instance parameters $\textit{and}$ a user type arrival sequence $\uv=(u_1,\dots,u_\tau)$ of a phase. The goal is to find $(a_1,\dots,a_\tau)\in Z^\tau$ that maximizes the phase reward $\sum_{t=1}^\tau \mu_{u_t,a_t}$ while satisfying the exposure constraints of every $a\in Z$. We emphasize that the algorithm can only pull arms from $Z$. A more careful look tells us that we can drop the order of $\uv=(u_1,\dots,u_\tau)$ since the objective function is invariant under permutations: Swap two entries in $\uv$, and the optimal solution will be swapping these two entries as well. We can therefore restate the offline matching problem with the aggregate $c(\uv)$, where the  solution is a matrix $M\in \mathbb N^{n \times k}$ such that $M(u,a)$ is the number of times a user of type $u$ was matched to an arm $a$. For convenience, let $M(u,\cdot)=\sum_{a\in K} M(u,a)$ and similarly $M(\cdot, a)= \sum_{u \in U} M(u,a)$ be the summation of the $u$'th row and the $a$'th column in $M$, respectively.

The function $\DOAlg(c(\uv),Z)$, which appears in Line~\ref{alglineor:doalgproc} of Algorithm~\ref{alg:lcbapprox234f}, returns the optimal  matching for the user type arrival aggregate $c(\uv) \in \mathbb N^n$ and subset of arms $Z$. Since the implementation details of $\DOAlg$ are secondary for the rest of this subsection, we defer its formal discussion to Subsection~\ref{subsec: match algorithm}. 
Importantly, its runtime is $O(\tau^3 k^3)$. 

\subsubsection{Concentrated Aggregates and Slack Users} 

Next, we rely on the concentration of type arrival distribution $\q$ (see Equation~\eqref{eq:phaseToT}) to create a (high probability) lower confidence bound for the user type arrival. 
Notice that since $\q\sim \ro^{\tau}$, $c(\q)\sim multinomial(\ro, \tau)$. A straightforward application of Hoeffding's inequality suggests that
\begin{equation}\label{eq:concentration}
\Pr_{\q\sim \ro^\tau}\left(\exists u: c(\q)_u < P_u \cdot \tau-\sqrt{\tau\log{\tau}} \right) < \frac{2n}{\tau^2}.
\end{equation}
Let ${\cv}=(c_1,\dots, c_n)$ be a vector such that ${c}_u =  P_u \cdot \tau-\sqrt{\tau\log{\tau}}$. Inequality~\eqref{eq:concentration} suggests that with probability of at least $1-\frac{2n}{\tau^2}$ over user type arrival vectors, there will be at least $c_{u}$ users of type $u$ for every $u\in U$. We term this event the \textit{clean event}. The complementary case, which we term the \textit{bad event}, is when there is a user type for which $c(\q)_u < c_u$; this happens with probability at most $\frac{2n}{\tau^2}$.

We want to construct a stochastic optimization algorithm that relies on the lower bound confidence aggregates. One straightforward approach would be to execute $\DOAlg(\cv,Z)$ offline and then match the first $c_u$ users of type $u$ according to the optimal matching obtained for $\DOAlg(\cv,Z)$. However, this approach suffers from two drawbacks. First, bad events happen in a given phase with a probability of roughly $\nicefrac{2n}{\tau^2}$. Since we have $\nicefrac{T}{\tau}$ phases, we are likely to  experience many such bad events and need to account for them. And second,  $\DOAlg(\cv,Z)$ might be infeasible since it assumes $\sum_u c_u$ users in a phase and not $\tau$; hence, perhaps not all the exposure constraints could be satisfied simultaneously (recall  that Inequality~\eqref{eq:assumption1} ensures that $\tau$ users are enough). To that end, we augment the set of user types $U$ with an additional type we call \textit{slack users} and denote $u^\star$. We extend the utility matrix $\Mmu$ such that for all $a\in K$, $\mu_{u^\star, a}=0$. Slack users are used to ensuring the exposure constraints hold and have no impact on the reward. 

Next, we slightly modify the lower confidence bound vector ${\cv}=(c_1,\dots, c_n)$. Denote by $\cv^+=(c_1,\dots,c_n,c_{n+1})$, where $c_{n+1}=\tau-\sum_{i=1}^n c_i  = n\sqrt{\tau\log\tau}$. I.e., the first $n$ entries of $\cv^+$ are the lower confidence bound of each user type, and the last entry absorbs the slack between the sum of the lower bounds and the phase length. Indeed, this is precisely the number of slack users. Executing $\DOAlg(\cv^+,Z)$, we get a matching satisfying the exposure constraints of the arms in $Z$. The optimal matching $M$, $M\in \mathbb{N}^{(n+1)\times k}$, has now an additional row for the type $u^\star$. Moreover, for every $u\in U \cup \{u^\star\}$, $M(u,\cdot)=c^+_u$. Finally, executing $\DOAlg$ on $\cv^+$ or on a random aggregate vector $c(\q)$ yields a similar reward.
\begin{proposition}\label{prop_short1234322}
For every $ Z\subseteq K$, it holds that  \[\DOAlg(\cv^+,Z)\geq \E_{\q}[\DOAlg(c(\q),Z)]-n\sqrt{\tau\log {\tau}}.\]
\end{proposition}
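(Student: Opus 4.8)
The plan is to exploit the coupling between a random aggregate $c(\q)$ and the deterministic lower-confidence aggregate $\cv^+$ that is already built into the clean/bad event decomposition. Since $\DOAlg(\cv^+, Z)$ does not depend on $\q$, it suffices to upper bound $\E_{\q}[\DOAlg(c(\q),Z)]$ by $\DOAlg(\cv^+,Z) + n\sqrt{\tau\log\tau}$, and I would do this by conditioning on whether the clean event occurs.

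The core is a construction on the clean event. Fix a realization of $\q$ for which $c(\q)_u \geq c_u$ for every $u\in U$, and let $M^{\q}$ be an optimal matching for $\DOAlg(c(\q),Z)$, so that $M^{\q}(\cdot,a)\geq \delta_a$ for all $a\in Z$ and $M^{\q}(u,\cdot)=c(\q)_u$. I would turn $M^{\q}$ into a feasible matching $M'$ for $\DOAlg(\cv^+,Z)$ by converting the ``excess'' real users into slack users \emph{without changing any column sum}. Concretely, for each type $u$ I delete $c(\q)_u-c_u$ of its assignments, and for every deleted assignment that sat in column $a$ I insert one slack assignment $(u^\star,a)$ into the same column. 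The total number of deletions is $\sum_u (c(\q)_u-c_u)=\tau-\sum_u c_u = c_{n+1}=n\sqrt{\tau\log\tau}$, which is exactly the number of slack users available, so the row profile of $M'$ matches $\cv^+$ while every column sum is preserved, hence $M'(\cdot,a)=M^{\q}(\cdot,a)\geq\delta_a$ and $M'$ is feasible and uses only arms in $Z$. Because slack users contribute zero reward while each deleted assignment has utility at most $1$, the reward drops by at most the number of deletions; thus the value of $M'$, and \emph{a fortiori} $\DOAlg(\cv^+,Z)$, is at least $\DOAlg(c(\q),Z)-n\sqrt{\tau\log\tau}$. As $\DOAlg(\cv^+,Z)$ is constant, this inequality holds pointwise on the entire clean event.

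To finish, I would take expectations using the clean/bad split. On the clean event the pointwise bound above gives $\DOAlg(c(\q),Z)-\DOAlg(\cv^+,Z)\leq n\sqrt{\tau\log\tau}$; on the bad event, whose probability is at most $2n/\tau^2$ by Inequality~\eqref{eq:concentration}, I use the crude bounds $0\leq \DOAlg(\cv^+,Z)$ and $\DOAlg(c(\q),Z)\leq \tau$, where feasibility of the $\tau$-user instance is guaranteed by Assumption~\ref{assumption:gamma} (which forces $\sum_{a\in Z}\delta_a\leq\tau$). Combining, $\E_{\q}[\DOAlg(c(\q),Z)]-\DOAlg(\cv^+,Z)\leq n\sqrt{\tau\log\tau}+\tau\cdot\frac{2n}{\tau^2}$, and the trailing $O(n/\tau)$ contribution is dominated by $n\sqrt{\tau\log\tau}$, yielding the claim.

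The main obstacle is the clean-event construction: I must argue that deleting exactly the excess users and compensating with slack users placed in the same columns simultaneously (i) restores the row profile to $\cv^+$, (ii) leaves all exposure constraints satisfied, and (iii) costs at most one unit of reward per deletion. The column-preserving replacement is precisely what makes (ii) automatic and decouples feasibility from the choice of which assignments to delete, so only (iii) needs the $\mu\in[0,1]$ bound. A secondary technical point is the integrality of $c_u$ and $c_{n+1}$, where the implicit rounding in the definition of $\cv$ must be carried through so that the deletion counts remain nonnegative integers that sum to the slack budget.
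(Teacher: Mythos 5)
Your proposal is correct and follows essentially the same route as the paper: condition on the concentration (clean) event from Inequality~\eqref{eq:concentration}, show the pointwise bound $\DOAlg(c(\q),Z)-\DOAlg(\cv^+,Z)\leq n\sqrt{\tau\log\tau}$ by having the $\cv^+$-instance mimic the optimal matching for $c(\q)$ while absorbing the excess into zero-reward slack users, and absorb the bad event using its $\nicefrac{2n}{\tau^2}$ probability and the trivial bound $\tau$. Your column-preserving replacement argument is simply a more explicit version of the paper's one-line justification that $\DOAlg(\cv^+,Z)$ ``loses at most $1$ for every slack user'' and can otherwise act as $\DOAlg(c(\q),Z)$.
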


\subsubsection{The $\OR(Z)$ Procedure}
\begin{algorithm}[t]
\caption{$\OR^\star$ algorithm}\label{alg:lcbapprox234f}
\begin{algorithmic}[1]
 \Statex\textbf{Input}: $U, K, \delt,\tau,T,\Mmu,\ro$
 \State pick $Z^\star \in \argmax_{Z\subseteq K}\DOAlg(\cv^+, Z)$ \label{alglineor:bestZ}
 \State follow $\OR(Z^\star)$ \label{alglineor:followORZ}
\Procedure{$\OR$}{$Z$} \label{alglineor:procORZ}
\State let $M$ be the optimal matching of $\DOAlg({\cv^+},Z)$ \label{alglineor:matchinglcb}
\For {$t=1,\dots,\tau$}\label{alglineor:for}
    \State observe $u_t$ \label{alglineor:observeu}
    \If {$M(u^\star,\cdot) = 0$ and $M(u_t,\cdot)=0$ } \label{alglineor:badevent} 
    \State select $u_t \in \{u\in U: M(u,\cdot) >0  \}$\label{alglineor:changeubad} \Comment{bad event} 
    \ElsIf {$M(u_t,\cdot)=0$}  $u_t \gets u^\star$\label{alglineor:setstar}
    \EndIf
    \State select $a_t \in \left\{a\in Z :M(u_t,a) >0   \right\}$, pull $a_t$ \label{alglineor:at}
    \State $M(u_t,a_t) \gets M(u_t,a_t)-1$\label{alglineor:updateM}
\EndFor
\State go to line~\ref{alglineor:matchinglcb} \label{alglineor:init}\Comment{repeat the process for phases $2,\dots ,\nicefrac{T}{\tau}$}
\EndProcedure
\Function{$\DOAlg$}{$\cv,Z$} \label{alglineor:doalgproc}
\State return the reward-maximizing matching $M$ of the users in $\cv$ to arms in $Z$, satisfying the exposure constraints of arms $Z$ 
\EndFunction
\end{algorithmic}
\end{algorithm}
We now explain the $\OR(Z)$ procedure, which is implemented in Line~\ref{alglineor:procORZ} of Algorithm~\ref{alg:lcbapprox234f}. It gets a subset of arms $Z\subseteq K$ as an input. In Line~\ref{alglineor:matchinglcb} we execute the function $\DOAlg(\cv^+,Z)$ and store the matrix $M$ indicating the optimal offline matching. In Line~\ref{alglineor:for} we enter the for loop of the phase. Throughout the execution of the phase, we use 
$M(u,a)$ to count the number of pulls of arm $a\in Z$ required for the user type $u\in U\cup \{u^\star \}$ under the clean event in the remaining rounds. Similarly, $M(u,\cdot)$ counts the minimal number of type $u$-users we still expect to see under the clean event. Lines~\ref{alglineor:badevent}-\ref{alglineor:changeubad} are devoted to the bad event; we address it later on. In the clean event, we proceed to Line~\ref{alglineor:setstar}. We then ask whether we have already observed at least $c_{u_t}=P_{u_t}\tau-\sqrt{\tau\log{\tau}}$ type $u_t$-users (recall that $M(u_t,\cdot)=c^+_{u_t}$ at the beginning of the phase). In such a case, we treat $u_t$ as a slack user and set $u_t \gets u^\star$. In Line~\ref{alglineor:at} we select an arm $a_t$ from the set of arms for which pulls are still required according to  $\DOAlg(\cv^+,Z)$ and pull it. In the following Line~\ref{alglineor:updateM}, we update the matrix $M$ that $(u_t,a_t)$ was selected, meaning that this pair will be pulled less in the remaining rounds. After the phase terminates, we move to Line~\ref{alglineor:init} and start a new phase. 

The case of the bad event, addressed in Line~\ref{alglineor:badevent}, requires a more careful attention. 
\begin{proposition}\label{prop:badevent_ifonlyif}
Line~\ref{alglineor:changeubad}
is executed if and only if the bad event has occurred in that phase.
\end{proposition}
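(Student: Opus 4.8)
The plan is to track the matching matrix $M$ throughout a single phase and to classify each of the $\tau$ rounds into one of three mutually exclusive cases according to which branch of the for-loop it triggers: the \emph{reassignment} case (Line~\ref{alglineor:changeubad}, the one we must show is equivalent to the bad event), the \emph{slack} case (Line~\ref{alglineor:setstar}), and the \emph{direct} case (the remaining rounds). The central bookkeeping fact is that the row sums satisfy $\sum_{u} M(u,\cdot)=\tau$ at the start of the phase and that every round decrements exactly one \emph{positive} entry of $M$ by one (Line~\ref{alglineor:updateM}); indeed, after the relabeling logic we always have $M(u_t,\cdot)>0$, so the arm set in Line~\ref{alglineor:at} is nonempty. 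Hence all $\tau$ rounds are well defined—in particular, whenever both $M(u^\star,\cdot)=0$ and $M(u_t,\cdot)=0$ the remaining total budget $\tau-(t-1)\ge 1$ lies in some other real type, so the reassignment in Line~\ref{alglineor:changeubad} is feasible—and by the end of the phase every row sum, including $M(u^\star,\cdot)$, has been driven to $0$. I will also use that, for a real type $u$, the budget $M(u,\cdot)$ is decremented only in the direct case (a genuine type-$u$ arrival) or in the reassignment case (an arrival relabeled as $u$), since the slack case decrements only $M(u^\star,\cdot)$.

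For the direction \emph{bad event} $\Rightarrow$ \emph{Line~\ref{alglineor:changeubad} is executed}, I would fix a type $u^0$ witnessing the bad event, i.e.\ with $N_{u^0}<c_{u^0}$, where $N_u$ denotes the number of type-$u$ arrivals in the phase. Since $M(u^0,\cdot)$ starts at $c_{u^0}$ and ends at $0$, it undergoes exactly $c_{u^0}$ decrements. At most $N_{u^0}$ of these come from the direct case (one per genuine type-$u^0$ arrival), so at least $c_{u^0}-N_{u^0}\ge 1$ of them must come from the reassignment case. Therefore Line~\ref{alglineor:changeubad} is executed at least once.

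For the converse I would argue the contrapositive: assuming the clean event ($N_u\ge c_u$ for all $u$), Line~\ref{alglineor:changeubad} is never executed. Suppose toward a contradiction that round $t$ is the \emph{first} reassignment round, so $M(u^\star,\cdot)=0$ and $M(u_t,\cdot)=0$ hold at round $t$. Because no reassignment has occurred before $t$, every prior round is a direct or slack round; writing $A_u^{t-1}$ for the number of type-$u$ arrivals in rounds $1,\dots,t-1$, the slack case has then fired exactly $\sum_{u}\max(A_u^{t-1}-c_u,0)$ times, and $M(u^\star,\cdot)=0$ forces this count to equal the total slack $c_{n+1}=\sum_u (N_u-c_u)$. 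Using $A_u^{t-1}\le N_u$ together with the clean event, $\sum_u \max(A_u^{t-1}-c_u,0)\le \sum_u (N_u-c_u)=c_{n+1}$, so equality must hold termwise. In particular $M(u_t,\cdot)=0$ gives $A_{u_t}^{t-1}\ge c_{u_t}$, whence $A_{u_t}^{t-1}=N_{u_t}$; but a type-$u_t$ user arrives at round $t$, contradicting $A_{u_t}^{t}=A_{u_t}^{t-1}+1\le N_{u_t}$.

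The routine parts—the conservation of $\sum_u M(u,\cdot)$ and the classification of decrements—are immediate from inspecting Lines~\ref{alglineor:badevent}--\ref{alglineor:updateM}. I expect the main obstacle to be the converse direction: one must use that ``round $t$ is the \emph{first} reassignment'' to guarantee that every earlier round is a clean direct or slack round, which is exactly what makes the slack-consumption count equal to $\sum_u\max(A_u^{t-1}-c_u,0)$ and lets the termwise equality be extracted. The deficit direction, by contrast, needs only the global budget-conservation identity and is a short counting argument.
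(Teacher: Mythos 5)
Your proof is correct and rests on the same bookkeeping as the paper's: the slack row absorbs exactly one decrement per arrival whose type-budget is already exhausted, so Line~\ref{alglineor:changeubad} fires precisely when $\sum_{u}\max\bigl(0,c(\q)_u-c_u\bigr)$ exceeds the slack budget $n\sqrt{\tau\log\tau}=\sum_{u}\bigl(c(\q)_u-c_u\bigr)$, which happens iff some type is in deficit. The paper packages this as a single chain of equivalences while you split it into a budget-conservation argument for the deficit direction and a first-reassignment contradiction for the converse, but the underlying counting identity is identical.
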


The proof of Proposition~\ref{prop:badevent_ifonlyif} appears in {\ifnum\Includeappendix=1{Appendix~\ref{subsec: badevent ifonlyif proof}}\else{the appendix}\fi}. Further,
\begin{proposition}\label{prop:DOlalgiscommitted}
$\OR(Z)$ satisfies Properties~\ref{prop_PI12343} and \ref{prop_commited}. 
\end{proposition}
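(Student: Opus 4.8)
The plan is to verify that $\OR(Z)$ satisfies the two required properties directly from its definition in Algorithm~\ref{alg:lcbapprox234f}. I would split the argument into two parts, one for each property.

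\textbf{Phase independence (Property~\ref{prop_PI12343}).} First I would argue that $\OR(Z)$ restarts from scratch at the beginning of every phase. Inspecting the procedure, the matching matrix $M$ is recomputed in Line~\ref{alglineor:matchinglcb} from $\DOAlg(\cv^+, Z)$ at the start of each phase, and the loop in Line~\ref{alglineor:for} only consults $M$ and the arrivals $u_1,\dots,u_\tau$ of the \emph{current} phase. The key observation is that the arm selected in round $t$ of a phase is a deterministic function of the within-phase history $(u_1,\dots,u_t)$ alone, since $M$ at the start of every phase is identical (it depends only on the fixed inputs $\cv^+, Z$, not on which phase we are in), and each step updates $M$ deterministically given the observed type. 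Consequently, if two phases $i<j$ share the same length-$t$ prefix of arrivals, then at every round $1,\dots,t$ the internal state $M$ and the chosen arm coincide, so $a^{\OR(Z)}_{i\tau+t}=a^{\OR(Z)}_{j\tau+t}$, which is exactly Property~\ref{prop_PI12343}. I would need to be a little careful that the arm \emph{selection} steps in Lines~\ref{alglineor:changeubad} and~\ref{alglineor:at} are among sets, so I should fix a canonical tie-breaking rule (e.g., smallest index) to make the algorithm a genuine deterministic function; with this convention phase independence is immediate.

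\textbf{Commitment to $Z$ (Property~\ref{prop_commited}).} This has two sub-claims. The first condition, $a_t\in Z$, holds trivially because Line~\ref{alglineor:at} always selects $a_t$ from a subset of $Z$. The second, and more substantial, condition is $Z\subseteq K^{\OR(Z)}_t$ for all $t$ with probability $1$, i.e., every arm in $Z$ meets its exposure threshold $\delta_a$ in every completed phase. Here I would lean on the structure of the matching $M$ returned by $\DOAlg(\cv^+,Z)$: by definition $\DOAlg$ returns a matching satisfying the exposure constraints of all arms in $Z$, so $\sum_{u} M(u,a)\geq \delta_a$ for each $a\in Z$ at the start of the phase. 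The argument then reduces to showing that $\OR(Z)$ actually realizes all of these prescribed pulls during the phase — i.e., every unit of $M$ gets consumed before the $\tau$ rounds run out. Under the clean event this is direct, since the arrivals supply at least $c_u$ users of each real type and the slack budget $c_{n+1}$ absorbs the remainder, so every row of $M$ is drained exactly.

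\textbf{Main obstacle.} The crux, and the place I expect to spend the most care, is the bad event handled in Lines~\ref{alglineor:badevent}--\ref{alglineor:changeubad}. When fewer than $c_u$ users of some real type $u$ arrive, the algorithm reassigns the current arrival to a type that still has outstanding demand (Line~\ref{alglineor:changeubad}), thereby reusing the arrival to fulfill a different row of $M$. I would need to show that this reassignment mechanism never gets stuck, namely that whenever a round's natural type has $M(u_t,\cdot)=0$ there always remains some type $u$ with $M(u,\cdot)>0$ to redirect to (as long as the phase is not yet complete), so that all $\tau=\sum_u c^+_u$ units of $M$ are consumed even off the clean event. Proposition~\ref{prop:badevent_ifonlyif}, already established, pins down exactly when Line~\ref{alglineor:changeubad} fires, and I would invoke it to characterize the bad-event branch; combined with the fact that $\sum_u M(u,\cdot)$ equals the number of rounds remaining at every step, a counting argument shows the matching is fully realized regardless of the arrival pattern. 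Since every prescribed pull of each $a\in Z$ is thus executed within the phase, $a$ receives at least $\delta_a$ pulls and hence stays viable, giving $Z\subseteq K^{\OR(Z)}_t$ with probability $1$ and completing the commitment claim.
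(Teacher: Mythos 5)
Your proposal is correct and follows essentially the same route as the paper's proof: phase independence from the per-phase reset of $M$, the first commitment condition from Line~\ref{alglineor:at}, and the second from the counting argument that $M$ sums to $\tau$ at the start of each phase, each round drains exactly one positive entry via the fallbacks in Lines~\ref{alglineor:badevent}--\ref{alglineor:setstar}, and hence every arm $a\in Z$ receives all $M(\cdot,a)\geq\delta_a$ of its prescribed pulls. Your extra care about tie-breaking in the selection steps is a reasonable refinement but does not change the substance of the argument.
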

Proposition~\ref{prop:DOlalgiscommitted} guarantees that all the arms in $Z$ remain viable during $\OR(Z)$'s execution almost surely. The proof of Proposition~\ref{prop:DOlalgiscommitted} appears in {\ifnum\Includeappendix=1{Appendix~\ref{subsec: DOlalg is committed proof}}\else{the appendix}\fi}. 

After explaining how  $\OR(Z)$ operates, we are ready to prove its formal guarantees. The next Lemma~\ref{lemma:lcbz and pico z} shows that  $\OR(Z)$ approximates $\PIOPT(Z)$.
\begin{lemma}\label{lemma:lcbz and pico z} For every $Z\subseteq K$ it holds that 
\[\E_{\q}[\tilde{r}_{1:\tau}^{\OR(Z)}] \geq \DOAlg(\cv^+,Z)-\frac{2n}{\tau} \geq
\E_{\q}[\tilde{r}_{1:\tau}^{\PIOPT(Z)}] -\tilde{O}\left(n \sqrt \tau\right).\]
\end{lemma}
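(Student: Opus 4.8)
The plan is to prove the two inequalities separately, since they rest on different ideas. For the left inequality $\E_{\q}[\tilde{r}_{1:\tau}^{\OR(Z)}] \geq \DOAlg(\cv^+,Z)-\frac{2n}{\tau}$, I would condition on the clean event versus the bad event of Inequality~\eqref{eq:concentration}. Let $M$ be the fixed optimal matching of $\DOAlg(\cv^+,Z)$ that $\OR(Z)$ loads in Line~\ref{alglineor:matchinglcb}, so that $\DOAlg(\cv^+,Z)=\sum_{u\in U}\sum_{a\in Z}M(u,a)\MmuSub_{u,a}$ (the slack row $u^\star$ contributes nothing, as $\MmuSub_{u^\star,a}=0$). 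The key claim is that under the clean event $\OR(Z)$ collects a phase reward of at least $\DOAlg(\cv^+,Z)$. Indeed, by Proposition~\ref{prop:badevent_ifonlyif} the bad-event branch (Line~\ref{alglineor:changeubad}) is never taken under the clean event, so every arriving user is routed either into a matching slot of its true type (while $M(u_t,\cdot)>0$) or into a slack slot (after relabeling $u_t\gets u^\star$). Since $\sum_{u\in U}c_u+c_{n+1}=\tau$ and the clean event guarantees at least $c_u$ arrivals of each type $u$, a counting argument shows that every real slot is filled by a user of the correct type and that the $c_{n+1}$ excess arrivals exactly fill the slack slots. The real slots contribute precisely $\DOAlg(\cv^+,Z)$, while the slack slots contribute a nonnegative amount (the genuine utility of the relabeled users), giving a phase reward $\geq \DOAlg(\cv^+,Z)$.

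To finish the left inequality, I would bound the loss on the bad event. Proposition~\ref{prop:DOlalgiscommitted} guarantees that $\OR(Z)$ always yields a nonnegative phase reward, hence $\E_{\q}[\tilde{r}_{1:\tau}^{\OR(Z)}] \geq \Pr[\text{clean}]\cdot \DOAlg(\cv^+,Z)$. Because the matching has $\tau$ slots each of reward at most $1$ we have $\DOAlg(\cv^+,Z)\leq \tau$, and $\Pr[\text{bad}]\leq \frac{2n}{\tau^2}$ by Inequality~\eqref{eq:concentration}; thus the incurred loss is at most $\frac{2n}{\tau^2}\cdot\tau=\frac{2n}{\tau}$, which is exactly the claim.

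For the right inequality $\DOAlg(\cv^+,Z)\geq \E_{\q}[\tilde{r}_{1:\tau}^{\PIOPT(Z)}] -\tilde{O}(n\sqrt\tau)$, I would use a per-sample-path domination argument. Fix a realization $\q$ of the phase arrivals. Since $\PIOPT(Z)$ is committed to $Z$ (Property~\ref{prop_commited}), its realized pulls lie in $Z^\tau$ and satisfy the exposure constraint of every arm in $Z$ with probability $1$; hence they form a feasible matching for the aggregate $c(\q)$, and its realized reward is at most the offline optimum $\DOAlg(c(\q),Z)$. Taking expectation over $\q$ gives $\E_{\q}[\tilde{r}_{1:\tau}^{\PIOPT(Z)}]\leq \E_{\q}[\DOAlg(c(\q),Z)]$. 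Combining this with Proposition~\ref{prop_short1234322}, namely $\DOAlg(\cv^+,Z)\geq \E_{\q}[\DOAlg(c(\q),Z)]-n\sqrt{\tau\log\tau}$, and recalling $n\sqrt{\tau\log\tau}=\tilde{O}(n\sqrt\tau)$, completes the second inequality.

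The main obstacle is the clean-event analysis of the first inequality: one must argue rigorously that $\OR(Z)$ reproduces the offline matching value on the clean event. This is where Proposition~\ref{prop:badevent_ifonlyif} (ruling out the bad-event branch) and the counting identity $\sum_{u\in U}c_u+c_{n+1}=\tau$ are essential, together with the observation that relabeling a genuine user as slack can only increase the realized reward because slack utilities are zeroed. Everything else — the bad-event loss bound and the sample-path domination by the offline optimum — is routine once these structural facts are established.
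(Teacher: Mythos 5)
Your proposal is correct and follows essentially the same route as the paper: the left inequality is the paper's Proposition~\ref{prop_short1234321} (clean-event coupling of $\OR(Z)$ with the offline matching plus a $\frac{2n}{\tau^2}\cdot\tau$ bad-event loss), and the right inequality is the paper's per-realization domination $\tilde{r}_{1:\tau}^{\PIOPT(Z)}(\q)\leq \DOAlg(c(\q),Z)$ combined with Proposition~\ref{prop_short1234322}. Your clean-event counting argument is in fact slightly more detailed than the paper's own write-up.
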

\subsubsection{Finding the Optimal Arm Subset}
We are ready to explain $\OR^\star$. In Line~\ref{alglineor:bestZ}, it picks the best subset $Z^\star$ given the arrival sequence $\cv^+$ using the $\DOAlg$ function. Then, in Line~\ref{alglineor:followORZ}, it executes the $\OR$ procedure when initiated with $Z^\star$ it picked in the previous line. 
The next theorem proves the formal guarantees of $\ORS$.
\begin{theorem} \label{thm:ors}
It holds that
\[
\E[\tilde r^{\OR^\star}]  \geq  E[\tilde r^{\PIOPT}]- \tilde{O}\left(\frac{nT}{\sqrt{\tau}}\right).
\]
\end{theorem}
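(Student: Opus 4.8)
The plan is to prove Theorem~\ref{thm:ors} by combining the per-phase guarantee of Lemma~\ref{lemma:lcbz and pico z} with the optimality of the subset-selection rule in Line~\ref{alglineor:bestZ}, and then lifting from a single phase to the full horizon via Equation~\eqref{eq:phaseToT}. First I would record that, by Proposition~\ref{prop:DOlalgiscommitted}, $\OR(Z^\star)$ satisfies Properties~\ref{prop_PI12343} and~\ref{prop_commited}, so Equation~\eqref{eq:phaseToT} applies and gives $\E[\tilde r^{\OR^\star}] = \frac{T}{\tau}\E_{\q}[\tilde r_{1:\tau}^{\OR(Z^\star)}]$. Likewise, by Equation~\eqref{eq:optimalDP}, $\E[\tilde r^{\PIOPT}] = \max_{Z\subseteq K}\E[\tilde r^{\PIOPT(Z)}]$, and by Equation~\eqref{eq:phaseToT} each term equals $\frac{T}{\tau}\E_{\q}[\tilde r_{1:\tau}^{\PIOPT(Z)}]$. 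Hence it suffices to establish the single-phase bound
\[
\E_{\q}[\tilde r_{1:\tau}^{\OR(Z^\star)}] \geq \max_{Z\subseteq K}\E_{\q}[\tilde r_{1:\tau}^{\PIOPT(Z)}] - \tilde O(n\sqrt\tau),
\]
since multiplying through by $\frac{T}{\tau}$ turns the $\tilde O(n\sqrt\tau)$ per-phase loss into the claimed $\tilde O\!\left(\frac{nT}{\sqrt\tau}\right)$.

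The key step is to show that selecting $Z^\star$ as the maximizer of $\DOAlg(\cv^+,Z)$ is the ``right'' proxy for selecting the maximizer of $\E_{\q}[\tilde r_{1:\tau}^{\PIOPT(Z)}]$. Let $Z_{\text{opt}}\in\argmax_{Z}\E_{\q}[\tilde r_{1:\tau}^{\PIOPT(Z)}]$ be the truly optimal committed subset. I would chain the two inequalities of Lemma~\ref{lemma:lcbz and pico z} as follows. The left inequality of the lemma, applied to $Z^\star$, gives $\E_{\q}[\tilde r_{1:\tau}^{\OR(Z^\star)}] \geq \DOAlg(\cv^+,Z^\star) - \frac{2n}{\tau}$. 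Because $Z^\star$ maximizes $\DOAlg(\cv^+,\cdot)$ by Line~\ref{alglineor:bestZ}, we have $\DOAlg(\cv^+,Z^\star) \geq \DOAlg(\cv^+,Z_{\text{opt}})$. Then the right inequality of Lemma~\ref{lemma:lcbz and pico z}, applied to $Z_{\text{opt}}$, gives $\DOAlg(\cv^+,Z_{\text{opt}}) - \frac{2n}{\tau} \geq \E_{\q}[\tilde r_{1:\tau}^{\PIOPT(Z_{\text{opt}})}] - \tilde O(n\sqrt\tau)$. Concatenating these three facts yields exactly the single-phase bound, with $Z^\star$ competing against the best committed subset even though it was chosen using the offline surrogate.

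The subtle point I expect to be the main obstacle is ensuring the surrogate-selection argument is valid, i.e.\ that $\DOAlg(\cv^+,Z)$ is a two-sided bracket on $\E_{\q}[\tilde r_{1:\tau}^{\PIOPT(Z)}]$ for \emph{every} $Z$ simultaneously, not just for a fixed $Z$. This is precisely what Lemma~\ref{lemma:lcbz and pico z} delivers (the deterministic quantity $\DOAlg(\cv^+,Z)$ lower-bounds $\E_{\q}[\tilde r_{1:\tau}^{\OR(Z)}]$ up to $\frac{2n}{\tau}$ and upper-bounds $\E_{\q}[\tilde r_{1:\tau}^{\PIOPT(Z)}]$ up to $\tilde O(n\sqrt\tau)$), so the $\argmax$ over the surrogate is never worse than the $\argmax$ over the true objective by more than the additive error. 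I would verify that the $\frac{2n}{\tau}$ terms are dominated by the $\tilde O(n\sqrt\tau)$ slack and can be absorbed into it, and that no additional error accrues from the maximization itself, since both bounds are uniform over $Z$.

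Once the single-phase inequality is in hand, the only remaining work is the scaling step: multiplying by $\frac{T}{\tau}$ and confirming $\frac{T}{\tau}\cdot\tilde O(n\sqrt\tau) = \tilde O\!\left(\frac{nT}{\sqrt\tau}\right)$, which is routine. I therefore expect the proof to be short, with essentially all of the technical content already encapsulated in Lemma~\ref{lemma:lcbz and pico z} and Proposition~\ref{prop:DOlalgiscommitted}; the one thing to be careful about is cleanly interleaving the maximality of $Z^\star$ under $\DOAlg(\cv^+,\cdot)$ with the two-sided approximation so that the comparison is made against $\PIOPT$ rather than against $\PIOPT(Z^\star)$.
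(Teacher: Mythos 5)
Your proposal is correct and follows essentially the same route as the paper's proof: apply Equation~\eqref{eq:phaseToT} via Proposition~\ref{prop:DOlalgiscommitted}, then chain the two inequalities of Lemma~\ref{lemma:lcbz and pico z} around the maximality of $Z^\star$ under $\DOAlg(\cv^+,\cdot)$, and scale by $\frac{T}{\tau}$. The interleaving you flag as the subtle point is exactly how the paper's displayed chain of inequalities is organized, so nothing is missing.
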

\begin{proofof}{thm:ors}
Notice that
\[
\E[\tilde r^{\OR^\star}] = \E[\tilde r^{\OR(Z^\star)}] = \frac{T}{\tau}\E_{\q}[\tilde{r}_{1:\tau}^{\OR(Z^\star)}],
\]
where the first equality follows from the definition of $\ORS$ in Line~\ref{alglineor:followORZ}, and the second from Equation~\eqref{eq:phaseToT} since $\OR$ is committed and phase-independent due to Proposition~\ref{prop:DOlalgiscommitted}.
Next, Lemma~\ref{lemma:lcbz and pico z} suggests that  
\begin{align*}
\frac{T}{\tau}\cdot\E_{\q}[\tilde{r}_{1:\tau}^{\OR(Z^\star)}] &\geq \frac{T}{\tau} \cdot \left(  \DOAlg(\cv^+,Z^\star)-\frac{2n}{\tau} \right) \geq  \frac{T}{\tau} \cdot \max_{Z\subseteq K} \left(  \DOAlg(\cv^+,Z)-\frac{2n}{\tau} \right) \\
&\geq  \frac{T}{\tau} \cdot \max_{Z\subseteq K}  \left(  \E_{\q}[\tilde{r}_{1:\tau}^{\PIOPT(Z)}] -\tilde{O}\left(n \sqrt \tau\right) \right),
\end{align*}
where the second inequality is due to the definition of $Z^\star$ in Line~\ref{alglineor:bestZ}, and the last inequality is again due to Lemma~\ref{lemma:lcbz and pico z}. By definition of $\PIOPT$ we have $\E_{\q}[\tilde{r}_{1:\tau}^{\PIOPT}] = \max_{Z\subseteq K} \E_{\q}[\tilde{r}_{1:\tau}^{\PIOPT(Z)}] $; thus, 
\begin{align*}
\frac{T}{\tau} \cdot \max_{Z\subseteq K}  \left(  \E_{\q}[\tilde{r}_{1:\tau}^{\PIOPT(Z)}] -\tilde{O}\left(n \sqrt \tau\right) \right) = \frac{T}{\tau} \cdot  \left(  \E_{\q}[\tilde{r}_{1:\tau}^{\PIOPT}] -\tilde{O}\left(n \sqrt \tau\right) \right)= E[\tilde r^{\PIOPT}]- \tilde{O}\left(\frac{nT}{\sqrt{\tau}}\right),
\end{align*}
where the last equality follows from Equation~\eqref{eq:phaseToT}. Altogether, we have shown that
\[
\E[\tilde r^{\OR^\star}]  \geq  E[\tilde r^{\PIOPT}]- \tilde{O}\left(\frac{nT}{\sqrt{\tau}}\right).
\]
\end{proofof}
\subsubsection{Runtime}
To conclude this subsection, note that finding $Z^\star$ in Line~\ref{alglineor:bestZ} of Algorithm~\ref{alg:lcbapprox234f} consists of running the $\DOAlg(\cv^+,Z)$ function for every $Z\subseteq K$; thus, it takes $O(2^k \tau^3 k^3)$. Every round of $\OR(Z^\star)$ takes $O(1)$, so overall $\OR^\star$ takes $O(T+2^k \tau^3 k^3)$. Here too, treating $k$ as a constant we get $O(T+\tau^3)$; furthermore, even if $k=O(\log T)$ this runtime is still useful in practice. 

\section{Learning}\label{sec:learningalg}
\begin{algorithm}[t]
\caption{$\mainAlg$ algorithm}\label{alg:learning}
\begin{algorithmic}[1]
\Statex \textbf{Input}: $U,K, \delt, \tau, T, \SSO$ - a stochastic optimization black box
\State $N(a)\gets 0$ for every $a\in K$\label{alglinees:init}
\For{$t=1,\dots,\tau$}\label{alglinees:forloop}
\If {$\exists a\in K$ s.t. $N(a)\leq max(\delta_a,\gamma\tau)$}\label{alginees:if not enough}
\State pull $a$ , $N(a)\gets N(a)+1$ \label{alglinees:pullexplore1}
\Else \label{alginees:else all were pulled}
\State pull an arm $a$ u.a.r.  ,$N(a)\gets N(a)+1$  \label{alglinees:pullexplore2}
\EndIf
\EndFor
\State go to Line~\ref{alglinees:init} \label{algees:go to first line} \Comment repeat the exploration  in phases $2,\dots,\ceil{\frac{\nicefrac{1}{\gamma}T^{\nicefrac{2}{3}}}{\tau}}$ \label{alglinees:repeat}
\State calculate $\hat{\Mmu},\hat{\ro}$ \label{alglinees:claculs}
\State follow $\SSO(U,K, \delt, \tau,T-\frac{1}{\gamma}\cdot T^{\nicefrac 2 3}, \hat{\ro},\hat{\Mmu})$ in the remaining rounds \label{alglinees:follow}
\end{algorithmic}
\end{algorithm}

In this section, we leverage our previous results for the stochastic optimization task to develop low-regret algorithms. Recall that in the learning task, and unlike in stochastic optimization, the user arrival distribution and the utility matrix, $\ro$ and $\Mmu$, respectively, are unknown. Namely, algorithms have to learn them to maximize the reward $r$.  
In Subsection~\ref{subsec:EESalg}, we present the Explore Exploit and Subsidize meta-algorithm (denoted by $\mainAlg$) and analyze its regret. 
Later, in  Subsection~\ref{subsec:lowerBound} we show that its regret is optimal up to logarithmic factors.

\subsection{Meta Algorithm}\label{subsec:EESalg}
The $\mainAlg$ algorithm is implemented in Algorithm~\ref{alg:learning}. We first explain how it operates and then turn to analyze its performance guarantees. $\mainAlg$ receives the observable instance parameter as an input, along with a stochastic optimization black-box that we call $\SSO$ (later, we replace $\SSO$ with $\DP^\star$ and $\OR^\star$).  $\mainAlg$ has two stages. The first stage is exploration, executed in Lines~\ref{alglinees:init}-\ref{algees:go to first line}. In Line~\ref{alglinees:init} we initiate the arm pull counters for every arm. Then, we enter the for loop in Line~\ref{alglinees:forloop} for $\ceil{\nicefrac{T^{\nicefrac{2}{3}}}{\gamma \tau}}$ phases. The if clause in Line~\ref{alginees:if not enough} asks whether there is an arm that was not pulled at least $\max(\delta_a,\gamma\tau)$ times in this phase. Recall that  $\delta_a$ is the exposure threshold, and $\gamma\tau$ is the amount of arm pulls we require to achieve good parameter estimation (as we explain shortly). In the complementary case, where all arms $a\in K$ were pulled at least $\max(\delta_a,\gamma\tau)$ times each, we pull an arm chosen uniformly at random. Either way, we update the arm pulls counter $N(a)$. When the phase terminates, Line~\ref{algees:go to first line} sends us back to Line~\ref{alglinees:init} to start a new phase. Notice that by the end of the exploration stage, all arms are still viable. To see this, recall that our assumption in Inequality~\eqref{eq:assumption1} suggests that at some point in every phase, all arms $a$ will be pulled at least $\max(\delta_a,\gamma\tau)$ times, which  is by definition greater than the exposure constraint $\delta_a$.

After the exploration stage ends, the exploitation stage begins. Line~\ref{alglinees:claculs} calculates the 
empirical distribution $\hat \ro$ and the utility matrix estimate $\hat \Mmu$. Namely, for all $ u\in U$ and $a\in K$,
\begin{align*}
\hat{\roSub}_u=\frac{1}{T_0}\sum_{t=1}^{T_0}\ind_{[u_t=u]}\quad , \quad
\hat{\MmuSub}_{u,a}=\frac{\sum_{t=1}^{T_0} v_{t} \cdot \ind_{[u_t=u,a_t=a]}}{\sum_{t=1}^{T_0} \ind_{[u_t=u,a_t=a]}},
\end{align*}
where $T_0 = \frac{1}{\gamma}T^{\nicefrac{2}{3}}$. 
We formally prove in {\ifnum\Includeappendix=1{Appendix~\ref{Asubsec:EES regret}}\else{the appendix}\fi} that  
\[
\norm{\ro - \hat \ro}_{\infty} \leq O\left( \sqrt{\nicefrac{\log T}{T^{\nicefrac{2}{3}}}}\right)
\]
holds w.h.p., where $\norm{\cdot}_{\infty}$ is the $L_\infty$ norm. Similarly, we show that $\norm{\Mmu - \hat \Mmu}_{\infty} \leq O(\sqrt{\nicefrac{\log T}{T^{\nicefrac{2}{3}}}})$ holds w.h.p. as well. Ultimately, in Line~\ref{alglinees:follow} we call the stochastic optimization black-box $\SSO$ with the known instance parameters and the estimates $\hat \ro, \hat \Mmu$, and follow it for the next $T-\nicefrac{1}{\gamma}\cdot T^{\nicefrac 2 3}$ rounds. Working towards bounding the regret, we introduce the following stability property. 
\begin{property}[stability]\label{prop:stability}
A stochastic optimization algorithm $A$ is \textit{stable} if for every two user arrivals sequences $\q,\q'\in U^{\tau}$ disagreeing on exactly one entry, it holds that 
$
|\tilde{r}_{1:\tau}^{A}(\q)-\tilde{r}_{1:\tau}^{A}(\q')|= O(1).
$
\end{property}
In words, a stochastic optimization algorithm is stable if 
changing one entry in the type arrival sequence  does not affect the reward much. Clearly, $\DP^\star$ and $\OR^\star$ are stable. We are now ready to analyze the reward of $\mainAlg$ in terms of the stochastic optimization black box $\SSO$.
\begin{theorem}\label{theoremMainRegret1}
Let $\err(\SSO)=\sup\left\{ \E[\tilde{r}^{\PIOPT}]-\E[\tilde{r}^{\SSO}]\right\}$ be the approximation error of $\SSO$ w.r.t. $\PIOPT$. Then, 
\[
\E[r^{\mainAlg(\SSO)}]\geq \E[\tilde{r}^{\SSO}]- \tilde{O}\left((\sqrt{\gamma}n^2+\nicefrac{1}{\gamma})\cdot T^{\nicefrac{2}{3}}\right)-\err(\SSO).
\]
\end{theorem}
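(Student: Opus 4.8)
The plan is to split the reward of $\mainAlg(\SSO)$ into its exploration and exploitation contributions and bound each separately. The exploration stage occupies exactly $T_0=\tfrac{1}{\gamma}T^{\nicefrac 23}$ rounds, and by Assumption~\ref{assumption:gamma} every arm is pulled at least $\max(\delta_a,\gamma\tau)\geq\delta_a$ times in each exploration phase, so all arms are still viable when exploitation begins. Since utilities lie in $[0,1]$, the only loss charged to exploration is the at most $T_0=\tfrac1\gamma T^{\nicefrac23}$ reward forgone there, which supplies the $\tfrac1\gamma T^{\nicefrac23}$ term. Crucially, the estimates $\hat\ro,\hat\Mmu$ are computed from the exploration rounds, while exploitation consumes fresh, independent rounds; hence, conditioned on the estimates, the exploitation phases are i.i.d.\ and Equation~\eqref{eq:phaseToT} applies.

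For exploitation I would condition on the \emph{good estimation event} $G$ on which $\norm{\ro-\hat\ro}_\infty$ and $\norm{\Mmu-\hat\Mmu}_\infty$ are both $\tilde O(T^{-\nicefrac13})$. This holds with probability $1-o(1)$ by the stated concentration bounds: during the $T_0$ rounds each arm is sampled $\approx T^{\nicefrac23}$ times and each type $u$ is sampled $\approx \roSub_u T^{\nicefrac23}=\Omega(T^{\nicefrac23})$ times. On the complement $G^c$ I bound the loss crudely by $T\cdot\Pr[G^c]$, which is $O(1)$ once the confidence level is set so that $\Pr[G^c]=O(1/T)$. All remaining work is on $G$.

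On $G$, the exploitation reward equals $\tfrac{T-T_0}{\tau}$ times the expected per-phase reward of $\SSO(\hat\ro,\hat\Mmu)$ evaluated under the \emph{true} arrival and utility distributions, and I would compare this with $\E[\tilde r^{\SSO}]=\tfrac T\tau\,\E_{\q}[\tilde r^{\SSO(\ro,\Mmu)}_{1:\tau}]$ via a parameter-perturbation (transfer) argument. Since $\SSO$ commits to the subset maximizing its internal objective and then follows the associated offline matching, and since that value is linear in the utilities and in the type-count thresholds, perturbing the utilities by $\epsilon_{\pmb\mu}$ changes the per-phase value by $O(\tau\epsilon_{\pmb\mu})$, while perturbing the thresholds induced by $\hat\ro$ moves $O(n\tau\epsilon_{\pmb P})$ units of user mass and hence changes the value by $O(n\tau\epsilon_{\pmb P})$. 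The discreteness of the subset choice $Z^\star$ is handled by near-optimality rather than continuity: because $\hat Z^\star$ maximizes the estimated objective and the estimated and true objectives differ by at most the per-phase perturbation above, $\hat Z^\star$ is near-optimal for the true instance too, with the residual gap between $\SSO$ and $\PIOPT$ absorbed into $\err(\SSO)$. Summing the per-phase transfer error over the $T/\tau$ phases and substituting $\epsilon_{\pmb\mu}=\tilde O(T^{-\nicefrac13})$ together with the sharper rate $\epsilon_{\pmb P}=\tilde O(\sqrt{\gamma}\,T^{-\nicefrac13})$ (reflecting that $\hat\ro$ is estimated from all $T_0=\tfrac1\gamma T^{\nicefrac23}$ rounds, so its error scales as $\sqrt{\gamma/T^{\nicefrac23}}$) produces the $\tilde O(\sqrt{\gamma}\,n^2 T^{\nicefrac23})$ term, the extra factor of $n$ coming from the bad-event bookkeeping described next.

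The step I expect to be the main obstacle is precisely this transfer on $G$, specifically the mismatch between the thresholds $\hat c_u=\hat\roSub_u\tau-\sqrt{\tau\log\tau}$ used by $\SSO$ and the true arrival counts. Because $\epsilon_{\pmb P}\tau$ is of the same order as $\sqrt{\tau\log\tau}$ when $\tau=\Theta(T^{\nicefrac23})$, an over-estimate $\hat\roSub_u>\roSub_u$ can make the clean event fail appreciably more often than the nominal $\tfrac{2n}{\tau^2}$ of Inequality~\eqref{eq:concentration}, so these extra bad events must be counted and their per-phase cost—bounded through the bad-event handling in Lines~\ref{alglineor:badevent}--\ref{alglineor:changeubad} of Algorithm~\ref{alg:lcbapprox234f}—folded into the transfer error, which is what inflates the type count $n$ to $n^2$. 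Finally, to pass between the realized per-phase rewards and their expectations (and to convert the deterministic matching value at the confidence counts into expected reward under the genuinely-drawn arrivals, mirroring Lemma~\ref{lemma:lcbz and pico z} but now with a distributional mismatch), I would invoke the stability of $\SSO$ (Property~\ref{prop:stability}): changing one entry of the length-$\tau$ arrival sequence changes $\tilde r_{1:\tau}$ by $O(1)$, so McDiarmid's inequality concentrates each phase's reward around its mean, and a union bound over the $T/\tau$ phases keeps the aggregate deviation within the claimed $\tilde O$ order.
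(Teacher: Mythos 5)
Your high-level decomposition matches the paper's: charge at most $T_0=\nicefrac{1}{\gamma}T^{\nicefrac{2}{3}}$ to exploration, condition on a good-estimation event whose complement contributes $O(1)$ (this is exactly Proposition~\ref{shortappendix_propReg1}), and then transfer the exploitation reward from the estimated parameters to the true ones. The gap is in the transfer step. The theorem is stated for an \emph{arbitrary} black box $\SSO$ satisfying Properties~\ref{prop_PI12343}--\ref{prop:stability} (in particular it must cover $\DPS$, which has no offline matching, no thresholds $\hat c_u$, and no clean/bad events), but your perturbation argument is built entirely on the internal structure of $\ORS$: you reason about "the subset maximizing its internal objective and then follows the associated offline matching," about moving "$O(n\tau\epsilon_{\pmb P})$ units of user mass" through the matching, and about the thresholds $\hat c_u=\hat\roSub_u\tau-\sqrt{\tau\log\tau}$ and the bad-event lines of Algorithm~\ref{alg:lcbapprox234f}. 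None of this applies to a generic stable, phase-independent, committed algorithm, so as written the argument does not prove the stated theorem. The missing idea is the coupling in the paper's Proposition~\ref{shortappendix_propReg2}: feed both $\ro$ and $\hat\ro$ the \emph{same} uniform randomness $x_1,\dots,x_T\sim UNI[0,1)$ through the inverse-CDF maps $f_{\ro},f_{\hat\ro}$ (Observation~\ref{corollary657}), bound the number of rounds where the two induced type sequences disagree by a Binomial with success probability at most $\epsilon_2 n^2$ (Observation~\ref{coromini1241} --- this is also the true source of the $n^2$, coming from the cumulative shift of the $n$ interval boundaries, not from "bad-event bookkeeping"), and only then invoke stability to convert each pointwise disagreement into an $O(1)$ reward change. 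That single device is what makes the bound hold for every admissible $\SSO$ simultaneously.

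Two secondary issues. First, your treatment of where $\err(\SSO)$ enters is too vague to check: the paper needs the specific chain $V^{\hat{\Mmu},\hat{\ro}}(A(\hat{\Mmu},\hat{\ro}))\geq V^{\hat{\Mmu},\hat{\ro}}(\PIOPT(\hat{\Mmu},\hat{\ro}))-\err(A)\geq V^{\hat{\Mmu},\hat{\ro}}(A(\Mmu,\ro))-\err(A)$ (Proposition~\ref{shortappendix_propReg3}), i.e., near-optimality of $A$ \emph{on the estimated instance} is what lets you swap which parameters the algorithm is handed, and the perturbation bound of Proposition~\ref{shortappendix_propReg2} must then be applied twice. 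Saying the residual is "absorbed into $\err(\SSO)$" skips the direction of the inequalities that makes this legitimate. Second, the McDiarmid-plus-union-bound step over the $T/\tau$ phases is unnecessary: the whole argument can (and in the paper does) run directly at the level of expectations, so no high-probability per-phase concentration of realized rewards is needed.
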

Theorem~\ref{theoremMainRegret1} introduces a new quantity for lower-bounding a learning algorithm's reward, denoted by $\err$. For the stochastic optimization algorithms $\ORS$ and $\DPS$ from Section~\ref{sec_stochastic}, Equation~\eqref{eq:optimalDP} and Theorem~\ref{thm:ors} hint that $\err(\DPS)=0$ and $\err(\ORS)=O(\nicefrac{T}{\sqrt{\tau}})$, respectively. On the right-hand side of Theorem~\ref{theoremMainRegret1}, $\E[\tilde{r}^{\SSO}]$ is the reward of $\SSO$ with the actual instance parameters $\Mmu,\ro$; namely, this inequality quantifies the difference between no information about $\Mmu,\ro$ (the left-hand side) and complete information of $\Mmu,\ro$ (the right-hand side). we now prove Theorem~\ref{theoremMainRegret1}.

\begin{proofof}{theoremMainRegret1}
Fix a reward matrix $\Mmu$ and arrival distribution $\ro$. Throughout the proof, we use $\hat \Mmu, \hat \ro$ as the estimates $\mainAlg$ calculates in Line~\ref{alglinees:claculs}. The following proposition suggests that the estimates must be close to the true parameters w.h.p. 
\begin{proposition}\label{shortappendix_propReg1}
Let $ \epsilon_1^u= \sqrt{\frac{\log T}{\roSub_u\cdot T^{\nicefrac{2}{3}} -\sqrt{\roSub_u \cdot T^{\nicefrac{2}{3}}\log T} }}$ for every $u\in U$ and let $ \epsilon_2=\sqrt{\frac{\log T}{\frac{1}{\gamma}\cdot T^{\nicefrac{2}{3}}}}$. It holds that
\begin{align*}
    &\Pr\Big[\exists u\in U, a\in K:|\MmuSub_{u,a}-\hat{\MmuSub}_{u,a}|>\epsilon_1^u \textnormal{ or } |\roSub_u-\hat{\roSub}_u| >\epsilon_2 \Big] \leq \frac{4\cdot k \cdot n+2\cdot n}{T^2}.
\end{align*}
\end{proposition}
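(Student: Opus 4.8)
The plan is to bound the error of the two estimates separately and then combine them with a union bound, exploiting the key structural feature of the exploration stage of $\mainAlg$ (Lines~\ref{alglinees:init}--\ref{alglinees:claculs}): the arm pulled in each round is determined \emph{only} by the internal counters $N(\cdot)$ and independent internal randomness, and never by the observed user type. Hence the arm-pull sequence produced during the $T_0=\frac{1}{\gamma}T^{\nicefrac{2}{3}}$ exploration rounds is independent of the user-type sequence $(u_1,\dots,u_{T_0})$. This independence is exactly what will let me treat the types and rewards recorded for any fixed arm as fresh i.i.d.\ draws.

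\textbf{Arrival distribution.} For a fixed $u\in U$, $\hat{\roSub}_u$ is the empirical mean of the $T_0$ i.i.d.\ indicators $\ind_{[u_t=u]}$, each a Bernoulli$(\roSub_u)$ variable in $[0,1]$. Hoeffding's inequality gives $\Pr[|\roSub_u-\hat{\roSub}_u|>\epsilon_2]\le 2\exp(-2T_0\epsilon_2^2)=2/T^2$, using $\epsilon_2=\sqrt{\log T / T_0}$. A union bound over the $n$ types contributes $2n/T^2$ to the total failure probability, which is the ``$+2n$'' term.

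\textbf{Utility matrix.} The difficulty is that $\hat{\MmuSub}_{u,a}$ has a \emph{random} denominator $N_{u,a}=\sum_{t=1}^{T_0}\ind_{[u_t=u,a_t=a]}$, so I cannot apply a concentration bound to it directly. I would use two conditioning steps. First, let $M_a$ be the (random, user-type-independent) set of exploration rounds in which arm $a$ is pulled. The forced-exploration rule together with Assumption~\ref{assumption:gamma} guarantees that arm $a$ is pulled at least $\gamma\tau$ times in each of the $\lceil T^{\nicefrac{2}{3}}/(\gamma\tau)\rceil$ exploration phases, so $|M_a|\ge T^{\nicefrac{2}{3}}$ deterministically. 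Conditioned on $M_a$, the types at those rounds are i.i.d.\ $\sim\ro$, so $N_{u,a}$ is a sum of $|M_a|$ i.i.d.\ Bernoulli$(\roSub_u)$ variables; a (multiplicative) Chernoff lower-tail bound shows that, except with probability at most $2/T^2$, $N_{u,a}$ is at least $\roSub_u T^{\nicefrac{2}{3}}-\sqrt{\roSub_u T^{\nicefrac{2}{3}}\log T}$, which is precisely the denominator of $\epsilon_1^u$. Here I use that $m\mapsto \roSub_u m-\sqrt{\roSub_u m\log T}$ is increasing for $m\ge T^{\nicefrac{2}{3}}$ to replace $|M_a|$ by its deterministic lower bound $T^{\nicefrac{2}{3}}$.

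For the second step, condition on $N_{u,a}=m$: the recorded rewards are $m$ i.i.d.\ draws from $D_{u,a}\subseteq[0,1]$, so Hoeffding gives $\Pr[|\MmuSub_{u,a}-\hat{\MmuSub}_{u,a}|>\epsilon_1^u \mid N_{u,a}=m]\le 2\exp(-2m(\epsilon_1^u)^2)$; substituting the high-probability lower bound on $m$ from the previous step makes $2m(\epsilon_1^u)^2\ge 2\log T$, so this event fails with probability at most $2/T^2$. Chaining the two conditioning events, each pair $(u,a)$ fails with probability at most $2/T^2+2/T^2=4/T^2$, and summing over the $kn$ pairs gives $4kn/T^2$; adding the $2n/T^2$ from the arrival distribution yields the claimed $(4kn+2n)/T^2$. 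The main obstacle is precisely this data-dependent sample count $N_{u,a}$: one must resist applying Hoeffding naively to $\hat{\MmuSub}_{u,a}$, and instead condition first on the exploration pattern $M_a$ (invoking the exploration--type independence) and then on $N_{u,a}$, taking care that the Chernoff constant in the count bound produces the $\sqrt{\roSub_u T^{\nicefrac{2}{3}}\log T}$ slack that matches the stated $\epsilon_1^u$.
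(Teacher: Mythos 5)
Your proposal is correct and follows essentially the same route as the paper: Hoeffding on the empirical arrival frequencies gives the $2n/T^2$ term, and for each pair $(u,a)$ the paper likewise first uses the exploration rule's independence from user types to show $x_{u,a}\sim Bin(N,\roSub_u)$ with $N\ge T^{\nicefrac{2}{3}}$ and lower-bounds it by $\roSub_u T^{\nicefrac{2}{3}}-\sqrt{\roSub_u T^{\nicefrac{2}{3}}\log T}$ with failure probability $2/T^2$, then conditions on the count and applies Hoeffding to the rewards, combining via the law of total probability to get $4/T^2$ per pair. Your explicit conditioning on the pull set $M_a$ and your use of a multiplicative Chernoff bound for the count are only presentational refinements of the same argument.
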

The proof of Proposition~\ref{shortappendix_propReg1} and all other formal statements inside the proof of Theorem~\ref{theoremMainRegret1} appear in {\ifnum\Includeappendix=1{Appendix~\ref{Asubsec:EES regret}}\else{the appendix}\fi}. Notice that $\epsilon_1^u=O\left(\sqrt{\nicefrac{\log T}{T^{\nicefrac{2}{3}}}}\right)$ for every $u\in U$ since $\roSub_u$ is a constant for every $u\in U$ (recall our assumption in Subsection~\ref{subsec:parametric}). 
Proposition~\ref{shortappendix_propReg1} helps us bounding the probability of the \textit{clean event}, i.e., sampling rewards and type arrivals such that $\forall u\in U, a\in K:|\MmuSub_{u,a}-\hat{\MmuSub}_{u,a}|\leq\epsilon_1^u$ and $|\roSub_u-\hat{\roSub}_u| \leq\epsilon_2 $. The bad event causes an expected regret of at most $T\cdot \frac{4\cdot k \cdot n+2\cdot n}{T^2} =O(\frac{kn}{T})= O(1)$, so we can ignore it in what follows. 
 
Let $A$ be any stochastic optimization algorithm. For ease of notation, we denote by $V^{\Mmu,\ro}(A)$ the expected reward of $A$ when the instance parameters are $\Mmu,\ro$. I.e., 
\[V^{\Mmu,\ro}(A)=\E_{u_1,\dots,u_T \sim \ro^T}[\sum_{t=1}^T \MmuSub_{u_t,a^A_t}].
\]
Next, Proposition~\ref{shortappendix_propReg2} shows that the reward discrepancy in case we slightly perturb $\Mmu,\ro$ is limited.
\begin{proposition}\label{shortappendix_propReg2}
Let $A$ be any stochastic optimization algorithm satisfying Properties~\ref{prop_PI12343}-\ref{prop:stability}. Then,
\[ \forall \Mmu',\ro': |V^{\Mmu,\ro}(A(\Mmu',\ro'))-V^{\hat{\Mmu},\hat{\ro}}(A(\Mmu',\ro'))|\leq \tilde{O}(\sqrt{\gamma}n^2\cdot T^{\nicefrac{2}{3}}).\]
\end{proposition}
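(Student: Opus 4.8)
The plan is to condition on the clean event of Proposition~\ref{shortappendix_propReg1} (the complementary bad event has probability $O(kn/T^2)$ and, since rewards are bounded by $T$, contributes only $O(1)$ to any reward difference), so that $\hat{\Mmu},\hat{\ro}$ may be treated as fixed objects obeying $|\MmuSub_{u,a}-\hat{\MmuSub}_{u,a}|\le \epsilon_1^u$ and $|\roSub_u-\hat{\roSub}_u|\le \epsilon_2$ for all $u,a$. Fix the algorithm $B:=A(\Mmu',\ro')$. The crucial structural observation is that, by phase independence (Property~\ref{prop_PI12343}) and its accompanying footnote, the arm $a_t^B$ pulled in round $t$ is a deterministic function of the realized arrivals $u_1,\dots,u_t$ alone; it depends neither on realized rewards nor on which evaluation matrix we use. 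I would then interpolate through the hybrid quantity $V^{\hat{\Mmu},\ro}(B)$ and apply the triangle inequality,
\[
|V^{\Mmu,\ro}(B)-V^{\hat{\Mmu},\hat{\ro}}(B)| \le |V^{\Mmu,\ro}(B)-V^{\hat{\Mmu},\ro}(B)| + |V^{\hat{\Mmu},\ro}(B)-V^{\hat{\Mmu},\hat{\ro}}(B)|,
\]
so that the first term isolates the reward-matrix perturbation and the second isolates the arrival-distribution perturbation.

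The reward-matrix term is routine: both expectations are taken over the same law $\ro^T$ and $B$ makes identical pulls in both, so the difference equals $\E_{\ro^T}\!\big[\sum_{t=1}^T (\MmuSub_{u_t,a_t^B}-\hat{\MmuSub}_{u_t,a_t^B})\big]$. Bounding each summand by $\max_u \epsilon_1^u = O(\sqrt{\log T / T^{\nicefrac{2}{3}}})$ and summing over the $T$ rounds gives $\tilde{O}(T^{\nicefrac{2}{3}})$.

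The arrival term is the crux. Invoking the same phase decomposition that underlies Equation~\eqref{eq:phaseToT} (valid for any fixed reward matrix once $B$ is phase independent and committed), I would write both expectations as $\frac{T}{\tau}$ times a single per-phase expectation of the phase-reward function $g(\q):=\tilde{r}_{1:\tau}^{B}(\q)$ evaluated with $\hat{\Mmu}$, reducing the term to $\frac{T}{\tau}\,|\E_{\q\sim\ro^\tau}[g]-\E_{\q\sim\hat{\ro}^\tau}[g]|$. Here stability (Property~\ref{prop:stability}) is exactly a bounded-differences condition: altering one coordinate of $\q$ changes $g$ by $O(1)$. I would bound the gap between the two product-measure expectations by a coordinatewise coupling of $\q\sim\ro^\tau$ and $\q'\sim\hat{\ro}^\tau$, in which each coordinate disagrees with probability at most $\tfrac12\norm{\ro-\hat{\ro}}_1\le \tfrac12 n\epsilon_2$; since $|g(\q)-g(\q')|$ is $O(1)$ times the number of disagreeing coordinates, taking expectations yields $|\E[g]-\E[g']|=O(n\tau\epsilon_2)$, and multiplying by $\frac{T}{\tau}$ gives $O(nT\epsilon_2)$.

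Finally, substituting $\epsilon_2=\sqrt{\log T/(\frac{1}{\gamma} T^{\nicefrac{2}{3}})}=\tilde{O}(\sqrt{\gamma}\,T^{-\nicefrac{1}{3}})$ turns the arrival term into $\tilde{O}(\sqrt{\gamma}\,n\,T^{\nicefrac{2}{3}})$, which together with the $\tilde{O}(T^{\nicefrac{2}{3}})$ reward term is within the claimed $\tilde{O}(\sqrt{\gamma}n^2 T^{\nicefrac{2}{3}})$ bound (since $n,\gamma$ are constants, all terms are $\tilde{O}(T^{\nicefrac{2}{3}})$ and the displayed $\sqrt{\gamma}n^2$ merely records their dependence; the extra factor of $n$ is what a slightly looser, per-type accounting of the coupling produces). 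I expect the main obstacle to be precisely this arrival term: unlike the reward term it cannot be compared round-by-round because the law generating the arrivals itself changes, and only the combination of phase independence (to localize the comparison to one phase) with stability (to convert a small pointwise change of the arrival distribution into a small change of the expected phase reward via coupling) keeps it sublinear rather than $\Theta(T)$.
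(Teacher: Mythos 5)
Your proposal is correct and follows essentially the same route as the paper: a triangle inequality through the hybrid $V^{\hat{\Mmu},\ro}$, a per-round bound of $\max_u\epsilon_1^u\cdot T=\tilde{O}(T^{\nicefrac{2}{3}})$ for the reward-matrix term, and a coupling of the two arrival laws combined with stability (applied coordinate-by-coordinate) for the arrival term. The only difference is the coupling: you invoke the optimal per-coordinate total-variation coupling, giving disagreement probability at most $\tfrac12\norm{\ro-\hat{\ro}}_1\leq\tfrac12 n\epsilon_2$ and hence a factor $n$, whereas the paper couples via a shared uniform variable and the inverse-CDF map $f_{\ro}$, whose interval-shift analysis yields the looser $n^2\epsilon_2$ that accounts for the $n^2$ in the stated bound — both land within $\tilde{O}(\sqrt{\gamma}n^2 T^{\nicefrac{2}{3}})$.
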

We leverage Proposition~\ref{shortappendix_propReg2} to show the following claim.
\begin{proposition}\label{shortappendix_propReg3}
Let $A$ be any stochastic optimization algorithm satisfying Properties~\ref{prop_PI12343}-\ref{prop:stability}. Then,
\[V^{\Mmu,\ro}(A(\Mmu,\ro))-V^{\Mmu,\ro}(A(\hat{\Mmu},\hat{\ro})) \leq \tilde{O}(\sqrt{\gamma}n^2\cdot T^{\nicefrac{2}{3}})+\err(A).\]
\end{proposition}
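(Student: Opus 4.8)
The plan is to prove \Cref{shortappendix_propReg3} by a telescoping decomposition that inserts two auxiliary evaluation instances between the two quantities of interest, so that each resulting gap is controlled either by the perturbation bound of \Cref{shortappendix_propReg2} or by the optimality of $\PIOPT$. Concretely, I would write
\[
V^{\Mmu,\ro}(A(\Mmu,\ro))-V^{\Mmu,\ro}(A(\hat{\Mmu},\hat{\ro})) = (\mathrm{I}) + (\mathrm{II}) + (\mathrm{III}),
\]
where $(\mathrm{I}) = V^{\Mmu,\ro}(A(\Mmu,\ro))-V^{\hat{\Mmu},\hat{\ro}}(A(\Mmu,\ro))$, $(\mathrm{II}) = V^{\hat{\Mmu},\hat{\ro}}(A(\Mmu,\ro))-V^{\hat{\Mmu},\hat{\ro}}(A(\hat{\Mmu},\hat{\ro}))$, and $(\mathrm{III}) = V^{\hat{\Mmu},\hat{\ro}}(A(\hat{\Mmu},\hat{\ro}))-V^{\Mmu,\ro}(A(\hat{\Mmu},\hat{\ro}))$. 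The two outer terms $(\mathrm{I})$ and $(\mathrm{III})$ each compare a \emph{fixed} configured policy ($A(\Mmu,\ro)$ and $A(\hat{\Mmu},\hat{\ro})$, respectively) under the true evaluation instance versus the estimated evaluation instance; both are therefore instances of \Cref{shortappendix_propReg2} (taking $\Mmu',\ro'$ equal to $\Mmu,\ro$ for $(\mathrm{I})$ and to $\hat{\Mmu},\hat{\ro}$ for $(\mathrm{III})$), so each is bounded in absolute value by $\tilde{O}(\sqrt{\gamma}n^2\cdot T^{\nicefrac{2}{3}})$. As in the surrounding argument, this is carried out on the clean event of \Cref{shortappendix_propReg1}, on which $\hat{\Mmu},\hat{\ro}$ are close to $\Mmu,\ro$; the bad event contributes only $O(1)$ and is absorbed at the level of \Cref{theoremMainRegret1}.

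The middle term $(\mathrm{II})$ is where the real content lies: here \emph{both} configured policies are evaluated on the \emph{same} estimated instance $(\hat{\Mmu},\hat{\ro})$, so I can invoke the optimality of $\PIOPT$ on that instance. The policy $A(\hat{\Mmu},\hat{\ro})$ is exactly $A$ run with its intended parameters, so by the definition of $\err(A)$ from \Cref{theoremMainRegret1}, $V^{\hat{\Mmu},\hat{\ro}}(A(\hat{\Mmu},\hat{\ro})) \geq V^{\hat{\Mmu},\hat{\ro}}(\PIOPT) - \err(A)$, where $V^{\hat{\Mmu},\hat{\ro}}(\PIOPT)$ denotes the optimal phase-independent committed reward on the estimated instance. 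For the other policy I would argue that $A(\Mmu,\ro)$, when viewed as a policy run on instance $(\hat{\Mmu},\hat{\ro})$, still satisfies Properties~\ref{prop_PI12343} and~\ref{prop_commited} on that instance, whence $V^{\hat{\Mmu},\hat{\ro}}(A(\Mmu,\ro)) \leq V^{\hat{\Mmu},\hat{\ro}}(\PIOPT)$. Subtracting gives $(\mathrm{II}) \leq \err(A)$, and combining the three bounds yields the claimed $\tilde{O}(\sqrt{\gamma}n^2\cdot T^{\nicefrac{2}{3}})+\err(A)$.

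The main obstacle is justifying the sub-claim that $A(\Mmu,\ro)$ remains a legal phase-independent committed algorithm when its evaluation instance is switched from $(\Mmu,\ro)$ to $(\hat{\Mmu},\hat{\ro})$, since the arrival distribution itself changes. The key observations I would spell out are that phase-independence (Property~\ref{prop_PI12343}) is a property of the arm-selection rule alone, and that commitment (Property~\ref{prop_commited}) is a guarantee that holds with probability one over \emph{all} arrival sequences in $U^\tau$: membership $a_t\in Z$ and the exposure condition $Z\subseteq K_t$ depend only on realized arm-pull counts relative to the thresholds $\delta_a$, which are determined by the (reward-independent) policy applied to the arrivals and not by the means $\Mmu$ or the arrival probabilities $\ro$. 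Consequently the almost-sure commitment of $A(\Mmu,\ro)$ to its subset $Z$ transfers verbatim to any other arrival distribution supported on $U^\tau$, in particular to $\hat{\ro}$, so $A(\Mmu,\ro)$ is admissible in the $\PIOPT$ maximization on the estimated instance and the optimality bound applies.
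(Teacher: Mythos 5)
Your proposal is correct and follows essentially the same route as the paper: the same telescoping through the intermediate quantities $V^{\hat{\Mmu},\hat{\ro}}(A(\Mmu,\ro))$ and $V^{\hat{\Mmu},\hat{\ro}}(A(\hat{\Mmu},\hat{\ro}))$, with the two outer gaps controlled by Proposition~\ref{shortappendix_propReg2} and the middle gap bounded by $\err(A)$ via the chain $V^{\hat{\Mmu},\hat{\ro}}(A(\hat{\Mmu},\hat{\ro}))\geq V^{\hat{\Mmu},\hat{\ro}}(\PIOPT(\hat{\Mmu},\hat{\ro}))-\err(A)\geq V^{\hat{\Mmu},\hat{\ro}}(A(\Mmu,\ro))-\err(A)$. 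Your explicit justification that $A(\Mmu,\ro)$ remains phase-independent and committed under the estimated instance is a detail the paper leaves implicit, but it matches the intended argument.
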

Finally, we lower bound $\mainAlg(\SSO)$'s reward using Proposition~\ref{shortappendix_propReg3}.
\begin{align*}
    \E[r^{\mainAlg(\SSO)}]\geq \E\left[\tilde{r}_{\frac{T^{\nicefrac 2 3}}{\gamma}:T}^{\SSO(\hat{\Mmu},\hat{\ro})}\right] &\geq  \E[\tilde{r}_{1:T}^{\SSO(\hat{\Mmu},\hat{\ro})}]-O\left(\nicefrac{1}{\gamma}\cdot T^{\nicefrac{2}{3}}\right) \\& \geq \E[\tilde{r}^{\SSO(\Mmu,\ro)}]- O\left(\left(\nicefrac{1}{\gamma}+\sqrt{\gamma}n^2\right) T^{\nicefrac{2}{3}}\right)-\err(\SSO).
\end{align*}
\end{proofof}

To conclude the analysis, we present the consequences of executing $\mainAlg$ with $\DP^\star$ and $\OR^\star$ from Section~\ref{sec_stochastic}. Inequality~\eqref{eq:optimalDP} hints that $\E[\tilde{r}^{\DP^\star}]=\E[\tilde{r}^{\PIOPT}]$, and Theorem~\ref{theoremOAlg1} bounds the difference between $\E[\tilde{r}^{\PIOPT}]$ and $\E[\tilde{r}^{\OPT}]$. Applying these to Theorem~\ref{theoremMainRegret1},
\[
\E[r^{\mainAlg(\DP^\star)}]\geq \E[\tilde{r}^{\OPT}] -k\cdot \tau - \tilde{O}\left((\sqrt{\gamma}n^2+\nicefrac{1}{\gamma})\cdot T^{\nicefrac{2}{3}}\right) - \err(\DPS).
\]
Since $\err(\DPS)=0$, when rearranging, we get a regret of
\[\E[\R^{\mainAlg(\DP^\star)}] = \E[\tilde{r}^{\OPT}]-\E[r^{\mainAlg(\DP^\star)}] \leq   \tilde{O}\left((\sqrt{\gamma}n^2+\nicefrac{1}{\gamma})\cdot T^{\nicefrac{2}{3}}+k\cdot \tau\right). \]
Using $\OR^\star$ as the stochastic optimization black-box, Theorem~\ref{theoremOAlg1} and Theorem~\ref{thm:ors} suggest that
\[\E[\R^{\mainAlg(\OR^\star)}]\leq  \tilde{O}\left((\sqrt{\gamma}n^2+\nicefrac{1}{\gamma})\cdot T^{\nicefrac{2}{3}}+ k\cdot \tau + \nicefrac{nT}{\sqrt{\tau}}\right).\]

Table \ref{tab:table1} summarizes  the performance of $\mainAlg(\DP^\star)$ and $\mainAlg(\OR^\star)$ under the regime we focus on, i.e., when $n,k$ are constants and $\tau ={O}(T^{\nicefrac{2}{3}})$. In particular, for $\tau = \Theta(T^{\nicefrac{2}{3}})$, the regret of both algorithms is $\tilde O(T^{\nicefrac{2}{3}})$. $\mainAlg(\OR^\star)$ is significantly faster, and is appealing even if $k=\log T$. The runtime improvement of $\OR^\star$ results in inferior regret bound compared to $\DP^\star$, as it has an additional additive factor of $\nicefrac{T}{\sqrt \tau}$. Consequently, when $\tau$ is small its regret is closer to $T$; for instance, if $\tau = T^{\nicefrac{1}{3}}$ then $\E[\R^{\mainAlg(\OR^\star)}]=\tilde O(T^{\nicefrac{5}{6}})$. On the other hand, if $\tau$ is small then a runtime of $\tau^k$ might be feasible and therefore $\mainAlg(\DP^\star)$ should be used instead. 


\begin{table}[t]
  \begin{center}
    \caption{The $\mainAlg$ algorithm: performance and runtime.}
    \label{tab:table1}
    \begin{tabular}{c|c|c} 
       & $\mainAlg(\DP^\star)$ & $\mainAlg(\OR^\star)$ \\
      \hline
      runtime & $O(T+\tau^{k})$ & $O(T+ \tau^3)$ \\
      regret &  $\tilde O( T^{\nicefrac{2}{3}}+\tau)$ & $\tilde O( T^{\nicefrac{2}{3}}+\nicefrac{T}{\sqrt{\tau}}+\tau)$
    \end{tabular}
  \end{center}
\end{table}


\subsection{Lower Bound}\label{subsec:lowerBound}
Having shown that $\mainAlg$ incurs a regret of $\tilde O(T^{\nicefrac{2}{3}})$ for every $\tau \leq O(T^{\nicefrac{2}{3}})$, we now turn to show that such regret is inevitable. Namely, we show that for any algorithm, there exists an instance in which it incurs a regret of $\Omega(T^{\nicefrac{2}{3}})$. 
\begin{theorem}\label{theMainLB}
Fix any $\tau$ such that $\tau = O(T^{\nicefrac{2}{3}})$, and any arbitrary learning algorithm $A$. Then, there exists an instance for which $\E[\R^A]= \Omega(T^{\nicefrac{2}{3}})$. 
\end{theorem}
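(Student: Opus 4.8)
The plan is to establish the $\Omega(T^{\nicefrac{2}{3}})$ lower bound via an information-theoretic reduction to a hard instance family in which the learner must estimate some parameter accurately enough to decide whether or not to subsidize a particular arm. The starting point is to design a small family of instances — say with $k=2$ arms and $n=2$ user types — that are statistically close but demand different optimal behavior, in the spirit of Examples~\ref{example1331} and~\ref{example1441}. In one instance an arm is \emph{vital} and must be subsidized (as in Example~\ref{example1331}), while in a neighboring instance with slightly perturbed arrival distribution $\ro$ or reward matrix $\Mmu$ the same arm should be \emph{forgone} (as in Example~\ref{example1441}); the optimal policies for the two instances diverge substantially, so a wrong commitment decision costs $\Theta(T)$ reward over the remaining horizon.

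First I would fix the perturbation scale to $\epsilon = \Theta(T^{-\nicefrac{1}{3}})$, the natural threshold that balances the two error sources. To \emph{distinguish} two instances separated by $\epsilon$ in the relevant parameter, standard lower-bound arguments (e.g.\ via Pinsker's inequality or Le Cam's two-point method) require the learner to collect $\Omega(\epsilon^{-2}) = \Omega(T^{\nicefrac{2}{3}})$ informative samples. The key structural point is that informative samples about the vital arm can only be gathered by \emph{pulling} that arm, and each such pull on a user type for which the arm is sub-optimal already incurs a per-round loss of $\Theta(1)$ relative to $\OPT$. Thus any algorithm faces a dilemma: either it explores the vital arm at least $\Omega(T^{\nicefrac{2}{3}})$ times — paying $\Omega(T^{\nicefrac{2}{3}})$ in exploration regret — or it under-explores, in which case it cannot reliably tell the ``subsidize'' instance from the ``forgo'' instance and errs on a constant fraction of the instance family, paying $\Omega(\epsilon \cdot T) = \Omega(T^{\nicefrac{2}{3}})$ in exploitation regret. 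Either branch yields the claimed bound.

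Concretely, the key steps in order are: (i) construct the two-point (or small finite) instance family with the gap calibrated to $\epsilon = \Theta(T^{-\nicefrac{1}{3}})$ so that the optimal commitment decision flips across the family; (ii) lower-bound the KL divergence between the observation distributions induced by any fixed algorithm on two neighboring instances, showing it is $O(N \epsilon^2)$ where $N$ is the expected number of informative pulls; (iii) invoke a two-point testing inequality to argue that if $N = o(T^{\nicefrac{2}{3}})$ then the algorithm's commitment decision is wrong with constant probability on at least one instance; and (iv) translate a wrong commitment decision into $\Omega(\epsilon T) = \Omega(T^{\nicefrac{2}{3}})$ regret by comparing against $\E[\tilde r^{\OPT}]$, using that the two instances' optimal rewards differ by $\Theta(\epsilon T)$ on the disputed horizon. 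Finally I would verify that the construction respects the standing assumptions — Assumption~\ref{assumption:gamma} and the constraint $\tau = O(T^{\nicefrac{2}{3}})$ of Assumption~\ref{assumption:tau} — so the bound holds uniformly over the admissible parameter regime.

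\textbf{Main obstacle.} The hardest part will be step (iv): cleanly converting the statistical indistinguishability into a regret lower bound against the \emph{correct} benchmark $\E[\tilde r^{\OPT}]$. The subtlety is that $\OPT$ here is not myopic but subsidy-aware and phase-structured, so the ``cost of a wrong decision'' must be argued at the level of the committed arm subset $Z$ rather than per-round, and one must ensure the irreversibility of arm departure does not let a clever algorithm hedge — i.e.\ that there is genuinely no policy achieving low regret simultaneously on both instances. Handling the phase structure and the irrevocable-departure dynamics inside the KL/divergence bookkeeping, rather than for a plain stochastic bandit, is where the technical care will concentrate.
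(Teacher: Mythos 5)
Your high-level template --- a two-point family separated by $\epsilon=\Theta(T^{-\nicefrac{1}{3}})$, a KL/testing bound, and an explore-versus-commit dichotomy --- is the same skeleton the paper uses. But the concrete instance family you propose does not make that dichotomy bite, and the obstacle you flag at the end is exactly the one your construction fails to overcome. Two problems. First, perturbing the arrival distribution $\ro$ cannot work at all: the context $u_t$ is observed every round regardless of which arm is pulled, so the learner estimates $\ro$ for free to accuracy $T^{-\nicefrac{1}{2}}$; the paper explicitly notes its bound holds even with complete knowledge of $\ro$. Second, and more fundamentally, in a $k=2$ ``subsidize vs.\ forgo'' family in the spirit of Examples~\ref{example1331} and~\ref{example1441}, the informative samples are pulls of the uncertain arm for the user type that (nearly) prefers it, and each such pull costs only $O(\epsilon)$ rather than $\Theta(1)$; moreover, for the commitment decision to flip under an $\epsilon$-perturbation of $\Mmu$, the per-phase subsidy cost must itself be calibrated to $\Theta(\epsilon\tau)$. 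An algorithm can then keep the uncertain arm alive long enough to collect the $\Theta(\epsilon^{-2})$ samples it needs at total cost $O(\epsilon\cdot\epsilon^{-2})=O(T^{\nicefrac{1}{3}})$, commit correctly, and incur regret far below $T^{\nicefrac{2}{3}}$. Your step (iii) asserts that under-exploration is forced, but nothing in the construction makes exploration expensive.

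The paper's construction supplies precisely the missing device. It uses $k=3$ arms, $\ro=(1-p,p)$, and $\delt=(0,p\tau,p\tau)$, so that the two candidate arms $a_2,a_3$ jointly demand $2p\tau$ pulls per phase while only about $p\tau$ type-$2$ users arrive; the uncertainty is only in which of $a_2,a_3$ has reward $\nicefrac{(1+\epsilon_1)}{2}$ for type-$2$ users. Keeping both candidates viable through round $t^{\star}$ therefore forces $\Omega(p\,t^{\star})$ subsidy pulls on type-$1$ users at a $\Theta(1)$ loss each (Lemma~\ref{lemmaLBbeginning}) --- this is the $\Theta(1)$-per-round information-gathering cost, decoupled from $\epsilon_1$ --- while Lemma~\ref{lemmaLBtail} shows the two instances cannot be distinguished within $2\max(T^{\nicefrac{2}{3}},\tau)$ rounds, so dropping a candidate early kills the instrumental arm with constant probability and costs $\Omega(\epsilon_1\cdot pT)=\Omega(T^{\nicefrac{2}{3}})$ thereafter. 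Without an analogue of this joint-unsatisfiability structure (or some other mechanism forcing a $\Theta(1)$ per-round price merely to keep the hypotheses alive), the proposal does not yield the claimed bound.
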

\begin{proofof}{theMainLB}
We show two indistinguishable instances, and prove that any algorithm incurs a regret of $\Omega(T^{\nicefrac{2}{3}})$ on at least one of them. For both instances $n=2,k=3,\ro=(1-p,p),\delt=(0,p\tau,p\tau)$ for any constant $0<p <\nicefrac{1}{2}$. The instances differ in  their expected rewards. We assume Bernoulli rewards, where Instance 1 has $\Mmu=\left(\begin{smallmatrix}
\nicefrac 1 2 & 0 & 0 \\0 &  \nicefrac {1+\epsilon_1 } {2} & \nicefrac 1 2\end{smallmatrix}\right)$ and Instance 2 has $\Mmu'=\left(\begin{smallmatrix} \nicefrac 1 2 & 0 & 0 \\ 0 &  \nicefrac {1} {2} & \nicefrac {1+\epsilon_1 } {2} \end{smallmatrix}\right)$, where $\epsilon_1=\frac{1}{2\sqrt{2c}}T^{-\nicefrac{1}{3}}$ for a constant $c$ such that $c\cdot T^{\nicefrac{2}{3}}>\tau$. We pick each instance w.p. $\nicefrac{1}{2}$.

Fix a learning algorithm $A$, and let $t^{\star}$ denote the last round of the last phase that all arms are viable (possibly a random variable). For example, if the first arm to depart departs  by the end of phase $i$, then $t^{\star}=(i-1) \cdot \tau$ (the last round in phase $i-1$). We prove the following lemma in {\ifnum\Includeappendix=1{Appendix~\ref{Asubsec:lb regret}}\else{the appendix}\fi}.
\begin{lemma}\label{lemmaLBbeginning}
For any algorithm $A$, $\E[R^A]\geq \Omega(p\cdot t^{\star})$ for both instances.
\end{lemma}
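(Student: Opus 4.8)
\textbf{Proof strategy for Lemma~\ref{lemmaLBbeginning}.}
The plan is to show that until round $t^\star$, every algorithm---regardless of which instance it faces---must forgo substantial reward whenever it pulls arm~2 or arm~3 for a type~2 user. The key observation is that in both instances, the optimal stochastic-optimization benchmark $\OPT$ keeps all three arms viable and matches each type~2 user to the \emph{better} of arms~2 and~3 (the one with expected reward $\nicefrac{(1+\epsilon_1)}{2}$), while also subsidizing the other of the two arms to meet its threshold $p\tau$. Since the two instances are mirror images in the roles of arms~2 and~3, an algorithm that has not yet learned which arm is superior pays an expected penalty of order $\epsilon_1$ on a constant fraction of the type~2 rounds. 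I would first compute $\E[\tilde r^{\OPT}]$ explicitly on these instances (using that $\delta_1=0$ so arm~1 is trivially viable, and that $p<\nicefrac12$ guarantees the subsidy is affordable under Assumption~\ref{assumption:gamma}), then lower-bound the per-round regret $A$ incurs on type~2 users before $t^\star$.

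The main steps, in order, are as follows. First I would fix the mixture over the two instances (each with probability $\nicefrac12$) and condition on the realized type-arrival sequence up to $t^\star$. Second, for each type~2 user arriving at a round $t\le t^\star$, I would argue that whichever of arms~2,~3 the algorithm selects, the expected instantaneous shortfall against $\OPT$ is at least $\Omega(\epsilon_1)$ \emph{in expectation over the instance choice}, because the algorithm cannot distinguish the two instances better than chance on information collected so far---this is where a standard indistinguishability/likelihood-ratio argument enters. Third, since type~2 users arrive at rate $p$, the expected number of type~2 rounds up to $t^\star$ is $\Theta(p\cdot t^\star)$, so summing the per-round shortfalls yields $\E[R^A]\ge \Omega(\epsilon_1\cdot p\cdot t^\star)$. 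Finally, substituting $\epsilon_1=\Theta(T^{-\nicefrac13})$ would give a bound of the form $\Omega(p\cdot t^\star\cdot T^{-\nicefrac13})$; to match the clean statement $\Omega(p\cdot t^\star)$ I would instead account the regret as the reward simply \emph{lost} on rounds where arms depart or are mismatched, so that the leading contribution is the linear-in-$t^\star$ term rather than the $\epsilon_1$-scaled one---the $\epsilon_1$ factor is what later converts $t^\star$ into the $T^{\nicefrac23}$ bound once we argue $t^\star$ itself must be large.

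The subtle point---and the part I expect to be the main obstacle---is correctly handling $t^\star$ as a random stopping time that is itself influenced by $A$'s choices. An algorithm could try to keep $t^\star$ small by deliberately letting arms~2 or~3 depart early, trading the mismatch penalty for the arm-departure loss; the lemma must hold no matter how $A$ balances these. The clean way around this is to make the bound hold \emph{pointwise in $t^\star$}: I would show that conditioned on any realized value of $t^\star$, the accumulated regret over the first $t^\star$ rounds is already $\Omega(p\cdot t^\star)$, so the adversary gains nothing by shortening $t^\star$ (a small $t^\star$ merely means the large additional regret from the departed arm kicks in earlier, handled in the subsequent part of the proof of Theorem~\ref{theMainLB}). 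The indistinguishability argument must therefore be stated for the truncated process up to the $\sigma$-algebra generated by rounds $1,\dots,t^\star$, and I would invoke Pinsker's inequality to bound the total-variation distance between the two instances' transcripts by the accumulated KL divergence, which scales like $\epsilon_1^2$ times the number of type~2 pulls---kept small precisely because $t^\star\le$ the exploration budget in the regime we care about.
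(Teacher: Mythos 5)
There is a genuine gap: your argument identifies the wrong source of the $\Omega(p\cdot t^{\star})$ loss, and the mechanism you do identify only yields the much weaker bound $\Omega(\epsilon_1\cdot p\cdot t^{\star})$, which you acknowledge but never actually repair. The paper's proof of this lemma is a pure counting argument that has nothing to do with distinguishing the two instances. Since $t^{\star}$ is the last round up to which \emph{all} arms are viable, during the first $t^{\star}$ rounds the algorithm must have given each of $a_2$ and $a_3$ at least $\delta_2=\delta_3=p\tau$ pulls in every phase, i.e., at least $2p\cdot t^{\star}$ pulls of $\{a_2,a_3\}$ in total. But the number $X_2$ of type~$2$-users arriving in those rounds satisfies $\Pr[X_2\leq p\cdot t^{\star}]\geq \nicefrac{1}{2}$, so with probability at least $\nicefrac{1}{2}$ at least $p\cdot t^{\star}$ of those pulls are spent on type~$1$-users, each of which yields reward $0$ instead of the $\nicefrac{1}{2}$ available from $a_1$. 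That forced subsidy of type~$1$-users is the $\Omega(p\cdot t^{\star})$ loss; it is incurred in \emph{both} instances identically, so no indistinguishability, Pinsker, or KL machinery is needed here (that machinery belongs to Lemma~\ref{lemmaLBtail}, not this one).

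Two concrete errors in your write-up point you away from this argument. First, you assert that $\OPT$ "keeps all three arms viable and \ldots subsidiz[es] the other of the two arms" --- it does not: keeping both $a_2$ and $a_3$ alive costs $\Theta(p\tau)$ wasted pulls per phase, so $\OPT$ commits only to $a_1$ and the better of $\{a_2,a_3\}$, and it is precisely the gap between what $\OPT$ does and what any algorithm with $t^{\star}$ large must do that produces the regret. Second, your central claim that the algorithm "must forgo substantial reward whenever it pulls arm~2 or arm~3 for a \emph{type~2} user" inverts the mechanism: pulling $a_2$ or $a_3$ for a type~$2$-user costs at most $\nicefrac{\epsilon_1}{2}$, which is why your accounting bottoms out at $\Omega(\epsilon_1 p t^{\star})$; the real cost is pulling $a_2$ or $a_3$ for \emph{type~1}-users. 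Your instinct to make the bound hold conditionally on the realized $t^{\star}$ is sound and matches how the lemma is used, but the missing ingredient is the pigeonhole step comparing the $2p t^{\star}$ required pulls against the $\approx p t^{\star}$ available type~$2$ arrivals.
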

Next, we conduct a case analysis on $t^{\star}$. If $t^{\star}\geq T^{\nicefrac{2}{3}}$, then the regret is $\E[R^A]=\Omega(T^{\nicefrac{2}{3}})$ by Lemma~\ref{lemmaLBbeginning}. Otherwise, an arm departs after $t^{\star}+\tau \leq T^{\nicefrac{2}{3}}+\tau \leq 2\cdot \max( T^{\nicefrac{2}{3}},\tau)$ rounds. If arm $a_1$ is the departing arm, every type $1$-user generates no reward, leading to a regret of $\E[R^A]=\Theta(T)$.

The remaining case is when $t^{\star}< T^{\nicefrac{2}{3}}$ and arm $a_2$ or $a_3$ is the departing arm. Depending on the instance, one of these arms is instrumental in achieving low regret. In Instance 1 with rewards $\Mmu$, arm $a_2$'s departure leads to losing $\nicefrac{\epsilon_1}{2}$ every time a type $2$-user enters the system. The same applies for Instance 2 with rewards $\Mmu'$ and arm $a_3$. The following lemma uses standard KL divergence machinery for indistinguishability that will turn useful for our analysis.
\begin{lemma}\label{lemmaLBtail}
No algorithm can distinguish the two instances with a probability of at least $\nicefrac{3}{4}$ after $2\max( T^{\nicefrac{2}{3}},\tau)$ rounds.
\end{lemma}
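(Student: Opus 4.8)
The plan is to use the standard KL-divergence (Pinsker-type) lower bound for distinguishing two product distributions, applied to the observation histories an algorithm can generate in the two instances. First I would fix the horizon $t_0 = 2\max(T^{\nicefrac{2}{3}},\tau)$ and argue that, because arms $a_2$ and $a_3$ differ only in the expected Bernoulli reward of type $2$-users (namely $\nicefrac{(1+\epsilon_1)}{2}$ versus $\nicefrac{1}{2}$), any statistical signal distinguishing Instance 1 from Instance 2 can only come from pulls of $a_2$ or $a_3$ on type $2$-users. Pulls of $a_1$, and pulls on type $1$-users, are identically distributed in both instances and therefore contribute zero to the KL divergence. I would formalize this by letting $\mathcal{P}_1,\mathcal{P}_2$ denote the laws of the full observed history $(u_t,a_t,v_t)_{t=1}^{t_0}$ under the two instances and invoking the chain rule / divergence-decomposition for bandits: $\mathrm{KL}(\mathcal{P}_1\,\|\,\mathcal{P}_2) = \E_{1}\!\left[\sum_{t=1}^{t_0}\ind_{[u_t=2,\,a_t\in\{a_2,a_3\}]}\right]\cdot \mathrm{KL}\!\left(\mathrm{Ber}(\tfrac{1+\epsilon_1}{2})\,\big\|\,\mathrm{Ber}(\tfrac12)\right)$, where the expectation counts the number of informative pulls under $A$ running on Instance 1.

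The key quantitative step is to bound this sum of informative pulls. Since type $2$-users arrive with probability $p$, in $t_0 = O(\max(T^{\nicefrac{2}{3}},\tau))$ rounds the expected number of type $2$-users is $O(p\max(T^{\nicefrac{2}{3}},\tau))=O(T^{\nicefrac{2}{3}})$ (using $\tau=O(T^{\nicefrac{2}{3}})$ and $p$ constant), so the expected number of informative pulls is at most $O(T^{\nicefrac{2}{3}})$ regardless of what $A$ does. Next I would use the elementary bound $\mathrm{KL}(\mathrm{Ber}(\tfrac{1+\epsilon_1}{2})\,\|\,\mathrm{Ber}(\tfrac12)) = O(\epsilon_1^2)$. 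Plugging in $\epsilon_1=\tfrac{1}{2\sqrt{2c}}T^{-\nicefrac{1}{3}}$ gives $\epsilon_1^2 = \Theta(T^{-\nicefrac{2}{3}})$, so that the total divergence is $O(T^{\nicefrac{2}{3}})\cdot O(T^{-\nicefrac{2}{3}}) = O(1)$; with the precise constants chosen through $c$, this should be driven below the threshold (e.g. $\tfrac12\ln 2$ or similar) guaranteeing via Pinsker's inequality that $\|\mathcal{P}_1-\mathcal{P}_2\|_{\mathrm{TV}} \leq \tfrac12$, whence no test — in particular no decision rule induced by $A$'s behavior for deciding which of $a_2,a_3$ to keep viable — can succeed with probability exceeding $\nicefrac{3}{4}$.

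The main obstacle I anticipate is handling the \emph{adaptivity} of the number of informative pulls: the count $\sum_t \ind_{[u_t=2,\,a_t\in\{a_2,a_3\}]}$ is itself a random variable depending on $A$'s (possibly randomized, history-dependent) choices, so I cannot treat it as fixed. The clean way around this is the divergence-decomposition lemma for bandits, which shows the KL divergence of the histories factors exactly as the \emph{expected} informative-pull count times the per-pull divergence, thereby converting the adaptive quantity into an expectation that I can bound deterministically by the total number of type $2$-users. A secondary subtlety is that the algorithm also observes the arrival sequence $(u_t)$, but since $\ro$ is identical across both instances these observations are non-informative and drop out of the decomposition; I would state this explicitly so the divergence genuinely only accumulates on the reward coordinate at informative rounds. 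Once the total-variation bound is established, the conclusion that the two instances are indistinguishable with probability at least $\nicefrac{3}{4}$ is immediate from the definition of total-variation distance as the optimal advantage of any (randomized) distinguisher.
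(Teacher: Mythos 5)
Your proposal is correct and rests on the same quantitative core as the paper's argument --- namely that $\epsilon_1$ is calibrated so that $\epsilon_1\cdot\sqrt{2\max(T^{\nicefrac{2}{3}},\tau)}\leq \nicefrac{1}{2}$, forcing the total variation between the two observation laws below $\nicefrac{1}{2}$ and hence the success probability of any distinguisher below $\nicefrac{3}{4}$ --- but it gets there by a somewhat different technical route. The paper (Lemma~\ref{lemmaLo1}) fixes the number $x_{u,a}$ of pulls of a given arm for a given user type, views the rewards as a length-$x_{u,a}$ sequence of biased coins, and invokes the product-measure bound $|p(A)-q(A)|\leq\epsilon\sqrt{n}$ from \citet{slivkins2019introduction} to derive a contradiction with any rule that identifies the bias with probability $0.75$; the corollary for the full algorithm is then asserted. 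You instead apply the bandit divergence-decomposition lemma to the law of the entire adaptive history $(u_t,a_t,v_t)_{t\leq t_0}$, bound the \emph{expected} number of informative pulls by the number of type $2$-users, and finish with Pinsker. Your route buys a cleaner treatment of the adaptivity issue you flag --- the informative-pull count is random and algorithm-dependent, which the paper's fixed-$x_{u,a}$ formulation elides --- and it also makes explicit why the arrival sequence and the pulls of $a_1$ contribute nothing to the divergence. The paper's route is more elementary (no chain rule over the history) at the cost of some informality in passing from the single-arm statement to Corollary~\ref{corLo1}. Both yield the same constants up to the usual Pinsker-versus-direct-TV slack, and your remark that the constant $c$ absorbs the bookkeeping matches how the paper defines $\epsilon_1=\frac{1}{2\sqrt{2c}}T^{-\nicefrac{1}{3}}$ with $c\cdot T^{\nicefrac{2}{3}}>\tau$.
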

As a corollary of Lemma~\ref{lemmaLBtail}, the instrumental arm departs with a probability of at least $\nicefrac{1}{4}$. Otherwise, we could use $A$ for distinguishing the two instances, in contradiction to Lemma~\ref{lemmaLBtail}. Consequently, with a probability of at least $\nicefrac{1}{4}$, $A$ loses $\nicefrac{\epsilon_1}{2}=\Theta(T^{-\nicefrac{1}{3}})$ for every type 2-user in the remaining rounds. As a result, $\E[R^A]\geq \frac{1}{4}\cdot p (T-t^{\star}) \cdot \frac{\epsilon_1}{2}= \Omega(T^{\frac {2} {3}})$.
\end{proofof}
Interestingly, the construction of Theorem~\ref{theMainLB} does not assume uncertainty about $\ro$. Namely, it holds even if the learning algorithm has complete information on the user type distribution $\ro$.

\section{Extensions and Further Analysis}\label{sec:extension}
This section discusses some of our results and relaxes several modeling assumptions.

\subsection{Exhaustive Arm Subset Selection is Inevitable}\label{subsec:pick z}
Both $\DPS$ and $\ORS$ exhaustively iterate over all subsets $Z\subseteq K$ to find $Z^\star$ (see Line~\ref{algline:DPbestZ} of Algorithm~\ref{alg:meranddp}  and  Line~\ref{alglineor:bestZ} of Algorithm~\ref{alg:lcbapprox234f}, respectively). Unfortunately, as we now show, this exhaustive search over $2^k$ elements is inevitable. 

To show this, we present the following task. Assume that we are given the user type arrival sequence $\uv \in U^\tau$ of a phase. Our goal is to maximize the sum of rewards under the commitment constraint: Using only arms for which the exposure constraint is satisfied by the end of the phase. This task is equivalent to a two-stage task, where we first decide on the subset of arms $Z\subseteq K$ to commit to, and then match the users in $\uv$ optimally to arms in $Z$. Notice that this is a deterministic, complete information task. We term this task the \textit{planning task} and remark that \citet{googleOmer20} study this task as a special case. As it turns out, even this deterministic task can require a runtime of $O(2^k)$.
\begin{proposition}\label{prop:NPcompleteness}
    Solving the planning task requires $\Omega(2^k)$ run time for some instances.
\end{proposition}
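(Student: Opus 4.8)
The plan is to prove the $\Omega(2^k)$ lower bound for the planning task via a reduction from a known NP-hard (or at least exponentially-hard) problem, thereby showing that no algorithm can avoid essentially examining all arm subsets. The most natural candidate is a reduction from \textsc{Subset-Sum} or from \textsc{Partition}, since the exposure thresholds $\delta_a$ and the phase length $\tau$ give us a budget-like numerical constraint that is well-suited to encoding subset-sum instances. Concretely, I would map each number $s_a$ in a subset-sum instance to an arm $a$ whose exposure threshold $\delta_a$ is tied to $s_a$, and engineer the reward matrix $\Mmu$ and the user arrival sequence $\uv$ so that the reward-maximizing committed matching is achievable at the target value if and only if a feasible subset exists that sums to exactly the required total. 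The commitment structure (Property~\ref{prop_commited}) is the key lever: committing to an arm forces us to "spend" at least $\delta_a$ pulls on it, so choosing $Z$ is exactly choosing which numbers to include.

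First I would fix the construction. I would set up $n$ user types and $k$ arms so that each arm $a$ yields high reward only on a dedicated "profitable" user type, but there are just enough users of a generic filler type (with zero or low reward) to pay off the exposure thresholds $\delta_a$ of committed arms. The phase length $\tau$ would be calibrated so that the total number of pulls available, minus the pulls needed to collect the profitable rewards, leaves exactly the subset-sum target $S$ worth of "subsidy budget." Committing to a subset $Z$ is then feasible (i.e.\ $\Balg$, which is $\DOAlg$, returns a finite reward rather than $-\infty$) precisely when $\sum_{a\in Z}\delta_a$ fits within the budget, and the reward is strictly increasing in which profitable arms are included. The reduction should guarantee that the optimal committed reward hits a threshold value if and only if the subset-sum instance is a yes-instance.

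The key steps, in order, are: (i) specify the mapping from a subset-sum instance $(s_1,\dots,s_m,S)$ to a planning instance, defining $k$, $n$, $\tau$, $\delt$, $\Mmu$, and $\uv$ explicitly; (ii) verify that the encoding is polynomial in the input size (so that an $o(2^k)$ algorithm for planning would yield a sub-exponential algorithm for subset-sum, contradicting its hardness, or more directly that the decision version is NP-hard); (iii) prove the forward direction—a yes-instance of subset-sum yields a committed matching achieving the target reward; and (iv) prove the reverse direction—any committed matching achieving the target reward induces a subset summing to $S$. Since the claim is stated as requiring $\Omega(2^k)$ runtime, I would phrase the conclusion either as NP-completeness of the decision version (as the footnote after Proposition~\ref{MERruntime} suggests) or, if an unconditional lower bound is intended, by arguing that any correct algorithm must distinguish the $2^k$ possible commitment choices because no sub-collection of them is dominated, forcing exhaustive enumeration in the worst case.

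The main obstacle I anticipate is the reverse direction together with the numerical calibration of $\tau$ and the rewards. The difficulty is that the optimal matching could in principle satisfy an arm's exposure constraint "for free" by pulling it on rounds where it is also the reward-maximizing choice, which would let subsidy budget be reused and break the clean correspondence with subset-sum. To prevent this, I would design $\Mmu$ so that each profitable arm's reward-positive pulls are strictly fewer than $\delta_a$ (forcing genuine subsidy to cover the gap), and so that the filler pulls carry zero reward, making the budget accounting exact. Getting these inequalities to line up so that the target reward is attained \emph{exactly} at yes-instances—rather than approximately—is the delicate part, and I would likely need to choose the reward gaps large relative to $m$ so that including any "wrong" arm is strictly penalized below the threshold.
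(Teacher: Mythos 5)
There is a genuine gap in your choice of source problem. \textsc{Subset-Sum} and \textsc{Partition} are only \emph{weakly} NP-hard: their unary-encoded versions are solvable in pseudo-polynomial time by dynamic programming. In your reduction the numbers $s_a$ are encoded into the exposure thresholds $\delta_a$, and since $\delta_a \in \{0,1,\dots,\tau\}$ and any committed solution must account for $\sum_{a\in Z}\delta_a \leq \tau$ pulls, the phase length $\tau$ — and hence the size of the planning instance, which explicitly contains the arrival sequence $\uv\in U^\tau$ (equivalently, the $\tau\times k$ matching variables) — must grow linearly in the \emph{magnitudes} of the $s_a$, i.e., exponentially in their binary encoding. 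So your reduction is polynomial only when the Subset-Sum numbers are polynomially bounded, in which case the source problem is in P and no hardness follows. This is fatal to the plan as stated, independent of the calibration difficulties you flag in your last paragraph. A secondary issue is that even granting NP-hardness of the decision version, the jump to an unconditional ``$\Omega(2^k)$ runtime'' requires either an explicit hardness assumption (the paper's footnote frames it as NP-completeness) or a separate adversary argument; your closing sentence gestures at both options but commits to neither.

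The paper avoids the encoding problem by reducing from \textsc{Exact-Cover}, which is strongly NP-hard: each universe element becomes a user type arriving exactly once ($\tau=\abs{\calu}$), each set $S_j$ becomes an arm with $\delta_j=\abs{S_j}$, and $\MmuSub_{i,j}=1$ iff $i\in S_j$. All quantities are polynomial in the \textsc{Exact-Cover} input, and the correspondence is exact: the maximum committed reward equals $\tau$ iff an exact cover exists. This construction also dissolves the ``free subsidy'' obstacle you worry about — there is no separate subsidy budget at all, because $\delta_j$ equals precisely the number of reward-$1$ users for arm $j$, so committing to $j$ forces pulling it for every element of $S_j$, and disjointness falls out of the one-arm-per-round constraint. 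If you want to salvage your architecture, you should switch to a strongly NP-hard source problem (Exact-Cover, 3-Partition, or similar) so that the thresholds and $\tau$ stay polynomial in the input.
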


The proof, which appears in {\ifnum\Includeappendix=1{Appendix~\ref{AppenPlanning}}\else{the appendix}\fi}, uses a reduction from the exact cover problem (an NP-complete problem). 
Since the planning case is computationally hard in terms of $k$, the stochastic variant of finding $\OPT$ (or non-constant approximations to $\OPT$) is also hard.
\citet{googleOmer20} suggested in their work several approximations, e.g., using sub-modular optimization or LP rounding. Those approximations lead to linear regret w.r.t. $\OPT$, since they achieve constant approximations. However, if we require the regret benchmark to run in polynomial time,  those approximations and others could be useful. We consider such approximations in Subsection~\ref{subsec: polynomial runtime alg}.


\subsection{Unknown Exposure Thresholds}\label{sec:thmdelta}
Recall that our model assumes complete information about the exposure thresholds  $\delt$. An obvious question is whether one could relax this assumption. Unfortunately, Proposition~\ref{prop:missingDelta} suggests that uncertainty about $\delt$ implies high regret.
\begin{proposition}\label{prop:missingDelta}
Let $A$ be any algorithm that does not get $\delt$ as input. It holds that $\E[\R^A]= \Omega(T)$.
\end{proposition}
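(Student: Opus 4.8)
The plan is to establish a lower bound of $\Omega(T)$ on the regret of any algorithm $A$ that does not receive the exposure thresholds $\delt$ as input. The core idea is an indistinguishability argument: I will construct two instances that are identical in every observable parameter available to $A$ (namely $k$, $n$, $\tau$, $T$, $\ro$, and the reward distributions $\Mmu$), differing \emph{only} in the hidden thresholds $\delt$. Since $A$ never observes $\delt$ and its behavior depends only on the history of contexts, pulls, and realized utilities, it must behave \emph{identically} on both instances up until the moment the threshold constraint first bites---that is, until an arm departs. The only signal that could distinguish the instances is an arm's departure, but departures are determined by $\delt$, which $A$ cannot influence through its information. I expect to exploit the tension that the optimal subsidy strategy is wildly different across the two instances, yet $A$ cannot tailor its subsidies without knowing $\delt$.

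Concretely, I would take a two-arm construction reminiscent of Examples~\ref{example1221}--\ref{example1441}. Fix $k=2$, $n=2$, $\ro=(\nicefrac12,\nicefrac12)$, and $\Mmu=\bigl(\begin{smallmatrix}1&0\\0&1\end{smallmatrix}\bigr)$. In Instance~A set $\delt=(0,0)$, so no subsidy is ever needed and $\OPT$ simply plays myopically, earning reward $\approx T$. In Instance~B set $\delta_2$ large enough (e.g.\ close to $\tau$, or large relative to $P_2\tau$ as in Example~\ref{example1331}) that arm~$2$ requires continuous subsidy to stay viable, so that $\OPT$ on Instance~B must subsidize and earns a reward bounded away from the myopic value. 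The key point is that an algorithm tuned to do well on Instance~A (pure myopic play) will, with high probability, fail to meet the hidden large threshold $\delta_2$ in Instance~B, causing arm~$2$ to depart; conversely, an algorithm that aggressively subsidizes to protect against a possible large threshold wastes reward on Instance~A where $\delt=(0,0)$. Since $A$ must commit to one behavior in the early phases---before any departure gives it information---it cannot be near-optimal on both.

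The main steps, in order, are: (i) verify that the two instances are \emph{information-theoretically identical} from $A$'s vantage point until the first departure, so $A$'s distribution over action sequences is the same on both up to that event; (ii) argue that whatever (randomized) early-phase subsidy policy $A$ commits to, either it subsidizes arm~$2$ substantially---in which case it incurs $\Omega(T)$ regret on Instance~A by needlessly forgoing high-utility myopic pulls across the $\Theta(\nicefrac{T}{\tau})$ phases---or it does not subsidize enough---in which case arm~$2$ departs early on Instance~B, and every subsequent type-$2$ user yields zero utility, costing $\Omega(T)$ regret against the subsidizing $\OPT$; (iii) since each instance is chosen with constant probability (or by an averaging argument over the two), the expected regret is $\Omega(T)$ on at least one instance. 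To make (ii) quantitative, I would define a threshold on the per-phase subsidy rate and split on whether $A$ exceeds it, bounding the regret contribution in each branch by a constant fraction of $T$.

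The hard part will be step~(ii): pinning down the dichotomy cleanly, because ``not subsidizing enough'' must be translated into ``arm~$2$ departs with constant probability'' using the stochastic arrivals. Under $\ro=(\nicefrac12,\nicefrac12)$ the number of type-$2$ arrivals in a phase concentrates around $\nicefrac{\tau}{2}$, so if $\delta_2$ is set comfortably above $\nicefrac{\tau}{2}$ (say by a constant factor times $\sqrt{\tau\log\tau}$, or simply $\delta_2 > \nicefrac{\tau}{2}$ by a constant margin), then without deliberate subsidy the natural myopic pulls of arm~$2$ fall short of $\delta_2$ with probability bounded below by a constant in each phase, forcing departure early with high probability across phases---exactly the mechanism in the footnote of Example~\ref{example1331}. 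I would therefore choose $\delta_2$ to sit just above the concentration band of type-$2$ arrivals so that the only way to keep arm~$2$ alive is explicit subsidy, and then the two horns of the dilemma each force $\Omega(T)$ regret. The remaining care is bookkeeping over the $\Theta(\nicefrac{T}{\tau})$ phases to confirm the losses accumulate linearly in $T$ rather than being confined to a sublinear prefix.
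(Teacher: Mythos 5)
Your proposal is correct and follows essentially the same route as the paper: two instances identical in every observable parameter and differing only in $\delt$, combined with a dichotomy on the per-phase pull count of arm $2$ (over-subsidize and lose $\Omega(\tau)$ per phase on one instance, or under-subsidize and trigger an irrevocable departure and a linear loss on the other). The one caveat is that the gap between $\delta_2$ and $P_2\tau$ must be a constant fraction of $\tau$ (your ``constant margin'' option), not merely $\Theta(\sqrt{\tau\log\tau})$, since otherwise the forced over-subsidy cost is only $\tilde{O}(T/\sqrt{\tau})$, which is sublinear when $\tau$ grows with $T$; the paper's choice of $\delt^1=(0,0.3\tau)$ versus $\delt^2=(0,0.4\tau)$ with $\ro=(0.7,0.3)$ realizes exactly this constant-fraction gap.
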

The proof of Proposition~\ref{prop:missingDelta} appears in {\ifnum\Includeappendix=1{Appendix~\ref{Asubsec:missing delta}}\else{the appendix}\fi}. We keep the challenge of different modeling of exposure constraints for future work.
\subsection{Relaxing Assumption~\ref{assumption:gamma}}\label{subsec:relax assumption gamma}
This subsection discusses the consequences of relaxing the Assumption~\ref{assumption:gamma}. Inequality~\eqref{eq:assumption1} serves our previous analysis in two crucial ways. First, it allows us to explore all arms for several phases. And second, it ensures that we can explore fast enough. We now analyze the ramifications of violating it. All the proofs of the claims we mention in this subsection appear in {\ifnum\Includeappendix=1{Appendix~\ref{sec: assumption relaxation}}\else{the appendix}\fi}.

Assume that $\sum_{a \in K}\delta_a > \tau$. In such a case, the lower bound we presented in Subsection~\ref{subsec:lowerBound} increases dramatically. In particular,  
\begin{itemize}
    \item If $\sum_{a \in K}\delta_a \geq \tau +\min_a{\delta_a}$, then the lower bound becomes linear in $T$.
    \item Else, $\sum_{a\in K}\delta_a < \tau +\min_a{\delta_a}$. We show a lower bound of  $\Omega\left(T\cdot \sqrt{\nicefrac{k}{\tau}}\right)$.
\end{itemize}
Interestingly, these two cases have different regret. To see this, let us focus on the arm $a_{\min}$ with the lowest exposure threshold, i.e., $a_{\min} = \argmin_{a} \delta_a$. In the first case, if we pull $a_{\min}$ even once, one of the remaining arms $K\setminus \{a_{\min} \}$ must depart by the end of the first phase since $\sum_{a \in K\setminus \{a_{\min} \}}\delta_a > \tau -1 $. So we either avoid pulling $a_{\min}$ or letting one arm in $K\setminus \{a_{\min} \}$ depart. In the second case, however, we can explore all arms at least once before deciding which exposure constraints to satisfy.


We also investigate cases where fast exploration is not guaranteed. Namely, we have that $\sum_{a=1}^{k}\delta_a \leq \tau$ but a constant $\gamma$ such that $\sum_{a\in K} \max(\delta_a, \gamma \cdot \tau) \leq \tau$ does not exist. Consider the following weaker version of 
Assumption~\ref{assumption:gamma}: We assume there exists a function $f=f(\tau)$ such that $\sum_{a\in K} \max(\delta_a, f(\tau)) \leq \tau.$
Notice that $f(\tau)$ can be any positive function, e.g., $O(1)$ or $\sqrt{\tau}$. Moreover, we obtain the special case of Assumption~\ref{assumption:gamma} if   $f(\tau)=\gamma\tau$ for a constant $\gamma$. For this relaxation, we show a modified version of Theorem~\ref{theoremMainRegret1}, suggesting a regret of $O(\nicefrac{\tau}{f(\tau)}T^{\nicefrac{2}{3}})$. We also describe how to modify $\mainAlg$ slightly to achieve a better regret of $O((\nicefrac{\tau}{f(\tau)})^{\nicefrac{1}{3}}T^{\nicefrac{2}{3}})$.

\subsection{Longer Phase Length}\label{subsec: relax assumption tau is short}
In Assumption~\ref{assumption:tau}, we limited the phase length to be $\tau=O(T^{\nicefrac{2}{3}})$. We exploited this fact in Theorem~\ref{theoremOAlg1}, which promises that algorithms satisfying Properties~\ref{prop_PI12343} and \ref{prop_commited} incur a factor of $k\tau$ in their reward. The following Proposition~\ref{prop: tight LB pico opt} asserts that focusing on phase-independent and committed algorithms can lead to a reward decrease of at least $\tau$ in some instances.
\begin{proposition}\label{prop: tight LB pico opt}
There exists an instance such that $\E[\tilde{r}^{\OPT}]-\E[\tilde{r}^{\PIOPT}]=\Omega(\tau)$.    
\end{proposition}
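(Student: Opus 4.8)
The goal is to exhibit a single instance where the gap $\E[\tilde{r}^{\OPT}]-\E[\tilde{r}^{\PIOPT}]$ is $\Omega(\tau)$. Since $\PIOPT$ is the best phase-independent committed algorithm, I need to build an instance where the unrestricted optimum $\OPT$ genuinely benefits from being \emph{phase-dependent} (or from pulling arms outside a committed subset) by an amount that scales with $\tau$. The natural place to extract such a gap is the first or last phase, exactly as the text hints before the proposition: in the last phase there is no incentive to subsidize, whereas a committed algorithm is forced to keep subsidizing throughout.

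\begin{proofof}{prop: tight LB pico opt}
The plan is to design a two-arm instance in which one arm is ``expensive'' to keep viable---its exposure threshold forces continual subsidy---yet its per-round value is only marginally better than abandoning it. Concretely I would take $k=2$ arms and $n=2$ user types with an arrival distribution and thresholds $\delt$ chosen so that, over a whole phase, satisfying the exposure constraint of the costly arm requires $\Theta(\tau)$ subsidy pulls, each of which sacrifices a constant amount of instantaneous reward relative to the myopic choice. A committed algorithm (in particular $\PIOPT$, which commits to a fixed subset $Z$ in \emph{every} phase and satisfies Property~\ref{prop_commited} with probability $1$) must pay this $\Theta(\tau)$ subsidy cost in each of the $\nicefrac{T}{\tau}$ phases. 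By contrast, $\OPT$ is neither committed nor phase-independent: in the final phase it can let the costly arm depart and simply act myopically, since no future phases remain to benefit from that arm. This asymmetry is the engine of the gap.

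The key steps, in order, are as follows. First, I would fix the concrete parameters (arrival probabilities $\ro$, thresholds $\delt$, and a utility matrix $\Mmu$) and verify that keeping the costly arm alive strictly requires subsidy---i.e.\ that the number of rounds in which it is the myopically-best arm falls short of its threshold $\delta$ by a $\Theta(\tau)$ margin, mirroring the structure of Example~\ref{example1331}. Second, I would compute the per-phase reward of the optimal \emph{committed} behavior and confirm that committing to the costly arm yields strictly higher long-run reward than forgoing it, so that $\PIOPT$ indeed commits to it (otherwise $\PIOPT$ would forgo the arm and the comparison must be made against that alternative instead). Third, I would analyze $\OPT$ and argue that in the last phase it can deviate: it forgoes the subsidy in that single phase, harvesting an extra $\Theta(\tau)$ reward while incurring no downstream penalty because the horizon ends. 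Subtracting the two rewards, the per-phase subsidy costs cancel across all but the last phase, leaving exactly the $\Theta(\tau)$ last-phase savings, whence $\E[\tilde{r}^{\OPT}]-\E[\tilde{r}^{\PIOPT}]=\Omega(\tau)$.

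The main obstacle I anticipate is the second step: pinning down $\PIOPT$'s behavior precisely. Because $\PIOPT$ optimizes over \emph{all} committed phase-independent algorithms, I must rule out that some clever committed strategy avoids paying the full $\Theta(\tau)$ subsidy in typical phases---for instance by exploiting the variance of user arrivals so that the constraint is occasionally met ``for free.'' I would handle this by choosing the arrival distribution and threshold so that the expected shortfall between the myopically-optimal pulls of the costly arm and its threshold is $\Theta(\tau)$ with the gap bounded away from zero by a constant fraction of $\tau$; a concentration argument (in the spirit of Inequality~\eqref{eq:concentration}) then shows that \emph{any} committed algorithm incurs $\Theta(\tau)$ expected subsidy cost per phase up to lower-order terms, so the per-phase advantage of the unconstrained last-phase deviation survives. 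A secondary subtlety is ensuring the instance respects Assumption~\ref{assumption:gamma} and Assumption~\ref{assumption:tau}; choosing small constant thresholds relative to $\tau$ and $\tau=\Theta(T^{\nicefrac{2}{3}})$ keeps the construction inside the admissible regime.
\end{proofof}
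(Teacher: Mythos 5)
Your proposal is correct, but it extracts the $\Omega(\tau)$ gap from a different end of the horizon than the paper does, and via the other half of the commitment property. The paper's instance has $\ro=(0.8,0.2)$, $\delt=(0,0.5\tau)$, and the identity-like utility matrix: arm $2$ is so expensive to keep alive (it needs $0.5\tau$ pulls but only $0.2\tau$ type-$2$ users arrive) that \emph{neither} $\OPT$ nor $\PIOPT$ subsidizes it. The gap then comes entirely from the \emph{first} phase and from the first condition of Property~\ref{prop_commited} ($a_t\in Z$): $\PIOPT$, having committed to $\{a_1\}$, may never touch arm $2$ even while it is still alive, whereas $\OPT$ harvests it for the $0.2\tau$ type-$2$ users of phase one before letting it depart, yielding $\E[\tilde r^{\OPT}]\ge \tau+0.8(T-\tau)$ versus $\E[\tilde r^{\PIOPT}]=0.8T$. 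Your construction instead makes the costly arm \emph{worth} committing to (as in Example~\ref{example1331}), so the binding restriction is the second condition of Property~\ref{prop_commited} ($Z\subseteq K_t$): $\PIOPT$ must keep paying the $\Theta(\tau)$ per-phase subsidy even in the \emph{last} phase, where $\OPT$ can drop it and act myopically. Both routes work; the paper's is computationally lighter because $\PIOPT$'s value is immediate once you check that committing to arm $2$ is dominated, whereas yours requires the upper bound you correctly flag as the main obstacle — that every algorithm committed to both arms loses $\Theta(\tau)$ per phase. For that step a plain linearity-of-expectation argument suffices (the expected number of forced subsidy pulls is at least $\delta_2$ minus the expected number of type-$2$ arrivals, each costing a constant in reward), so you do not actually need the concentration machinery of Inequality~\eqref{eq:concentration}; and for the $\OPT$ lower bound it is enough to exhibit the algorithm that mimics $\PIOPT$ in all but the last phase, which keeps all arms viable with probability $1$ by Property~\ref{prop_commited}.
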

For long phase length, namely $\tau=\omega(T^{\nicefrac{2}{3}})$, Proposition~\ref{prop: tight LB pico opt} conveys a negative result.  It suggests that executing $\mainAlg$ with any stochastic optimization algorithm $Alg$ that satisfies Properties~\ref{prop_PI12343} and \ref{prop_commited} as a black box yields a regret of $\E[\R^{\mainAlg(Alg)}]=\Omega(\tau)=\omega(T^{\nicefrac{2}{3}})$.

Instead, we develop a different approach for cases where $\tau=\omega(T^{\nicefrac{2}{3}})$. We devise $\LPOR$, which extends $\ORS$. $\LPOR$ is not phase-independent and not committed, and can thus avoid the $\tau$ approximation factor that Proposition~\ref{prop: tight LB pico opt} hints. Furthermore, we prove that the regret of $\mainAlg$ with $\LPOR$  as a black-box is still $\tilde{O}(T^{\nicefrac{2}{3}})$. From the lower bound perspective, we show a bound of $\Omega(\nicefrac{T}{\sqrt{\tau}})$ for the $\tau=\omega(T^{\nicefrac{2}{3}})$ regime. Tightening the lower bound and developing optimal algorithms that settle the gap for this regime are left for future work. All the proofs of the insights mentioned in this subsection appear in {\ifnum\Includeappendix=1{Appendix~\ref{sec: long phase reg}}\else{the appendix}\fi}.

\subsection{Polynomial Runtime Algorithm}\label{subsec: polynomial runtime alg}
In Section~\ref{sec:learningalg}, we present a low-regret algorithm for the learning task that exhibits a runtime factor of $\Omega(2^k)$, which is unrealistic for large values of 
$k$. However, Proposition~\ref{prop:NPcompleteness} implies that exponential runtime is not a virtue of our algorithms but rather an inherent property of the problem: $\OPT$ and even $\PIOPT$ incur a $\Omega(2^k)$ runtime factor. The idea that efficient and low-regret algorithms are compared against inefficient benchmarks is cumbersome and unfair. Consequently, in this subsection, we adopt a more lenient regret benchmark, namely a $(1-\nicefrac{1}{e})$ approximation to $\OPT$. Such a relaxed benchmark allows us to develop algorithms with both low-regret and efficient runtime.

We devise an algorithm tailored for the learning task with a runtime of $O(T+\tau^3k^5)$. This algorithm yields a $\tilde{O}(T^{\nicefrac{2}{3}}+\tau+\nicefrac{T}{\sqrt{\tau}})$ regret with respect to the aforementioned weaker benchmark. Elaborated details of the algorithm and its comprehensive analysis are available in {\ifnum\Includeappendix=1{Appendix~\ref{subsec: robust_approx_appen}}\else{the appendix}\fi}. The foundation of this algorithm and its analysis are mainly based on two prior works:
\begin{itemize}
    \item \citet{googleOmer20} that investigate the planning task as a special case and establish its sub-modularity; and
    \item \citet{nie2023framework}, which introduce methods for constructing low regret algorithms for various computationally hard problems. Notably, this includes the task of maximizing sub-modular functions while adhering to cardinality constraints.
\end{itemize}

\subsection{The $\DOAlg$ Algorithm}\label{subsec: match algorithm}
In this subsection, we explain the computation of $\DOAlg(c(\uv),Z)$ function (see Subsection~\ref{subsec:lcb_approx}). We first construct a bipartite graph $G(V,E)$ where $V=\call \cup \calr$.
The left side contains user nodes: Each user type $u\in U$ is represented by $c(\uv)_u$ identical nodes in $\call$, and $\abs{\call}=\tau$. The right side $\calr$ comprises of arm duplicates: Each arm $a\in Z$ is represented by $\tau$ identical nodes in $\calr$ ($\abs{\calr}=\abs{Z}\cdot\tau\leq k\tau$). We can name the nodes in $\calr$ such that $\calr=\{x^a_i: 1\leq i\leq \tau, a\in Z\}$. There is an edge between every node in $\call$ and every node in $\calr$, i.e., the bipartite graph is complete.

The weight of an edge between a type $u$-user node $y\in \call$ and a  node of arm $a$ $x^a_i$ is $2+\MmuSub_{u,a}$ if $i$ is among the first $\delta_a$ copies of $a$ in $\calr$, and $\MmuSub_{u,a}$ otherwise. Formally,
\[
w(y,x^a_i) = 
\begin{cases}
2+\MmuSub_{u,a} & i\leq \delta_a \\
\MmuSub_{u,a}    & o.w.
\end{cases}.
\]
The final step is to perform maximum weighted bipartite matching on $G$ via, e.g.,  the Hungarian algorithm. $G$ has $\tau+k\tau$ nodes, so the run time of $\DOAlg(c(\uv),Z)$ is $O(k^3\tau^3)$. It only remains to prove optimality and validity. Namely, showing that the resulting match maximizes the phase reward, and is valid, i.e., each user is matched to exactly one arm, and each arm is matched at least $\delta_a$ times.

Let us start with validity. Each user is represented by one node in $\call$; thus, they are matched exactly once (by the correctness of the maximum weighted bipartite matching). In addition, each arm in $a\in Z$ is matched at least $\delta_a$ times. Otherwise, there is a better match since $\forall u\in U, a\in K: 0\leq \MmuSub_{u,a}\leq 1$ and the first $\delta_a$ copy nodes of every arm have a higher weight. We do not match arms $a\in K\setminus Z$ at all. Therefore, the resulting match of $\DOAlg(c(\q),Z)$ is valid. The optimality of the matching also implies the resulting match maximizes the reward among all the valid matches given $Z$. Since otherwise, there is a match in the graph with a higher weight, which is a contradiction.

\section{Discussion}\label{secDiscussion}
We present a MAB setting where arms have exposure constraints, i.e., arms can abandon the system unless they receive enough pulls per phase. In Section~\ref{sec_stochastic} we study the stochastic optimization task, where user arrival sequences are sampled from a known distribution and  devised  $\DP^\star$ and $\OR^\star$. Then, Section~\ref{sec:learningalg} proposes $\mainAlg$, which is a meta-algorithm for the learning task. Combined with  $\DP^\star$ or $\OR^\star$, $\mainAlg$ achieves low regret.
Our work is among the first to introduce strategic behavior via exposure constraints in the face of uncertainty. As such, there are several future directions. 

First, from a technical point of view, our algorithms have optimal regret up to logarithmic factors under the assumptions of Section~\ref{subsec:parametric}. Obtaining optimal algorithm under different assumptions and other regimes, for example, if $\tau=\Omega(T^{\nicefrac{2}{3}})$ (as seen in Subsection~\ref{subsec: relax assumption tau is short}) is left for future work.

Second, our model relies on the assumption that the exposure thresholds of the providers are publicly known. Note this is a rather strong assumption that does not apply in most real-world cases. Regrettably, as demonstrated in Subsection~\ref{sec:thmdelta}, any algorithm lacking knowledge of these thresholds incurs a linear regret. Future research could involve exploring other models that account for uncertainty surrounding the thresholds. For instance, one could consider a mechanism design model in which arms report their thresholds to the system, potentially manipulating and misreporting. In such a model, the system might pursue both truthfulness and the optimization of user welfare.
 
Finally and more conceptually, studying other exposure constraints is a promising direction. Our constraints are hard and require minimal periodic exposure, but constraints  that are soft, e.g., an arm departs with a probability that decreases with the amount of exposure it receives are also realistic and deserve future attention.

\bibliography{sample-base}
{
\appendix
{\ifnum\Includeappendix=1{
}

\section{Proofs}

\subsection{Proof of Theorem~\ref{the_12321}}\label{subsec: proof thm 1}
\begin{proof}
We have two cases, one for algorithms that always subsidize all arms and one for algorithms that never subsidize. We start with the former and continue with the latter.

Consider the instance with one agent type $n=1$, two arms $k=2$, phase length of $\tau=100$, exposure thresholds of $\delt=(10,20)$, and expected reward of $\Mmu=(1,0).$ Trivially, $\OPT$ pulls arm $a_1$ only and $\tilde{r}^{\OPT}=T$. However, any algorithm $A$ that subsidizes arm $a_2$, pulls it at least $20$ rounds in a phase. So, $r^A \leq 0.8 T$ and therefore, $\R^A\geq 0.2T=\Omega(T)$.

If an algorithm $A$ never subsidize, consider the instance in Example~\ref{example1331} (see Subsection~\ref{subsec:armsubsidy}). $A$ knows that $a_1$ is better than $a_2$; thus, $\mu_{1,1}-\mu_{1,2}=1$, it only pulls $a_1$ for type 1-users (recall that $A$ does not subsidize). Consequently, arm $a_2$ gets $50<\delta_2=60$ pulls every round in expectation; thus, from concentration inequalities, arm $a_2$ departs after round $\nicefrac{T}{2}$ w.h.p. To conclude, $A$ incures a regret of at least $\nicefrac{T}{2}\cdot \nicefrac{1}{2}= \Theta(T)$.
%
\end{proof}

\subsection{Proof of Theorem~\ref{theoremOAlg1}}\label{Asubsec:pico appr opt}
\begin{proof}
We assume w.l.o.g. that all the algorithms are deterministic.
We begin by setting the following notations. For every phase $i, 1\leq i \leq \nicefrac{T}{\tau}$ and user arrival history $\p\in U^{(i-1)\tau}$ we define 
\[\R_{i,\p}=\left\{\q\in U^{\tau}: K^{\OPT(\p)}_{(i-1)\tau}=K^{\OPT(\p\oplus\q)}_{i\tau}\right\},\quad S_{i,\p}=U^{\tau}\setminus R_{i,\p},
\] where $\oplus$ is the concatenation operator. In other words, $R_{i,\p}$ is the set of all user arrival sequences $\q\in U^{\tau}$ under which no arm departs at the end of phase $i$ when $\OPT$ is executed on the type arrival $\p\oplus\q$. We further simplify the notation and denote by $\tilde{r}^A_{i,\p}(\q)=\tilde{r}^{A(\p\oplus\q)}_{(i-1)\tau:i\tau}$ the reward of an algorithm $A$ in phase $i$, where the history is $\p$ and the user arrival sequence in phase $i$ is $\q$.

Next, observe that the expected reward of an algorithm $A$ after $T$ rounds is equivalent to $\nicefrac{T}{\tau}$ times its expected reward in a random phase; namely, 
\begin{align*}
    \E[\tilde{r}^{A}]=\frac{T}{\tau}\cdot \E_{i\sim UNI(1,\nicefrac{T}{\tau}),\p\sim \ro^{(i-1)\cdot \tau}} \left[\E_{\q}[\tilde{r}_{i,\p}^{A}(\q)]\right].
\end{align*}
Since $U^{\tau}=R_{i,\p}\cup S_{i,\p}$, we can expand the expectation $E_{\q}[\tilde{r}_{i,\p}^{A}(\q)]$ and get
\begin{align}\label{eq:sdjsth5}
    \E[\tilde{r}^{\OPT}]= \frac{T}{\tau}
    \E_{i\sim UNI(1,\nicefrac{T}{\tau}),\p\sim \ro^{(i-1)\cdot \tau}} \left[\sum_{\q\in R_{i,\p}} \tilde{r}_{i,\p}^{\OPT}(\q)\cdot \Pr[\q] + \sum_{\q\in S_{i,\p}} \tilde{r}_{i,\p}^{\OPT}(\q)\cdot \Pr[\q]\right].
\end{align}
Next, we prove Propositions~\ref{appendix_shortsition1} and \ref{appendix_shortsition2}. We use these propositions to bound the components in the expectation in Equation~\eqref{eq:sdjsth5}.
\begin{proposition}\label{appendix_shortsition1}
For all $i,\p$ such that $1\leq i \leq \nicefrac{T}{\tau} ,\p\in U^{(i-1)\tau} $, it holds that
\[
\sum_{\q\in R_{i,\p}} \tilde{r}^{\OPT}_{i,\p}(\q)\cdot \Pr[\q] \leq \E_{\q}[\tilde{r}_{1:\tau}^{\PIOPT}].
\]
\end{proposition}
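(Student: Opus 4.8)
The plan is to fix the phase index $i$ and the history $\p$, and to exhibit a single phase-independent committed algorithm $B$ whose expected phase reward dominates the left-hand side. Since $\OPT$ is deterministic (as assumed at the outset of the proof), the set $Z = K^{\OPT(\p)}_{(i-1)\tau}$ of arms available to $\OPT$ at the start of phase $i$ is fixed once $\p$ is fixed. Because arms depart only at phase boundaries, the available set stays equal to $Z$ throughout phase $i$; hence $\OPT$'s phase-$i$ behavior — which depends only on the within-phase prefix once $\p$ is frozen — defines an online policy $\pi$ that pulls arms of $Z$ on \emph{every} sequence $\q\in U^{\tau}$, not merely on those in $R_{i,\p}$.

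Next I would define $B$ to run $\pi$ in every phase, but equipped with a safeguard that guarantees commitment to $Z$. At each round $t$ let $d_a = \max(0, \delta_a - \His^{t-1}(a))$ be the number of pulls arm $a\in Z$ still needs. As long as $\sum_{a\in Z} d_a$ is strictly smaller than the number $\tau-(t-1)$ of remaining rounds, $B$ obeys $\pi$; when the two quantities are equal, $B$ pulls a still-deficient arm, overriding $\pi$ only if $\pi$'s suggestion is not deficient. Assumption~\ref{assumption:gamma} gives $\sum_{a\in Z}\delta_a \le \sum_{a\in K}\delta_a \le \tau$, so the slack is nonnegative at $t=1$ and the safeguard preserves it; thus $B$ always meets every exposure constraint of $Z$ and, pulling only arms of $Z$, satisfies both conditions of Property~\ref{prop_commited}. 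Running the same rule in each phase makes $B$ phase-independent (Property~\ref{prop_PI12343}), so $B$ is a legitimate competitor for $\PIOPT(Z)$.

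The crux is to show that $B$ never overrides $\pi$ on the good sequences $\q\in R_{i,\p}$, so that $\tilde{r}^{B}_{1:\tau}(\q) = \tilde{r}^{\OPT}_{i,\p}(\q)$ there. I would argue by contradiction: if at some zero-slack round $\pi$ pulled a non-deficient arm, then immediately afterward the outstanding requirement $\sum_{a\in Z} d_a$ would strictly exceed the remaining rounds, making it impossible for $\pi$ to satisfy all exposure constraints of $Z$ by the end of the phase. This contradicts $\q\in R_{i,\p}$, where by definition no arm of $Z$ departs under $\OPT$. Hence on $R_{i,\p}$ the safeguard is never binding in a way that changes the action, $B$ reproduces $\OPT$'s phase-$i$ pulls exactly, and the two phase rewards coincide. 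This override argument is the main obstacle; everything else is bookkeeping, and its content is precisely that enforcing feasibility costs nothing on $R_{i,\p}$.

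Finally I would chain the estimates. On $S_{i,\p}$ the phase reward of $B$ is nonnegative, since utilities lie in $[0,1]$, so
\[
\E_{\q}[\tilde{r}^{B}_{1:\tau}] \;\ge\; \sum_{\q\in R_{i,\p}} \tilde{r}^{B}_{1:\tau}(\q)\Pr[\q] \;=\; \sum_{\q\in R_{i,\p}} \tilde{r}^{\OPT}_{i,\p}(\q)\Pr[\q].
\]
Because $B$ is phase-independent and committed to $Z$, its expected phase reward is at most that of $\PIOPT(Z)$, which in turn is at most $\E_{\q}[\tilde{r}^{\PIOPT}_{1:\tau}] = \max_{Z'\subseteq K}\E_{\q}[\tilde{r}^{\PIOPT(Z')}_{1:\tau}]$. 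Combining the two displays yields $\sum_{\q\in R_{i,\p}} \tilde{r}^{\OPT}_{i,\p}(\q)\Pr[\q] \le \E_{\q}[\tilde{r}^{\PIOPT}_{1:\tau}]$, which is exactly the claimed inequality.
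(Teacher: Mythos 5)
Your proof is correct and follows essentially the same route as the paper: the paper also constructs a phase-independent algorithm committed to $Z=K^{\OPT(\p)}_{(i-1)\tau}$ (called $\Balg$) that mimics $\OPT$'s phase-$i$ behavior and overrides it with a subsidy pull only when the remaining exposure deficits would otherwise exceed the remaining rounds, and likewise observes that on $R_{i,\p}$ the override never fires so the rewards coincide. The only differences are cosmetic: the paper phrases the argument as a contradiction with the optimality of $\PIOPT$, whereas you chain the inequalities directly, and your deficit bookkeeping with $d_a=\max(0,\delta_a-\His^{t-1}(a))$ is a slightly cleaner rendering of the paper's counter-based trigger condition.
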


\begin{proposition}\label{appendix_shortsition2}
$\E_{i\sim UNI(1,\nicefrac{T}{\tau}),\p\sim \ro^{(i-1)\tau}}\left[ \sum_{\q\in S_{i,\p}} \Pr[\q]\right] \leq \frac{k}{\nicefrac{T}{\tau} }.$
\end{proposition}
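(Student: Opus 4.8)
The plan is to recognize the left-hand side as, up to the factor $\nicefrac{\tau}{T}$, the expected number of phases in which an arm departs during a single full run of $\OPT$, and then to bound that count by the number of arms $k$.

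First I would rewrite the inner sum as a conditional probability: for fixed $i$ and $\p$, we have $\sum_{\q\in S_{i,\p}} \Pr[\q] = \Pr_{\q\sim\ro^\tau}[\q\in S_{i,\p}]$. The next step is to identify the event $\{\q\in S_{i,\p}\}$ with a departure event for the full trajectory. Because $\OPT$ is an online algorithm, its pulls in rounds $1,\dots,(i-1)\tau$ --- and hence the available set at round $(i-1)\tau$ --- depend only on the prefix $\p$ and not on the later arrivals $\q$; thus $K^{\OPT(\p)}_{(i-1)\tau}=K^{\OPT(\p\oplus\q)}_{(i-1)\tau}$. Consequently $\q\in S_{i,\p}$ holds exactly when the available set changes between round $(i-1)\tau$ and round $i\tau$ along the trajectory $\p\oplus\q$, i.e. when at least one arm departs at the end of phase $i$. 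Since drawing $\p\sim\ro^{(i-1)\tau}$ and $\q\sim\ro^\tau$ independently yields the same law as the first $i\tau$ coordinates of $\ro^T$, averaging over $\p$ and $\q$ gives $\E_{\p}\Pr_{\q}[\q\in S_{i,\p}]=\Pr[\text{an arm departs at the end of phase }i]$, the probability on the right being taken over a full run of $T$ rounds.

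Writing $X_i$ for the indicator that at least one arm departs at the end of phase $i$ in a full run, the outer averaging over $i\sim UNI(1,\nicefrac{T}{\tau})$ then collapses the left-hand side to $\frac{\tau}{T}\sum_{i=1}^{\nicefrac{T}{\tau}}\E[X_i]=\frac{\tau}{T}\,\E\!\left[\sum_{i=1}^{\nicefrac{T}{\tau}}X_i\right]$. The final step is the combinatorial bound $\sum_{i}X_i\leq k$: in any single run an arm that departs never returns, so the (nonempty) sets of arms departing in distinct phases are pairwise disjoint; since there are only $k$ arms, at most $k$ phases can witness a departure. Substituting $\E[\sum_i X_i]\leq k$ yields $\frac{\tau}{T}\cdot k=\frac{k}{\nicefrac{T}{\tau}}$, which is exactly the claimed bound.

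I expect the only delicate point to be the measure-theoretic bookkeeping in the second step --- verifying that the prefix $\p$ alone determines the available set at round $(i-1)\tau$ and that the product measure on $(\p,\q)$ matches the marginal of $\ro^T$ on the first $i\tau$ arrivals --- so that the averaged conditional probability genuinely reduces to a full-run departure probability. Once that identification is in place, the remaining counting argument is immediate.
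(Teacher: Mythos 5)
Your proposal is correct and follows essentially the same route as the paper's proof: identify the inner sum with the probability that some arm departs at the end of phase $i$ along a full trajectory, swap the uniform average over $i$ with the expectation over arrivals, and bound the number of departure phases by $k$ since each arm departs at most once. The measure-theoretic point you flag (that the prefix $\p$ alone determines $K^{\OPT}_{(i-1)\tau}$ and that the product of $\ro^{(i-1)\tau}$ and $\ro^\tau$ agrees with the marginal of $\ro^T$) is handled implicitly in the paper and poses no difficulty.
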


The proofs of Propositions~\ref{appendix_shortsition1} and~\ref{appendix_shortsition2} appear after this proof. Applying the fact that $\tilde{r}_{i,\p}^{\OPT}(\q)\leq\tau$ and Propositions~\ref{appendix_shortsition1} and~\ref{appendix_shortsition2} to Equation~\eqref{eq:sdjsth5} we conclude that
\begin{align*}
     \E[\tilde{r}^{\OPT}] &\leq \frac{T}{\tau}\cdot \E_{i\sim UNI(1,\nicefrac{T}{\tau}),\p\sim \ro^{(i-1)\cdot \tau}} \left[\E_{\q}[\tilde{r}_{1:\tau}^{\PIOPT}]+ \sum_{{\q}\in S_{i,\p}} \tau\cdot\Pr[\q]\right] \\
    &\leq \frac{T}{\tau}\cdot \E_{\q}[\tilde{r}_{1:\tau}^{\PIOPT}] + T\cdot \frac{k}{\nicefrac{T}{\tau}}\\
    &= \E[\tilde{r}^{\PIOPT}]+ k\cdot \tau,
\end{align*}
This concludes the proof of Theorem~\ref{theoremOAlg1}.
\end{proof}

\begin{proofof}{appendix_shortsition1}
Assume by contradiction that there exist $i',\p'$ such that $1\leq i'\leq \frac{T}{\tau}, \p'\in U^{(i'-1)\tau}$ and 
\[
\sum_{\q\in R_{i',\p'}}\tilde{r}_{i,\p}^{\OPT}(\q)\cdot \Pr[\q] > E_{\q}[\tilde{r}_{1:\tau}^{\PIOPT}].
\]

Next, we present $\Balg$, an algorithm that satisfies Properties~\ref{prop_PI12343}-\ref{prop_commited} and receives a better reward than $\PIOPT$ (leading to a contradiction).  


\begin{algorithm}[H]
\caption{$\Balg$ algorithm}
\begin{algorithmic}[1]
\For {$1\leq i \leq \nicefrac{T}{\tau}$}\label{alglineB:for1}
\State $\forall a\in K: c_a \gets \delta_a$
\For{$1\leq t \leq \tau$} \label{alglineB:for2}
\State observe $u_{(i-1)\tau+t}$\label{alglineB:observe}
\State $a^{\OPT}\gets a^{\OPT(\p'\oplus (u_{(i-1)\tau+1},\dots,u_{(i-1)\tau+t }))}_{(i-1)\tau+t }$\label{alglineB:optArm}
\If {$ \exists g\subseteq Z' \text{ s.t. } a^{\OPT} \notin g \text{ and }\sum_{a\in g} c_a = \tau-t+1$}\label{alglineB:if}
\State pick $a\in g$ arbitrarily \label{alglineB:subsudypart}
\Else \quad pick $a\gets a^{\OPT}$ \label{alglineB:mimic}
\EndIf
\State pull $a$, $c_a \gets c_a -1$ \label{alglineB:pullandupdate}
\EndFor
\EndFor
\end{algorithmic}
\end{algorithm}

$\Balg$ satisfies Property~\ref{prop_PI12343} since it operates in phases while restarting its memory in every phase (Lines~\ref{alglineB:for1}-\ref{alglineB:for2}). In Line~\ref{alglineB:optArm}, $\Balg$ simulates $\OPT$ in phase $i'$ with prefix $\p'$ and the observed types in the current phase.
If pulling $a^{\OPT}$, the arm that $\OPT(\p'\oplus (u_{(i-1)\tau+1},\dots,u_{(i-1)\tau+t}))$ 
would pull, causes the departure of an arm in $g\subseteq Z',$ where $Z'=K^{\OPT(\p')}_{(i-1)\tau}$, then $\Balg$ subsidizes an arm in $g$ (Lines~\ref{alglineB:if}-\ref{alglineB:subsudypart}). 
Otherwise, $\Balg$ picks $a^{\OPT}$ (Line~\ref{alglineB:mimic}).
In Line~\ref{alglineB:pullandupdate}, $\Balg$ pulls the chosen arm and updates $c_a$, the counter of the remaining pulls for satisfying the exposure constraint of arm $a\in Z'$. Notice that $\Balg$ pulls only arms in $Z'$ and subsidizes every arm in $Z'$ if needed therefore, it commits to $Z'$.

Now, observe that
\begin{observation}\label{llary1}
For every $ \q \in R_{i',\p'},$ it holds that $\tilde{r}^{\Balg}_{1:\tau}(\q)=\tilde{r}_{i',\p'}^{\OPT}(\q)$.
\end{observation}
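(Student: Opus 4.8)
The plan is to prove the observation by induction on the round index $t=1,\dots,\tau$, showing that whenever $\q\in R_{i',\p'}$, the arm $\Balg$ pulls in round $t$ of its phase coincides with the arm $\OPT$ pulls in round $(i'-1)\tau+t$ when run on $\p'\oplus\q$. Once this round-by-round agreement is established, the two per-round utilities are identical, and summing them gives $\tilde{r}^{\Balg}_{1:\tau}(\q)=\tilde{r}^{\OPT}_{i',\p'}(\q)$ at once. So the entire content reduces to showing that on inputs from $R_{i',\p'}$ the subsidy branch (Lines~\ref{alglineB:if}--\ref{alglineB:subsudypart}) is never taken, forcing $\Balg$ to fall through to Line~\ref{alglineB:mimic} and copy $a^{\OPT}$.

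First I would record what membership $\q\in R_{i',\p'}$ buys us. By definition $K^{\OPT(\p')}_{(i'-1)\tau}=K^{\OPT(\p'\oplus\q)}_{i'\tau}$, i.e.\ no arm departs at the end of phase $i'$; since departures occur only at phase boundaries, throughout phase $i'$ the algorithm $\OPT$ pulls only arms of $Z'=K^{\OPT(\p')}_{(i'-1)\tau}$, and its pulls in that phase meet the threshold $\delta_a$ of every $a\in Z'$. In other words, on this particular input $\OPT$ is itself committed to $Z'$ within phase $i'$, which in passing also guarantees $a^{\OPT}\in Z'$ at every round (so $\Balg$'s simulated target is always legal under its commitment to $Z'$).

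For the inductive step I would assume $\Balg$ has matched $\OPT$ in rounds $1,\dots,t-1$. Then the residual counter $c_a$ that $\Balg$ maintains at the start of round $t$ equals $\delta_a$ minus the number of pulls $\OPT$ has given $a$ so far in the phase, i.e.\ $c_a$ is exactly $\OPT$'s remaining exposure obligation for $a$. Now suppose, for contradiction, that the subsidy condition fires: there is $g\subseteq Z'$ with $a^{\OPT}\notin g$ and $\sum_{a\in g}c_a=\tau-t+1$. Exactly $\tau-t+1$ rounds remain (rounds $t,\dots,\tau$); if $\OPT$ spends round $t$ on $a^{\OPT}\notin g$, then only $\tau-t$ rounds are left to deliver the $\sum_{a\in g}c_a=\tau-t+1$ pulls still owed to $g$, which is impossible. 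Hence $\OPT$ would violate some threshold in $g$, an arm of $Z'$ would depart at the end of phase $i'$, contradicting $\q\in R_{i',\p'}$. Therefore no such $g$ exists, the subsidy branch is not taken, and $\Balg$ pulls $a^{\OPT}$, closing the induction.

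The main obstacle I anticipate is the bookkeeping that ties the counters $c_a$ to $\OPT$'s residual obligations: the counting argument ``remaining rounds versus remaining required pulls'' lives in $\Balg$'s internal state, and it can be transferred into a statement about $\OPT$'s feasibility only because the induction hypothesis guarantees that $\Balg$ and $\OPT$ have made identical pulls up to round $t-1$. Getting this transfer airtight, together with the off-by-one care in the identity $\sum_{a\in g}c_a=\tau-t+1$ (exactly one pull beyond what the remaining $\tau-t$ rounds can supply), is the delicate part; the rest is a direct round-by-round summation of utilities.
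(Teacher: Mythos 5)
Your proof is correct and takes essentially the same approach as the paper's: both reduce the claim to showing that for $\q\in R_{i',\p'}$ the subsidy condition in Line~\ref{alglineB:if} can never fire (since firing it while $\OPT$ pulls outside $g$ leaves only $\tau-t$ rounds for $\tau-t+1$ owed pulls, forcing a departure and hence $\q\in S_{i',\p'}$), after which $\Balg$ copies $\OPT$ round by round and collects the same reward. You simply make explicit the induction and the counter bookkeeping that the paper's shorter argument leaves implicit.
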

From Observation~\ref{llary1} we conclude that 
\begin{align*}
E_{\q}[\tilde{r}_{1:\tau}^{\Balg}]&=\sum_{\q\in U^{\tau}}\tilde{r}^{\Balg}_{1:\tau}(\q)\Pr[\q] \\
&= \sum_{\q\in R_{i',\p'}}\tilde{r}_{i',\p'}^{\OPT}(\q)\Pr[\q]+ \sum_{\q\in S_{i',\p'}} \tilde{r}^{\Balg}_{1:\tau}(\q)\Pr[\q] \\
&\geq \sum_{\q\in R_{i',\p'}}\tilde{r}_{i',\p'}^{\OPT}(\q)\Pr[\q] \\
&> E_{\q}[\tilde{r}_{1:\tau}^{\PIOPT}].   
\end{align*}

This is a contradiction to the definition of $\PIOPT$. This concludes the proof of Proposition~\ref{appendix_shortsition1}.
\end{proofof}

\begin{proof}[\textnormal{\textbf{Proof of Observation~\ref{llary1}}}]
If the condition in Line~\ref{alglineB:if} is satisfied, then acting like $\OPT$ in phase $i'$ with prefix $\p'$ causes the departure of an arm from $g$ at the end of the phase with probability $1$. Therefore, if the realization is $\q\in U^{\tau}$ and the condition in Line~\ref{alglineB:if} is satisfied in some round, then we know that $\q\in S_{i',\p'}.$ Consequently, for every $ \q \in R_{i',\p'}$, the condition is not satisfied during every round in the phase. Namely, $\Balg$ acts as $\OPT$ and gets the same reward.  
\end{proof}

\begin{proofof}{appendix_shortsition2}
It holds that $\sum_{\q\in S_{i,\p}} \Pr[\q]=\Pr[A_{i,\p}],$ where we define $A_{i,\p}$ as the event that at least one arm departs in phase $i$ when $\OPT$ is executed with the prefix $\p\in U^{(i-1)\tau}$. Since there are $k$ arms, there are at most $k$ phases where arms depart. So, it holds that $\sum_{i=1}^{\nicefrac{T}{\tau}} \ind_{A_{i,\p}}\leq k$ almost surely. We conclude that
\begin{align*}
     E_{i\sim UNI(1,\nicefrac{T}{\tau}),\p\sim \ro^{(i-1)\tau}}\left[ \sum_{\q\in S_{i,\p}} \Pr[\q]\right] &=
    E_{i\sim UNI(1,\nicefrac{T}{\tau}),\p\sim \ro^{(i-1)\tau}}\left[ \Pr[A_{i,\p}]\right] =
    E_{i\sim UNI(1,\nicefrac{T}{\tau}),\pmb{l}\sim \ro^T}\left[ \ind_{A_{i,\pmb{l}}} \right] \\
    &= E_{\pmb{l}\sim \ro^T}\left[ \sum_{i=1}^{\nicefrac{T}{\tau}} \Pr[i] \cdot \ind_{A_{i,\pmb{l}}} \right]  = \frac{1}{\nicefrac{T}{\tau}} E_{\pmb{l}\sim \ro^T}\left[ \sum_{i=1}^{\nicefrac{T}{\tau}} \ind_{A_{i,\pmb{l}}} \right] \\
    &\leq \frac{k}{\nicefrac{T}{\tau}}. 
\end{align*}
\end{proofof}

\subsection{Proof of Theorem~\ref{thm:dp}}\label{Asubsec:thmdp}
\begin{proof}
We denote by $\PIOPT(\His,Z)$ the optimal algorithm that satisfies Property~\ref{prop_PI12343}, commits to $Z\subseteq K$ and has pulled the arms according to the multi-set history $H$ in the previous rounds of the current phase in any arbitrary order. Mathematically, \[\PIOPT(\His,Z)=\argmax_{A\in A(\His)}\E[\tilde{r}_{\abs{\His}+1:\tau}^A],\] where $A(\His)$ represents the set of all the algorithms that maintain Property~\ref{prop_PI12343}, commit to the arm subset $Z\subseteq K$ and pull the arms according to $\His$ in the previous $\abs{\His}$ rounds. For values of $\His$ such that arms depart in $\PIOPT(\His,Z)$ with probability $1$, we say that $\PIOPT(\His,Z)$ is undefined and let $\tilde{r}^{\PIOPT(\His,Z)}=-\infty$. 
Next, we prove Lemma~\ref{lem: MERisPICO appendix}.
\begin{lemma}\label{lem: MERisPICO appendix}
$\forall H,Z: \MER(H,Z)=\E_{\q}[\tilde{r}^{\PIOPT(\His,Z)}_{\abs{H}+1:\tau}].$
\end{lemma}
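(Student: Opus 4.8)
The plan is to prove Lemma~\ref{lem: MERisPICO appendix} by backward induction on the number of rounds remaining in the phase, namely on $\tau-\abs{H}$; equivalently, I induct downward on $\abs{H}$ from $\tau$ to $0$. The statement is exactly a Bellman optimality equation for a single phase, so the argument mirrors the standard dynamic-programming principle-of-optimality proof: the $\max$-over-arms and $\E$-over-types appearing in the recursive branch of $\MER$ (Line~\ref{algline:MERreturn}) match the decision an optimal committed algorithm makes at each round after observing the arriving type.

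For the base case $\abs{H}=\tau$ there are no remaining rounds, so $\tilde{r}^{\PIOPT(\His,Z)}_{\abs{H}+1:\tau}$ is an empty sum. If every arm in $Z$ already meets its threshold, i.e. $\His(a)\ge\delta_a$ for all $a\in Z$, then $H$ is a legal terminal history for an algorithm committed to $Z$ and its remaining reward is $0$; otherwise $H$ could not have been produced by such an algorithm, so $\PIOPT(\His,Z)$ is undefined and its remaining reward is $-\infty$. This coincides with the stopping condition of $\MER$ (Lines~\ref{algline:stoppingif}--\ref{mfr_line5NEW}).

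For the inductive step, fix $H$ with $\abs{H}=t<\tau$ and assume the claim for all histories of length $t+1$. Unfolding the recursive branch of $\MER$ and substituting the induction hypothesis into each term $\MER(\His\cup\{a\},Z)$ yields
\[
\MER(\His,Z)=\E_{u\sim\ro}\Bigl[\max_{a\in Z}\bigl\{\mu_{u,a}+\E_{\q}[\tilde{r}^{\PIOPT(\His\cup\{a\},Z)}_{t+2:\tau}]\bigr\}\Bigr],
\]
so it remains to show the right-hand side equals $\E_{\q}[\tilde{r}^{\PIOPT(\His,Z)}_{t+1:\tau}]$. I establish this by two matching inequalities. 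For ``$\ge$'', the algorithm that observes $u_{t+1}=u$, pulls some $a^\star(u)\in\argmax_{a\in Z}\{\mu_{u,a}+\E_{\q}[\tilde{r}^{\PIOPT(\His\cup\{a\},Z)}_{t+2:\tau}]\}$, and then follows $\PIOPT(\His\cup\{a^\star(u)\},Z)$ is phase-independent, consistent with $H$, and committed to $Z$ (its continuation inherits commitment and $a^\star(u)\in Z$); hence it is a feasible competitor to $\PIOPT(\His,Z)$ whose expected remaining reward is exactly the right-hand side. For ``$\le$'', any phase-independent algorithm committed to $Z$ and consistent with $H$ must, upon seeing $u$, pull some $a\in Z$, earning expected reward $\mu_{u,a}$ plus a continuation reward at most $\E_{\q}[\tilde{r}^{\PIOPT(\His\cup\{a\},Z)}_{t+2:\tau}]$ by optimality of $\PIOPT(\His\cup\{a\},Z)$; taking the max over $a$ and the expectation over $u$ bounds its value by the right-hand side. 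Phase-independence is precisely what licenses this decomposition, as it guarantees the round-$(t+1)$ decision depends only on the current-phase history.

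The delicate point, which I expect to be the main obstacle, is the bookkeeping of the $-\infty$/undefined cases and checking that they propagate consistently through the $\max$ and $\E$. What makes this clean is the observation that, because subsidy lets us pull \emph{any} arm irrespective of the arriving type, committing to $Z$ from a history $H$ is feasible \emph{if and only if} the deterministic inequality $\sum_{a\in Z}\max(0,\delta_a-\His(a))\le\tau-\abs{H}$ holds, independently of the random arrivals. Under this characterization one verifies that whenever $H$ is feasible there is always an arm keeping the successor history feasible (so the inner $\max$ contains a finite term, the $\argmax$ selects a feasible arm, and $\MER(\His,Z)$ is finite), and whenever $H$ is infeasible every successor is infeasible (so by induction all branches equal $-\infty$ and $\MER(\His,Z)=-\infty$); both outcomes match the value of $\PIOPT(\His,Z)$. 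Setting $H=\emptyset$ then recovers Theorem~\ref{thm:dp}.
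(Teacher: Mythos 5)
Your proof is correct and follows essentially the same route as the paper's: backward induction on the history length, identifying the recursive branch of $\MER$ with the Bellman optimality equation for $\PIOPT(\His,Z)$, with the same base case at $\abs{H}=\tau$. You are somewhat more careful than the paper in two places — making the inductive step rigorous via two matching inequalities rather than simply asserting that $\PIOPT$ pulls the argmax arm, and giving an explicit deterministic feasibility criterion $\sum_{a\in Z}\max(0,\delta_a-\His(a))\le\tau-\abs{H}$ to justify the propagation of the $-\infty$ values — but these only flesh out details the paper leaves implicit.
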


The proof of Lemma~\ref{lem: MERisPICO appendix} appears after the proof of the theorem. The definition of $\PIOPT(\His,Z)$ and Lemma~\ref{lem: MERisPICO appendix} suggest that given the observed type $u_t$, $\PIOPT(\His,Z)$ pulls an arm from \[\argmax_{a\in Z} (\E_{\q}[\tilde{r}_{\abs{H}+2:\tau}^{\PIOPT(\His\cup \{a\},Z)}]) + \MmuSub_{u_t,a} ) = \argmax_{a\in Z} ( \MER(\His \cup \{a\},Z) + \MmuSub_{u_t,a} );\] therefore, the arms that $\DP(Z)$ and $\PIOPT(\His,Z)$ pull yield the same expected reward. Namely, $\E_{\q}[\tilde{r}_{1:\tau}^{\DP(Z)}]=\E_{\q}[\tilde{r}_{1:\tau}^{\PIOPT(\emptyset,Z)}]$.
In addition, Inequality~\eqref{eq:assumption1} suggests that $\PIOPT(\emptyset,Z)$ is well defined w.r.t. $\His=\emptyset$; hence $\E_{\q}[\tilde{r}_{1:\tau}^{\PIOPT(\emptyset,Z)}] = \E_{\q}[\tilde{r}_{1:\tau}^{\PIOPT(Z)}]$. Finally, $\E_{\q}[\tilde{r}_{1:\tau}^{\DP(Z)}]=\E_{\q}[\tilde{r}_{1:\tau}^{\PIOPT(Z)}]=\MER(\emptyset,Z)$.
\end{proof}

\begin{proofof}{lem: MERisPICO appendix}
We prove the lemma by induction on the phase histories cardinality $t$, $0\leq t\leq \tau$. 

\textbf{Base case $t=\tau$.} If $\His$ satisfies the exposure constraints, then $\MER(H,Z)=\E_{\q}[\tilde{r}_{\abs{H}+1:\tau}^{\PIOPT(\His,Z)}]=0$. Otherwise, $\E_{\q}[\tilde{r}_{\abs{H}+1:\tau}^{\PIOPT(\His,Z)}]=\MER(H,Z)=-\infty$. So in the base case, it holds that $\MER(H,Z)=\E_{\q}[\tilde{r}_{\abs{H}+1:\tau}^{\PIOPT(\His,Z)}]$.

\textbf{Induction assumption.}
Assume that $\MER(\His,Z)$ returns the remaining expected reward of $\PIOPT(\His,Z)$ for every subset $Z\subseteq K$ and history $\His: |\His|\geq t$. We show that it also holds when $|\His|=t-1$.

\textbf{Induction step.}
Let $u\in U$ be the user type entering the system in round $t$. If $\PIOPT(\His,Z)$ pulls an arm $a\in K$ for $u$, then from the induction assumption it holds that $\MER(\His \cup \{a\})$ is the expected reward of $\PIOPT(\His,Z)$ in the remaining rounds (since $\abs{H\cup \{a\}}=t$). Because $\PIOPT(\His,Z)$ is optimal, it pulls an arm in $\argmax_{a\in Z} ( \MER(\His \cup \{a\}) + \MmuSub_{u,a} )$; hence, 
\[\E_q[r_{\abs{H}+1:\tau}^{\PIOPT(\His,Z)}]=  \max_{a\in Z} ( \MER(\His \cup \{a\}) + \MmuSub_{u,a}).\]
Since $u$ is unknown prior to round $t$, then 
\[
\E_q[r_{\abs{H}+1:\tau}^{\PIOPT(\His,Z)}]= \E_{u\sim\ro}\left[ \max_{a\in Z} ( \MER(\His \cup \{a\}) + \MmuSub_{u,a})\right] = \MER(\His,Z).
\]
To conclude, we have shown that $\MER(\His,Z)$ returns the expected reward of $\PIOPT(\His,Z)$ in the remaining rounds.
\end{proofof}

\subsection{Proof of Proposition~\ref{MERruntime}}
\begin{proof}
We search over $2^k$ arm subsets, representing possible values of $Z$, in Line~\ref{algline:DPbestZ}. For each arm subset, we construct all possible histories. For every history $H$, we take an expectation over $n$ user types and compute the maximum over $\abs{Z} \leq k$ arms (Line~\ref{algline:MERreturn}), resulting in $kn$ steps. Consequently, the runtime is $O(2^k \cdot k n  \cdot \mathcal{H})$, where $\mathcal H$ denotes the number of histories. 

We are left to bound $\mathcal H$ to complete the analysis. Every history of length $x$, $0\leq x \leq \tau$ is a multiset of arms of cardinality $x$; therefore, using the elementary stars and bars combinatorial theorems, we get that $
\mathcal H \leq \tau \cdot {\tau+k-1 \choose k-1} \leq \tau\cdot (\tau+k)^{k-1} $. Altogether, the runtime is
$
O(2^k \cdot k n  \cdot \mathcal{H})=O\left(2^k \cdot k n  \cdot \tau \cdot (\tau+k)^{k-1} \right).
$
\end{proof}

\subsection{Proof of Proposition~\ref{prop_short1234322}}\label{subsec: proof cplus is like cq}
\begin{proof}
For every $\q\in U^{\tau}$ that satisfies  $\forall u\in U : |x_u(\q)-\roSub_u \cdot \tau|\leq \sqrt{\tau \log \tau}$, it holds that $\DOAlg(c(\q),Z)-\DOAlg(\cv^+,Z) \leq n\sqrt{\tau \log \tau}$.
Notice that $\DOAlg(\cv^+,Z)$ loses at most $1$ for every slack user. Further, for non-slack users it can act the same as $\DOAlg(c(\q),Z)$.

From Theorem~\ref{myHoeffding}, it holds that $\Pr[|x_u-\roSub_u\cdot \tau|>\sqrt{\frac{\log \tau}{\tau}}\cdot \tau=\sqrt{\tau \log \tau}]\leq \frac{2}{\tau^2}$.
Applying the union bound, we get $\Pr[\exists u\in U : |x_u-\roSub_u\cdot \tau|>\sqrt{\tau \log \tau}]\leq \frac{2n}{\tau^2}$.
Therefore,
\[\E_{\q}[\DOAlg(c(\q),Z)]\leq (1-\frac{2n}{\tau^2})\cdot (\DOAlg(\cv^+,Z)+ n\sqrt{\tau \log \tau}) + \frac{2n}{\tau^2}\cdot \tau \leq \DOAlg(\cv^+,Z)+ n\sqrt{\tau \log \tau}.\]
\end{proof}

\subsection{Proof of Proposition~\ref{prop:badevent_ifonlyif}
}\label{subsec: badevent ifonlyif proof}
\begin{proof}
In every round that a type $u_t$-user enters the system and $M(u_t,\cdot)=0$, we treat $u_t$ as $u^\star$ (Line~\ref{alglineor:setstar}).  Notice that $M(u^\star,\cdot)=n\sqrt{\tau\log\tau}$ at the beginning of every phase. Further, every round that invokes the else clause in Line~\ref{alglineor:setstar} decreases $M(u^\star,\cdot)$ by $1$ (Line~\ref{alglineor:updateM}). So, we execute Line~\ref{alglineor:changeubad} if and only if the expression $M(u_t,\cdot)=0$ holds for more than $n\sqrt{\tau\log\tau}$  rounds in a phase (due to the if clause in Line~\ref{alglineor:badevent}).

We conclude from the fact that $\forall u\in U: M(u,\cdot)=c_u$ at the beginning of every phase that the number of rounds that $M(u,\cdot)=0 $ holds is equal to $\max(0,c(\q)_{u}-c_{u})$.
I.e., \[\sum_{u\in U} \max(0,c(\q)_{u} -c_{u}) > n\sqrt{\tau\log\tau}=\sum_{u\in U} c(\q)_{u} -c_{u}\iff \textnormal{ we execute Line~\ref{alglineor:changeubad}.}\]
Finally, the bad event was occurred only if exists $u\in U$ such that $c(\q)_{u} -c_{u}<0$ (by definition of the bad event). We conclude that   
\begin{align*}
    \exists u\in U \text{ s.t. } c(\q)_{u} -c_{u}<0 &\iff  \sum_{u\in U} \max(0,c(\q)_{u} -c_{u}) > \sum_{u\in U} c(\q)_{u} -c_{u} \\& \iff \textnormal{Line~\ref{alglineor:changeubad} was executed.}
\end{align*}
\end{proof}

\subsection{Proof of Proposition~\ref{prop:DOlalgiscommitted}}\label{subsec: DOlalg is committed proof}
\begin{proof}
It is phase-independent since it operates in phases while restarting its memory in every phase. To see that it is committed, recall that in every round, $\OR(Z)$ picks a pair $(u,a)$ for which $M(u,a)>0$ holds and pulls $a$ (Line~\ref{alglineor:at}). Later, in Line~\ref{alglineor:updateM} it decreases $M(u,a)$ by $1$. From Lines~\ref{alglineor:badevent}-\ref{alglineor:setstar} and the fact that $M(\cdot,\cdot)=\tau$ in the beginning of each phase, we conclude that $\OR(Z)$ can always pick a pair $(u,a)$ for which $M(u,a)>0$ holds.
Since every iteration executes Line~\ref{alglineor:updateM} and decreases $1$ from a positive entry in $M$ for $\tau$ rounds, by the end of the phase, $M(u,a)=0$ for every pair $(u,a)$. We further denote by $N(a)=M(\cdot,a)$ at the beginning of each phase. $\OR(Z)$ pulls each arm $a\in Z$ for exactly $N(a)$ rounds in each phase. This is true because by the end of the phase $M(\cdot,a)=0$, and since $\OR(Z)$ pulls $a$ in a round if and only if $M(\cdot,a)$ decreases by $1$ at the end of that round. Finally, by definition of $M$ it holds that $\DOAlg(\cv^+,Z)$ also pulls $a\in Z$ exactly $N(a)$ times. $\DOAlg(\cv^+,Z)$ commits to $Z$ by definition; hence, $\OR(Z)$ also commits to $Z$.
\end{proof}

\subsection{Proof of Lemma~\ref{lemma:lcbz and pico z}}\label{Asubset:lcbz and pico z}
\begin{proof}
The proof relies on the following Proposition~\ref{prop_short1234321}  that we prove after this proof.
\begin{proposition}\label{prop_short1234321}
For every $ Z\subseteq K$, it holds that
\[\E_{\q}[\tilde{r}_{1:\tau}^{\OR(Z)}({\q})]\geq \DOAlg(\cv^+,Z)-\frac{2n}{\tau}.\]
\end{proposition}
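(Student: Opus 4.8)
The plan is to split the random phase reward of $\OR(Z)$ according to the clean/bad event dichotomy from Inequality~\eqref{eq:concentration}, argue that on the clean event $\OR(Z)$ reproduces the offline matching of $\DOAlg(\cv^+,Z)$ exactly, and confine all losses to the rare bad event. First I would record the two elementary facts I will use: the realized phase reward is always nonnegative (utilities lie in $[0,1]$), and $\DOAlg(\cv^+,Z)\leq \tau$ (a matching of $\tau$ rounds cannot collect reward exceeding $\tau$).

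The structural heart of the argument is the claim that, conditioned on the clean event, $\tilde{r}_{1:\tau}^{\OR(Z)}(\q)=\DOAlg(\cv^+,Z)$. By Proposition~\ref{prop:badevent_ifonlyif}, Line~\ref{alglineor:changeubad} is never executed on the clean event, so every arriving user is handled by the clean-event branch (Lines~\ref{alglineor:badevent}-\ref{alglineor:at}). The matrix $M$ starts the phase as the optimal matching of $\DOAlg(\cv^+,Z)$, with $M(u,\cdot)=c_u$ for $u\in U$ and $M(u^\star,\cdot)=c_{n+1}=n\sqrt{\tau\log\tau}$. On the clean event there are at least $c_u$ type-$u$ users, so the first $c_u$ of them are matched while $M(u,\cdot)>0$, each contributing the reward its slot carries in $M$; any surplus type-$u$ arrivals are relabelled $u^\star$ in Line~\ref{alglineor:setstar} and matched to slack slots, which carry zero utility by construction. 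Because the reward is additive and invariant to the order in which slots of a fixed type are consumed, the total collected equals $\sum_{u\in U}\sum_{a\in Z}M(u,a)\mu_{u,a}$, which is precisely $\DOAlg(\cv^+,Z)$ (slack slots contribute nothing). That $\OR(Z)$ never gets stuck, i.e., always finds an arm with $M(u_t,a)>0$ and consumes all $\tau$ slots, is exactly the committedness established in Proposition~\ref{prop:DOlalgiscommitted}, so this accounting remains valid to the end of the phase.

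Next I would take expectations over $\q$. Writing $\Pr[\text{clean}]\geq 1-\tfrac{2n}{\tau^2}$ from Inequality~\eqref{eq:concentration} and discarding the (nonnegative) bad-event contribution,
\[
\E_{\q}[\tilde{r}_{1:\tau}^{\OR(Z)}]\;\geq\;\Pr[\text{clean}]\cdot \DOAlg(\cv^+,Z)\;\geq\;\Big(1-\tfrac{2n}{\tau^2}\Big)\DOAlg(\cv^+,Z).
\]
Bounding the subtracted term with $\DOAlg(\cv^+,Z)\leq \tau$ gives $\tfrac{2n}{\tau^2}\DOAlg(\cv^+,Z)\leq \tfrac{2n}{\tau}$, which yields the claimed inequality $\E_{\q}[\tilde{r}_{1:\tau}^{\OR(Z)}]\geq \DOAlg(\cv^+,Z)-\tfrac{2n}{\tau}$.

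I expect the main obstacle to be the structural claim itself: verifying rigorously that on the clean event the slack-relabelling in Lines~\ref{alglineor:badevent}-\ref{alglineor:setstar} makes $\OR(Z)$ consume the slots of $M$ in a way whose total reward equals $\DOAlg(\cv^+,Z)$. The delicate points are that exactly the first $c_u$ type-$u$ arrivals fill type-$u$ slots (guaranteed by the clean-event bound $c(\q)_u\geq c_u$ together with Proposition~\ref{prop:badevent_ifonlyif}), and that the counts balance so the $c_{n+1}$ slack slots absorb all surplus arrivals without the algorithm ever failing to find a feasible arm, which is where Proposition~\ref{prop:DOlalgiscommitted} does the heavy lifting. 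The expectation step and the two-line magnitude bounds are then routine.
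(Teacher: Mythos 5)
Your proposal is correct and follows essentially the same route as the paper: condition on the clean event of Inequality~\eqref{eq:concentration}, observe that $\OR(Z)$ then realizes the offline matching of $\DOAlg(\cv^+,Z)$, discard the nonnegative bad-event contribution, and absorb the $\tfrac{2n}{\tau^2}$ failure probability via $\DOAlg(\cv^+,Z)\leq\tau$. The only quibble is that on the clean event the reward is $\geq\DOAlg(\cv^+,Z)$ rather than exactly equal (a real type-$u$ user relabelled as $u^\star$ still yields actual utility $\MmuSub_{u,a}\geq 0$), but this only strengthens the bound you need.
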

Next, the way we define $\DOAlg$ ensures that for every algorithm $A$ that commits to $Z$, 
\begin{equation}\label{eq:match and realizations}
\forall \q\in U^{\tau}: r_{1:\tau}^{A}(\q)\leq \DOAlg(c(\q),Z).    
\end{equation}
Combining Propositions~\ref{prop_short1234322} and \ref{prop_short1234321} with Inequality~\eqref{eq:match and realizations}, we conclude that
\begin{align*}
     \E_{\q}[\tilde{r}_{1:\tau}^{\PIOPT(Z)}] &\leq \E_{\q}[{\DOAlg(c(\q),Z)}] \leq \DOAlg(\cv^+,Z)+n\sqrt{\tau\log\tau}
     \nonumber \\& \leq \E_{\q}[\tilde{r}_{1:\tau}^{\OR(Z)}]+ n\sqrt{\tau\log\tau}+\frac{2n}{\tau} = \E_{\q}[\tilde{r}_{1:\tau}^{\OR(Z)}]+\tilde{O}(n\sqrt{\tau}).
\end{align*}
This completes the proof of the lemma.
\end{proof}
\begin{proofof}{prop_short1234321}
Denote by $x_u$ the number of times that a user of type $u\in U$ enters the system in the first phase.
If $\forall u\in U : |x_u-\roSub_u \cdot \tau|\leq \sqrt{\tau \log \tau}$ then $\OR(Z)$ acts like $\DOAlg(\cv^+,Z)$ for the non-slack users (stems directly from the algorithm definition). The reward of $\DOAlg(\cv^+,Z)$ from the slack users is $0$. Therefore, for every $\q\in U^{\tau}$ that satisfy  $\forall u\in U : |x_u(\q)-\roSub_u \cdot \tau|\leq \sqrt{\tau \log \tau}$ it holds that $\tilde{r}_{1:\tau}^{\OR(Z)}(\q)\geq \DOAlg(\cv^+,Z)$.

From Theorem~\ref{myHoeffding}, it holds that $\Pr[|x_u-\roSub_u\cdot \tau|>\sqrt{\frac{\log \tau}{\tau}}\cdot \tau=\sqrt{\tau \log \tau}]\leq \frac{2}{\tau^2}$.
Applying the union bound, we get $\Pr[\exists u\in U : |x_u-\roSub_u\cdot \tau|>\sqrt{\tau \log \tau}]\leq \frac{2n}{\tau^2}$.

To conclude, $\E_{\q}[\tilde{r}_{1:\tau}^{\OR(Z)}]\geq (1-\frac{2n}{\tau^2})\cdot \DOAlg(\cv^+,Z) + \frac{2n}{\tau^2}\cdot 0 \geq \DOAlg(\cv^+,Z) -\frac{2n}{\tau} $.
\end{proofof}

\subsection{Proof of Theorem~\ref{theoremMainRegret1}}\label{Asubsec:EES regret}
The proof of Theorem~\ref{theoremMainRegret1} appears in Subsection~\ref{subsec:EESalg}.  We now prove the auxiliary propositions.
\begin{proofof}{shortappendix_propReg1}
The main tool we use in this proof is Hoeffding inequality. For completness, we include the versions we use here.
\begin{theorem}[Hoeffding inequality]\label{myHoeffding}
Let $X$ be a random variable such that $X\sim Bin(n,p) .$ Then it holds that \[\Pr \left[|X-\E[X]|>\epsilon \cdot n \right] \leq \exp(-2\cdot \epsilon^2\cdot n).\]
\end{theorem}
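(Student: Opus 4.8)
The plan is to prove this standard concentration bound via the Chernoff (exponential moment) method, treating $X$ as a sum of $n$ i.i.d.\ Bernoulli$(p)$ indicators. First I would write $X=\sum_{i=1}^n X_i$ with $X_i\in\{0,1\}$ independent and $\E[X_i]=p$, and center them by setting $Y_i = X_i - p \in [-p,\,1-p]$, so that $X-\E[X]=\sum_i Y_i$ with each $Y_i$ mean-zero and supported in an interval of width $1$. I would first establish the one-sided tail $\Pr[X-\E[X]\geq \epsilon n] \leq \exp(-2\epsilon^2 n)$ and then obtain the lower tail by the identical argument applied to $-X$.

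For the upper tail, apply the exponential Markov inequality: for every $s>0$,
\[
\Pr[X-\E[X]\geq \epsilon n] = \Pr\bigl[e^{s\sum_i Y_i}\geq e^{s\epsilon n}\bigr] \leq e^{-s\epsilon n}\,\E\bigl[e^{s\sum_i Y_i}\bigr] = e^{-s\epsilon n}\prod_{i=1}^n \E[e^{sY_i}],
\]
where the last equality uses independence. The crux is then \emph{Hoeffding's lemma}: for a mean-zero variable $Y$ supported in $[a,b]$ one has $\E[e^{sY}]\leq \exp(s^2(b-a)^2/8)$. Applying it with $b-a=1$ gives $\prod_i \E[e^{sY_i}]\leq \exp(ns^2/8)$, so that $\Pr[X-\E[X]\geq \epsilon n]\leq \exp(-s\epsilon n + ns^2/8)$. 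Optimizing over $s$ (minimizing the exponent, which yields $s=4\epsilon$) produces exponent $-s\epsilon n + ns^2/8 = -2\epsilon^2 n$, giving the one-sided bound.

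The main obstacle is proving Hoeffding's lemma itself. I would set $\psi(s)=\log\E[e^{sY}]$ and show $\psi(0)=0$, $\psi'(0)=\E[Y]=0$, and $\psi''(s)\leq (b-a)^2/4$; the last point holds because $\psi''(s)$ equals the variance of $Y$ under the exponentially-tilted measure $d\tilde P \propto e^{sY}\,dP$, and any random variable valued in $[a,b]$ has variance at most $(b-a)^2/4$ (Popoviciu's inequality). A second-order Taylor expansion then yields $\psi(s)\leq s^2(b-a)^2/8$, which is exactly the lemma. Finally, combining the two one-sided bounds by a union bound gives $\Pr[|X-\E[X]|>\epsilon n]\leq 2\exp(-2\epsilon^2 n)$; I note that the statement as written drops the leading factor of $2$, which is a harmless constant (it may be absorbed, or the inequality read as the one-sided tail) and does not affect any of the downstream union-bound arguments in which this inequality is invoked.
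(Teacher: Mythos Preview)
Your proof is correct and follows the standard Chernoff--Hoeffding argument. However, the paper does not actually prove this statement at all: it is simply quoted as a known result (``For completeness, we include the versions we use here'') and invoked as a black box in the subsequent concentration arguments. So there is no ``paper's own proof'' to compare against; you have supplied a full proof where the paper supplies none.

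Your observation about the missing factor of $2$ is also accurate. The two-sided bound obtained by union-bounding the upper and lower tails is $2\exp(-2\epsilon^2 n)$, not $\exp(-2\epsilon^2 n)$ as stated. Inspecting the paper's downstream uses (e.g., bounding $\Pr[|x_u - P_u\tau|>\sqrt{\tau\log\tau}]\leq 2/\tau^2$ and similar), the authors in fact account for the factor of $2$ in their applications, so the discrepancy is only in the statement of the theorem itself and is, as you say, harmless for the paper's results.
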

\begin{theorem}[Hoeffding inequality second version ]\label{myHoeffding2}
Let  $X_1, X_2,\dots ,X_n $ be mutually independent random variables, but not necessarily identically distributed. Further, assume that $\forall i, X_i\in\left[0,1\right]$ almost surely. Let $\bar{X_n}=\frac{x_1+\dots+x_n}{n}$, and let $\mu_n=E\left[\bar{X_n}\right]$.  So, \[
\Pr\left[|\bar{X_n}-\mu_n|> \sqrt{\nicefrac{\alpha \log T}{n}}\right]\leq 2\cdot T^{-2\alpha}.
\]
\end{theorem}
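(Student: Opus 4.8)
The plan is to obtain Theorem~\ref{myHoeffding2} as a direct instantiation of the classical two-sided Hoeffding inequality for sums of independent bounded variables, followed by the substitution $t = \sqrt{\nicefrac{\alpha \log T}{n}}$. This is a standard concentration statement, so the work is in assembling known pieces rather than in any new idea.

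First I would establish the general one-sided tail bound. Since $X_1,\dots,X_n$ are mutually independent with each $X_i \in [0,1]$ almost surely, Hoeffding's lemma gives, for every $s>0$, the moment-generating-function estimate $\E[e^{s(X_i - \E X_i)}] \le e^{s^2/8}$ (using $b_i - a_i = 1$). Writing $S_n = \sum_{i=1}^n (X_i - \E X_i) = n(\bar{X}_n - \mu_n)$ and invoking independence, $\E[e^{sS_n}] = \prod_{i=1}^n \E[e^{s(X_i - \E X_i)}] \le e^{ns^2/8}$. A Markov/Chernoff step then yields, for any $t>0$, $\Pr[\bar{X}_n - \mu_n \ge t] = \Pr[S_n \ge nt] \le e^{-snt}\,\E[e^{sS_n}] \le e^{-snt + ns^2/8}$, and minimizing the exponent over $s$ (attained at $s = 4t$) produces $\Pr[\bar{X}_n - \mu_n \ge t] \le e^{-2nt^2}$.

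Next I would pass to the two-sided statement by applying the identical argument to the variables $-X_i$, each supported in an interval of unit length, giving the symmetric bound $\Pr[\mu_n - \bar{X}_n \ge t] \le e^{-2nt^2}$; a union bound over the two deviation events supplies the factor of $2$, namely $\Pr[|\bar{X}_n - \mu_n| > t] \le 2e^{-2nt^2}$. Finally I would substitute $t = \sqrt{\nicefrac{\alpha \log T}{n}}$, so that $2nt^2 = 2n \cdot \tfrac{\alpha \log T}{n} = 2\alpha \log T$ and hence $e^{-2nt^2} = e^{-2\alpha \log T} = T^{-2\alpha}$, which delivers exactly $\Pr[|\bar{X}_n - \mu_n| > \sqrt{\nicefrac{\alpha \log T}{n}}] \le 2 T^{-2\alpha}$, as claimed. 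Note this is the non-identically-distributed generalization of the binomial form in Theorem~\ref{myHoeffding}.

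There is no genuine obstacle here: the only nontrivial ingredient is Hoeffding's lemma for the exponential moment of a bounded random variable, which I would either cite from a standard reference or derive in one line from the convexity of $x \mapsto e^{sx}$ on $[0,1]$. Indeed, the cleanest route is simply to invoke the textbook two-sided Hoeffding bound verbatim, leaving the substitution $t = \sqrt{\nicefrac{\alpha \log T}{n}}$ as the sole paper-specific computation.
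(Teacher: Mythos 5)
Your proof is correct: the Hoeffding-lemma/Chernoff argument gives $\Pr[|\bar{X}_n-\mu_n|>t]\le 2e^{-2nt^2}$, and the substitution $t=\sqrt{\nicefrac{\alpha\log T}{n}}$ yields exactly the stated bound $2T^{-2\alpha}$. The paper itself offers no proof of this statement—it is quoted as a standard version of Hoeffding's inequality "for completeness"—so your derivation simply supplies the textbook argument the authors implicitly rely on.
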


Denote by $x_u$ the number of rounds that type $u$-users enter the system in the exploration stage. 
Therefore, $x_u\sim bin(\frac{1}{\gamma} T^{\nicefrac{2}{3}},\roSub_u)$. From Theorem~\ref{myHoeffding} we conclude that
\[\Pr\left[|\hat{\roSub}_u-\roSub_u|>\sqrt{\frac{\log T}{\frac{1}{\gamma}\cdot T^{\nicefrac{2}{3}}}}\right]= \Pr\left[ |x_u-\E[x_u]|>\sqrt{\frac{\log T}{\frac{1}{\gamma}\cdot T^{\nicefrac{2}{3}}}}\cdot\frac{1}{\gamma}T^{\nicefrac{2}{3}} \right] \leq 2\cdot\exp(-2\log T) = \frac{2}{T^2}.\]

We further denote by $x_{u,a}$ the number of rounds that $a\in K$ is pulled for type $u$-users during the exploration stage. The arm $a$ is pulled at least $\gamma\cdot\tau$ rounds in a phase (see Inequality~\eqref{eq:assumption1} and Lines~\ref{alginees:if not enough}-\ref{alglinees:pullexplore1} in Algorithm~\ref{alg:learning}) and thus, at least $ T^{\nicefrac{2}{3}}$ times during the exploration stage.
In addition, in the exploration stage the pulled arm $a_t$ is independent of the arrival type $u_t$; therefore, $x_{u,a}\sim bin(N,\roSub_u),$ for $N\geq T^{\nicefrac{2}{3}}$.
From Theorem~\ref{myHoeffding} it holds that
\begin{align}\label{eq: hoefding for xua}
\Pr\left[x_{u,a}>\roSub_u \cdot T^{\nicefrac{2}{3}}-\sqrt{\roSub_u \cdot T^{\nicefrac{2}{3}}\log T}\right]\geq 1-2T^{-2}.    
\end{align}

From Theorem~\ref{myHoeffding2} we conclude that
\small
{\thinmuskip=0mu\medmuskip=0mu plus 0mu minus 0mu\thickmuskip=0mu plus 0mu minus 0mu
\begin{align*}
    &\Pr\left[|\hat{\MmuSub}_{u,a}-\MmuSub_{u,a}|>\sqrt{\frac{\log T}{\roSub_u \cdot T^{\nicefrac{2}{3}}-\sqrt{\roSub_u \cdot T^{\nicefrac{2}{3}}\log T} }} \right]   \\
    &= \Pr\left[\abs{\hat{\MmuSub}_{u,a}-\MmuSub_{u,a}}>\sqrt{\frac{\log T}{\roSub_u \cdot T^{\nicefrac{2}{3}}-\sqrt{\roSub_u \cdot T^{\nicefrac{2}{3}}\log T} }} \middle| x_{u,a}>\roSub_u \cdot T^{\nicefrac{2}{3}}-\sqrt{\roSub_u \cdot T^{\nicefrac{2}{3}}\log T} \right]\cdot \Pr\left[x_{u,a}>\roSub_u \cdot T^{\nicefrac{2}{3}}-\sqrt{\roSub_u \cdot T^{\nicefrac{2}{3}}\log T}\right]\\ & +\Pr\left[\abs{\hat{\MmuSub}_{u,a}-\MmuSub_{u,a}}>\sqrt{\frac{\log T}{\roSub_u \cdot T^{\nicefrac{2}{3}}-\sqrt{\roSub_u \cdot T^{\nicefrac{2}{3}}\log T} }} \middle| x_{u,a}\leq\roSub_u \cdot T^{\nicefrac{2}{3}}-\sqrt{\roSub_u \cdot T^{\nicefrac{2}{3}}\log T} \right] \cdot \Pr\left[x_{u,a}\leq \roSub_u \cdot T^{\nicefrac{2}{3}}-\sqrt{\roSub_u \cdot T^{\nicefrac{2}{3}}\log T}\right]  \\
    &\leq \Pr\left[\abs{\hat{\MmuSub}_{u,a}-\MmuSub_{u,a}}>\sqrt{\frac{\log T}{x_{u,a} }} \middle| x_{u,a}>\roSub_u \cdot T^{\nicefrac{2}{3}}-\sqrt{\roSub_u \cdot T^{\nicefrac{2}{3}}\log T} \right]\cdot 1  +1\cdot \Pr\left[x_{u,a}\leq \roSub_u \cdot T^{\nicefrac{2}{3}}-\sqrt{\roSub_u \cdot T^{\nicefrac{2}{3}}\log T}\right] \\
    &\leq 2\cdot T^{-2}\cdot 1 + 1\cdot 2T^{-2} = 4\cdot T^{-2} .
\end{align*}}\normalsize
The first equality above is due to conditioning on the event $\left[x_{u,a}>\roSub_u \cdot T^{\nicefrac{2}{3}}-\sqrt{\roSub_u \cdot T^{\nicefrac{2}{3}}\log T}\right]$. The second inequality is due to  $\Pr[\cdot]\leq 1$. And the third inequality stems directly from Theorem~\ref{myHoeffding2} and Inequality~\eqref{eq: hoefding for xua}. 
To conclude the proof of this proposition, we apply the union bound and get \[ \Pr\left[\exists u\in U, a\in K:|\MmuSub_{u,a}-\hat{\MmuSub}_{u,a}|>\epsilon_1^u\text{ or } |\roSub_u-\hat{\roSub}_u| >\epsilon_2 \right] \leq \frac{4\cdot k \cdot n+2\cdot n}{T^2}.\]
\end{proofof}

\begin{proofof}{shortappendix_propReg2}
We denote by $f_{\ro}:[0,1)\rightarrow U$, the function $f_{\ro}$ such that \[\forall x\in[0,1), u\in U: f_{\ro}(x)=u \iff x\in I^{\ro}_u= \left[\sum_{i=1}^{u-1}\roSub_i,\sum_{i=1}^{u}\roSub_i\right).\]

Next, observe that
\begin{observation}\label{corollary657}
If $x\sim UNI[0,1)$ then, $f_{\ro}(x)\sim \ro$.
\end{observation}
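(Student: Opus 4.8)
The plan is to recognize this as the standard inverse-transform (quantile) sampling fact and prove it directly from the definition of the uniform distribution. First I would verify that $f_{\ro}$ is well defined, i.e., that the half-open intervals $\{I^{\ro}_u\}_{u\in U}$ partition $[0,1)$. They are pairwise disjoint because consecutive intervals meet only at the common endpoint $\sum_{i=1}^{u}\roSub_i$, which by the half-open convention belongs to $I^{\ro}_{u+1}$ but not to $I^{\ro}_u$; and their union is all of $[0,1)$ because $\ro$ is a probability distribution, so $\sum_{i=1}^{n}\roSub_i=1$ and the intervals tile $\left[0,\sum_{i=1}^{n}\roSub_i\right)=[0,1)$. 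Hence every $x\in[0,1)$ lies in exactly one interval, so $f_{\ro}(x)$ is uniquely determined.

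The second step is to compute the length of each interval, which by construction is
\[
\mathrm{length}(I^{\ro}_u)=\sum_{i=1}^{u}\roSub_i-\sum_{i=1}^{u-1}\roSub_i=\roSub_u.
\]
The final step invokes the defining property of the uniform distribution on $[0,1)$: for any subinterval $I\subseteq[0,1)$, the probability $\Pr_{x\sim UNI[0,1)}[x\in I]$ equals the length of $I$. Applying this to $I=I^{\ro}_u$ yields, for every $u\in U$,
\[
\Pr_{x\sim UNI[0,1)}[f_{\ro}(x)=u]=\Pr_{x\sim UNI[0,1)}[x\in I^{\ro}_u]=\roSub_u=\Pr_{u'\sim\ro}[u'=u].
\]
Since the two distributions over $U$ assign the same probability to every type $u$, they coincide, i.e., $f_{\ro}(x)\sim\ro$, as claimed.

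There is essentially no hard step here; the only point demanding (minor) care is the well-definedness of $f_{\ro}$, namely checking that the half-open intervals are disjoint and exhaust $[0,1)$ so that each $x$ is assigned to exactly one type. Once that is established, the conclusion is immediate from the fact that the measure of $I^{\ro}_u$ equals $\roSub_u$, matching the mass $\ro$ places on $u$.
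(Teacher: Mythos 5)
Your proof is correct and takes essentially the same route as the paper: both compute $\Pr_{x\sim UNI[0,1)}[f_{\ro}(x)=u]$ as the length of $I^{\ro}_u$, which equals $\roSub_u$. The only difference is that you also verify the intervals partition $[0,1)$, a minor well-definedness check the paper leaves implicit.
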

To see this, notice that 
\[\forall u\in U: \Pr_{x\sim UNI[0,1)}[f_{\ro}(x)=u]=\frac{\sum_{i=1}^{u}\roSub_i- \sum_{i=1}^{u-1}\roSub_i}{1} = \roSub_u= \Pr_{u'\sim \ro}[u'=u].\]
From Observation~\ref{corollary657}, we conclude that \[V^{\Mmu,\ro}(A)=\E_{u_1,\dots,u_T\sim \ro}[\sum_{t=1}^T \MmuSub_{u_t,a^A_t}] = \E_{x_1,\dots,x_T \sim UNI[0,1)}[\sum_{t=1}^T \MmuSub_{f_{\ro}(x_t),a^A_t}].\] Moreover,
\begin{observation}\label{coromini1241}
$\Pr_{x\sim UNI[0,1)}\left[f_{\ro}(x)\ne f_{\hat{\ro}}(x)\right]\leq \epsilon_2\cdot n^2$.
\end{observation}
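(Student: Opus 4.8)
The plan is to compare the two partitions of $[0,1)$ that $f_\ro$ and $f_{\hat{\ro}}$ induce through their cumulative breakpoints, and then to bound the total length of the region on which the two step functions disagree. To that end I would introduce the cumulative sums $F_u=\sum_{i=1}^u \roSub_i$ and $\hat F_u=\sum_{i=1}^u \hat{\roSub}_i$ for $0\le u\le n$, so that $F_0=\hat F_0=0$, $F_n=\hat F_n=1$, and $I^{\ro}_u=[F_{u-1},F_u)$. By the definition of $f_\ro$ we have $f_\ro(x)=u$ exactly when $F_{u-1}\le x<F_u$, which I would rewrite in the counting form
\[
f_\ro(x)=1+\bigl|\{\,j\in\{1,\dots,n-1\}: F_j\le x\,\}\bigr|,
\]
and the analogous identity holds for $f_{\hat{\ro}}$ with the breakpoints $\hat F_j$. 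This reformulation is the crux: it turns a pointwise disagreement of the step functions into a disagreement between two indicator tallies.

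Next I would establish the covering inclusion. If $f_\ro(x)\ne f_{\hat{\ro}}(x)$, then the two tallies above differ, so there must be some index $j\in\{1,\dots,n-1\}$ for which $x$ falls on opposite sides of $F_j$ and $\hat F_j$, i.e.\ either $F_j\le x<\hat F_j$ or $\hat F_j\le x<F_j$. In either case $x\in\bigl[\min(F_j,\hat F_j),\max(F_j,\hat F_j)\bigr)$, whence
\[
\bigl\{x\in[0,1): f_\ro(x)\ne f_{\hat{\ro}}(x)\bigr\}\subseteq\bigcup_{j=1}^{n-1}\bigl[\min(F_j,\hat F_j),\max(F_j,\hat F_j)\bigr),
\]
and the length of the $j$-th covering interval is exactly $|F_j-\hat F_j|$.

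Finally I would invoke the clean event, under which $|\roSub_u-\hat{\roSub}_u|\le\epsilon_2$ for every $u\in U$. By the triangle inequality $|F_j-\hat F_j|\le\sum_{i=1}^j|\roSub_i-\hat{\roSub}_i|\le j\,\epsilon_2\le n\,\epsilon_2$. Since $x\sim UNI[0,1)$, the probability in the statement is just the Lebesgue measure of the disagreement set, so a union bound over the $n-1$ covering intervals gives
\[
\Pr_{x\sim UNI[0,1)}\bigl[f_\ro(x)\ne f_{\hat{\ro}}(x)\bigr]\le\sum_{j=1}^{n-1}|F_j-\hat F_j|\le(n-1)\cdot n\,\epsilon_2\le\epsilon_2\cdot n^2,
\]
which is the desired bound.

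The only genuinely delicate point is the covering inclusion, so that is where I expect the main obstacle to lie: one must argue carefully that a disagreement of the two piecewise-constant maps at a point $x$ is necessarily \emph{witnessed} by a single flipped breakpoint, rather than by some more global mismatch. The counting identity $f_\ro(x)=1+|\{j:F_j\le x\}|$ is precisely what dissolves this difficulty, since once both maps are expressed as the same monotone tally of breakpoints, any difference in their values forces at least one breakpoint pair $(F_j,\hat F_j)$ to straddle $x$. After that reduction, everything reduces to the triangle inequality and a union bound, both routine.
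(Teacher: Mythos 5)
Your proof is correct. It rests on the same quantitative fact as the paper's — under the clean event the cumulative sums satisfy $\bigl|\sum_{i\le j}\roSub_i-\sum_{i\le j}\hat{\roSub}_i\bigr|\le j\,\epsilon_2\le n\epsilon_2$ — and reaches the same $n(n-1)\epsilon_2\le n^2\epsilon_2$ bound, but the decomposition of the disagreement set is different. The paper applies the law of total probability over the value $u=f_{\ro}(x)$ and bounds $\lambda\bigl(I^{\ro}_u\setminus I^{\hat{\ro}}_u\bigr)\le(n-1)\epsilon_2$ for each of the $n$ types via a four-way case analysis on the relative positions of the interval endpoints; you instead cover the disagreement set by the $n-1$ straddling intervals $\bigl[\min(F_j,\hat F_j),\max(F_j,\hat F_j)\bigr)$ and bound each by $|F_j-\hat F_j|\le n\epsilon_2$. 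Your counting identity $f_{\ro}(x)=1+\bigl|\{j:F_j\le x\}\bigr|$ (which remains valid even if some $\hat{\roSub}_i=0$, so that degenerate intervals are handled automatically) reduces the delicate "which breakpoint witnesses the disagreement" step to the trivial observation that two unequal sums of indicators must differ in some coordinate, which neatly replaces the paper's case analysis. The covering over breakpoints also makes visible that the sharper bound $\sum_{j=1}^{n-1}j\epsilon_2=\tfrac{n(n-1)}{2}\epsilon_2$ actually holds, though this constant-factor improvement is immaterial to the downstream $\tilde O(\sqrt{\gamma}n^2T^{\nicefrac{2}{3}})$ regret term.
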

The proof of Observation~\ref{coromini1241} appears at the end of the section. We conclude from Observation~\ref{coromini1241} that given $x_1,\dots,x_T\sim UNI[0,1)^T$, the number of rounds that $f_{\ro}(x_t)\ne f_{\hat{\ro}}(x_t)$ is $\sim bin(T,p)$  for $p\leq\epsilon_2\cdot n^2$. We further denote this r.v. by $l$; namely, $l\sim bin(T,p)$. Theorem~\ref{myHoeffding} suggests that $\Pr[|l-p\cdot T|>\sqrt{T\log T}]\leq 2\cdot T^{-2}$. Therefore, it holds w.h.p. that $l<p\cdot T +\sqrt{T\log T} \leq \tilde{O}(\sqrt{\gamma}n^2\cdot T^{\nicefrac{2}{3}})$ (see the definition of $\epsilon_2$).

We assume w.l.o.g. that $V^{\Mmu,\ro}(A)> V^{\hat{\Mmu},\hat{\ro}}(A)$. Since $A$ is stable and the sequences 
\[
\left(f_{\ro}(x_t)\right)_{t=1}^T,\left(f_{\hat{\ro}}(x_t)\right)_{t=1}^T
\]
differ by exactly $l$ elements (point-wise), it holds that
\begin{align}\label{eq:use the stable}
    \abs{V^{\Mmu,\ro}(A)-V^{\Mmu,\hat{\ro}}(A)}=\left\lvert\E_{x_1,\dots,x_T \sim UNI[0,1)} \left[\sum_{t=1}^T \MmuSub_{f_{\ro}(x_t),a_t}\right]-\E_{x_1,\dots,x_T \sim UNI[0,1)}\left[\sum_{t=1}^T \MmuSub_{f_{\hat{\ro}}(x_t),a_t}\right]\right\rvert \nonumber \\ \leq O(1)\cdot \tilde{O}(\sqrt{\gamma}n^2\cdot T^{\nicefrac{2}{3}}).
\end{align}

We conclude that
\begin{align*}
    V^{\Mmu,\ro}(A)- &V^{\hat{\Mmu},\hat{\ro}}(A)= \E_{x_1,\dots,x_T \sim UNI[0,1)}\left[\sum_{t=1}^T \MmuSub_{f_{\ro}(x_t),a_t}\right]-\E_{x_1,\dots,x_T \sim UNI[0,1)}\left[\sum_{t=1}^T \hat{\MmuSub}_{f_{\hat{\ro}}(x_t),a_t}\right] \\
    &\leq \E_{x_1,\dots,x_T \sim UNI[0,1)}\left[\sum_{t=1}^T \MmuSub_{f_{\ro}(x_t),a_t}\right]-\E_{x_1,\dots,x_T \sim UNI[0,1)}\left[\sum_{t=1}^T \MmuSub_{f_{\hat{\ro}}(x_t),a_t}-\max_{u\in U}\epsilon_1^u\right] \\
    &\leq \tilde{O}(\sqrt{\gamma}n^2\cdot T^{\nicefrac{2}{3}}) + \max_{u\in U}\epsilon_1^u \cdot T  =\tilde{O}(\sqrt{\gamma}n^2\cdot T^{\nicefrac{2}{3}}).
\end{align*}
The first inequality follows from the fact that $\abs{\mu_{u,a}-\hat{\mu}_{u,a}}\leq\epsilon_1^u \leq \max_{u\in U}\epsilon_1^u$. The second inequality follows from Inequality~\eqref{eq:use the stable}, and the last equality from the fact that $\max_{u\in U}\epsilon_1^u = O(\sqrt{\nicefrac{\log T}{T^{\nicefrac{2}{3}}}})$.
Symmetric arguments hint that $|V^{\Mmu,\ro}(A)- V^{\hat{\Mmu},\hat{\ro}}(A)|\leq \tilde{O}(\sqrt{\gamma}n^2\cdot T^{\nicefrac{2}{3}})$. This concludes the proof of Proposition~\ref{shortappendix_propReg2}.
\end{proofof}

\begin{proofof}{shortappendix_propReg3}
We upper bound the expression $V^{\Mmu,\ro}(A(\Mmu,\ro))-V^{\Mmu,\ro}(A(\hat{\Mmu},\hat{\ro})).$ 
From the definition of $err(A)$, it holds that
\begin{align} \label{eq: err reason}
V^{\hat{\Mmu} ,\hat{\ro}}(A(\hat{\Mmu} ,\hat{\ro}) \geq V^{\hat{\Mmu} ,\hat{\ro}}(\PIOPT(\hat{\Mmu} ,\hat{\ro}))-err(A) \geq V^{\hat{\Mmu} ,\hat{\ro}}(A(\Mmu ,\ro)) -err(A).    
\end{align}
Consequently, 
\begin{align*}
    V^{\Mmu,\ro}(A(\Mmu,\ro))-&V^{\Mmu,\ro}(A(\hat{\Mmu},\hat{\ro}))=V^{\Mmu,\ro}(A(\Mmu,\ro))- V^{\hat{\Mmu},\hat{\ro}}(A(\hat{\Mmu},\hat{\ro}))+V^{\hat{\Mmu},\hat{\ro}}(A(\hat{\Mmu},\hat{\ro})) -V^{\Mmu,\ro}(A(\hat{\Mmu},\hat{\ro})) \\& \leq V^{\Mmu,\ro}(A(\Mmu,\ro))- (V^{\hat{\Mmu},\hat{\ro}}(A(\Mmu,\ro))-err(A)) +V^{\hat{\Mmu},\hat{\ro}}(A(\hat{\Mmu},\hat{\ro})) -V^{\Mmu,\ro}(A(\hat{\Mmu},\hat{\ro})) \\& \leq err(A)+\tilde{O}(\sqrt{\gamma}n^2\cdot T^{\nicefrac{2}{3}})+\tilde{O}(\sqrt{\gamma}n^2\cdot T^{\nicefrac{2}{3}}) = err(A)+ \tilde{O}(\sqrt{\gamma}n^2\cdot T^{\nicefrac{2}{3}}).
\end{align*}
The first inequality stems from Inequality~\eqref{eq: err reason} and the second inequality follows from Proposition~\ref{shortappendix_propReg2}. Notice that, Proposition~\ref{shortappendix_propReg3} holds even if we weaken the definition of $\err(A)$  to the following expression \[\err(A)=V^{\hat{\Mmu},\hat{\ro}}(\PIOPT(\hat{\Mmu},\hat{\ro}))-V^{\hat{\Mmu},\hat{\ro}}(A(\hat{\Mmu},\hat{\ro}))\] instead of  \[\err(A)=\sup_{\Mmu',\ro'}\left\{ V^{\Mmu',\ro'}(\PIOPT(\Mmu',\ro'))-V^{\Mmu',\ro'}(A(\Mmu',\ro')) \right\}.\] As a result, Theorem~\ref{theoremMainRegret1} also holds under this weaker definition.
\end{proofof}

\begin{proof}[\textnormal{\textbf{Proof of Observation~\ref{coromini1241}}}]
The law of total probability suggests that
\begin{equation}\label{eq:totalprop21345}
    \Pr_{x\sim UNI[0,1)}[f_{\ro}(x)\ne f_{\hat{\ro}}(x)]=\sum_{u=1}^n \Pr_{x\sim UNI[0,1)} [f_{\hat{\ro}}(x)\ne u | f_{\ro}(x)=u] \cdot \Pr_{x\sim UNI[0,1)}[f_{\ro}(x)=u].
\end{equation}

Now, we consider an arbitrary $u\in U$. From the definition of $f_{\ro}$, we have 
\[\Pr_{x\sim UNI[0,1)} [f_{\hat{\ro}}(x)\ne u | f_{\ro}(x)=u]=\Pr_{x\sim UNI[0,1)}[x \notin I^{\hat{\ro}}_u| x\in I^{\ro}_u]=\Pr_{x\sim UNI[0,1)}[x\in I^{\ro}_u \setminus I^{\hat{\ro}}_u | x\in I^{\ro}_u].\]

Let $\lambda(I)$ be the Lebesgue measure of a measurable set $I$ in $\mathbb{R}$. Next, we prove that $\lambda(I^{\ro}_u \setminus I^{\hat{\ro}}_u) \leq \epsilon_2 (n-1)$ by a case analysis.
\begin{enumerate}
    \item If $I^{\ro}_u \subseteq I^{\hat{\ro}}_u$, then $\lambda(I^{\ro}_u \setminus I^{\hat{\ro}}_u)=0$.
    \item If $I^{\hat{\ro}}_u\subseteq I^{\ro}_u $, then $\lambda(I^{\ro}_u \setminus I^{\hat{\ro}}_u)\leq \epsilon_2$. 
    Because, $\lambda(I^{\ro}_u)=\roSub_u, \lambda(I^{\hat{\ro}}_u)=\hat{\roSub}_u$ and the difference between them is at most $\epsilon_2$.
    \item\label{enum:blabla} If $\sum_{i=1}^{u-1}\roSub_i \leq \sum_{i=1}^{u-1}\hat{\roSub}_i$ and $\sum_{i=1}^{u}\roSub_i \leq \sum_{i=1}^{u}\hat{\roSub}_i$, then $\lambda(I^{\ro}_u \setminus I^{\hat{\ro}}_u)\leq \sum_{i=1}^{u-1}\hat{\roSub}_i - \sum_{i=1}^{u-1}\roSub_i \leq (n-1)\epsilon_2$.
    \item Other cases are symmetric to Case~\ref{enum:blabla}.
\end{enumerate}

Therefore,
{\thinmuskip=0mu\medmuskip=0mu plus 0mu minus 0mu\thickmuskip=0mu plus 0mu minus 0mu
\[\Pr_{x\sim UNI[0,1)} [f_{\hat{\ro}}(x)\ne u | f_{\ro}(x)=u] \cdot \Pr_{x\sim UNI[0,1)}[f_{\ro}(x)=u] = \Pr_{x\sim UNI[0,1)}[x\in I^{\ro}_u \setminus I^{\hat{\ro}}_u | x\in I^{\ro}_u]\cdot \roSub_u \leq \frac{\epsilon_2 (n-1)}{\roSub_u} \cdot \roSub_u.\]}
Overall,  Equation~\eqref{eq:totalprop21345} suggests that $\Pr_{x\sim UNI[0,1)}[f_{\ro}(x)\ne f_{\hat{\ro}}(x)]\leq \sum_{u=1}^n \epsilon_2 (n-1) \leq \epsilon_2 \cdot n^2$.
\end{proof}

\subsection{Proof of Theorem~\ref{theMainLB} and Theorem~\ref{theMainLB2}}\label{Asubsec:lb regret}
\begin{proof}
The proof of Theorem~\ref{theMainLB} appears in Subsection~\ref{subsec:lowerBound}. Theorem~\ref{theMainLB} demonstrates a lower bound for regret in the regime where $\tau=O(T^{\nicefrac{2}{3}})$, while Theorem~\ref{theMainLB2} demonstrates it in the regime where $\tau=\Omega(T^{\nicefrac{2}{3}})$. We prove both theorems using the following construction.
We show two indistinguishable instances. In both instances $n=2,k=3,\ro=(1-p,p),\delt=(0,p\tau,p\tau)$ for any $0<p <\nicefrac{1}{2}$, and the difference is only in the expected rewards. We assume Bernoulli rewards, where the expected rewards in the first instance are $\Mmu=\left(\begin{smallmatrix}
\nicefrac 1 2 & 0 & 0 \\ 0 &  \nicefrac {1+\epsilon_1 } {2} & \nicefrac 1 2 \end{smallmatrix}\right)$, while in the second instance, $\Mmu'=\left(\begin{smallmatrix} \nicefrac 1 2 & 0 & 0 \\ 0 &  \nicefrac {1} {2} & \nicefrac {1+\epsilon_1 } {2} \end{smallmatrix}\right)$. We set the value of $\epsilon_1$ according to the respective regime:
\begin{itemize}
    \item If $\tau=\Theta(T^x)$ for some $x\geq\nicefrac{2}{3}$, then $\epsilon_1=\frac{1}{2\sqrt{2c}}T^{-\nicefrac{1}{2} x}$ for a constant $c$ such that $c\cdot T^{x}>\tau$.
    \item Otherwise, if $\tau=O(T^{\nicefrac{2}{3}})$, then $\epsilon_1=\frac{1}{2\sqrt{2c}}T^{-\nicefrac{1}{3}}$ for a constant $c$ such that $c\cdot T^{\nicefrac{2}{3}}>\tau$.
\end{itemize}
For each regime, we pick each instance with probability half. 

Let $t^{\star}$ denote the last round of the last phase that all arms are viable (possibly an r.v.). For example, if the first arm departs by the end of phase $i$, then $t^{\star}=(i-1) \cdot \tau$ (the last round in phase $i-1$).  Next, we prove the following lemmas. Their proofs appear after the proof of this theorem.
\begin{lemma}[Mirroring Lemma~\ref{lemmaLBbeginning}]\label{lemmaLo2}
For every algorithm $A$, $\E[R^A]\geq \Omega(p\cdot t^{\star})$ for both instances.
\end{lemma}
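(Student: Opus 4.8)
The plan is to prove the bound for Instance~1 (rewards $\Mmu$); the argument for Instance~2 is identical after swapping the roles of arms $a_2$ and $a_3$. I would first record the structural facts that distinguish these instances. Arm $a_1$ has threshold $0$ and so never departs, while $a_2,a_3$ each need $p\tau$ pulls per phase, and the number $n_2$ of type-$2$ users in a phase satisfies $n_2\sim\mathrm{Bin}(\tau,p)$ with $\E[n_2]=p\tau$. In Instance~1 only $\{a_1,a_2\}$ are useful: type-$1$ users are best served by $a_1$ (reward $\tfrac12$ versus $0$) and type-$2$ users by $a_2$ (reward $\tfrac{1+\epsilon_1}{2}$ versus $\tfrac12$ for $a_3$). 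Hence $\OPT$ effectively commits to $\{a_1,a_2\}$, and by Theorem~\ref{theoremOAlg1} it suffices (up to the negligible $k\tau$ term) to compare $A$ against the phase-independent committed optimum on this pair.

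The heart of the argument is a per-phase loss bound valid in every phase in which $A$ keeps all three arms viable. There, commitment forces $a_2$ and $a_3$ to be pulled at least $p\tau$ times each, i.e.\ at least $2p\tau$ pulls in total, while only $n_2$ type-$2$ users exist; hence at least $(2p\tau-n_2)_+$ of these pulls fall on type-$1$ users and earn $0$ instead of the $\tfrac12$ that $a_1$ would earn. Formalising the phase as a small assignment problem, I would show that the best phase reward subject to \emph{both} viability constraints is at most $\tfrac{\tau}{2}+\tfrac{\epsilon_1}{2}n_2-\tfrac12(2p\tau-n_2)_+$, whereas the best reward subject only to the $a_2$ constraint (what $\OPT$ attains) equals $\tfrac{\tau}{2}+\tfrac{\epsilon_1}{2}n_2-\tfrac12(p\tau-n_2)_+$. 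Subtracting, the expected per-phase advantage of $\OPT$ over $A$ is at least $\tfrac12\,\E\!\big[(2p\tau-n_2)_+-(p\tau-n_2)_+\big]$, which equals $\tfrac12 p\tau$ on the event $\{n_2\le p\tau\}$ and is nonnegative otherwise. Since $\Pr[n_2\le p\tau]=\Omega(1)$ and the correction satisfies $\E[(p\tau-n_2)_+]=O(\sqrt{\tau})$ by Theorem~\ref{myHoeffding}, this advantage is $\Omega(p\tau)$ once $\tau$ is large.

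Coupling the two executions on a common arrival sequence, I would write $\E[\R^A]=\E[\sum_j D_j]$ where $D_j$ is the phase-$j$ reward gap between $\OPT$ and $A$. By definition of $t^{\star}$, each of the $\lfloor t^{\star}/\tau\rfloor$ phases preceding $t^{\star}$ is fully viable, so the paragraph above gives $\E[\sum_{j\le t^{\star}/\tau} D_j]\ge \Omega(p\tau)\cdot\tfrac{t^{\star}}{\tau}=\Omega(p\,t^{\star})$, which is exactly the target order.

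The main obstacle is showing that the phases \emph{after} $t^{\star}$ do not erode this bound: a crude estimate ($A$'s phase reward $\le$ the unconstrained myopic reward) leaves an uncancelled $-O(T/\sqrt{\tau})$ term stemming from the subsidy $\OPT$ unavoidably pays to keep $a_2$ alive. I would resolve this by arguing $\E[D_j]\ge 0$ for the later phases too. If $a_2$ is still alive entering phase $j$, then $A$ must itself pay at least the $\tfrac12(p\tau-n_2)_+$ subsidy that $\OPT$ pays, so $D_j\ge 0$ pathwise; and if $a_2$ has already departed, then every type-$2$ user earns at most $\tfrac12$ (or $0$) under $A$ against $\tfrac{1+\epsilon_1}{2}$ under $\OPT$, and since the fresh arrivals of phase $j$ are independent of the past departure event, $\E[D_j\mid a_2\text{ departed}]\ge \tfrac12\E[n_2]-O(\sqrt{\tau})=\Omega(p\tau)\ge 0$. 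Thus the post-$t^{\star}$ phases only help in expectation, and combining with the pre-$t^{\star}$ contribution yields $\E[\R^A]=\Omega(p\,t^{\star})$. The delicate point to get right is precisely this conditioning argument, which relies on the independence of per-phase arrivals from the history that determines the arm-departure events.
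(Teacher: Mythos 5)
Your proof is correct and follows essentially the same route as the paper's: in every fully viable phase the commitment to both $a_2$ and $a_3$ forces $2p\tau$ pulls against only $n_2\sim\mathrm{Bin}(\tau,p)$ type-$2$ arrivals, and the binomial median gives an $\Omega(p\tau)$ expected subsidy loss per phase, compared with $\OPT$'s $O(\sqrt{\tau})$ subsidy. You are in fact more careful than the paper, which omits both the explicit comparison to $\OPT$'s own subsidy cost and the argument that post-$t^{\star}$ phases cannot erode the bound; your one small slip there (the $\tfrac12\E[n_2]$ lower bound in the case where $a_2$ has departed should be only $\tfrac{\epsilon_1}{2}\E[n_2]$ when $a_3$ survives) does not affect the conclusion, since your pathwise $D_j\ge 0$ argument already covers that case.
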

\begin{lemma}\label{lemmaLo1}
Let $1\leq x_{u,a} \leq T$ be the number of times that an arm $a\in K$ was pulled for a user type $u$, and $r:\{0,1\}^{x_{u,a}}\rightarrow\{\frac{1}{2},\frac{1+\epsilon_1}{2}\}$ be any rule that determines the expected reward $\mu=\mu_{u,a}$. Therefore, if $x_{u,a} \leq 2\cdot\max (T^{\nicefrac 2 3},\tau)$ then, \[
\forall r: \Pr[r(observations)=\hat{\mu} | \mu=\hat{\mu} ]<0.75 \quad \forall \hat{\mu} \in \left\{\frac{1}{2},\frac{1+\epsilon_1}{2}\right\}.
\]
\end{lemma}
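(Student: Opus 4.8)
The plan is to recognize Lemma~\ref{lemmaLo1} as a standard two-point (Le Cam) indistinguishability statement about estimating the bias of a coin from few samples, and to argue by contradiction. Writing $m = x_{u,a}$ for the number of observations, the data are $m$ i.i.d.\ Bernoulli draws whose law is either $P_0 = \mathrm{Ber}(\nicefrac{1}{2})^{\otimes m}$ (under $\mu = \nicefrac{1}{2}$) or $P_1 = \mathrm{Ber}(\nicefrac{1+\epsilon_1}{2})^{\otimes m}$ (under $\mu = \nicefrac{1+\epsilon_1}{2}$). A rule $r$ is exactly a binary test between $P_0$ and $P_1$, and its two conditional success probabilities are $1-\alpha$ and $1-\beta$, where $\alpha,\beta$ are the type-I and type-II error probabilities. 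I would assume toward a contradiction that some rule attains success probability at least $\nicefrac{3}{4}$ under \emph{both} hypotheses, i.e.\ $\alpha,\beta \leq \nicefrac{1}{4}$, so that $\alpha+\beta \leq \nicefrac{1}{2}$; this is precisely the negation of the claim in the form it is invoked in Lemma~\ref{lemmaLBtail} (the literal ``for all $\hat\mu$'' reading fails for the constant rule, so the intended content is that no single rule succeeds with probability $\nicefrac{3}{4}$ on both instances).

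The first step is the classical Le Cam bound on the sum of errors of any test, $\alpha + \beta \geq 1 - \norm{P_0 - P_1}_{TV}$, where $\norm{\cdot}_{TV}$ denotes total variation distance. Combined with the contradiction hypothesis this forces $\norm{P_0 - P_1}_{TV} \geq \nicefrac{1}{2}$, so it suffices to establish $\norm{P_0 - P_1}_{TV} < \nicefrac{1}{2}$. I would obtain this via Pinsker's inequality $\norm{P_0 - P_1}_{TV} \leq \sqrt{\tfrac{1}{2} D_{\mathrm{KL}}(P_0 \,\|\, P_1)}$ together with the tensorization identity $D_{\mathrm{KL}}(P_0 \,\|\, P_1) = m \cdot D_{\mathrm{KL}}(\mathrm{Ber}(\nicefrac{1}{2}) \,\|\, \mathrm{Ber}(\nicefrac{1+\epsilon_1}{2}))$.

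Next I would bound the single-sample divergence. A direct computation gives $D_{\mathrm{KL}}(\mathrm{Ber}(\nicefrac{1}{2}) \,\|\, \mathrm{Ber}(\nicefrac{1+\epsilon_1}{2})) = -\tfrac{1}{2}\log(1-\epsilon_1^2)$, and using $-\log(1-z) \leq \nicefrac{z}{(1-z)}$ with $z = \epsilon_1^2$ (valid since $\epsilon_1 \to 0$, hence $\epsilon_1^2 \leq \nicefrac{1}{2}$ for large $T$) yields the clean per-sample bound $D_{\mathrm{KL}} \leq \epsilon_1^2$. The crux is then the numerical bookkeeping tying $m$ to $\epsilon_1$: since $m \leq 2\max(T^{\nicefrac{2}{3}}, \tau)$ and, in either regime of the construction, the constant $c$ is chosen so that $c$ times the relevant power of $T$ dominates both $\tau$ and $T^{\nicefrac{2}{3}}$ while $\epsilon_1^2 = \tfrac{1}{8c}$ times the inverse of that same power, one gets $\epsilon_1^2 \cdot \max(T^{\nicefrac{2}{3}}, \tau) < \nicefrac{1}{8}$ and hence $D_{\mathrm{KL}}(P_0 \,\|\, P_1) \leq m\,\epsilon_1^2 < \nicefrac{1}{4}$. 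Feeding this into Pinsker gives $\norm{P_0 - P_1}_{TV} \leq \sqrt{\nicefrac{1}{8}} < \nicefrac{1}{2}$, contradicting $\norm{P_0 - P_1}_{TV} \geq \nicefrac{1}{2}$ and completing the proof.

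I expect the only genuine subtlety to be the constant-chasing in the last step: one must verify that the single definition of $\epsilon_1$ (with its regime-dependent exponent) together with the defining inequality $c \cdot T^{\text{power}} > \tau$ simultaneously forces $m\,\epsilon_1^2 \leq \nicefrac{1}{4}$ in both the $\tau = O(T^{\nicefrac{2}{3}})$ and the $\tau = \Omega(T^{\nicefrac{2}{3}})$ cases, and to confirm that taking $c \geq 1$ (harmless, since $c$ may always be enlarged) rules out the degenerate small-$c$ possibility where $\max(T^{\nicefrac{2}{3}}, \tau) = T^{\nicefrac{2}{3}}$ could inflate the product. Everything else is standard information-theoretic machinery.
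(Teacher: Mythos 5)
Your proposal is correct and follows essentially the same route as the paper: both are two-point indistinguishability arguments that reduce the rule $r$ to a binary test, assume for contradiction that it succeeds with probability $\geq \nicefrac{3}{4}$ under both hypotheses (so the two conditional probabilities of the event ``$r$ outputs $\mu_1$'' differ by more than $\nicefrac{1}{2}$), and then contradict this with a total-variation bound of at most $\nicefrac{1}{2}$ coming from $\epsilon_1\sqrt{x_{u,a}}\leq\nicefrac{1}{2}$. The only cosmetic difference is that the paper invokes the bound $|p(A)-q(A)|\leq\epsilon\sqrt{n}$ as a black-box lemma from \citet{slivkins2019introduction}, whereas you re-derive it via Pinsker's inequality and KL tensorization; you also correctly flag that the ``for all $\hat\mu$'' phrasing must be read as ruling out simultaneous success on both instances, which is exactly how the paper's contradiction hypothesis is set up.
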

\begin{corollary}[Mirroring Lemma~\ref{lemmaLBtail}]\label{corLo1}
No algorithm can distinguish the two instances with probability of at least $\nicefrac{3}{4}$  after $2\cdot \max( T^{\nicefrac{2}{3}},\tau)$ rounds.
\end{corollary}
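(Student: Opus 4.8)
The plan is to establish indistinguishability information-theoretically: I will bound the total variation distance between the two laws over observable histories induced by the two instances, and then invoke the standard two-point (Le Cam) argument. Write $P$ for the law of the length-$N$ history $(u_t,a_t,v_t)_{t=1}^N$ under Instance~1 (rewards $\Mmu$) and $Q$ for its law under Instance~2 (rewards $\Mmu'$), where $N=2\max(T^{\nicefrac{2}{3}},\tau)$. Since each instance is chosen with probability $\nicefrac{1}{2}$, the probability that any (possibly randomized) rule correctly names the instance is at most $\tfrac{1}{2}+\tfrac{1}{2}\,d_{\mathrm{TV}}(P,Q)$; hence it suffices to show $d_{\mathrm{TV}}(P,Q)<\nicefrac{1}{2}$, which already rules out success probability $\geq \nicefrac{3}{4}$.

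I would then bound $D_{\mathrm{KL}}(P\Vert Q)$ and apply Pinsker's inequality $d_{\mathrm{TV}}(P,Q)\leq\sqrt{\tfrac{1}{2}\,D_{\mathrm{KL}}(P\Vert Q)}$. The two instances share the same arrival distribution $\ro$, and the algorithm's decision rule is identical in both (it does not know the instance); thus $P$ and $Q$ can diverge only through reward feedback, and only on the two entries where $\Mmu$ and $\Mmu'$ disagree, namely the pairs $(u=2,a=2)$ and $(u=2,a=3)$. The chain rule for relative entropy (the divergence-decomposition lemma for bandits) then gives
\[
D_{\mathrm{KL}}(P\Vert Q)=\E_P[x_{2,2}]\cdot D_{\mathrm{KL}}\!\big(\mathrm{Ber}(\tfrac{1+\epsilon_1}{2})\Vert\mathrm{Ber}(\tfrac{1}{2})\big)+\E_P[x_{2,3}]\cdot D_{\mathrm{KL}}\!\big(\mathrm{Ber}(\tfrac{1}{2})\Vert\mathrm{Ber}(\tfrac{1+\epsilon_1}{2})\big),
\]
where $x_{u,a}$ is the (random) number of pulls of arm $a$ on type $u$. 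Each Bernoulli divergence is $O(\epsilon_1^2)$, and since every round increments at most one pull counter we have $x_{2,2}+x_{2,3}\leq N$ surely, so $\E_P[x_{2,2}]+\E_P[x_{2,3}]\leq N$. This is precisely the single-pair estimate underlying Lemma~\ref{lemmaLo1}, now carried over to the joint history.

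Plugging in the calibration $\epsilon_1=\Theta(T^{-\nicefrac{1}{3}})$ with $\epsilon_1^2=\tfrac{1}{8c}T^{-\nicefrac{2}{3}}$ and the horizon bound $N\leq 2c\,T^{\nicefrac{2}{3}}$ (valid for the chosen constant $c$ with $c\,T^{\nicefrac{2}{3}}>\tau$ and $c\geq1$), the factors of $c$ cancel and $D_{\mathrm{KL}}(P\Vert Q)=O(N\epsilon_1^2)$ is a fixed constant that the calibration keeps strictly below $\nicefrac{1}{2}$. Pinsker then yields $d_{\mathrm{TV}}(P,Q)<\nicefrac{1}{2}$, completing the argument.

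The main obstacle is that the two instances disagree on \emph{two} pairs rather than one, so Lemma~\ref{lemmaLo1} cannot simply be applied as a black box to a single coordinate: an algorithm could try to accumulate evidence from both the $(2,2)$ and the $(2,3)$ pulls. This is exactly why I route the proof through the additive chain-rule decomposition, where the decisive point is that the informative pull counts of both pairs are constrained to sum to at most $N$ (not $N$ each); it is the finite horizon $N$ together with the $T^{-\nicefrac{1}{3}}$ scaling of $\epsilon_1$ that keeps the aggregated divergence at a constant. Secondary care is needed to handle the random (rather than fixed) pull counts, which the expectation in the decomposition lemma absorbs cleanly, and to pin down the constant $c$ so that the final divergence bound falls below $\nicefrac{1}{2}$.
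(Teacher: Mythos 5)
Your proof is correct, and it takes a genuinely different---and in fact more complete---route than the paper's. The paper obtains the corollary from Lemma~\ref{lemmaLo1}, which is a statement about a decision rule that sees only the reward observations of a \emph{single} user--arm pair: it bounds the total variation distance between $RC_{\epsilon_1}^{x_{u,a}}$ and $RC_0^{x_{u,a}}$ by $\epsilon_1\sqrt{x_{u,a}}\leq\nicefrac{1}{2}$ via Lemma~\ref{lemmaLo5} and concludes with the two-point argument; the passage from that single-pair statement to the corollary---where the algorithm sees the joint history and the two instances disagree on \emph{two} entries, $(u,a)=(2,2)$ and $(2,3)$---is left implicit. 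Your chain-rule decomposition of $D_{\mathrm{KL}}(P\Vert Q)$ over the full adaptive history is exactly the right way to close that step: the decisive observation is that the two informative pull counts satisfy $x_{2,2}+x_{2,3}\leq N$ rather than being bounded by $N$ each, so the aggregated divergence is still $O(N\epsilon_1^2)$, and the calibration $\epsilon_1^2=\frac{1}{8c}T^{-\nicefrac{2}{3}}$ together with $N\leq 2c\,T^{\nicefrac{2}{3}}$ gives $D_{\mathrm{KL}}(P\Vert Q)\leq\nicefrac{1}{4}$, hence $d_{\mathrm{TV}}(P,Q)\leq\sqrt{\nicefrac{1}{8}}<\nicefrac{1}{2}$ by Pinsker, which rules out success probability $\nicefrac{3}{4}$. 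What the paper's route buys is brevity (one elementary TV bound for i.i.d.\ coins of a fixed sample size); what yours buys is a rigorous treatment of the two points the paper glosses over---the possibility of combining evidence across both differing reward entries, and the randomness and adaptivity of the pull counts, which the expectations in the divergence-decomposition lemma absorb cleanly. Both arguments reach the same conclusion with the same $\epsilon_1$ calibration, so the corollary stands either way; yours would be the preferable writeup if one wanted the corollary to be self-contained.
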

We finish the proof by providing a different analysis for each regime.
\begin{enumerate}
    \item Assume that $\tau\leq T^{\nicefrac{2}{3}}$. Lemma~\ref{lemmaLo2} asserts that for any algorithm $A$, if $t^{\star}> T^{\nicefrac 2 3}$, the regret is at least  $\Omega( T^{\nicefrac 2 3})$. Otherwise, $t^{\star}\leq  T^{\nicefrac 2 3}$; thus, an arm departs after less than $T^{\nicefrac 2 3}+\tau \leq 2\cdot T^{\nicefrac 2 3}$ rounds. Therefore,  Corollary~\ref{corLo1} hints that the departing arm is the better one with a probability greater than $\nicefrac 1 4$; hence, with a probability of at least $\nicefrac 1 4$,  $A$ loses at least $\nicefrac{\epsilon_1}{2}=\Theta(T^{-\nicefrac 1 3})$ each time a type $2$-user arrives  in the remaining rounds. Consequently,
    \[
E[R^A]\geq \min\left((T-t^{\star})\cdot p \cdot \Theta(T^{-\nicefrac 1 3}) \cdot 0.25 ,\Omega(p\cdot t^{\star})\right)=\Omega(T^{\nicefrac 2 3}).
\]
\item Alternatively, assume that $\tau=c\cdot T^{x}$ for $c>0,x \geq \frac{2}{3}$. Lemma~\ref{lemmaLo2} claims that for any algorithm $A$, if $t^{\star}\geq \tau$, the regret is at least  $\Omega(\tau)=\Omega(T^{\nicefrac 2 3})$. Otherwise, $t^{\star}< \tau$; hence,  an arm departs at the end of the first phase (after $\tau$ rounds). 
Corollary~\ref{corLo1} asserts that the probability that the departing arm is better than the other is at least $\nicefrac 1 4$; hence, with that probability $A$ loses at least $\nicefrac{\epsilon_1}{2}=\Theta(T^{-\frac {1} {2} x})$ each time a type 2-user arrives in the remaining $T-\tau $ rounds. To conclude,
\[
E[R^A]\geq \min\left((T-t^{\star})\cdot p \cdot \Theta(T^{-\frac {1} {2} x}) \cdot 0.25 ,\Omega(p\cdot t^{\star})\right)= \Omega(T^{\min(x,1-\frac {1} {2} x)})=\Omega(T^{1-\frac {1} {2} x}).
\]
\end{enumerate}
This completes the proof.
\end{proof}

\begin{proofof}{lemmaLo2}
Type 2-users enter uniformly with probability $p$ and $A$ pulls $a_2,a_3$ at least $2\cdot p\cdot t^{\star}$ times. Denote by $X_2$ the number of times that type $2$-users enters the system in the first $t^{\star}$ rounds. Notice that $\Pr[X_2 \leq p\cdot t^{\star}]=0.5$; therefore, $A$ subsidizes $a_2$ or $a_3$ for $\Omega(p\cdot t^{\star})$ times with probability $\geq 0.5$. Notice that the expected cost of subsidizing  $a_2$ or $a_3$ is $\nicefrac 1 2-0$; hence, $\Pr[R^A\geq \Omega(p\cdot t^{\star})]\geq 0.5$, and overall $E[R^A]\geq \Omega(p\cdot t^{\star})$. 
\end{proofof}

\begin{proofof}{lemmaLo1}
This proof follows the lines of the lower bound construction of \citet{slivkins2019introduction}. Let $(\mu_1,\mu_2) =\{\frac{1}{2},\frac{1+\epsilon_1}{2}\}$.
\begin{definition}\label{def:RCeps}
 Let $RC_{\epsilon}$  denote a random coin with a bias of $\nicefrac{\epsilon}{2}$ , i.e., a distribution over $\{0, 1\}$ with expectation $\nicefrac{(1 + \epsilon)}{2}$. Let
 $RC_{\epsilon}^n$ be the distribution over $\{0, 1\}^n$ that each entry is drawn i.i.d. from $RC_{\epsilon}$.
\end{definition}
\begin{lemma}[\citet{slivkins2019introduction}]\label{lemmaLo5}
Consider the sample space $\Omega=\{0,1\}^n$ and two distributions on $\Omega$, $p=RC_{\epsilon}^n,q=RC_0^n$ , for
some $0<\epsilon <\nicefrac 1 2$. Then $|p(A)-q(A)|\leq \epsilon \sqrt{n}$ for any event $A\subset \Omega$.
\end{lemma}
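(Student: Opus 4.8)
The statement is the standard textbook bound of \citet{slivkins2019introduction}, so one option is simply to cite it; nonetheless, here is the route I would take to prove it from scratch. First I would observe that the quantity to be bounded is at most the total variation distance between the two product measures: for every event $A\subseteq\Omega$ we have $|p(A)-q(A)|\le \max_{B\subseteq\Omega}|p(B)-q(B)| = \mathrm{TV}(p,q)=\tfrac12\sum_{\omega\in\Omega}|p(\omega)-q(\omega)|$. Hence it suffices to show $\mathrm{TV}(p,q)\le \epsilon\sqrt n$, which is an event-free statement and therefore easier to control.

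Second, I would invoke Pinsker's inequality, $\mathrm{TV}(p,q)\le\sqrt{\tfrac12\, D_{\mathrm{KL}}(q\,\|\,p)}$, to pass from total variation to a Kullback--Leibler divergence, and then exploit that both $p=RC_\epsilon^n$ and $q=RC_0^n$ are $n$-fold products of i.i.d.\ coordinates. By the tensorization (chain rule) of KL divergence for product measures, $D_{\mathrm{KL}}(q\,\|\,p)=n\cdot D_{\mathrm{KL}}(RC_0\,\|\,RC_\epsilon)$, reducing everything to a single-coordinate computation.

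Third, I would compute the single-coordinate divergence explicitly. Since $RC_0$ puts mass $\tfrac12$ on each of $0,1$ while $RC_\epsilon$ puts mass $\tfrac{1\pm\epsilon}{2}$, a direct calculation gives $D_{\mathrm{KL}}(RC_0\,\|\,RC_\epsilon)=-\tfrac12\ln\!\big((1+\epsilon)(1-\epsilon)\big)=-\tfrac12\ln(1-\epsilon^2)$. Using $-\ln(1-x)\le \tfrac{x}{1-x}$ with $x=\epsilon^2\le\tfrac14$ (recall $\epsilon<\tfrac12$), this is at most $\tfrac{1}{2}\cdot\tfrac{\epsilon^2}{1-\epsilon^2}\le 2\epsilon^2$. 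Plugging back, $\mathrm{TV}(p,q)\le\sqrt{\tfrac12\cdot n\cdot 2\epsilon^2}=\epsilon\sqrt n$, which is exactly the claimed bound.

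The only point requiring care is matching the constant: the target $\epsilon\sqrt n$ forces the per-coordinate KL estimate to be $\le 2\epsilon^2$, so I would keep that bound loose but deliberate ($-\tfrac12\ln(1-\epsilon^2)\le 2\epsilon^2$ holds comfortably for $\epsilon<\tfrac12$). I would also note that the direction of the divergence inside Pinsker is immaterial here, since $\mathrm{TV}$ is symmetric and both $D_{\mathrm{KL}}(q\|p)$ and $D_{\mathrm{KL}}(p\|q)$ admit the same $O(\epsilon^2)$ per-coordinate bound. No genuine obstacle arises; the substance is entirely in the tensorization step together with Pinsker's inequality, both of which are classical.
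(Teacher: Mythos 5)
Your derivation is correct: the chain total variation $\to$ Pinsker $\to$ KL tensorization $\to$ per-coordinate bound $-\tfrac12\ln(1-\epsilon^2)\le 2\epsilon^2$ does yield $|p(A)-q(A)|\le\epsilon\sqrt{n}$, and the constant checks out since $1-\epsilon^2\ge\tfrac34$ for $\epsilon<\tfrac12$. Note, however, that the paper does not prove this lemma at all — it imports it verbatim from \citet{slivkins2019introduction} — and your argument is essentially the standard proof given in that reference, so there is no methodological divergence to report.
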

We assume by contradiction that there exists a rule $r$ such that $\Pr[r(observations) = \hat{\mu} | \mu=\hat{\mu} ]\geq 0.75.$ 
Denote $A_0\subset \Omega$, the event that $r$ returns $\mu_1$ (w.l.o.g.); therefore,
\begin{equation}\label{eq:lowerbound to cont}
 \Pr[A_0|\mu=\mu_1]\geq 0.75 , \Pr[A_0|\mu=\mu_2]< 0.25 \Rightarrow \Pr[A_0|\mu=\mu_1]-\Pr[A_0|\mu=\mu_2]> 0.5.   
\end{equation}
Let $P_i(A) = \Pr[A | \mu=\mu_i]$, for each event $A\subset\Omega$ and each $i \in \{1, 2\}$. Then $P_i = P_{i,1} \times \dots \times P_{i,x_{u,a}}$,
where $P_{i,j}$ is the distribution of the $j$'th coin where $\mu=\mu_i$. Therefore, $P_1,P_2$ satisfy the condition of Lemma~\ref{lemmaLo5} ($P_1\sim RC_0^{x_{u,a}},P_2\sim RC_{\epsilon_1}^{x_{u,a}} $). Namely,
\[
\forall A\subset\Omega: |P_1(A)-P_2(A)|\leq \epsilon_1\cdot\sqrt{x_{u,a}}\leq  \frac{1}{2} \textnormal{ (by definition of } \epsilon_1\textnormal{ for each regime}).
\]
The above contradicts Inequality~\eqref{eq:lowerbound to cont}.
\end{proofof}

\subsection{Proof of Proposition~\ref{prop:missingDelta}}\label{Asubsec:missing delta}
\begin{proofof}{prop:missingDelta}
We show two instances in which any algorithm unaware of $\delt$ suffers a linear regret. Let $n=2,k=2,\ro=(0.7,0.3), \Mmu=\left( \begin{smallmatrix}1 & 0\\0 & 1\\ \end{smallmatrix}\right)$. The two instances differ in the exposure threshold: In Instance 1, $\delt^1=(0,0.3\tau)$, while in Instance $2$, $\delt^2=(0,0.4\tau)$. 

$\OPT$ subsidizes $a_2$ in both instances, since the long-term expected reward with a subsidy is $\geq 0.9 T$ and without subsidy is $0.7 T + O(\tau)$. 

An algorithm that does not know where the thresholds are $\delt^1$ or $\delt^2$ cannot distinguish the instances. If the algorithm pulls $a_2$ at least $0.4\tau$ rounds in every phase (the exposure constraint under $\delt^2$), then in Instance $1$ it subsidizes $a_2$ for $0.1T$ times more than $\OPT$ in expectation. Therefore, $\E[\R]\geq 0.1 T$.

Alternatively, the algorithm pulls $a_2$ less than $0.4\tau$ rounds in a phase. In Instance $2$, arm $a_2$ departs and the total reward is $0.7 T+O(\tau)$, which is roughly $0.2T$ less than $\OPT$'s reward of $0.9T$. Overall, we prove that every algorithm that does not know $\delt$ suffers a linear regret.
\end{proofof}

\section{The Planning Task} \label{AppenPlanning}

We define the planning task in Subsection~\ref{subsec:pick z}. This section formally presents it and proves Proposition~\ref{prop:NPcompleteness}.
\subsection{Mathematical formulation}
The planning task is the following optimization problem. 
\begin{equation*}
\argmax_{X,Z} \sum_{t=1}^{\tau}\sum_{j=1}^k \MmuSub_{u_t,j}\cdot X_{t,j}\quad,
\end{equation*}
such that
\begin{enumerate}
    \item $\forall t\in \{1,\dots,\tau\}: \sum_{j=1}^k X_{t,j}=1$ (one arm per round). \label{Pconst1}
    \item $\forall j\in K: \sum_{t=1}^{\tau} X_{t,j} \geq \delta_{j}\cdot Z_j$ (commitment, see Property~\ref{prop_commited}). \label{Pconst2}
    \item $\forall t\in \{1,\dots,\tau\},j\in K: X_{t,j}\leq Z_j$ (commitment, see Property~\ref{prop_commited}) .\label{Pconst3}
    \item $\forall t\in \{1,\dots,\tau\}, \forall j\in K: X_{t,j}\in \{ 0,1\}.  $\label{Pconst4}
    \item $ \forall j\in K: Z_{j}\in \{ 0,1\}.  $\label{Pconst5}
\end{enumerate}

The binary matrix $X$ represents the matching of users to arms, namely $X_{t,j}=1$ if and only if the user $u_t$ is matched to an arm $j$. The binary variables $Z_j$ for $j\in K$ get $Z_j=1$ if and only if arm $j$ is in the committed subset. Notice that Constraints \ref{Pconst4} and \ref{Pconst5} are binary constraints, so this problem is a Mixed Integer Linear Programming that is generally hard to solve. For ease of presentation, we denote the value of this optimization problem by $f=f(X,Z)$. We want to find $\argmax_{X,Z} f(X,Z)$.

In Subsection~\ref{Asubsec:np complete}, we show that finding $X,Z$ that maximize $f$ is an NP-hard problem. 


\subsection{Proof of Proposition~\ref{prop:NPcompleteness}}\label{Asubsec:np complete} \label{subsec:appendixNPComplete}
\begin{proof}
We show a polynomial reduction from Exact-Cover to our problem. I.e., $Exact-Cover \leq_p planning$.  Recall that we represent the planning problem by the tuple $<U,K,\Mmu,\delt,\tau,\q>$, where $\q \in U^\tau$ is known (complete information). Next, we define the Exact-Cover problem.
\begin{definition}
\textnormal{
The input to the Exact-Cover problem is a universe  $\calu$ and a collection of sets of the universe  $S$. We say that a sub-collection $S^\star \subseteq S$ is an \textit{exact cover} of $\calu$ if each element in $\calu$ is contained in exactly one subset in $S^\star$. The decision problem is to determine whether an exact cover exists.
}
\end{definition}

\begin{remark}
Recall that in Subsection~\ref{subsec:parametric} we had two explicit assumptions about the instances we consider:
\begin{enumerate}
    \item Inequality~\eqref{eq:assumption1} suggests that $\sum_{a\in K} \max(\delta_a, \gamma \cdot \tau) \leq \tau$.
    \item The number of contexts $n$ is constant and independent of $\tau$.
\end{enumerate}
We present below two reductions. The first reduction violates these assumptions, but is easily explained. The second reduction, which appears in Subsection~\ref{subsubsec:appendVirtual},  satisfies the above assumptions but is slightly more involved. 
\end{remark}

\subsubsection{First reduction}
Given an exact cover instance $<\calu, S>$, we generate the following planning problem.
\begin{itemize}
    \item $U\leftarrow \calu$, each element in the world $\calu$ is a user type.
    \item $\tau\leftarrow\abs{\calu}$.
    \item $\q\leftarrow permute(\calu)$, each user type arrives once. 
    \item $K\leftarrow S$, each set in $S$ is associated with an arm.
    \item $\delta_j=|S_j|$.
    \item $\MmuSub_{i,j}=1$ if  $i\in S_j$; namely, the utility of type $i$-users for arms $j\in K$ is $1$ if $i \in \calu$ is in the set $S_j$. Otherwise, if $i \notin S_j$,  $\MmuSub_{i,j}=0$.
\end{itemize}
To prove the theorem, it suffices to show the following lemma.
\begin{lemma}\label{lemma:exact cover}
There exists an exact cover $S^{\star} \subseteq S$ if and only if $\max f(X,Z) =\tau.$ 
\end{lemma}
\begin{proof}[\textnormal{\textbf{Proof of Lemma~\ref{lemma:exact cover} for the first reduction}}]
To prove the lemma, we need to show that both directions hold.
\paragraph{Direction $\Rightarrow$:}
Assume that there is an exact cover $S^{\star} \subseteq S$. We shall construct $X,Z$ such that $f(X,Z) =\abs{\calu}$ (recall that $\tau=\abs{\calu}$). We set $X$ such that
\[
X_{i,j}=
\begin{cases}
1 & i\in S_j \text{ and } S_j \in S^{\star} \\
0 & \textnormal{otherwise}
\end{cases}.
\]
Further, let $Z_j=1$ if the set $S_j\in S$ is in the exact cover $S^\star$; otherwise, $Z_j=0$.

Next, we show that $f(X,Z)=\tau$. By definition, for every pair $(i,j)$ such that $X_{i,j}=1$, $\MmuSub_{i,j}=1$ also holds. Moreover, there are exactly $\abs{\calu}=\tau$ entries in $X$ that are equal one, because each element $i\in \calu$ is in exactly one set $S\in S^{\star}$; therefore, it holds that
\[f(X,Z) = \sum_{i=1}^{\tau}\sum_{j=1}^k \MmuSub_{i,j}\cdot X_{i,j} =\sum_{i=1}^{\tau}\sum_{j=1}^k  X_{i,j} =\tau. 
\]
We are left to show that $(X,Z)$ satisfies the constraints.
$X$ satisfies Constraint~\eqref{Pconst1} because each element $i\in \calu$ is in exactly one set in $S^{\star}$. Constraint~\eqref{Pconst2} holds, because for each arm $j\in K$ such that $S_j \in S^\star$ there are exactly $\abs{S_j}=\delta_j$ users $i\in U$ for which $X_{i,j}=1$. Constraint~\eqref{Pconst3} is satisfied since $X_{i,j}>0$ only if $S_j\in S^{\star}$. But in such a case, $Z_j=1$ holds as well. The rest of the constraints hold trivially. 
\paragraph{Direction $\Leftarrow$:} Assume that there exist $(X,Z)$ such that
$f(X,Z) =\tau $. We shall construct an exact cover $S^\star$. Let 
\[
S^{\star}=\left\{ S_j : \exists i \in \calu \text{ s.t. } X_{i,j}=1 \right\}.
\]
Since $f(X,Z)=\tau$, the reward in every round is exactly $1$. I.e., 
\begin{align}\label{eq:i in sj}
    X_{i,j}=1 \Rightarrow \mu_{i,j}=1 \Rightarrow i \in S_j.
\end{align}
Now, we present the following Propositions~\ref{propNP2} and~\ref{propNP011}.
\begin{proposition}\label{propNP2}
If $S_j\in S^{\star}$, then for every $i \in S_j$ it holds that $X_{i,j}=1$
\end{proposition}

\begin{proposition}\label{propNP011} 
For every $S_j, S_k \in S^\star$ such that $S_j \ne S_k$ $S_j\cap S_k = \emptyset.$
\end{proposition}
The proofs of Propositions~\ref{propNP2} and~\ref{propNP011} appear after the proof of this theorem. 
From Proposition~\ref{propNP2}, Equation~\eqref{eq:i in sj} and the construction of $S^{\star}$, we conclude that $X_{i,j}=1 \iff i\in S_j$  and $S_j\in S^{\star}$. Therefore, 
\begin{equation*}\begin{split}
|\cup_{S_j \in S^{\star}}S_j|= \sum_{S_j\in S^{\star}}\sum_{i\in S_j} X_{i,j} = \sum_{i=1}^{\tau} \sum_{j=1}^k  X_{i,j} =
\sum_{i=1}^{\tau} \sum_{j=1}^k \MmuSub_{i,j}\cdot X_{i,j} =f(X,Z)=\tau=\abs{\calu}.    
\end{split}\end{equation*}
The first equality is due to Proposition~\ref{propNP011}, the second equality stems from the fact that the other summands are $0$, and the third equality is due to Equation~\eqref{eq:i in sj}. Overall, we get $\cup_{S_j \ne S_k\in S^{\star}} S_j\cap S_k = \emptyset $ and $\cup_{S_j \in S^{\star}}S_j=\calu$. This suggests that $S^{\star}\subseteq S$ is an exact cover of $\calu$. This concludes the proof of Lemma~\ref{lemma:exact cover}.
\end{proof}
\subsubsection{Second reduction: Incorporating virtual users and arms}\label{subsubsec:appendVirtual}
As we mentioned above, the previous reduction generates instances that do not satisfy our assumptions from Subsection~\ref{subsec:parametric}. We now introduce a new reduction that does satisfy these assumptions and explains the minimal modifications required to the proof of Lemma~\ref{lemma:exact cover}. 

For the new reduction, we add virtual users and a virtual arm. The reduction is given by
\begin{itemize}
    \item $U\gets \calu \cup \{\tilde u \}$, where $\tilde u$ is a virtual user type.
    \item $\tau\gets\abs{\calu}+W$, for  $W \in \mathbb N$ (we elaborate on $W$ shortly).
    \item $\q$ is a permutation of $\calu$ and $W$ times $\tilde u$.
    \item $K\gets S \cup \{ {\tilde S} \}$, where $\tilde S$ is a virtual arm.
    \item For all $j, 1\leq j \leq k$, $\delta_j=|S_j|$. For $j=k+1$ (the virtual arm),  $\delta_j=0$.
    \item $\MmuSub_{i,j}=1$ if  $i\in S_j \in S$ and $S_j \neq \tilde S$. Otherwise, $\MmuSub_{i,j}=0$. For the virtual user type $\tilde u$ (the $n+1$'s user type), $\MmuSub_{n+1,j}=0$ for all arms $j, 1\leq j \leq k$. For the virtual user and the virtual arm (the $k+1$'s arm), $\MmuSub_{n+1,k+1}=1$.
\end{itemize}
Notice that the number of user types $n$ is $\abs{\calu}+1$, whereas the phase length $\tau=\abs{\calu}+W$, where $W$ could be any polynomial function of the input to Exact-Cover. This  also ensures that Inequality~\eqref{eq:assumption1} can be satisfied.

Next, we modify the proof of the lemma. In the $\Rightarrow$ direction, we define the same $X$ as before for the real users and  $X_{i,j}=1 \iff j=k+1$ for the virtual users. The reward from the virtual users is $W$, and the reward from the real users is $\abs{\calu}$. Therefore, the total reward is $\tau$, and $X$ still satisfies the constraints with $Z\cup\{a_{k+1}\}$ (the virtual arm).

In the $\Leftarrow$ direction, $\max f(X,Z)=\tau$; therefore,  the utility of every match is $1$. Particularly, each virtual user is matched to the virtual arm, and each real user  is matched to a real arm. We can therefore construct $S^{\star}$ as before using real users only and real arms only. Consequently, there is an exact cover in $S$.

We have thus proved that $Exact-Cover \leq_p planning$. This concludes the proof of Proposition~\ref{prop:NPcompleteness}.
\end{proof}

\begin{proofof}{propNP2}
If $X$ pulls an arm $j$, it must commit to $j$. Namely, it pulls $j$ for at least $\delta_j=|S_j|$ times (the second constraint). But, we know that $X$ can pull $j$ only if $i\in S_j$ (see Inequality~\eqref{eq:i in sj}). Therefore, it pulls $j$ for every $i\in S_j$ since each user type arrives once.
\end{proofof}
\begin{proofof}{propNP011}
Assume by contradiction that there exists $S_j \ne S_k\in S^{\star}$ such that $S_j\cap S_k \ne \emptyset$. Therefore, there exists $i\in \calu$ such that $i\in S_j$ and $i\in S_k$. Proposition~\ref{propNP2} suggests that $X_{i,j}=X_{i,k}=1$; this is a contradiction to the fact that $(X,Z)$ is a valid solution so the Constraint~\ref{Pconst1} must hold.
\end{proofof}

\section{Relaxation of Assumption~\ref{assumption:gamma}}\label{sec: assumption relaxation}
In this section, we relax Assumption~\ref{assumption:gamma} and reconsider our previously suggested learning algorithm. As demonstrated before, this assumption guarantees fast exploration and allows us to neglect extreme cases, in which high regret is inevitable. Subsection~\ref{subsec: del geq tau} shows that if $\sum_{a\in K} \delta_a >\tau$, then the regret of any algorithm is high. Subsection~\ref{subsec: f(tau)} analyzes $\mainAlg$'s regret when $\sum_{a\in K} \delta_a \leq \tau$, but fast exploration is not guaranteed. 

\subsection{Unsatisfiable exposure constraints}\label{subsec: del geq tau}
In this subsection, we prove that some instances with $\sum_{a\in K} \delta_a >\tau$ yield high regret for any algorithm. For convenience, we assume that $\delta_1\geq\delta_2\geq\dots\geq \delta_k$. 


\begin{proposition}\label{prop: linear regret ass1}
Fix any learning algorithm $A$. There exists an instance with $\sum_{a=1}^{k-1} \delta_a \geq\tau$ for which $\E[\R^A]=\Theta(T)$.
\end{proposition}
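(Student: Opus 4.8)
Proposition~\ref{prop: linear regret ass1} says: if the exposure constraints are so demanding that even the $k-1$ arms with the largest thresholds already sum to at least $\tau$, then some instance forces linear regret on every algorithm.

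**Key structural observation.** The condition $\sum_{a=1}^{k-1}\delta_a \geq \tau$ (with the ordering $\delta_1 \geq \cdots \geq \delta_k$) means that in any single phase, it is impossible to keep more than... let me think about which arms can coexist. Actually the sum of ALL $k$ thresholds exceeds $\tau$, so not all arms can be kept viable. This is a feasibility obstruction.

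Let me reconsider what instance to build and why regret is linear.

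---

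The plan is to exhibit a single explicit instance (with $\sum_{a=1}^{k-1}\delta_a \geq \tau$) on which every algorithm, including one that knows $\Mmu$ and $\ro$ exactly, must incur $\Theta(T)$ regret — so the regret is driven by the infeasibility of the exposure structure, not by learning. The core observation is that the condition $\sum_{a=1}^{k-1}\delta_a \geq \tau$ means that committing to (keeping viable) any $k-1$ of the arms already consumes the entire phase budget of $\tau$ pulls, leaving no room to also serve the remaining arm's threshold; so at most some strict subset of arms can survive past the first phase, and keeping a given collection viable rigidly determines almost the entire pull schedule.

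First I would fix the arrival distribution to something concentrated on a single context, say $n=1$, so that learning the arrival distribution is trivial and all the difficulty is structural. I would then choose the utility vector $\Mmu$ so that the single high-reward arm is precisely the one whose viability cannot be maintained together with enough of the others: set $\delta_1$ large (the unique highest threshold) and the reward of arm $1$ high (say $\mu_1=1$), while the remaining arms have reward $0$ or near-$0$. Because $\sum_{a=1}^{k-1}\delta_a \geq \tau$, committing to arm $1$ plus any other arm is infeasible within a phase; and if the algorithm does \emph{not} keep arm $1$ viable, then after one phase arm $1$ departs and every subsequent round yields at most the low reward of the surviving arms, losing $\Omega(1)$ reward per round for the remaining $\Theta(T)$ rounds. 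Conversely, keeping arm $1$ viable forces pulling it $\delta_1$ times per phase on a context that would have preferred a different (but equally here, since $n=1$, no) — so the tension must be arranged so that the \emph{optimal} benchmark $\OPT$ achieves reward $\Theta(T)$ while the constraint structure blocks any algorithm from matching it.

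The cleanest route, I expect, is to mirror the two-option dilemma already used in the proof of Theorem~\ref{the_12321}: design the instance so that $\OPT$'s value is $\Theta(T)$ (achieved by forgoing the highest-threshold arm and exploiting the rest, or vice versa), but any fixed feasible commitment leaves at least a constant fraction of rounds generating a constant gap below $\OPT$. Concretely, I would pick rewards so that the optimal viable subset yields per-round value $v^\star$, yet the feasibility constraint $\sum_{a=1}^{k-1}\delta_a \geq \tau$ forces the set of simultaneously-viable arms to be small enough that the best achievable per-round value among feasible commitments is $v^\star - \Omega(1)$. Summing the per-round gap over the $\Theta(T)$ rounds after the first phase gives $\E[\R^A]=\Theta(T)$. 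Since $n$ and $\ro$ are known and there is no reward uncertainty to exploit favorably, this bound holds for every algorithm, not merely in expectation over instances.

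The main obstacle will be choosing the numerical parameters so that three things hold \emph{simultaneously}: (i) the constructed instance genuinely satisfies $\sum_{a=1}^{k-1}\delta_a \geq \tau$ with the stated ordering; (ii) $\OPT$ — which may itself be infeasible to keep all arms, and may subsidize — provably attains reward $\Theta(T)$, so the benchmark is large; and (iii) every feasible commitment (every subset $Z$ that can be kept viable given the budget $\tau$) has per-round value bounded away from $\OPT$'s by a positive constant. Verifying (iii) amounts to checking, over the finitely many candidate subsets $Z\subseteq K$ with $\sum_{a\in Z}\delta_a \leq \tau$, that none matches the benchmark; the feasibility cutoff guarantees the benchmark-achieving subset is excluded. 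I would handle this by a short case analysis on which arm is dropped, using the ordering $\delta_1\geq\cdots\geq\delta_k$ to argue that dropping the high-reward high-threshold arm costs the reward, while keeping it excludes too many others to recover the value.
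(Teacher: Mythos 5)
There is a fundamental gap in your approach: you propose to build a \emph{single} instance on which every algorithm, ``including one that knows $\Mmu$ and $\ro$ exactly,'' incurs linear regret, with the loss ``driven by the infeasibility of the exposure structure, not by learning.'' But regret in this paper is measured against $\OPT$, which is itself a fully-informed algorithm subject to exactly the same exposure constraints. On any fixed instance with known parameters, an algorithm can simply emulate $\OPT$ and achieve zero regret; whatever reward is lost to the infeasibility of keeping all arms viable is already subtracted from the benchmark $\E[\tilde{r}^{\OPT}]$. Your requirement (iii) --- that every feasible commitment has per-round value a constant below $\OPT$'s --- is unsatisfiable: Theorem~\ref{theoremOAlg1} shows the best committed, phase-independent strategy is within $k\tau = o(T)$ of $\OPT$, so no constant per-round gap can exist. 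A structural-infeasibility argument alone therefore cannot produce any regret lower bound, let alone a linear one.

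The paper's proof is instead information-theoretic, and the condition $\sum_{a=1}^{k-1}\delta_a \geq \tau$ enters in a different way: it makes \emph{exploration itself} irreversibly costly. The paper constructs two instances with $n=k-1$ uniform contexts that differ only in the reward of arm $a_k$ (worth $1$ in Instance 1, worth $0$ in Instance 2, versus $0.5$ for the ``matching'' arms $a_1,\dots,a_{k-1}$). Since $\tau - \sum_{a=1}^{k-1}\delta_a \leq 0$, pulling $a_k$ even once in a phase forces some arm in $K\setminus\{a_k\}$ to miss its threshold and depart. Hence any learning algorithm faces a dichotomy: if it ever pulls $a_k$, it loses $\Omega(T)$ in Instance 2 (a useful arm departed for nothing); if it never pulls $a_k$, it loses $\Omega(T)$ in Instance 1 (it forgoes the best arm forever). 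Averaging over a uniformly random choice of the two instances shows that for any $A$ at least one instance yields $\E[\R^A]=\Omega(T)$. If you want to repair your write-up, you need to reintroduce the algorithm's uncertainty about $\Mmu$ as the source of the bound and pair it with the observation that a single exploratory pull of the ambiguous arm is irrevocable.
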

\begin{proposition}\label{prop: almost linear regret ass1}
Fix any learning algorithm $A$. There exists an instance with $\sum_{a\in K} \delta_a > \tau$ and $\sum_{a=1}^{k-1} \delta_a <\tau$  for which $\E[\R^A] = \Omega(\nicefrac{T}{\sqrt{\tau-\sum_{a=1}^{k-1}\delta_a}})\subseteq \Omega(\nicefrac{T}{\sqrt{\delta_k}}) \subseteq \Omega(\nicefrac{T}{\sqrt{\nicefrac{\tau}{k}}})$.
\end{proposition}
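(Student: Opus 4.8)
The plan is to mirror the construction behind \Cref{theMainLB}, adapting it so that exploration of the decisive arm is bottlenecked by the slack $s:=\tau-\sum_{a=1}^{k-1}\delta_a$ instead of by $\max(T^{\nicefrac 2 3},\tau)$. Take $k\ge 3$ and a small slack, say $s\le \nicefrac{\tau}{(2k-1)}$, and build two instances sharing all parameters except the reward matrix, over $n=k-1$ user types. Types $1,\dots,k-2$ are \emph{heavy} types, each arriving with a constant probability; for each such type $j$ there is a dedicated heavy arm yielding reward $1$ to type $j$ and $0$ otherwise, with large threshold $\delta_j\ge 2s$ chosen so that $\sum_{j=1}^{k-2}\delta_j=\tau-3s$. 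The last type is a \emph{candidate} type $B$ arriving with constant probability $p$, served only by two candidate arms $a_{k-1},a_k$ with the \emph{equal} threshold $\delta:=\delta_{k-1}=\delta_k=2s$. The instances differ only in which candidate is good: in Instance~$1$, $a_k$ has reward $\nicefrac{(1+\epsilon_1)}{2}$ and $a_{k-1}$ has $\nicefrac12$, while Instance~$2$ swaps these, with $\epsilon_1=\frac{1}{C\sqrt s}$ for a suitable constant $C$. One checks $\sum_{a=1}^{k-1}\delta_a=\tau-s<\tau$ and $\sum_{a\in K}\delta_a=\tau+s>\tau$, so the stated constraints hold and the instance's slack is exactly $s$; moreover $\OPT$ commits to the $k-1$ arms consisting of all heavies and the good candidate (total threshold $\tau-s\le\tau$, hence feasible).

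Next I would record the structural consequence of $\sum_{a\in K}\delta_a>\tau$. Keeping every heavy arm together with \emph{both} candidates alive in a single phase would need $\sum_{j=1}^{k-2}\delta_j+2\delta=\tau-s+\delta$ pulls, which exceeds $\tau$ precisely because $\delta=2s>s$. This yields a clean dichotomy for any algorithm $A$. Either (i) some heavy arm falls short of its threshold already in phase~$1$ and departs: since its type is served by no other arm, those users contribute $0$ for the remaining $\Omega(T)$ rounds and $\E[\R^A]=\Omega(T)$; or (ii) all heavies survive phase~$1$, in which case the pulls available to the two candidates that phase total at most $R:=\tau-\sum_{j=1}^{k-2}\delta_j=s+\delta=\Theta(s)$, so at most one candidate can reach its $\delta$ pulls, and the other is dead from phase~$2$ onward and can never be revived.

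In case (ii) the algorithm is therefore forced to commit, by the end of phase~$1$, to at most one surviving candidate, and this decision is a function of at most $R=\Theta(s)$ Bernoulli observations of the candidate arms (the heavy-arm observations are identical across the two instances and carry no information). With $\epsilon_1=\Theta(\nicefrac{1}{\sqrt s})$ and a small enough constant $C$, the KL/coin machinery of \Cref{lemmaLo5} gives total-variation distance below $\nicefrac12$ between the two instances' phase-$1$ transcripts; a short computation as in the proof of \Cref{theMainLB} then shows that the good candidate is absent from phase~$2$ onward with probability at least $\nicefrac14$ in the mixture over the two equally likely instances. On that event every one of the $\Omega(p(T-\tau))=\Omega(T)$ subsequent type-$B$ users costs $\nicefrac{\epsilon_1}{2}=\Theta(\nicefrac{1}{\sqrt s})$, giving $\E[\R^A]\ge \tfrac14\cdot p(T-\tau)\cdot\tfrac{\epsilon_1}{2}=\Omega\!\left(\nicefrac{T}{\sqrt s}\right)$. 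Combining the two cases shows $\E[\R^A]=\Omega(\nicefrac{T}{\sqrt s})$ for every $A$. The asserted inclusions are then immediate from the instance-independent facts $s<\delta_k$ and $\delta_k\le \nicefrac{\tau}{(k-1)}=O(\nicefrac\tau k)$ (the latter since $\delta_k=\min_a\delta_a$ and $(k-1)\delta_k\le\sum_{a=1}^{k-1}\delta_a<\tau$).

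The step I expect to be the main obstacle is guaranteeing that the sample bottleneck is genuinely $\Theta(s)$ for \emph{every} algorithm, not merely for natural ones. This is exactly why the construction forces equal candidate thresholds (so neither candidate can be over-sampled while the other is starved, which would make the instances distinguishable), pads each phase with budget-absorbing heavy arms (so the free budget per phase is $\Theta(s)$ rather than $\Theta(\tau)$), and makes each heavy indispensable (so the ``drop a heavy to keep both candidates'' strategy is self-defeating at cost $\Omega(T)$). Care is also needed to handle the algorithm's adaptive, interleaved sampling of the two candidate arms; I would address this exactly as in \Cref{lemmaLo1}, bounding the KL divergence of the phase-$1$ transcripts by $R\cdot\mathrm{KL}(\mathrm{Bern}(\nicefrac12)\,\|\,\mathrm{Bern}(\nicefrac{(1+\epsilon_1)}{2}))=\Theta(s)\cdot\Theta(\nicefrac1s)=\Theta(1)$ and invoking Pinsker's inequality.
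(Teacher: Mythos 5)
There is a genuine gap: your construction cannot deliver the claimed $\Omega(\nicefrac{T}{\sqrt{s}})$ bound, because the benefit of identifying the good candidate is structurally capped at $O(s)$ users per phase. Since dropping any heavy arm costs $\Omega(T)$, every competitive policy --- including $\OPT$ --- must give the heavy arms at least $\sum_{j\le k-2}\delta_j=\tau-3s$ pulls in \emph{every} phase, forever. Hence the two candidate arms together receive at most $3s$ pulls per phase under any such policy, so at most $3s$ type-$B$ users per phase are ever served by a candidate; with $p$ constant, the remaining $\Theta(\tau)$ type-$B$ users per phase are necessarily spent subsidizing heavy arms for reward $0$ under $\OPT$ as well, so their fate does not depend on which candidate survived. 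Consequently $\OPT$'s advantage over an algorithm that committed to the wrong candidate is at most $3s\cdot\nicefrac{\epsilon_1}{2}=\Theta(\sqrt{s})$ per phase, i.e.\ $\Theta(\nicefrac{T\sqrt{s}}{\tau})$ in total, which is $o(\nicefrac{T}{\sqrt{s}})$ whenever $s=o(\tau)$. Your final step $\E[\R^A]\ge\frac14\,p(T-\tau)\,\frac{\epsilon_1}{2}$ silently assumes that every arriving type-$B$ user is matched to the good candidate under $\OPT$, which is exactly what the padding arms forbid; shrinking $p$ to $\Theta(\nicefrac{s}{\tau})$ so that all type-$B$ users fit into the candidate budget yields the same $\Theta(\nicefrac{T\sqrt{s}}{\tau})$. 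The tension you flag at the end --- limiting exploration to $\Theta(s)$ samples while keeping the post-commitment stakes at $\Theta(\epsilon_1)$ per round --- is real, and padding with indispensable heavy arms resolves the first half only by destroying the second.

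The paper's own proof (which reuses the construction of \Cref{prop: linear regret ass1}) escapes this by making the decisive arm $a_k$ useful to \emph{every} type: $\mu_{u,a_k}=\nicefrac{(1\pm\epsilon)}{2}$ for all $u$, while each arm $a_u$, $u<k$, gives $\nicefrac12$ to its own type only. Exploration of $a_k$ is still capped at $s=\tau-\sum_{a<k}\delta_a$ pulls in phase~1 (pulling it more kills a dedicated arm, costing $\Omega(\nicefrac{T\epsilon}{k})$ in Instance~2), but in Instance~1, once $a_k$ is good, $\OPT$ abandons the budget-absorbing arms entirely and pulls $a_k$ on \emph{all} $\tau$ rounds of every phase, so losing $a_k$ costs $\epsilon=\Theta(\nicefrac{1}{\sqrt{s}})$ per round, i.e.\ $\Omega(T\epsilon)=\Omega(\nicefrac{T}{\sqrt{s}})$. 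Your KL/indistinguishability machinery (and the mixture argument giving the $\nicefrac14$ probability) is fine and matches \Cref{lemmaLo5}; it is only the instance design that needs to change, to one in which the arm being tested can absorb $\Theta(\tau)$ pulls per phase after the commitment is made.
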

Next, we prove Propositions~\ref{prop: linear regret ass1} and~\ref{prop: almost linear regret ass1}.

\begin{proofof}{prop: linear regret ass1}
We prove this proposition with a similar construction to the one in the proof of Theorem~\ref{theMainLB}. Namely,  we show two instances that are indistinguishable w.h.p. if $A$ is unaware of $\Mmu$. In both instances, $k\geq 2$ and $\delta$ are given, $n=k-1$ and $\ro=(\nicefrac{1}{k-1},\nicefrac{1}{k-1},\dots)$.  We assume Bernoulli rewards.
 Instance $1$ has $\Mmu^1$ and Instance $2$ has $\Mmu^2$, where for every user $u, 1\leq u \leq n:$ \[
\mu_{u,a}^1= \begin{cases} 1 & \text{ if } a=a_k \\  0.5 & \text{ if } a=u \text{ and } 1\leq a \leq k-1 \\ 0 & \text{otherwise} \end{cases},
\mu_{u,a}^2= \begin{cases} 0 & \text{ if } a=a_k \\  0.5 & \text{ if } a=u \text{ and } 1\leq a \leq k-1 \\ 0 & \text{otherwise} \end{cases}.
\]

Notice that the instances differ only in the expected rewards of arm $a_k$. So, the only way to distinguish the instances is by  pulling arm $a_k$ and gathering information about its rewards. Since $\tau-\sum_{a=1}^{k-1}\delta_a\leq 0$, then pulling arm $a_k$ even once leads to the departure of another arm $a\in K\setminus{a_k}$ by the end of the phase, since we fail to satisfy its exposure constraint. Therefore, if $A$ pulls arm $a_k$ even once, it suffers a regret of at least $\approx 0.5\cdot\frac{1}{k-1}\cdot T$ in Instance $2$. On the other hand, if $A$ does not pull arm $a_k$ at all, then $A$ suffers a regret of approximately $0.5T$ in Instance $1$. 

We further denote by $C^A$ the event that $A$ pulls arm $a_k$.
Ultimately, assume we pick Instance $i$ with probability $\nicefrac{1}{2}$. Then, $A$ has a regret of at least
\begin{align*}
    \E[R^A]\geq \min(\E[R^A|C^A], \E[R^A|\bar{C^A}]) &\geq \min(\Pr[\Mmu^2]\cdot \E[R^A|C^A,\Mmu^2], \Pr[\Mmu^1]\cdot \E[R^A|\bar{C^A},\Mmu^1])  \\ &  \geq 0.5(\min(0.5\cdot\nicefrac{1}{k-1}\cdot T,0.5T))=\Omega(T).
\end{align*}
The last inequality is due to the fact that $\Pr[\Mmu^1]=\Pr[\Mmu^2]=\nicefrac{1}{2}$. 
\end{proofof}

\begin{proofof}{prop: almost linear regret ass1}
We use similar construction to the one in the proof of Proposition~\ref{prop: linear regret ass1}, in which the only difference is in the expected rewards of $a_k$.  Namely, for every user $u, 1\leq u \leq n$
\begin{align*}
\mu_{u,a}^1= \begin{cases} 0.5+\epsilon & \text{ if } a=a_k \\  0.5 & \text{ if } a=u \text{ and } 1\leq a \leq k-1 \\ 0 & \text{otherwise} \end{cases},
\mu_{u,a}^2= \begin{cases} 0.5-\epsilon & \text{ if } a=a_k \\  0.5 & \text{ if } a=u \text{ and } 1\leq a \leq k-1 \\ 0 & \text{otherwise} \end{cases},    
\end{align*}
where $\epsilon=\frac{1}{4\sqrt{\tau-\sum_{a=1}^{k-1}\delta_a}}$. Now, we rely on the following Lemma~\ref{lemma: relax bad regret}.
\begin{lemma}\label{lemma: relax bad regret}
No algorithm can distinguish the two instances with a probability of at least $\nicefrac{3}{4}$  after pulling arm $a_k$ for $\tau-\sum_{a=1}^{k-1}\delta_a$ rounds or fewer.
\end{lemma}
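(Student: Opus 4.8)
The plan is to mirror the indistinguishability argument of Lemma~\ref{lemmaLo1} and Corollary~\ref{corLo1}, adapted to the two instances constructed in the proof of Proposition~\ref{prop: almost linear regret ass1}. The two instances coincide in $n,k,\ro,\delt$ and in the reward distribution of every arm except $a_k$, whose Bernoulli mean is $\tfrac12+\epsilon$ in Instance~1 and $\tfrac12-\epsilon$ in Instance~2, with $\epsilon=\frac{1}{4\sqrt{\tau-\sum_{a=1}^{k-1}\delta_a}}$. First I would observe that, since the instances agree on the arrival distribution and on the reward laws of $a_1,\dots,a_{k-1}$, the only channel through which any algorithm can gain information distinguishing them is the sequence of realized rewards obtained from pulls of $a_k$. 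Writing $m$ for the number of $a_k$-pulls, with $m\le \tau-\sum_{a=1}^{k-1}\delta_a$, it therefore suffices to compare the two laws of the $a_k$-reward sequence, namely $P_1=\mathrm{Ber}(\tfrac12+\epsilon)^{m}$ and $P_2=\mathrm{Ber}(\tfrac12-\epsilon)^{m}$ on $\{0,1\}^m$, and since total variation is nondecreasing in the number of samples it is enough to treat the extreme case $m=\tau-\sum_{a=1}^{k-1}\delta_a$.

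Next I would formalize ``distinguish'' exactly as in the proof of Lemma~\ref{lemmaLo1}: fix any decision rule mapping the observed $a_k$-rewards to a guess of the instance, let $A_0$ be the event that it outputs ``Instance~1'', and assume for contradiction that it is correct with probability at least $\tfrac34$ under each instance. As in Inequality~\eqref{eq:lowerbound to cont} this forces $P_1(A_0)-P_2(A_0)>\tfrac12$. To contradict this I would bound $|P_1(A)-P_2(A)|$ uniformly over events $A$ using the same KL/Pinsker machinery behind Lemma~\ref{lemmaLo5}: since $\mathrm{KL}\bigl(\mathrm{Ber}(\tfrac12+\epsilon)\,\|\,\mathrm{Ber}(\tfrac12-\epsilon)\bigr)=2\epsilon\ln\frac{1+2\epsilon}{1-2\epsilon}$, which is $(8+o(1))\epsilon^2$ as $\epsilon\to0$, tensorization over the $m$ samples together with Pinsker's inequality gives $d_{\mathrm{TV}}(P_1,P_2)\le\sqrt{\tfrac12\,m\,\mathrm{KL}}\le 2\epsilon\sqrt{m}$. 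Substituting $\epsilon=\frac{1}{4\sqrt{\tau-\sum_{a=1}^{k-1}\delta_a}}$ and $m\le\tau-\sum_{a=1}^{k-1}\delta_a$ yields $d_{\mathrm{TV}}(P_1,P_2)\le 2\epsilon\sqrt{m}\le\tfrac12$, i.e.\ $|P_1(A)-P_2(A)|\le\tfrac12$ for every $A$, contradicting $P_1(A_0)-P_2(A_0)>\tfrac12$. The value of $\epsilon$ is chosen precisely so that $2\epsilon\sqrt{m}\le\tfrac12$, exactly paralleling the tuning of $\epsilon_1$ in Lemma~\ref{lemmaLo1}.

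The main obstacle is the adaptivity hidden in the phrase ``the reward sequence of $a_k$'': the number and placement of $a_k$-pulls are chosen online by the algorithm and could in principle leak additional information, so I must justify reducing the whole interaction to the $m$-fold product law above. I would handle this by coupling the two instances on all randomness other than the $a_k$-rewards (the user arrivals and the rewards of $a_1,\dots,a_{k-1}$ are generated identically in both), so that the transcript factorizes into an instance-independent part and the $a_k$-reward part; a standard data-processing argument then transfers the bound on $P_1,P_2$ to the full transcript conditioned on $\{a_k$ pulled at most $m$ times$\}$. A secondary subtlety is that Lemma~\ref{lemmaLo5} is stated for $\mathrm{Ber}(\tfrac12)$ versus $\mathrm{Ber}(\tfrac{1+\epsilon}{2})$, whereas here the two coins are symmetric about $\tfrac12$; I resolve this by applying the underlying KL bound directly (gap $2\epsilon$), rather than routing through the fair coin, since the triangle-inequality route through $\mathrm{Ber}(\tfrac12)$ loses a factor of two and is too weak for the prescribed $\epsilon$.
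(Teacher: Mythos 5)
Your proof is correct and follows essentially the same route as the paper, which simply declares the argument identical to that of Lemma~\ref{lemmaLBtail} (i.e., the coin-distinguishing machinery of Lemma~\ref{lemmaLo1} and Lemma~\ref{lemmaLo5}) and omits it. The two refinements you supply---coupling away the adaptivity so that only the $a_k$-reward sequence carries distinguishing information, and bounding the total variation between $\mathrm{Ber}(\tfrac12+\epsilon)^m$ and $\mathrm{Ber}(\tfrac12-\epsilon)^m$ directly via KL rather than routing through the fair coin (where the triangle inequality would only give $4\epsilon\sqrt{m}=1$ and fail to contradict the $>\tfrac12$ gap)---are precisely the points the paper's ``identical, hence omitted'' glosses over, and you handle both correctly.
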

The proof of Lemma~\ref{lemma: relax bad regret} is identical to the proof of Lemma~\ref{lemmaLBtail} and hence omitted. 

Note that $\delta_k>\tau-\sum_{a=1}^{k-1}\delta_a>0$, because $\sum_{a\in K} \delta_a > \tau$ and $\sum_{a=1}^{k-1} \delta_a <\tau$. Therefore, if $A$ pulls arm $a_k$ fewer than or exactly $\tau-\sum_{a=1}^{k-1}\delta_a$ rounds, then arm $a_k$ departs and in Instance $1$, the regret is approximately $T \cdot \epsilon$. On the other hand, if $A$ pulls arm $a_k$ for $\delta_k>\tau-\sum_{a=1}^{k-1}\delta_a$ rounds, another arm $a\in K\setminus {a_k}$ will depart by the end of the phase, since we fail to satisfy its exposure constraint. Therefore, $A$'s regret in Instance $2$ is approximately $\frac{1}{k-1}\cdot T \cdot \epsilon$. Furthermore, we denote by $C^A$ the event that $A$ pulled arm $a_k$ more than $\tau-\sum_{a=1}^{k-1}\delta_a$ rounds. Ultimately, the regret of $A$ is at least
\begin{align*}
&\E[\R^A]\geq \min(\E[\R^A|C^A],\E[\R^A|\bar{C^A}])\geq \min(\Pr[\Mmu^2]\cdot\E[\R^A|C^A,\Mmu^2],\Pr[\Mmu^1]\cdot\E[\R^A|\bar{C^A},\Mmu^1]) \geq \\ & 0.25\cdot \min(\nicefrac{1}{k-1}\cdot T \cdot \epsilon,T\cdot\epsilon) = \Omega(\nicefrac{T}{\sqrt{\tau-\sum_{a=1}^{k-1}\delta_a}})\subseteq \Omega(\nicefrac{T}{\sqrt{\nicefrac{\tau}{k}}}) .
\end{align*}
The last inequality stems from Lemma~\ref{lemma: relax bad regret}.
\end{proofof}

\subsection{Slow exploration}\label{subsec: f(tau)}
In this subsection, we weaken Assumption~\ref{assumption:gamma} so that fast exploration is no longer guaranteed. We do this by modifying Equation~\eqref{eq:assumption1} to 
\begin{equation}\label{eq:assumption1weaker}
    \sum_{a\in K} \max(\delta_a, f(\tau)) \leq \tau,
\end{equation}
where unlike Equation~\eqref{eq:assumption1}, $f(\tau)$ does not have to be linear in $\tau$. For instance, $f(\tau)$ could be $\nicefrac{\tau}{2k} , \sqrt{\tau}$ or $O(1)$. It is worth noting that if Assumption~\ref{assumption:gamma} holds, then $f(\tau)=\gamma\tau = \Theta(\tau)$. Importantly,  $\sum_{a\in K} \delta_a \leq \tau$ still holds.

Since we do not use Assumption~\ref{assumption:gamma} in the proof of Theorem~\ref{theoremMainRegret1}, namely, we do not  assume that $\gamma$ is a constant, we conclude that
\begin{align*}
\E[r^{\mainAlg(\SSO)}] &\geq \E[\tilde{r}^{\SSO}]-  \tilde{O}((\nicefrac{1}{\gamma}+\sqrt{\gamma}n^2)\cdot T^{\nicefrac{2}{3}})-\err(\SSO) = \\ &\E[\tilde{r}^{\SSO}]-  \tilde{O}\left(\left(\nicefrac{\tau}{f(\tau)}+\sqrt{\nicefrac{f(\tau)}{\tau}}\cdot n^2\right)\cdot T^{\nicefrac{2}{3}}\right)-\err(\SSO) .
\end{align*}
Notice that if $f(\tau) \ne \Theta(\tau)$ (i.e., Assumption~\ref{assumption:gamma} does not hold), the regret of $\mainAlg$ increases by a factor of $\nicefrac{\tau}{f(\tau)}$. For example, if $f(\tau)=O(1)$, then $\E[\R^{\DPS}]$ increases from $O(T^{\nicefrac{2}{3}})$ to $O(\tau T^{\nicefrac{2}{3}})$, which could be significantly greater. The following Theorem~\ref{theoremMainRegret1 relaxation} proves that if we slightly modify the length of  $\mainAlg$'s exploration stage, then its regret could increase by a factor of $(\nicefrac{\tau}{f(\tau)})^{\nicefrac{1}{3}}$ instead of $\nicefrac{\tau}{f(\tau)}$.
\begin{theorem}\label{theoremMainRegret1 relaxation}
    If Equation~\eqref{eq:assumption1weaker} holds and $\mainAlg$ explores for $\left(\frac{\tau}{f(\tau)}\right)^{\nicefrac{1}{3}} T^{\nicefrac{2}{3}}$ rounds, then it holds that
    \[ \E[r^{\mainAlg(\SSO)}]\geq \E[\tilde{r}^{\SSO}]-  \tilde{O}\left(\left((\nicefrac{\tau}{f(\tau)})^{\nicefrac{1}{3}}+\sqrt{(\nicefrac{f(\tau)}{\tau})^{\nicefrac{1}{3}}}\cdot n^2\right)\cdot T^{\nicefrac{2}{3}}\right)-\err(\SSO).\]
\end{theorem}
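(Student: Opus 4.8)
The plan is to re-run the proof of Theorem~\ref{theoremMainRegret1} almost verbatim, the only change being the length of $\mainAlg$'s exploration stage and the consequent sample counts, and then to choose that length optimally. Recall that the proof of Theorem~\ref{theoremMainRegret1} decomposes the regret of $\mainAlg(\SSO)$ into three pieces: the rounds spent exploring, the estimation error of the arrival distribution $\ro$ (through Proposition~\ref{shortappendix_propReg2}), and the estimation error of the utility matrix $\Mmu$, plus $\err(\SSO)$. Crucially, none of Propositions~\ref{shortappendix_propReg1},~\ref{shortappendix_propReg2}, or~\ref{shortappendix_propReg3} uses the fact that $\gamma$ is a constant; they are phrased in terms of the confidence radii $\epsilon_1^u,\epsilon_2$ and the exploration length alone. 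Hence the first step is to recompute these radii under Equation~\eqref{eq:assumption1weaker} for a generic exploration length, and only at the very end substitute the prescribed value.

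So let $T_0$ denote the number of exploration rounds, i.e.\ $\mainAlg$ runs its exploration loop for $\ceil{\nicefrac{T_0}{\tau}}$ phases, pulling each arm $\max(\delta_a,f(\tau))$ times per phase. Equation~\eqref{eq:assumption1weaker} guarantees this is feasible within a phase and that every arm remains viable throughout exploration (each is pulled at least $\delta_a$ times per phase). The two sample counts then become: $T_0$ i.i.d.\ draws for estimating $\ro$, giving $\epsilon_2=\sqrt{\nicefrac{\log T}{T_0}}$; and, since each arm is pulled at least $\nicefrac{f(\tau)}{\tau}\cdot T_0$ times during exploration and the explored arm is independent of the arriving type, at least $\approx\roSub_u\cdot\nicefrac{f(\tau)}{\tau}\cdot T_0$ pulls of arm $a$ for type $u$, giving $\epsilon_1^u=\tilde{O}\!\bigl(\sqrt{\nicefrac{\tau}{f(\tau)\,T_0}}\bigr)$ (using that $\roSub_u$ is a constant). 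With these radii the Hoeffding/union-bound argument of Proposition~\ref{shortappendix_propReg1} still bounds the bad event by $O(\nicefrac{kn}{T^2})$, so it contributes $O(1)$ regret, and Propositions~\ref{shortappendix_propReg2} and~\ref{shortappendix_propReg3} go through unchanged, now yielding a perturbation bound of $\tilde{O}\bigl(\epsilon_2 n^2 T\bigr)+\max_u\epsilon_1^u\cdot T$.

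Collecting the pieces, the regret of the modified $\mainAlg(\SSO)$ is
\[
\tilde{O}\!\left(T_0 \;+\; \max_u\epsilon_1^u\cdot T \;+\; \epsilon_2 n^2 T\right)+\err(\SSO)
=\tilde{O}\!\left(T_0 \;+\; T\sqrt{\tfrac{\tau}{f(\tau)\,T_0}} \;+\; \tfrac{n^2 T}{\sqrt{T_0}}\right)+\err(\SSO).
\]
The first two terms trade off in $T_0$: minimizing $T_0+T\sqrt{\nicefrac{\tau}{f(\tau)\,T_0}}$ gives the balance point $T_0=\Theta\bigl((\nicefrac{\tau}{f(\tau)})^{\nicefrac{1}{3}}T^{\nicefrac{2}{3}}\bigr)$, which is exactly the exploration length prescribed in the statement. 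Substituting this value, a direct computation shows that both the exploration term $T_0$ and the $\Mmu$-error term $T\sqrt{\nicefrac{\tau}{f(\tau)\,T_0}}$ equal $(\nicefrac{\tau}{f(\tau)})^{\nicefrac{1}{3}}T^{\nicefrac{2}{3}}$, while the $\ro$-error term becomes $n^2T/\sqrt{T_0}=(\nicefrac{f(\tau)}{\tau})^{\nicefrac{1}{6}}n^2T^{\nicefrac{2}{3}}=\sqrt{(\nicefrac{f(\tau)}{\tau})^{\nicefrac{1}{3}}}\,n^2T^{\nicefrac{2}{3}}$, matching the claimed bound.

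The main obstacle is bookkeeping rather than conceptual: one must verify carefully that each arm really is sampled $\Theta(\nicefrac{f(\tau)}{\tau}\,T_0)$ times — so that the $\Mmu$-estimation radius $\epsilon_1^u$ picks up the extra $\nicefrac{\tau}{f(\tau)}$ factor relative to Theorem~\ref{theoremMainRegret1} — and that, despite $f(\tau)$ possibly being sublinear, the per-arm sample count $(\nicefrac{f(\tau)}{\tau})^{\nicefrac{2}{3}}T^{\nicefrac{2}{3}}$ still grows polynomially in $T$, so the Hoeffding tails remain summable and the bad-event regret stays $O(1)$. Once these are in place, the optimization of $T_0$ is the only genuinely new ingredient, and it is what upgrades the exploration penalty from $\nicefrac{\tau}{f(\tau)}$ to $(\nicefrac{\tau}{f(\tau)})^{\nicefrac{1}{3}}$.
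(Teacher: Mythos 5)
Your proposal is correct and follows essentially the same route as the paper, which states that the proof "mirrors the proof of Theorem~\ref{theoremMainRegret1} with only replacing the length of the exploration stage" and omits the details. Your bookkeeping is accurate: with exploration length $T_0$, each arm receives $\Theta(\gamma T_0)$ pulls for $\gamma=\nicefrac{f(\tau)}{\tau}$, so $\max_u\epsilon_1^u\cdot T=\tilde O\bigl((\nicefrac{\tau}{f(\tau)})^{\nicefrac{1}{3}}T^{\nicefrac{2}{3}}\bigr)$ and $\epsilon_2 n^2 T=\tilde O\bigl(\sqrt{(\nicefrac{f(\tau)}{\tau})^{\nicefrac{1}{3}}}\,n^2T^{\nicefrac{2}{3}}\bigr)$ at the prescribed $T_0$, matching the claimed bound; your added derivation of $T_0$ as the balance point of the exploration and $\Mmu$-estimation terms is a useful explanation the paper leaves implicit.
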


The proof of this mirrors the proof of Theorem~\ref{theoremMainRegret1} with only replacing the length of the exploration stage ($(\nicefrac{\tau}{f(\tau)})^{\nicefrac{1}{3}}\cdot T^{\nicefrac{2}{3}}$ instead of $\frac{1}{\gamma} T^{\nicefrac{2}{3}}=\nicefrac{\tau}{f(\tau)}\cdot T^{\nicefrac{2}{3}}$); hence, we omit it. 
Ultimately, we bound the regret of $\mainAlg(\DPS)$ and $\mainAlg(\ORS)$ using Theorem~\ref{theoremMainRegret1 relaxation};
\begin{align*}
&\E[\R^{\mainAlg(\DP^\star)}] \leq   \tilde{O}\left(\left((\nicefrac{\tau}{f(\tau)})^{\nicefrac{1}{3}}+\sqrt{(\nicefrac{f(\tau)}{\tau})^{\nicefrac{1}{3}}}\cdot n^2\right)\cdot T^{\nicefrac{2}{3}}+k\cdot \tau\right), \\& \E[\R^{\mainAlg(\ORS)}] \leq   \tilde{O}\left(\left((\nicefrac{\tau}{f(\tau)})^{\nicefrac{1}{3}}+\sqrt{(\nicefrac{f(\tau)}{\tau})^{\nicefrac{1}{3}}}\cdot n^2\right)\cdot T^{\nicefrac{2}{3}}+k\cdot \tau + \nicefrac{nT}{\sqrt{\tau}} \right). 
\end{align*}

\section{Longer Phase Length}\label{sec: long phase reg}
The body of the paper develops and analyzes optimal regret algorithms for the regime $\tau=O(T^{\nicefrac{2}{3}})$. However, as Proposition~\ref{prop: tight LB pico opt} demonstrates, our algorithms have sub-optimal regret in the regime where $\tau=\Omega(T^{\nicefrac{2}{3}})$. In this section, we address the case where $\tau=\Omega(T^{\nicefrac{2}{3}})$. This section is structured as follows. First, Subsection~\ref{subsec:proof of prop tightlb} includes the proof of Proposition~\ref{prop: tight LB pico opt}. Then, in Subsection~\ref{subsec:longer Phase Complete Info} we develop a deterministic and complete information optimal algorithm (thereby extending $\DOAlg$). Third, in Subsection~\ref{subsec:longer Phase Incomplete Info}, we leverage this deterministic algorithm to develop an approximately optimal algorithm for the stochastic optimization task, which has better performance than $\DPS$ in this regime. Forth, in Subsection~\ref{subsec:appen long phase learning}, we analyze the performance of $\mainAlg$ using this stochastic optimization algorithm as $\SSO$. Lastly, in Subsection~\ref{subsec:appen long phase lower bound}, we show a lower bound for the regret in this regime.


\subsection{Proof of Proposition~\ref{prop: tight LB pico opt}}\label{subsec:proof of prop tightlb}
\begin{proof}
    We consider the following instance: $k=2$ arms, $n=2$ user types, arrival distribution $\ro=(0.8,0.2)$, exposure thresholds $\delta=(0,0.5\cdot\tau)$, and expected utility matrix $\Mmu = \bigl( \begin{smallmatrix}1 & 0\\ 0 & 1\end{smallmatrix}\bigr).$
Both $\PIOPT$ and $\OPT$ do not subsidize $a_2$, but $\OPT$ uses $a_2$ till it departs and in particular in the first phase; hence, we get $\E[\tilde{r}^{\OPT}]\geq \tau+0.8\cdot (T-\tau)$ and $\E[\tilde{r}^{\PIOPT}]=0.8T$. To conclude, $\E[\tilde{r}^{\OPT}]-\E[\tilde{r}^{\PIOPT}] \geq 0.2\tau=\Omega(\tau)$.
\end{proof}

\subsection{Planning task for multiple phases}\label{subsec:longer Phase Complete Info}
Recall the planning task from Subsection~\ref{subsec:pick z}, where the goal is to maximize the reward in one phase while 
assuming complete information about the user arrival vector. The optimal solution Subsection~\ref{subsec:lcb_approx} proposes is executing $\DOAlg(c(\q),Z)$ for every $Z\subseteq K$ and  choosing the best match. In this subsection, we present the $\LongDOAlg$ algorithm (implemented in Algorithm~\ref{alg: long doalg}), which extends the planning task to maximize the reward for more than one phase.

We overload the notation and denote by $\DOAlg(c(\q),Z_1,Z_2)$ the optimal algorithm in one phase that pulls only arms from $Z_1\subseteq K$ and satisfies the constraints of every arm in $Z_2\subseteq Z_1$. Note that if $Z_1=Z_2$, then $\DOAlg(c(\q),Z_1,Z_2)$ is equivalent to $\DOAlg(c(\q),Z_1)$. We present the implementation details of $\DOAlg(c(\q),Z_1,Z_2)$ in Subsection~\ref{subsubsec: DOAlg(c(q),Z1,Z2) algorithm}. In particular, we show that its run time is $O(\tau^3)$. 

$\LongDOAlg$ takes as input all the instance parameters. In addition, it also gets the user arrival sequence for all $T$ rounds. We further denote this sequence by $\q_1\oplus\dots\oplus\q_{\nicefrac{T}{\tau}}$, where $\forall i: \q_i\in U^{\tau}$.
It computes the maximum reward achievable in the first $i$ phases while maintaining the constraints of the arms $Z$ 
throughout these phases for every $Z\subseteq K$ ($1\leq i\leq \nicefrac{T}{\tau}$). We further denote this reward by $r^i(Z)$.
In Line~\ref{alglineL:init} $\LongDOAlg$ initializes the $r^0(Z)$ values for every $Z\subseteq K$. It then calculates $r^1(Z)$ for every $Z\subseteq K$, then $r^2(Z)$ for every $Z\subseteq K$ and so on (Lines~\ref{alglineL:forloop}-\ref{alglineL: heavy line}).
Finally, the algorithm returns the maximum  reward that can be achieved after $\nicefrac{T}{\tau}$ phases (Line~\ref{alglineL: return}). It is worth noting that the reward-maximizing matchings $M_1,M_2,\dots,M_{\nicefrac{T}{\tau}}$ and the reward-maximizing subsets, denoted by $Z^{\nicefrac{T}{\tau}}\subseteq\dots\subseteq Z^1\subseteq Z^0= K$, can be easily reconstructed. Now, we prove Proposition~\ref{proposition long DO runtime} and Theorem~\ref{thm: long DOAlg is optimal}.
\begin{proposition}\label{proposition long DO runtime}
The time complexity of $\LongDOAlg$ is $O(\frac{T}{\tau}(3^k\cdot \tau^3))$.
\end{proposition}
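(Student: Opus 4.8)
The plan is to analyze the cost of $\LongDOAlg$ phase by phase and reduce the whole bound to a single combinatorial count of nested subset pairs. First I would make the dynamic-programming recurrence explicit. Since the set of viable arms is non-increasing across phases, maintaining the exposure constraints of $Z$ throughout the first $i$ phases forces that in phase $i$ we pull only from some available set $Z'$ with $Z\subseteq Z'\subseteq K$, satisfy the constraints of the arms in $Z$, and have already maintained $Z'$ through the first $i-1$ phases. This gives
\[
r^i(Z)=\max_{Z'\,:\,Z\subseteq Z'\subseteq K}\Big(r^{i-1}(Z')+\DOAlg(c(\q_i),Z',Z)\Big),
\]
which is exactly what Lines~\ref{alglineL:forloop}--\ref{alglineL: heavy line} evaluate, with the base cases $r^0(Z)$ set in Line~\ref{alglineL:init} and the final answer $\max_{Z\subseteq K}r^{\nicefrac{T}{\tau}}(Z)$ returned in Line~\ref{alglineL: return}.

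The key step is to count, within a single phase $i$, the total number of $\DOAlg$ evaluations. Computing $r^i(Z)$ for a fixed $Z$ invokes $\DOAlg(c(\q_i),Z',Z)$ once for every superset $Z'\supseteq Z$; hence the number of calls in phase $i$ equals the number of ordered pairs $(Z,Z')$ with $Z\subseteq Z'\subseteq K$. I would establish that this count is exactly $3^k$ by the standard identity $\sum_{j=0}^k\binom{k}{j}2^{\,k-j}=3^k$, or equivalently by observing that each of the $k$ arms falls into precisely one of the three classes $K\setminus Z'$, $Z'\setminus Z$, or $Z$, giving three independent choices per arm.

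Finally, I would combine the per-call cost with this count. Each call $\DOAlg(c(\q_i),Z',Z)$ runs in $O(\tau^3)$ time, as shown in Subsection~\ref{subsubsec: DOAlg(c(q),Z1,Z2) algorithm}, and the associated bookkeeping (adding $r^{i-1}(Z')$ and updating a maximum) is $O(1)$. Thus each of the $\nicefrac{T}{\tau}$ phases costs $O(3^k\tau^3)$, and summing over all phases yields $O\!\big(\tfrac{T}{\tau}\,3^k\tau^3\big)$; the initialization over $2^k$ subsets in Line~\ref{alglineL:init} and the final maximization over $2^k$ values in Line~\ref{alglineL: return} are dominated by this term. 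The only genuinely delicate point is the subset-pair count: a crude bound would multiply $2^k$ choices of $Z$ by $2^k$ choices of $Z'$ to get $4^k$, which is too loose, so I would take care to justify the sharper $3^k$ figure precisely, as it is what produces the stated bound.
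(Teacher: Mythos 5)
Your proposal is correct and follows essentially the same route as the paper: count the nested subset pairs $(Z_1,Z_2)$ with $Z_2\subseteq Z_1\subseteq K$ as $3^k$ (the paper writes this as $\sum_{Z_1\subseteq K}2^{|Z_1|}=\sum_i\binom{k}{i}2^i=3^k$, which is your identity up to reindexing), multiply by the $O(\tau^3)$ cost of each $\DOAlg$ call, and multiply by the $\nicefrac{T}{\tau}$ phases, with the $2^k$ initialization and final maximization dominated.
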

\begin{proofof}{proposition long DO runtime}
The number of options for $Z_1,Z_2$ such that $Z_2\subseteq Z_1\subseteq K$ is
\[
\abs{\{(Z_1,Z_2)|Z_2\subseteq Z_1\subseteq K\}}=\sum_{Z_1\subseteq K} 2^{\abs{Z_1}} = \sum_{i=1}^k {k \choose i} 2^i = (1+2)^k=3^k. 
\]
Therefore, the time complexity of Line~\ref{alglineL: heavy line} is $3^k\cdot\tau^3$. The run time of Line~\ref{alglineL: return} is $2^k$ so, the total run time of $\LongDOAlg$ is $O(\frac{T}{\tau}(3^k\cdot \tau^3))$.
\end{proofof}
Next, we prove the optimality of $\LongDOAlg$.
\begin{theorem}\label{thm: long DOAlg is optimal}
$\LongDOAlg$ is optimal.
\end{theorem}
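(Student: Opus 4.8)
The plan is to prove optimality by induction on the number of phases, establishing that each table entry $r^i(Z)$ computed by $\LongDOAlg$ equals the maximum total reward obtainable over the first $i$ phases by \emph{any} schedule that keeps every arm of $Z$ viable through phase $i$ (that is, satisfies the exposure constraint of each $a\in Z$ in each of phases $1,\dots,i$), given the revealed arrival blocks $\q_1,\dots,\q_i$. Since the returned value is $\max_{Z\subseteq K} r^{\nicefrac{T}{\tau}}(Z)$, optimality follows once this invariant holds: any schedule keeps \emph{some} subset alive through all $\nicefrac{T}{\tau}$ phases (namely the arms that never violate their constraint), and that subset is one of the candidates in the final maximization; conversely every $r^{\nicefrac{T}{\tau}}(Z)$ is realized by an explicit schedule.

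The engine of the induction is the Bellman-style recurrence
\[
r^i(Z)=\max_{Z\subseteq Z'\subseteq K}\Big\{\, r^{i-1}(Z')+\DOAlg(c(\q_i),Z',Z)\,\Big\},
\]
combined with the single-phase optimality of the extended matching $\DOAlg(c(\q_i),Z',Z)$ (which may pull only arms of $Z'$ while guaranteeing the constraints of $Z\subseteq Z'$), established in Subsection~\ref{subsubsec: DOAlg(c(q),Z1,Z2) algorithm}. The base case $i=0$ is immediate: no rounds have elapsed and every arm is viable, so $r^0(Z)=0$. For the inductive step I would argue the two inequalities separately. For the upper bound, fix any schedule $A$ that keeps $Z$ viable through phase $i$, and let $Z'$ be the set of arms that survive into phase $i$ under $A$, i.e.\ the arms whose constraints $A$ meets in each of phases $1,\dots,i-1$; then $Z\subseteq Z'$. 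The restriction of $A$ to the first $i-1$ phases keeps $Z'$ viable and hence earns at most $r^{i-1}(Z')$ by the induction hypothesis, while in phase $i$ the arms available to $A$ are exactly $Z'$ and $A$ must still protect $Z$, so its phase-$i$ reward is at most $\DOAlg(c(\q_i),Z',Z)$. Summing and maximizing over $Z'$ gives $A$'s total reward $\le r^i(Z)$.

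For the matching lower bound, let $Z'$ attain the maximum in the recurrence and splice the optimal $(i-1)$-phase schedule for $Z'$ with the phase-$i$ matching produced by $\DOAlg(c(\q_i),Z',Z)$. This composite pulls only available arms (its phase-$i$ pulls lie in $Z'$, which is kept viable through phase $i-1$) and keeps $Z$ viable through phase $i$, since $Z\subseteq Z'$ is maintained during the first $i-1$ phases and $Z$ is maintained in phase $i$ by construction; its reward is exactly $r^{i-1}(Z')+\DOAlg(c(\q_i),Z',Z)=r^i(Z)$. The two bounds establish the invariant, and the concluding maximization over $Z$ then yields the theorem.

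The main obstacle is justifying that a single subset label $Z$ is a sufficient state for the dynamic program, rather than the full history of which arms were maintained in which phase. The crux is the monotone, memoryless structure of departures: the arms available at the start of phase $i$ are precisely those maintained in \emph{every} earlier phase, so the only information phase $i$ needs about the past is the currently surviving set $Z'$, while everything else is summarized by the scalar $r^{i-1}(Z')$. A secondary point to handle carefully is that arms in $Z'\setminus Z$ may legitimately be pulled in phase $i$ and then allowed to depart afterward; this is exactly what distinguishes $\DOAlg(c(\q_i),Z',Z)$ (pull from $Z'$, protect only $Z$) from the single-argument $\DOAlg(c(\q_i),Z)$, and one must check that the constraint bookkeeping at the phase boundary matches the notion of viability used throughout the paper.
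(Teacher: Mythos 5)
Your proof is correct and follows essentially the same route as the paper: induction on the number of phases over the Bellman recurrence, using the surviving arm set $Z'$ as the state and bounding each phase-$i$ contribution by $\DOAlg(c(\q_i),Z',Z)$. The only cosmetic differences are that you fold the paper's separate monotonicity step ($\DOAlg(c(\q),Z_1,Z_2)\leq\DOAlg(c(\q),Z_1,Z_3)$ for $Z_3\subseteq Z_2$) into a direct ``feasible for the less-constrained problem'' argument, and you spell out the achievability direction that the paper only notes in passing.
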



\begin{algorithm}[t]
\caption{$\LongDOAlg$ algorithm}\label{alg: long doalg}
\begin{algorithmic}[1]
\Statex \textbf{Input}: $U,K, \delt, \tau, T, \mu, \q_1\oplus\dots\oplus\q_{\nicefrac{T}{\tau}}$
\State for every $Z\subseteq K: \quad r^0(Z)\gets  0$ \label{alglineL:init}
\For {$i=1,2,\dots,\nicefrac{T}{\tau}$}\label{alglineL:forloop}
\State for every $Z_2\subseteq K:$ \[r^i(Z_2)\gets \max_{Z_1: Z_2\subseteq Z_1\subseteq K} \DOAlg(c(\q_i),Z_1,Z_2)+r^{i-1}(Z_1)\] \label{alglineL: heavy line}
\EndFor
\State return $\max_{Z\subseteq K} r^{\nicefrac{T}{\tau}}(Z)$ \quad  (and $Z^{\nicefrac{T}{\tau}},\dots, Z^1$ or $M_1,M_2,\dots,M_{\nicefrac{T}{\tau}}$ if needed)
\label{alglineL: return}
\end{algorithmic}
\end{algorithm}

\begin{proofof}{thm: long DOAlg is optimal}
First, we show Proposition~\ref{prop: to small is better}.
\begin{proposition}\label{prop: to small is better}
For every user arrival sequence $\q \in U^{\tau}$ and arm subsets $Z_1,Z_2,Z_3$ such that $Z_3\subseteq Z_2\subseteq Z_1 \subseteq K$, it holds that \[\DOAlg(c(q),Z_1,Z_2)\leq \DOAlg(c(q),Z_1,Z_3).\]
\end{proposition}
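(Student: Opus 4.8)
The plan is to argue by feasibility inclusion: relaxing the set of exposure constraints that must be met can only enlarge the space of admissible matchings, and therefore cannot decrease the optimal value. Recall that $\DOAlg(c(\q),Z_1,Z_2)$ denotes the maximum phase reward over all matchings $M$ that (i) use only arms in $Z_1$, i.e. $M(u,a)=0$ for $a\notin Z_1$, and (ii) satisfy every arm in $Z_2$, i.e. $M(\cdot,a)\geq \delta_a$ for all $a\in Z_2$. I would show that every matching feasible for the pair $(Z_1,Z_2)$ is automatically feasible for $(Z_1,Z_3)$, which yields the desired monotonicity of the optimal value.

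First I would fix an optimal matching $M^\star$ attaining the value $\DOAlg(c(\q),Z_1,Z_2)$; such a matching exists since there are finitely many matchings of the $\tau$ users to arms in $Z_1$. By definition of feasibility for $(Z_1,Z_2)$, the matching $M^\star$ uses only arms in $Z_1$ and satisfies $M^\star(\cdot,a)\geq \delta_a$ for every $a\in Z_2$. Since $Z_3\subseteq Z_2$, this inequality holds in particular for every $a\in Z_3$. Hence $M^\star$ is feasible for $\DOAlg(c(\q),Z_1,Z_3)$ as well: it pulls arms only from the unchanged set $Z_1$ and meets the exposure constraints of all arms in the smaller set $Z_3$.

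Because $\DOAlg(c(\q),Z_1,Z_3)$ is by definition the maximum reward over its (weakly larger) feasible set, and $M^\star$ is one feasible point of that set achieving reward exactly $\DOAlg(c(\q),Z_1,Z_2)$, the inequality $\DOAlg(c(\q),Z_1,Z_3)\geq \DOAlg(c(\q),Z_1,Z_2)$ follows immediately. There is essentially no hard step: the statement is pure monotonicity of an optimal value under constraint relaxation. The only point I would state carefully is that the ``which arms may be pulled'' restriction $Z_1$ is identical on both sides and thus untouched; only the ``which arms must be protected'' set shrinks from $Z_2$ to $Z_3$, which is precisely what makes the feasible region grow and the maximum non-decreasing.
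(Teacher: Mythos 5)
Your proof is correct and uses essentially the same argument as the paper: any matching feasible under the exposure constraints of $Z_2$ is also feasible under the weaker constraints of $Z_3\subseteq Z_2$, so the optimal value can only increase when the protected set shrinks. Your write-up merely makes the feasibility-inclusion step more explicit than the paper's brief version.
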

Now, we leverage Proposition~\ref{prop: to small is better} to Lemma~\ref{lem: rz is optimal}.
\begin{lemma}\label{lem: rz is optimal}
For every $i\geq 1 , Z\subseteq K$, it holds that $r^i(Z)$ is the maximal reward that an algorithm can achieve in the first $i$ phases while maintaining the exposure constraints of arms $Z$.
\end{lemma}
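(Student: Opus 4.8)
The plan is to prove the lemma by induction on the phase index $i$, showing that the quantity $r^i(Z)$ computed in Line~\ref{alglineL: heavy line} of $\LongDOAlg$ equals the optimum of the multi-phase planning problem subject to keeping every arm of $Z$ viable throughout the first $i$ phases. Since the user arrival sequence $\q_1\oplus\dots\oplus\q_{\nicefrac{T}{\tau}}$ is fixed and fully known, I may assume without loss of generality that the competing algorithm is deterministic, so that the set of arms still alive at the start of each phase is a well-defined, monotonically shrinking chain of subsets of $K$. The base case $i=0$ is immediate: $r^0(Z)=0$ (Line~\ref{alglineL:init}) is the reward over an empty set of phases, and $Z$ is maintained vacuously.

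For the inductive step I fix $Z$, assume the claim for $i-1$ and every arm subset, and prove the two inequalities separately. For the upper bound, take any deterministic algorithm $A$ that keeps $Z$ viable throughout phases $1,\dots,i$, and let $Z_1\subseteq K$ be the set of arms still alive at the start of phase $i$ under $A$; since $Z$ survives all $i$ phases, $Z\subseteq Z_1$. Restricted to the first $i-1$ phases, $A$ keeps every arm of $Z_1$ alive (alive sets only shrink), so by the induction hypothesis its reward there is at most $r^{i-1}(Z_1)$. In phase $i$, $A$ may pull only alive arms, i.e.\ arms of $Z_1$, and it satisfies the constraints of some set $Z'\supseteq Z$; hence its phase-$i$ reward is at most $\DOAlg(c(\q_i),Z_1,Z')$, and by Proposition~\ref{prop: to small is better} (relaxing the constraint set from $Z'$ down to $Z$ cannot decrease the optimum) this is at most $\DOAlg(c(\q_i),Z_1,Z)$. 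Summing and recalling $Z\subseteq Z_1$, the reward of $A$ is at most $\DOAlg(c(\q_i),Z_1,Z)+r^{i-1}(Z_1)$, one of the terms maximized in Line~\ref{alglineL: heavy line}, hence at most $r^i(Z)$.

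For the matching lower bound, let $Z_1^{\star}$ attain the maximum defining $r^i(Z)$. By the induction hypothesis there is an algorithm earning $r^{i-1}(Z_1^{\star})$ over the first $i-1$ phases while keeping all of $Z_1^{\star}$ alive, so every arm of $Z_1^{\star}$ is available at the start of phase $i$. I then append the optimal one-phase matching realizing $\DOAlg(c(\q_i),Z_1^{\star},Z)$, which pulls only arms of $Z_1^{\star}$ and satisfies the constraints of $Z$, keeping $Z$ viable through phase $i$ as well. This concatenated algorithm maintains $Z$ across all $i$ phases and earns exactly $r^{i-1}(Z_1^{\star})+\DOAlg(c(\q_i),Z_1^{\star},Z)=r^i(Z)$, giving the reverse inequality; combining both directions yields the claim.

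The main obstacle is the bookkeeping of alive sets across the phase boundary: one must argue that the single DP state $Z_1$ (the arms alive at the start of phase $i$) faithfully summarizes an arbitrarily complicated departure history, and that it is legitimate to charge the first $i-1$ phases to $r^{i-1}(Z_1)$ even though $A$ might have temporarily kept strictly larger sets alive earlier on. This is precisely where Proposition~\ref{prop: to small is better} is needed — it lets me discard any constraints that $A$ satisfies beyond those of $Z$ in phase $i$, so that the per-phase decomposition matches the $\max$ in the recursion exactly rather than undercounting it.
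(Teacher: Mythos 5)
Your proof is correct and follows essentially the same route as the paper's: induction on the phase index, bounding any competitor's reward by splitting at the boundary between phases $i-1$ and $i$ via the set of arms still alive there, and invoking Proposition~\ref{prop: to small is better} to relax the phase-$i$ constraint set down to $Z$ so the decomposition matches the recursion in Line~\ref{alglineL: heavy line}. You additionally make the achievability (lower-bound) direction explicit by concatenating an optimal $(i-1)$-phase prefix for $Z_1^\star$ with the matching realizing $\DOAlg(c(\q_i),Z_1^\star,Z)$ — a step the paper leaves implicit — which is a welcome completion rather than a divergence.
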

It stems directly from Lemma~\ref{lem: rz is optimal} that $\max_{Z\subseteq K} r^{\nicefrac{T}{\tau}}(Z)$ is the maximal reward that any algorithm can achieve. Therefore, $\LongDOAlg$ is optimal.
\end{proofof}

\begin{proofof}{prop: to small is better}
$\DOAlg(c(\q),Z_1,Z_3)$ can satisfy the constraints of $Z_2$ since $Z_3\subseteq Z_2$ and $Z_2\subseteq Z_1$.
Namely, it can do the same as $\DOAlg(c(\q),Z_1,Z_2)$.
From the optimally of $\DOAlg$ it holds that the reward of $\DOAlg(c(\q),Z_1,Z_3)$ is at least as the one of $\DOAlg(c(\q),Z_1,Z_2)$. 
\end{proofof}


\begin{proofof}{lem: rz is optimal}
We prove this lemma by induction on the phase number $i$.

\textbf{Base case $i=1$:} Let $A$ be any algorithm that satisfies the exposure constraints of arms $Z\subseteq K$ after one phase. We further denote by $Z_A$ the arms that are still available after one phase. It holds that $Z\subseteq Z_A \subseteq K$. 

We denote by $A_{i:j}(\q)$ the reward of $A$ in rounds $i,\dots,j$. Notice that
\begin{align*}
  A_{1:\tau}(\q)\leq \DOAlg(c(\q),K,Z_A)\leq \DOAlg(c(\q),K,Z) \leq r^1(Z).
\end{align*}
The first inequality is due to $\DOAlg$'s optimally, the second stems from Proposition~\ref{prop: to small is better}, and the third stems directly from $r^1(Z)$ definition (see Algorithm~\ref{alg: long doalg}).

\textbf{Inductive step:} Let $A$ be any algorithm that satisfies the exposure constraints of arms $Z\subseteq K$ throughout the first $i$ phases. We assume that the lemma holds for every $j < i$ and prove that it holds for $i$ too.

We further denote by $Z_A'$ and $Z_A$ the subsets of arms that are still available after $i-1$ phases and $i$ phases respectively. Notice that $Z\subseteq Z_A \subseteq Z_A'\subseteq K$. It holds that
\begin{align*}
A_{1:i\tau}(\q)&=A_{1:(i-1)\tau}(\q)+A_{(i-1)\tau+1:i\tau}(\q)\leq r^{i-1}(Z_A')+\DOAlg(c(\q),Z_A',Z_A)\\
&\leq r^{i-1}(Z_A')+\DOAlg(c(\q),Z_A',Z)\leq r^i(Z).
\end{align*}
The first inequality is due to $\DOAlg$'s optimality and the inductive assumption. The second stems from Proposition~\ref{prop: to small is better} and the third from the definition of $r^i(Z)$.
\end{proofof}

\subsubsection{The $\DOAlg(c(\q),Z_1,Z_2)$ algorithm}\label{subsubsec: DOAlg(c(q),Z1,Z2) algorithm}
$\DOAlg(c(\q),Z_1,Z_2)$ is similar to $\DOAlg(c(\q),Z_1)$. The only difference is that it treats $\delta_a$ as $0$ for every $a\in Z_1\setminus Z_2$. It is straightforward that this algorithm: (1) has a runtime of $O(\tau^3)$, (2) satisfies the constraints for $Z_2$, and (3) only pulls arms from $Z_1$. $\DOAlg(c(\q),Z_1,Z_2)$'s optimality stems directly from $\DOAlg(c(\q),Z)$'s optimality.

\subsection{Adapting $\ORS$ to the long phase regime}\label{subsec:longer Phase Incomplete Info}

In Subsection~\ref{subsec:lcb_approx}, we presented stochastic optimization algorithms with an $\Omega(\tau)$ factor in their approximation error. This error is unacceptable in the long phase regime where $\tau=\Omega(T^{\nicefrac{2}{3}})$. In this subsection, we propose the $\LPOR$ algorithm (implemented in Algorithm~\ref{alg:longlcbapprox}) by combining $\LongDOAlg$ (Subsection~\ref{subsec:longer Phase Complete Info}) with $\ORS$ (Subsection~\ref{subsec:lcb_approx}).
There is one key difference between $\LPOR$ and $\ORS$. $\LPOR$ uses a phase-dependent matching matrix $M_i$ in phase $i$ for every $1\leq i\leq \nicefrac{T}{\tau}$, which is a part of the output of $\LongDOAlg({\cv^+\oplus\cv^+\dots\oplus\cv^+})$. In contrast, $\ORS$ uses $\DOAlg({\cv^+},Z^\star)$ once to output a phase-independent matching matrix $M$. This change allows $\LPOR$ to skip the $\Omega(\tau)$ factor in its approximation error ($\LPOR$ does not satisfy Properties~\ref{prop_PI12343},\ref{prop_commited}). Now, we show Lemma~\ref{lemma:lplcb is optimal}, which bounds the approximation error of $\LPOR$.
\begin{lemma}\label{lemma:lplcb is optimal}
$\E_{\q\sim \ro^T}[\tilde{r}^{\LPOR}(\q)]\geq \E_{\q\sim \ro^T}[\LongDOAlg(\q)]-O(\nicefrac{nT}{\sqrt{\tau}})$
\end{lemma}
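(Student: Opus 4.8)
The plan is to sandwich $\E_{\q}[\tilde{r}^{\LPOR}(\q)]$ between the reward of the deterministic plan that $\LongDOAlg$ computes on the repeated lower-confidence profile and the expected reward of $\LongDOAlg$ on the realized sequence, exactly mirroring the two-step argument used for $\ORS$ in Lemma~\ref{lemma:lcbz and pico z}, but lifted from a single committed phase to the multi-phase joint optimization of $\LongDOAlg$. Write $\hat{\q} = \cv^+ \oplus \dots \oplus \cv^+$ for the repeated LCB profile, and let $Z^0 = K \supseteq Z^1 \supseteq \dots \supseteq Z^{\nicefrac{T}{\tau}}$ together with $M_1,\dots,M_{\nicefrac{T}{\tau}}$ be the subset sequence and per-phase matchings output by $\LongDOAlg(\hat\q)$, so that $\LongDOAlg(\hat\q) = \sum_{i} \DOAlg(\cv^+, Z^{i-1}, Z^i)$. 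The two inequalities I would establish are
\[
\E_{\q}[\tilde{r}^{\LPOR}(\q)] \;\geq\; \LongDOAlg(\hat\q) - \tfrac{2nT}{\tau^2}
\quad\text{and}\quad
\LongDOAlg(\hat\q) \;\geq\; \E_{\q}[\LongDOAlg(\q)] - \tilde{O}\!\left(\tfrac{nT}{\sqrt{\tau}}\right),
\]
whose sum, together with $\tfrac{2nT}{\tau^2} = O(nT/\sqrt{\tau})$, yields the claim.

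For the first inequality I would decompose the reward of $\LPOR$ phase by phase and treat each phase as an instance of the $\ORS$ online rounding, now with the phase-dependent target $M_i$. The crucial preliminary step is a phase-wise analogue of Propositions~\ref{prop:badevent_ifonlyif} and~\ref{prop:DOlalgiscommitted}: the bad-event relabeling inside $\LPOR$ fires exactly on phases where some type undershoots its LCB count, and on every other phase $\LPOR$ realizes $M_i$ on the non-slack users verbatim, so that all arms of $Z^i$ receive at least their $\delta$ pulls. This guarantees the planned arms survive as scheduled, and since $Z^{i-1} \subseteq \dots \subseteq Z^0$, phase $i$ always begins with the full set $Z^{i-1}$ available; hence a bad event in an earlier phase cannot cascade into a premature departure. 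Conditioned on the clean event of phase $i$ (probability $\geq 1 - 2n/\tau^2$ by Inequality~\eqref{eq:concentration}) the phase reward is at least the value of $M_i$, and in the bad event it is at least $0$; averaging gives a per-phase reward of at least $\DOAlg(\cv^+, Z^{i-1}, Z^i) - 2n/\tau$, and summing over the $\nicefrac{T}{\tau}$ phases gives the first inequality.

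For the second inequality I would invoke the optimality of $\LongDOAlg$ on the input $\hat\q$ and exhibit one particular feasible plan: reuse the subset sequence $\hat{Z}^0 \supseteq \dots \supseteq \hat{Z}^{\nicefrac{T}{\tau}}$ of the true optimum $\LongDOAlg(\q)$. On the event that every phase is clean (probability $\geq 1 - 2nT/\tau^3$), each phase $i$ carries exactly $n\sqrt{\tau\log\tau}$ more real users than $\cv^+$; I replace those surplus real pulls in the optimal matching $\hat{M}_i$ by the $n\sqrt{\tau\log\tau}$ slack users of $\cv^+$, leaving every arm's pull count unchanged. This preserves feasibility of the entire subset sequence (arm counts, hence the $\hat{Z}^i$ constraints, are untouched, and only arms of $\hat{Z}^{i-1}$ are used, which Inequality~\eqref{eq:assumption1} certifies is enough), while losing at most one unit of reward per replaced user, i.e.\ at most $n\sqrt{\tau\log\tau}$ per phase. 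Thus on the clean event $\LongDOAlg(\hat\q) \geq \LongDOAlg(\q) - \tfrac{T}{\tau}\,n\sqrt{\tau\log\tau}$; bounding the complementary event by the trivial $\LongDOAlg(\q) \leq T$ contributes only $\tfrac{2nT^2}{\tau^3} = O(n)$ in the regime $\tau = \Omega(T^{\nicefrac{2}{3}})$, and taking expectations yields the second inequality.

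The main obstacle I anticipate is the first step rather than the arithmetic: I must argue rigorously that $\LPOR$'s non-committed, phase-dependent bookkeeping still protects every arm of $Z^i$ through phase $i$ almost surely, so that the per-phase decomposition is legitimate and no early bad event destroys the schedule $Z^{i-1}$ on which $M_i$ relies. This is where the adaptation of Propositions~\ref{prop:badevent_ifonlyif}--\ref{prop:DOlalgiscommitted} to a \emph{shrinking} commitment set does real work; once that invariant is in place, the remaining estimates are direct transcriptions of Propositions~\ref{prop_short1234321} and~\ref{prop_short1234322} to the multi-phase setting, and the stated $O(nT/\sqrt{\tau})$ bound follows up to the logarithmic factor carried by the confidence width $\sqrt{\tau\log\tau}$.
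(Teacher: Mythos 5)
Your proposal is correct and its skeleton is the same as the paper's: you split the claim into (i) a per-phase comparison of $\LPOR$'s expected reward against $\DOAlg(\cv^+,Z^{i-1},Z^i)$ with a $\nicefrac{2n}{\tau}$ loss from the bad event, summed over phases, and (ii) a comparison of $\LongDOAlg(\cv^+\oplus\dots\oplus\cv^+)$ against $\E_{\q}[\LongDOAlg(\q)]$; these are exactly Propositions~\ref{prop: lplcb is near to doalg} and~\ref{prop: cplus is approx}, and your handling of the shrinking commitment sets $Z^0\supseteq\dots\supseteq Z^{\nicefrac{T}{\tau}}$ and the bad-event relabeling matches the paper's. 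The one genuine divergence is in step (ii): the paper proves Proposition~\ref{prop: cplus is approx} abstractly, by observing that $\LongDOAlg$ is invariant under in-phase permutations and then invoking stability (Property~\ref{prop:stability}) to get a Lipschitz bound in the per-phase aggregate distance $d(\cv^+,c(\q_i))$, which it then controls via the concentration inequality. You instead give a constructive exchange argument---take the optimal multi-phase plan for the realized $\q$, keep its subset sequence, and swap the $n\sqrt{\tau\log\tau}$ surplus real users per phase for slack users, preserving all arm pull counts and hence feasibility while losing at most one unit of reward per swap. This is the multi-phase lift of the paper's own proof of Proposition~\ref{prop_short1234322} rather than of its stability machinery; it is slightly more hands-on but avoids needing Property~\ref{prop:stability} for $\LongDOAlg$ altogether, and both routes land on the same $\tilde{O}(\nicefrac{nT}{\sqrt{\tau}})$ bound (with the same implicit absorption of the $\sqrt{\log\tau}$ factor that the paper also performs).
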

The proof of the lemma appears below. The optimality of $\LongDOAlg$ in Theorem~\ref{thm: long DOAlg is optimal} suggests that 
\begin{align}\label{eq:LDO is better than OPT}
    \forall \q\in U^\tau: \LongDOAlg(\q)\geq \tilde{r}^{\OPT}(\q). 
\end{align}
Combining Lemma~\ref{lemma:lplcb is optimal} and Equation~\eqref{eq:LDO is better than OPT}, we conclude that
\begin{align}\label{eq:LPOR related to OPT}
\E_{\q\sim \ro^T}[\tilde{r}^{\LPOR}(\q)]\geq \E_{\q\sim \ro^T}[\tilde{r}^{\OPT}(\q)]-O(\nicefrac{nT}{\sqrt{\tau}}).    
\end{align}
Finally, observe that if $\tau=\Omega(T^{\nicefrac{2}{3}})$, the approximation error of $\LPOR$ ($O(\nicefrac{T}{\sqrt{\tau}})$) is smaller than the approximation errors of $\DPS$ and $\ORS$ ($O(\tau), O(\nicefrac{T}{\sqrt{\tau}}+\tau)$ respectively). 
We leave the analysis of $\mainAlg$'s regret to the next subsection, and now prove Lemma~\ref{lemma:lplcb is optimal}.

\begin{algorithm}[t]
\caption{$\LPOR$ algorithm}\label{alg:longlcbapprox}
\begin{algorithmic}[1]
 \Statex\textbf{Input}: $Z,U,\delt,\tau,T,\Mmu,\ro$
\State execute $\LongDOAlg({\cv^+\oplus\cv^+\dots\oplus\cv^+})$ and let $M_1,\dots,M_{\nicefrac{T}{\tau}}$ be the reward-maximizing matchings and   $Z^{\nicefrac{T}{\tau}}\subseteq\dots\subseteq Z^1\subseteq Z^0= K$ the reward-maximizing subsets \label{lpalglin:matchinglcb}
\For{$i=1,2,\dots, \nicefrac{T}{\tau}$}
\For {$t=1,\dots,\tau$}\label{lpalglin:for}
    \State observe $u_t$ \label{lpalglin:observeu}
    \If {$M_i(u^\star,\cdot) = 0$ and $M_i(u_t,\cdot)=0$ } \label{lpalglin:badevent} 
    \State select $u_t \in \{u\in U: M_i(u,\cdot) >0  \}$\label{lpalglin:changeubad} \Comment{bad event} 
    \ElsIf {$M_i(u_t,\cdot)=0$}  $u_t \gets u^\star$\label{lpalglin:setstar}
    \EndIf
    \State select $a_t \in \left\{a\in Z^{i-1} :M_i(u_t,a) >0   \right\}$, pull $a_t$ \label{lpalglin:at}
    \State $M_i(u_t,a_t) \gets M_i(u_t,a_t)-1$ \label{lpalglin:updateM}
\EndFor
\EndFor
\end{algorithmic}
\end{algorithm}

\begin{proofof}{lemma:lplcb is optimal}
The proof of this lemma goes along the lines of the proof of Lemma~\ref{lemma:lcbz and pico z}. We prove the following propositions.
\begin{proposition}\label{prop: lplcb is near to doalg}
For every $i, 1\leq i \leq \nicefrac{T}{\tau}$, it holds that \[\E_{\q\sim\ro^{\tau}}[\tilde{r}_{(i-1)\tau+1:i\tau}^{\LPOR}(\q)]\geq \DOAlg({\cv^+},Z^{i-1},Z^i)-\frac{2n}{\tau}.\]
\end{proposition}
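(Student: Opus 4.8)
The plan is to mirror the proof of Proposition~\ref{prop_short1234321}, the analogous statement for $\OR(Z)$, replacing the single-subset matching $\DOAlg(\cv^+,Z)$ by the two-subset matching $\DOAlg(\cv^+,Z^{i-1},Z^i)$ that governs phase $i$ of $\LPOR$. First I would fix $i$ and record the key structural observation: since $\LPOR$ precomputes all matchings $M_1,\dots,M_{\nicefrac{T}{\tau}}$ in Line~\ref{lpalglin:matchinglcb} from the deterministic input $\cv^+\oplus\cdots\oplus\cv^+$, the matrix $M_i$ is the reward-maximizing matching of $\DOAlg(\cv^+,Z^{i-1},Z^i)$ and is independent of the realized arrivals. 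Consequently the behaviour of $\LPOR$ during phase $i$ — and hence $\tilde{r}^{\LPOR}_{(i-1)\tau+1:i\tau}(\q)$ — is a function of the phase-$i$ arrival block alone, which is what makes the expectation $\E_{\q\sim\ro^{\tau}}[\cdot]$ meaningful.

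Next I would introduce the clean event exactly as before. Writing $x_u$ for the number of type-$u$ arrivals in phase $i$, I would consider the event $\forall u\in U:\ |x_u-\roSub_u\tau|\le\sqrt{\tau\log\tau}$. On this event $x_u\ge c_u$ for every real type, so $\LPOR$ serves the first $c_u=M_i(u,\cdot)$ arrivals of each type according to the real rows of $M_i$ (Lines~\ref{lpalglin:at}-\ref{lpalglin:updateM}) and relabels the surplus as $u^\star$ (Line~\ref{lpalglin:setstar}). The point to verify is that the relabelling is always feasible: because $\sum_{u\in U} x_u=\tau$ while $\sum_{u\in U}c_u=\tau-c_{n+1}$, the total surplus equals exactly the slack capacity $c_{n+1}$, so the bad-event branch in Line~\ref{lpalglin:changeubad} is never entered. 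I would then conclude that on the clean event $\LPOR$ realizes $M_i$ on the real rows, collecting precisely the matching value there, while each relabelled user contributes nonnegative reward against which $\DOAlg(\cv^+,Z^{i-1},Z^i)$ charges $0$; this yields $\tilde{r}^{\LPOR}_{(i-1)\tau+1:i\tau}(\q)\ge \DOAlg(\cv^+,Z^{i-1},Z^i)$ pointwise, verbatim to the $\OR(Z)$ argument.

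I would close with the standard concentration step. By Hoeffding's inequality (Theorem~\ref{myHoeffding}), $\Pr[|x_u-\roSub_u\tau|>\sqrt{\tau\log\tau}]\le\tfrac{2}{\tau^2}$, and a union bound over the $n$ types bounds the bad event by $\tfrac{2n}{\tau^2}$. Since the phase reward is always nonnegative and $\DOAlg(\cv^+,Z^{i-1},Z^i)\le\tau$, taking expectations gives
\[
\E_{\q}\big[\tilde{r}^{\LPOR}_{(i-1)\tau+1:i\tau}\big]\ \ge\ \Big(1-\tfrac{2n}{\tau^2}\Big)\,\DOAlg(\cv^+,Z^{i-1},Z^i)\ \ge\ \DOAlg(\cv^+,Z^{i-1},Z^i)-\tfrac{2n}{\tau},
\]
which is the claim.

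The one place needing genuine care — the main obstacle — is justifying that phase $i$ can actually execute $M_i$, i.e. that every arm of $Z^{i-1}$ is still viable when phase $i$ begins, and that the slack bookkeeping behaves as above even off the clean event. Both are structural facts about $\LPOR$ paralleling the $\ORS$ analysis: because $Z^{i-1}\subseteq Z^{j}$ for all $j<i$ and each earlier matching $M_j$ satisfies the exposure constraints of $Z^{j}$, the arms of $Z^{i-1}$ have their constraints met in every prior phase. This is the $\LPOR$ analogue of Proposition~\ref{prop:DOlalgiscommitted}, using that the full matching is realized even on bad events through Line~\ref{lpalglin:changeubad}. I would invoke this to guarantee availability of $Z^{i-1}$ in phase $i$; everything else is a direct transcription of the proof of Proposition~\ref{prop_short1234321}.
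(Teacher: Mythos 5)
Your proposal is correct and follows essentially the same route as the paper: condition on the clean event, observe that $\LPOR$ then realizes the precomputed matching $M_i$ on the real user rows while $\DOAlg(\cv^+,Z^{i-1},Z^i)$ collects zero from slack users, and absorb the bad event via $\Pr[\text{bad}]\le \nicefrac{2n}{\tau^2}$ together with $\DOAlg(\cv^+,Z^{i-1},Z^i)\le\tau$. Your additional remarks on slack-row feasibility and on the viability of $Z^{i-1}$ at the start of phase $i$ are points the paper handles implicitly (via the nesting $Z^{i}\subseteq Z^{i-1}$ and the $\ORS$ analysis), so they are welcome detail rather than a deviation.
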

\begin{proposition}\label{prop: cplus is approx}
It holds that $\LongDOAlg({\cv^+\oplus\dots\oplus\cv^+})\geq \E_{\q\sim\ro^T}[\LongDOAlg(\q)]-O(\frac{nT}{\sqrt{\tau}})$.
\end{proposition}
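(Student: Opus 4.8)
The plan is to reduce Proposition~\ref{prop: cplus is approx} to a multi-phase version of the single-phase estimate in Proposition~\ref{prop_short1234322}, leaning on the optimality of $\LongDOAlg$ (Theorem~\ref{thm: long DOAlg is optimal}): since $\LongDOAlg(\tilde\q)$ is by definition the best reward over all feasible solutions on $\tilde\q := \cv^+\oplus\dots\oplus\cv^+$, it suffices to exhibit \emph{one} feasible solution on $\tilde\q$ whose reward is close to $\LongDOAlg(\q)$ with high probability over $\q\sim\ro^T$. Throughout, write $x^i_u$ for the number of type-$u$ users in phase $i$ of $\q$ and recall $c_u=\roSub_u\tau-\sqrt{\tau\log\tau}$, $c_{n+1}=n\sqrt{\tau\log\tau}$.

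First I would set up the clean event $\{\,x^i_u\ge c_u \text{ for every } u\in U \text{ and every phase } i\,\}$. The one-sided form of Hoeffding's inequality (Theorem~\ref{myHoeffding}) gives $\Pr[x^i_u<c_u]\le \tau^{-2}$, so a union bound over the $n$ types and the $\nicefrac{T}{\tau}$ phases bounds the probability of the complementary \emph{bad event} by $\nicefrac{nT}{\tau^3}$.

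The crux is the deterministic inequality on the clean event: $\LongDOAlg(\tilde\q)\ge \LongDOAlg(\q)-\tfrac{T}{\tau}\,n\sqrt{\tau\log\tau}$. I would fix an optimal $\LongDOAlg(\q)$ solution, i.e.\ a nested chain of committed subsets $K=Z^0\supseteq Z^1\supseteq\dots$ together with per-phase matchings $M_i$ that pull only arms of $Z^{i-1}$ and satisfy the exposure constraints of $Z^i$, and build a feasible solution on $\tilde\q$ using the \emph{same} subsets $Z^i$. In phase $i$, $\tilde\q_i$ consists of exactly $c_u$ real users of each type plus $c_{n+1}$ slack users. For each type $u$, keep $c_u$ of the $x^i_u\ge c_u$ matched pairs of $M_i$ unchanged, and for each of the remaining $x^i_u-c_u$ pairs, relabel the user to a slack user $u^\star$ while keeping its arm assignment. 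Since $\sum_u(x^i_u-c_u)=\tau-\sum_u c_u=c_{n+1}$, the relabeled users exactly fill the slack-user budget, so this yields a valid matching of the $\tilde\q_i$ aggregate. Crucially, every arm's pull count is identical to that under $M_i$, so the exposure constraints of $Z^i$ and hence the viability of the entire nested structure are preserved across all phases; the reward only drops on the relabeled pairs, each by at most $1$ (as $\MmuSub_{u^\star,a}=0$), i.e.\ by at most $c_{n+1}=n\sqrt{\tau\log\tau}$ per phase. Summing over the $\nicefrac{T}{\tau}$ phases and invoking optimality of $\LongDOAlg(\tilde\q)$ gives the claim.

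Finally I would assemble the expectation by splitting on the two events and bounding $\LongDOAlg(\q)\le T$ on the bad event, obtaining
\[
\E_{\q}[\LongDOAlg(\q)]\le \LongDOAlg(\tilde\q)+\tfrac{T}{\tau}\,n\sqrt{\tau\log\tau}+T\cdot\tfrac{nT}{\tau^3},
\]
and rearranging yields $\LongDOAlg(\tilde\q)\ge \E_{\q}[\LongDOAlg(\q)]-\tilde{O}(\nicefrac{nT}{\sqrt{\tau}})$; in the regime $\tau=\Omega(T^{\nicefrac{2}{3}})$ treated here the bad-event term $\nicefrac{nT^2}{\tau^3}$ is $O(n)$ and is absorbed. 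The main obstacle is the third step: unlike the single-phase Proposition~\ref{prop_short1234322}, the value $\LongDOAlg$ couples the phases through the monotone (arms-only-depart) commitment structure, so the reduction must keep every arm viable \emph{globally}, not merely within one phase. The relabel-to-slack device resolves this precisely because it leaves every per-arm pull count—and therefore every exposure constraint and the whole chain $Z^0\supseteq Z^1\supseteq\dots$—untouched, while charging the lost reward only to the $c_{n+1}$ relabeled users per phase.
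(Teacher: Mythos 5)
Your proof is correct, but it takes a genuinely different route from the paper's. The paper proves a two-sided bound abstractly: it first argues that $\LongDOAlg$ is invariant under in-phase permutations of the arrival sequence, combines this with stability (Property~\ref{prop:stability}) to get the Lipschitz-type estimate $\abs{\LongDOAlg(\q_1\oplus\dots)-\LongDOAlg(\q_1'\oplus\dots)}\leq O(1)\cdot\sum_i d(c(\q_i),c(\q_i'))$, and then bounds $\E[d(\cv^+,c(\q_i))]$ per phase via the concentration bound of Inequality~\eqref{eq:concentration}. You instead exploit the optimality of $\LongDOAlg$ (Theorem~\ref{thm: long DOAlg is optimal}) and explicitly construct a feasible solution on $\cv^+\oplus\dots\oplus\cv^+$ from an optimal solution on $\q$ by relabeling the surplus $x^i_u-c_u$ users of each type to slack users while keeping their arm assignments; since per-arm pull counts are unchanged, the entire nested chain $Z^0\supseteq Z^1\supseteq\dots$ stays feasible globally, which is exactly the cross-phase coupling that the paper's stability invocation glosses over. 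Your argument is one-sided (which is all the proposition needs) and arguably more rigorous on the feasibility issue; the paper's is shorter and handles the bad event per phase, yielding a bad-event term of $O(nT/\tau^2)$ versus your coarser global bound of $O(nT^2/\tau^3)$ — a difference that is immaterial in the $\tau=\Omega(T^{\nicefrac{2}{3}})$ regime where this proposition is used, as you correctly note. Both arguments carry the same $\sqrt{\log\tau}$ factor that the paper absorbs into its $O(\cdot)$ notation.
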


From Proposition~\ref{prop: lplcb is near to doalg} we conclude that 
\begin{align}\label{eq: leverage proposition 13}
    \E_{\q\sim\ro^{T}}[\tilde{r}^{\LPOR}(\q)]&=\sum_{i=1}^{\nicefrac{T}{\tau}}\E_{\q\sim\ro^{\tau}}[\tilde{r}_{(i-1)\tau+1:i\tau}^{\LPOR}(\q)] \geq \sum_{i=1}^{\nicefrac{T}{\tau}} \DOAlg({\cv^+},Z^{i-1},Z^i)-\frac{2n}{\tau} \\
    &= \LongDOAlg({\cv^+\oplus\dots\oplus\cv^+})-O(\frac{nT}{\tau^2}).\nonumber
\end{align}
Combining Proposition~\ref{prop: cplus is approx} and Inequality~\eqref{eq: leverage proposition 13}, we get \[\E_{\q\sim\ro^{T}}[\tilde{r}^{\LPOR}(\q)]\geq \E_{\q\sim\ro^T}[\LongDOAlg(\q)]-O(\frac{nT}{\sqrt{\tau}}).\]
This ends the proof of Lemma~\ref{lemma:lplcb is optimal}.
\end{proofof}

\begin{proofof}{prop: lplcb is near to doalg}
Fix a phase $i$, $1\leq i \leq \nicefrac{T}{\tau}$. $\LPOR$ uses $M_i$ in phase $i$, which is the output of $\DOAlg({\cv^+},Z^{i-1},Z^i)$. In the clean event, its reward in phase $i$ is at least as $\DOAlg({\cv^+},Z^{i-1},Z^i)$'s reward. Because in the clean event, $\LPOR$ pulls the same arms as $\DOAlg$ for its real users and the reward of $\DOAlg$ from its slack users is $0$ (see Algorithm~\ref{alg:longlcbapprox} and Subsection~\ref{subsec:lcb_approx}). Due to Inequality~\eqref{eq:concentration}, the bad event's probability is at most $\nicefrac{2n}{\tau^2}$. Therefore,
\[
\E_{\q\sim\ro^{\tau}}[\tilde{r}_{(i-1)\tau+1:i\tau}^{\LPOR}(\q)]\geq (1-\nicefrac{2n}{\tau^2})\DOAlg({\cv^+},Z^{i-1},Z^i) +\nicefrac{2n}{\tau^2}\cdot 0 \geq \DOAlg({\cv^+},Z^{i-1},Z^i) - \nicefrac{2n}{\tau}.
\]
\end{proofof}
\begin{proofof}{prop: cplus is approx}
In Subsection~\ref{subsec:lcb_approx}, we show that $\DOAlg$ is invariant under permutations of the user arrival vector. The same arguments apply for $\LongDOAlg$; thus, it is invariant under in-phase permutations of the user arrival vector. Mathematically, if
$\q_i$ is a permutation of $\q_i'$ for every $i$, $1\leq i\leq\nicefrac{T}{\tau}$, then $\LongDOAlg(\q_1\oplus\dots\oplus\q_{\nicefrac{T}{\tau}}) = \LongDOAlg(\q_1'\oplus\dots\oplus\q_{\nicefrac{T}{\tau}}')$. Leveraging in-phase invariance along with stability (Property~\ref{prop:stability}), we get 
\begin{align}\label{eq:long DOAlg stability}
\abs{\LongDOAlg(\q_1\oplus\dots\oplus\q_{\nicefrac{T}{\tau}})-\LongDOAlg(\q_1'\oplus\dots\oplus\q_{\nicefrac{T}{\tau}}')}\leq O(1)\cdot \sum_{i=1}^{\nicefrac{T}{\tau}} d(c(\q_i),c(\q_i')),
\end{align} 
where $d(c(\q_i),c(\q_i'))$ is the distance between the aggregates $c(\q_i),c(\q_i')$. We conclude from Inequality~\eqref{eq:concentration} that if $\q\sim\ro^\tau$ then
\begin{align}\label{eq: distance cases}
   d({\cv^+},c(\q)) \leq
\begin{cases}
n\sqrt{\tau\log\tau} & w.p.\geq 1-\nicefrac{2n}{\tau^2} \text{ (clean event)} \\
\tau    & w.p.\leq \nicefrac{2n}{\tau^2}
\end{cases}. 
\end{align}
Using Inequality~\eqref{eq:long DOAlg stability} and Inequality~\eqref{eq: distance cases} we conclude that
\begin{align*}
    &\abs{\LongDOAlg(\cv^+\oplus\dots\oplus\cv^+)- \E_{\q_1,\dots,\q_{\nicefrac{T}{\tau}}\sim\ro^\tau}[\LongDOAlg(\q_1\oplus\dots\oplus\q_{\nicefrac{T}{\tau}})]}  \\
    &\leq O(1)\cdot \sum_{i=1}^{\nicefrac{T}{\tau}} \E_{\q_i\sim\ro^\tau}[d(\cv^+,c(\q_i))]
    \leq O\left(\frac{T}{\tau}\right)\cdot\left(n\sqrt{\tau\log\tau}\cdot( 1-\nicefrac{2n}{\tau^2}) +\tau\cdot \nicefrac{2n}{\tau^2}\right) \leq O\left(\frac{nT}{\sqrt{\tau}}\right). 
\end{align*}
\end{proofof}

\subsection{Learning in the long phase regime}\label{subsec:appen long phase learning}
In this subsection, we discuss the performance guarantees of $\mainAlg(\LPOR)$. Recall that Theorem~\ref{theoremMainRegret1} assumes that $\SSO$ satisfies Properties~\ref{prop_PI12343}-\ref{prop:stability}. However, with a minor modification of $\err$,  Theorem~\ref{theoremMainRegret1} also holds if $\SSO$ satisfies only stability (Property~\ref{prop:stability}). 
\begin{theorem}\label{thm: main regret no PICO}
Fix a stochastic optimization algorithm $\SSO$ satisfying Property~\ref{prop:stability}.
Let $\err(\SSO)=\sup\left\{ \E[\tilde{r}^{\OPT}]-\E[\tilde{r}^{\SSO}]\right\}$ be the approximation error of $\SSO$ w.r.t. $\OPT$. Then, 
\[
\E[r^{\mainAlg(\SSO)}]\geq \E[\tilde{r}^{\SSO}]- \tilde{O}\left((\sqrt{\gamma}n^2+\nicefrac{1}{\gamma})\cdot T^{\nicefrac{2}{3}}\right)-\err(\SSO).
\]
\end{theorem}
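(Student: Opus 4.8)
The plan is to re-run the proof of Theorem~\ref{theoremMainRegret1} essentially verbatim, isolating the single place where phase-independence and committedness (Properties~\ref{prop_PI12343} and~\ref{prop_commited}) were used, and showing that replacing the benchmark $\PIOPT$ by $\OPT$ in the definition of $\err$ removes that dependence. The original argument rests on three ingredients: the concentration bound of Proposition~\ref{shortappendix_propReg1}, the distributional-robustness bound of Proposition~\ref{shortappendix_propReg2}, and the comparison bound of Proposition~\ref{shortappendix_propReg3}. Proposition~\ref{shortappendix_propReg1} is a pure statement about the quality of the estimates $\hat\Mmu,\hat\ro$ produced during the exploration stage of $\mainAlg$ and references no property of $\SSO$, so it carries over unchanged, as does the final telescoping chain that assembles the pieces into the stated bound.

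I would first re-examine Proposition~\ref{shortappendix_propReg2}. Its proof introduces the coupling $f_\ro$, shows that a uniform sample $x\sim UNI[0,1)$ induces the law $\ro$ through $f_\ro$, and bounds the number $l$ of coordinates on which $(f_\ro(x_t))_t$ and $(f_{\hat\ro}(x_t))_t$ disagree by $l\le\tilde O(\sqrt\gamma n^2 T^{\nicefrac{2}{3}})$ with high probability. The only property of $\SSO$ invoked is stability, used in Inequality~\eqref{eq:use the stable} to turn these $l$ disagreements into an $O(l)$ reward gap. Since we retain Property~\ref{prop:stability}, this proposition goes through under the weakened hypotheses; the only delicate point is that the bound "$l$ disagreements cost $O(l)$" is applied to the full $T$-round reward. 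For a phase-independent algorithm this lift from the per-phase Property~\ref{prop:stability} is immediate, and for the non-phase-independent algorithms we actually use (e.g.\ $\LPOR$) the same full-sequence stability is established directly in the proof of Proposition~\ref{prop: cplus is approx} via Inequality~\eqref{eq:long DOAlg stability}. I would therefore state the stability hypothesis at the level of the full arrival sequence so that the coupling argument applies without modification.

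The crux is Proposition~\ref{shortappendix_propReg3}, which is the only place phase-independence and committedness entered. Its proof chains $V^{\hat\Mmu,\hat\ro}(\SSO(\hat\Mmu,\hat\ro))\ge V^{\hat\Mmu,\hat\ro}(\text{benchmark}(\hat\Mmu,\hat\ro))-\err(\SSO)\ge V^{\hat\Mmu,\hat\ro}(\SSO(\Mmu,\ro))-\err(\SSO)$. In the original, the benchmark is $\PIOPT$, and the second inequality requires $\SSO(\Mmu,\ro)$ to be phase-independent and committed, since $\PIOPT$ dominates only within that class. With $\err$ now defined against $\OPT$, the first inequality holds by definition, and the second becomes $V^{\hat\Mmu,\hat\ro}(\OPT(\hat\Mmu,\hat\ro))\ge V^{\hat\Mmu,\hat\ro}(\SSO(\Mmu,\ro))$, which holds unconditionally because $\OPT$ is optimal over all algorithms for the instance $(\hat\Mmu,\hat\ro)$ — in particular over $\SSO$ misconfigured with $(\Mmu,\ro)$. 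Thus Properties~\ref{prop_PI12343} and~\ref{prop_commited} are no longer needed. Combining with Proposition~\ref{shortappendix_propReg2} exactly as before yields $V^{\Mmu,\ro}(\SSO(\Mmu,\ro))-V^{\Mmu,\ro}(\SSO(\hat\Mmu,\hat\ro))\le\tilde O(\sqrt\gamma n^2 T^{\nicefrac{2}{3}})+\err(\SSO)$, and feeding this into the unchanged final chain of Theorem~\ref{theoremMainRegret1} gives the claimed bound.

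The main obstacle I anticipate is precisely the stability lifting flagged above: Property~\ref{prop:stability} is phrased per phase, and for a general stable-but-not-phase-independent $\SSO$ one must ensure that a single-coordinate change of the whole $T$-length arrival sequence perturbs the total reward by only $O(1)$. I would either strengthen the cited stability hypothesis to this full-sequence form or, as suffices for the intended application to $\LPOR$, invoke the full-sequence stability already proved in Proposition~\ref{prop: cplus is approx}. Everything else is a transcription of the existing proof with $\PIOPT$ replaced by $\OPT$.
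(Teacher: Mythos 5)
Your proposal is correct and follows essentially the same route as the paper, which simply states that the proof is identical to that of Theorem~\ref{theoremMainRegret1}: the only place Properties~\ref{prop_PI12343} and~\ref{prop_commited} enter is the chain in Proposition~\ref{shortappendix_propReg3}, and replacing the benchmark $\PIOPT$ by $\OPT$ in the definition of $\err$ makes the inequality $V^{\hat\Mmu,\hat\ro}(\OPT(\hat\Mmu,\hat\ro))\ge V^{\hat\Mmu,\hat\ro}(\SSO(\Mmu,\ro))$ hold unconditionally, exactly as you argue. Your observation that the per-phase stability of Property~\ref{prop:stability} must be lifted to the full $T$-round sequence for a non-phase-independent $\SSO$ is a genuine subtlety the paper glosses over, and your proposed fix (requiring full-sequence stability, which is what Proposition~\ref{prop: cplus is approx} establishes for the intended application) is the right one.
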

The proof of Theorem~\ref{thm: main regret no PICO} is identical to the proof of Theorem~\ref{theoremMainRegret1}; hence, we omit it. Since $\LPOR$ satisfies Property~\ref{prop:stability}, Theorem~\ref{thm: main regret no PICO} and Inequality~\eqref{eq:LPOR related to OPT} hint that $\E[\R^{\mainAlg(\LPOR)}]\leq \tilde O( T^{\nicefrac{2}{3}}+\nicefrac{T}{\sqrt{\tau}})$. Namely, if $\tau=\Omega(T^{\nicefrac{2}{3}})$, then $\E[\R^{\mainAlg(\LPOR)}] = \tilde O( T^{\nicefrac{2}{3}})$.


\subsection{Lower bound}\label{subsec:appen long phase lower bound}
In this subsection, we show a lower bound for the regime $\tau=\Omega(T^{\nicefrac{2}{3}})$. 
The next theorem modifies our lower bound construction from Theorem~\ref{theMainLB} for the case of longer phase length $\tau$.
\begin{theorem}\label{theMainLB2}
Let $A$ be any learning algorithm. If $\tau=\Theta(T^x)$ for some $x\geq\frac{2}{3}$, then it holds that $\E[\R^A]= \Omega(T^{1-\nicefrac{1}{2}x})= \Omega(\nicefrac{T}{\sqrt{\tau}}).$
\end{theorem}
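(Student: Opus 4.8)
The plan is to reuse verbatim the two-instance construction and the three auxiliary lemmas already set up in the combined argument of Appendix~\ref{Asubsec:lb regret}, which were deliberately phrased for a general phase length, and to specialize them to the long-phase regime $\tau=\Theta(T^x)$ with $x\ge \tfrac23$. Concretely, I would keep the pair of instances with $n=2$, $k=3$, $\ro=(1-p,p)$, $\delt=(0,p\tau,p\tau)$, Bernoulli rewards, and $\Mmu,\Mmu'$ differing only in which of $a_2,a_3$ is the high-reward arm for type-$2$ users, and I would set the reward gap to $\epsilon_1=\Theta(T^{-x/2})$ (instead of $\Theta(T^{-1/3})$). This calibration is forced by the geometry of the regime: before the first departure, arms $a_2,a_3$ are pulled at most $2\tau=2\max(T^{2/3},\tau)$ times, so by the KL-divergence bound of Lemma~\ref{lemmaLo1} a gap of order $1/\sqrt{\tau}=T^{-x/2}$ is exactly the largest gap that still keeps the two instances statistically indistinguishable after that many pulls. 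Each instance is chosen with probability one half.

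Next I would introduce $t^\star$, the last round of the last phase in which all arms are viable (a multiple of $\tau$), and split on its magnitude. If $t^\star\ge\tau$, Lemma~\ref{lemmaLo2} already yields $\E[\R^A]=\Omega(p\cdot t^\star)=\Omega(\tau)=\Omega(T^x)$, and since $x\ge 1-\tfrac{x}{2}$ whenever $x\ge\tfrac23$, this dominates the target $T^{1-x/2}$. Otherwise $t^\star<\tau$, which (as $t^\star$ is a multiple of $\tau$) forces $t^\star=0$, so some arm in $\{a_2,a_3\}$ departs at the end of the first phase, i.e.\ after $\tau\le 2\max(T^{2/3},\tau)$ rounds. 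Corollary~\ref{corLo1} then applies at this horizon: with probability at least $\tfrac14$ the departing arm is the one that is instrumental in the realized instance, for otherwise $A$ would distinguish the two instances with probability exceeding $\tfrac34$. On that event $A$ forfeits $\epsilon_1/2=\Theta(T^{-x/2})$ on every subsequent type-$2$ arrival, of which there are $p(T-t^\star)=\Theta(pT)$ in expectation.

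Combining the two branches through the usual minimization over the (random) value of $t^\star$ gives
\[
\E[\R^A]\ \ge\ \min\!\left((T-t^\star)\cdot p\cdot \Theta(T^{-x/2})\cdot \tfrac14,\ \Omega(p\cdot t^\star)\right)\ =\ \Omega\!\left(T^{\min(x,\,1-x/2)}\right)\ =\ \Omega(T^{1-x/2}),
\]
where the last equality again uses $1-\tfrac{x}{2}\le x$ for $x\ge\tfrac23$; since $\tau=\Theta(T^x)$ we have $T^{1-x/2}=\Theta(T/\sqrt\tau)$, which is the claimed bound. The only genuinely new ingredient relative to Theorem~\ref{theMainLB} — and the point I expect to be the main obstacle — is getting this calibration of $\epsilon_1$ to $\tau$ exactly right: I must check simultaneously that the gap $\Theta(T^{-x/2})$ is \emph{small} enough for the KL bound $\epsilon_1\sqrt{x_{u,a}}\le\tfrac12$ in Lemma~\ref{lemmaLo1} to survive the at-most-$2\tau$ pre-departure pulls, and \emph{large} enough that losing it on $\Theta(pT)$ type-$2$ rounds produces $\Omega(T^{1-x/2})$ rather than something weaker. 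Both requirements reduce to the single identity $\epsilon_1\cdot\tau^{1/2}=\Theta(1)$ dictated by $\tau=\Theta(T^x)$, so the construction is tight precisely at this choice.
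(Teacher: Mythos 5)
Your proposal is correct and follows essentially the same route as the paper's proof in Appendix~\ref{Asubsec:lb regret}: the identical two-instance construction with $\delt=(0,p\tau,p\tau)$, the recalibrated gap $\epsilon_1=\Theta(T^{-x/2})$ chosen so that $\epsilon_1\sqrt{2\max(T^{2/3},\tau)}\le\tfrac12$, the same case split on $t^\star$ via Lemma~\ref{lemmaLo2} and Corollary~\ref{corLo1}, and the same final minimization yielding $\Omega(T^{\min(x,1-x/2)})=\Omega(T^{1-x/2})$. The only (harmless) extra touch is your explicit remark that $t^\star<\tau$ forces $t^\star=0$.
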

In particular, if $x=1$, namely, $\tau=\Theta(T)$, we get a  regret of $O(\sqrt T)$, and $x=\nicefrac{2}{3}$ results in a regret of $O( T^{\nicefrac{2}{3}})$ as expected. The proof of this theorem is combined with the proof of Theorem~\ref{theMainLB}, in Appendix~\ref{Asubsec:lb regret}.

Intuitively, the lower bound is different in this regime since, e.g., if $\tau=T$, our model  reduces to the standard MAB setting. As a result, the optimal regret is $O(\sqrt{T})$ , which is significantly better than our algorithms that achieve $O(T^{\nicefrac{2}{3}})$.  As the $\tau=\Theta(T)$ case demonstrates, the gap follows from our algorithms and calls for improving them for this regime.

\section{Robust Approximation}\label{subsec: robust_approx_appen}
Subsection~\ref{subsec:pick z} introduces the planning task, which is the optimization problem 
\[
\max_{Z\subseteq K} \DOAlg(\cv,Z),
\]
with the aim of demonstrating that implementing $\OPT$ (or even $\PIOPT$) is an NP-hard problem. In this section, our objective is to develop both efficient and low-regret algorithms. However, comparing runtime-efficient algorithms to inefficient benchmarks (such as $\OPT$ or $\PIOPT$) is an apple-to-orange comparison. 
Therefore, we adopt a relaxed version of regret, called $\alpha$-regret (see Definition~\ref{def: alpha regret}), and develop an efficient version of  $\mainAlg(\ORS)$. The proposed algorithm has a runtime of $O(T+\tau^3\cdot k^5)$ and exhibits a low $\alpha$-regret.

Before we go on, we remark that \citet{googleOmer20} investigated the planning task as a special case and proved the following Theorem~\ref{thm: from omer google submodularity}.
\begin{theorem}[\citet{googleOmer20}, Theorem $2$]\label{thm: from omer google submodularity}
    Denote by $f(Z)=\DOAlg(\cv,Z)$. Then, $f(Z)$ is a sub-modular function.
\end{theorem}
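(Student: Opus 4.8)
The plan is to remove the exposure (lower-bound) constraints from $\DOAlg(\cv,Z)=f(Z)$ by working with the weighted bipartite matching formulation of Subsection~\ref{subsec: match algorithm}, thereby reducing the claim to the classical submodularity of the max-weight matching value. Fix the aggregate $\cv$ and let $G_Z$ be the bipartite graph built there, whose right side consists of $\tau$ copies of each arm $a\in Z$, with an edge from a type-$u$ user node to the $i$-th copy of $a$ of weight $2+\mu_{u,a}$ when $i\le\delta_a$ and $\mu_{u,a}$ otherwise. Let $g(Z)$ denote the weight of a maximum-weight matching in $G_Z$. First I would show
\[
f(Z)=g(Z)-2\sum_{a\in Z}\delta_a .
\]
The key sub-claim is that every maximum-weight matching of $G_Z$ fills all $\sum_{a\in Z}\delta_a$ bonus copies: this is feasible since $\sum_{a\in Z}\delta_a\le\sum_{a\in K}\delta_a\le\tau$ by Inequality~\eqref{eq:assumption1}, and if some bonus copy of an arm $a$ were left empty then, since fewer than $\tau$ bonus copies are used, some user $u$ is matched to a regular copy of an arm $b$ of weight $\le 1$; rerouting $u$ to the empty bonus copy changes the weight by $2+\mu_{u,a}-\mu_{u,b}\ge 2-1>0$, contradicting optimality. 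Consequently the bonuses contribute exactly $2\sum_{a\in Z}\delta_a$ and the residual weight is precisely the reward-maximizing feasible matching value, i.e.\ $\DOAlg(\cv,Z)$.

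Since $Z\mapsto 2\sum_{a\in Z}\delta_a$ is modular, it suffices to prove that $g$ is submodular. For this I would pass to the ground set $R=\bigsqcup_{a\in K}R_a$ of all arm copies (the blocks $R_a$ being the $\tau$ copies of arm $a$), and let $\nu(S)$ be the maximum-weight matching of the user nodes into a copy-subset $S\subseteq R$. Because all edge weights are non-negative and each block has $\tau$ copies, the optimal matching matches every user whenever $S\neq\emptyset$, so $\nu$ coincides with the ``perfect-on-the-left'' value without any feasibility obstruction. The submodularity of $\nu$ is the classical statement that the value of a maximum-weight bipartite matching is a submodular function of the available vertices on one side; I would prove it by an augmenting-path exchange argument: for $S\subseteq T\subseteq R$ and $r\in R\setminus T$, overlay an optimal matching for $T+r$ with one for $S$, decompose the symmetric difference into alternating paths and cycles, and use the unique alternating component through $r$ to show that the marginal gain $\nu(T+r)-\nu(T)$ is at most $\nu(S+r)-\nu(S)$ (intuitively, with fewer competing copies available the augmentation through $r$ is at least as profitable).

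Finally I would lift submodularity from copies to arms. The map $\Phi(Z)=\bigcup_{a\in Z}R_a$ is a lattice homomorphism, since the blocks are disjoint and hence $\Phi(Z_1\cup Z_2)=\Phi(Z_1)\cup\Phi(Z_2)$ and $\Phi(Z_1\cap Z_2)=\Phi(Z_1)\cap\Phi(Z_2)$; moreover $g(Z)=\nu(\Phi(Z))$. Submodularity of $\nu$ therefore transfers directly to $g$, and then $f=g-2\sum_{a\in Z}\delta_a$ is submodular as the difference of a submodular and a modular function. The main obstacle is the exchange argument establishing submodularity of $\nu$ --- the alternating-path bookkeeping when $r$ may or may not be matched in the two optima, and when augmenting paths can terminate at unmatched users, is where the real work lies; the bonus-slot reduction and the block-homomorphism step are comparatively routine once feasibility is pinned down via Inequality~\eqref{eq:assumption1}.
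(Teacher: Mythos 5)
The paper itself offers no proof of this statement: it is imported as-is from \citet{googleOmer20} (their Theorem~2), so there is no in-paper argument to measure yours against. Judged on its own terms, your strategy is correct and can be completed. The identity $f(Z)=g(Z)-2\sum_{a\in Z}\delta_a$ is right: feasibility of filling all bonus copies follows from Inequality~\eqref{eq:assumption1}, your rerouting argument (plus the trivial case of an unmatched user, whom one can attach directly to an empty bonus copy for a gain of at least $2$) shows every maximum-weight matching saturates them, and this is essentially the validity/optimality argument already given for $\DOAlg$ in Subsection~\ref{subsec: match algorithm}. Subtracting the modular term and lifting along the disjoint-block map $\Phi$ are routine, as you note. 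The one step you flag as open --- submodularity of $\nu$, the max-weight matching value as a function of the available right-side vertices --- is the classical fact that assignment (OXS) valuations are submodular (indeed gross substitutes), and your alternating-path plan does close without the obstruction you worry about. Concretely, for $S\subseteq T$ and $r\notin T$, let $M_a$ be optimal for $T+r$ and $M_b$ optimal for $S$, and decompose $M_a\cup M_b$ into alternating components. Since $r$ is covered only by $M_a$, its component is a path with $r$ as an endpoint, and every \emph{other} right-vertex on that path is covered by an $M_b$-edge, hence lies in $S$. Therefore the $M_a$-edges of the $r$-component form a matching using right-vertices only in $S+r$; combining them with the $M_b$-edges of all other components gives $N_1$ feasible for $S+r$, the complementary edges give $N_2$ feasible for $T$, and $w(N_1)+w(N_2)=w(M_a)+w(M_b)$ yields $\nu(S+r)+\nu(T)\ge \nu(T+r)+\nu(S)$, i.e., diminishing returns. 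So the bookkeeping you were concerned about resolves cleanly, and the proposal constitutes a valid self-contained proof of the cited result.
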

Next, we introduce the following useful definitions.
\begin{definition}[$\alpha$-Regret]\label{def: alpha regret}
For every $\alpha\in (0,1]$, the expected $\alpha$-regret of a learning algorithm $A$ with respect to the benchmark algorithm $\OPT$ is defined as \[
\E[R_{\alpha}^A]=\alpha\cdot\E[\tilde{r}^{\OPT}]-\E[r^A]. 
\]
\end{definition}
\begin{definition}[$(\alpha,\beta)$-Robust Approximation]\label{def: Robust Approximation}
Let $f:2^K\rightarrow R$ be a sub-modular function, and fix $\epsilon>0$. Let $\hat{f}$ be a value oracle that satisfies $\abs{f(Z)-\hat{f}(Z)}<\epsilon$ for all $Z\subseteq K$.
An algorithm $A$ is called $(\alpha,\beta)$-robust approximation if its output $Z^A\subseteq K$, using the value oracle $\hat{f}$, satisfies
\[
f(Z^A)\geq \alpha \max_{Z\subseteq K} f(Z) - \beta\epsilon.
\]
\end{definition}
We adopt the following theorem from the work of \citet{nie2023framework}.
\begin{theorem}[\citet{nie2023framework}, Theorem $5$]\label{thm: Greedy is Robust Approximation}
The Greedy algorithm presented in \citet{nemhauser1978analysis} is a $(1-\nicefrac{1}{e},2k)$-robust approximation algorithm for sub-modular maximization under a $k$-cardinality constraint.
\end{theorem}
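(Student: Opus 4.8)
The statement is the classical greedy $(1-\nicefrac{1}{e})$ guarantee of \citet{nemhauser1978analysis} for monotone sub-modular maximization under a cardinality constraint, made robust to an additively $\epsilon$-inaccurate value oracle; the plan is to re-run the Nemhauser--Wolsey--Fisher analysis while tracking the error that the noisy oracle $\hat f$ injects at every greedy step. Write $f$ for the (monotone, normalized) sub-modular objective, let $Z^{\mathrm{opt}}$ attain $\max_{Z\subseteq K} f(Z)$ subject to $\abs{Z}\leq k$, and let $G_0=\emptyset \subseteq G_1\subseteq \dots \subseteq G_k$ be the sets the greedy algorithm produces, where at step $i$ it adds the element $e_i$ maximizing the \emph{empirical} marginal $\hat f(G_{i-1}\cup\{e\})-\hat f(G_{i-1})$. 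For a set $G$ and element $e$, denote the true marginal by $\Delta(e\mid G)=f(G\cup\{e\})-f(G)$.

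First I would transfer the oracle guarantee from set values to marginals. Since $\abs{f(Z)-\hat f(Z)}<\epsilon$ for every $Z$, each empirical marginal differs from the true one by at most $2\epsilon$ (the two set-value errors on $G$ and $G\cup\{e\}$). Because $e_i$ maximizes the empirical marginal, chaining these two-sided bounds shows that the true gain realized by the greedy choice, $f(G_i)-f(G_{i-1})=\Delta(e_i\mid G_{i-1})$, is within an additive $O(\epsilon)$ of the largest true marginal $\max_e \Delta(e\mid G_{i-1})$.

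Next I would invoke the standard sub-modularity and monotonicity inequality: for every $i$,
\[
f(Z^{\mathrm{opt}})\;\leq\; f(G_{i-1})+\sum_{e\in Z^{\mathrm{opt}}\setminus G_{i-1}}\Delta(e\mid G_{i-1})\;\leq\; f(G_{i-1})+k\cdot\max_{e}\Delta(e\mid G_{i-1}),
\]
using $\abs{Z^{\mathrm{opt}}}\leq k$. Substituting the step-$i$ bound from the previous paragraph turns this into the perturbed recursion $\delta_i\leq(1-\nicefrac{1}{k})\,\delta_{i-1}+O(\epsilon)$ for the gap $\delta_i=f(Z^{\mathrm{opt}})-f(G_i)$. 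Unrolling over the $k$ steps and summing the geometric series $\sum_{j=0}^{k-1}(1-\nicefrac{1}{k})^{j}\leq k$ gives $\delta_k\leq e^{-1}f(Z^{\mathrm{opt}})+O(k\epsilon)$, i.e. $f(G_k)\geq(1-\nicefrac{1}{e})\max_{Z\subseteq K} f(Z)-O(k\epsilon)$, which is exactly the claimed $(1-\nicefrac{1}{e},2k)$-robust approximation with $Z^A=G_k$.

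The main obstacle is purely constant bookkeeping: pinning the additive slack to exactly $2k\epsilon$ rather than a larger multiple of $k\epsilon$ requires care about how much error each accepted element actually contributes (charging a single $2\epsilon$ per step instead of the naive $4\epsilon$) and about the precise bound on the geometric sum. Monotonicity and the normalization $f(\emptyset)=0$ --- needed both for $\delta_0=f(Z^{\mathrm{opt}})$ and for the sub-modularity inequality above --- are implicit in the hypotheses of \citet{nemhauser1978analysis}. Since the theorem is imported verbatim from \citet{nie2023framework}, the rigorous constant is already established there, and I would simply cite it, using the sketch above only to confirm the mechanism and to justify why the $2k\epsilon$ slack (with $\epsilon$ the estimation error of the $\DOAlg$-oracle) translates into a controllable contribution to the $\alpha$-regret.
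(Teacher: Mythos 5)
The paper does not prove this statement at all: it is imported verbatim as Theorem 5 of \citet{nie2023framework} and used as a black box, so your decision to defer to the citation matches the paper's treatment exactly. Your sketch of the underlying mechanism is also correct, and the $2\epsilon$-per-step (rather than $4\epsilon$) accounting you worry about is obtained by comparing noisy \emph{set values} directly --- $f(G_i)\geq \hat f(G_i)-\epsilon\geq \hat f(G_{i-1}\cup\{e\})-\epsilon\geq f(G_{i-1}\cup\{e\})-2\epsilon$ --- before subtracting $f(G_{i-1})$, which yields the recursion $\delta_i\leq(1-\nicefrac{1}{k})\delta_{i-1}+2\epsilon$ and hence the exact $2k\epsilon$ slack.
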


We leverage Theorems~\ref{thm: from omer google submodularity},\ref{thm: Greedy is Robust Approximation} to introduce the $\AORS$ algorithm that approximates $\ORS$. $\AORS$ is different from $\ORS$ only in lines~\ref{alglineor:bestZ}-\ref{alglineor:followORZ} (refer to Algorithm~\ref{alg:lcbapprox234f}), where instead of following $Z^\star=\argmax_{Z\subseteq K}\DOAlg(Z)$ it does the following:
\begin{enumerate}
    \item pick $Z'$- the output of the Greedy algorithm for the task: $\max_{Z\subseteq K}\DOAlg(Z)$
    \item follow $\OR(Z')$
\end{enumerate}
Theorem~\ref{thm: AORS regret guar} presents the performance guarantees of $\mainAlg(\AORS)$.
\begin{theorem}\label{thm: AORS regret guar}
$\mainAlg(\AORS)$ has an $O(T+\tau^3 k^5)$ runtime. Further, it holds that
\[
(1-\nicefrac{1}{e})\cdot\E[\tilde{r}^{\PIOPT}]-\E[r^{\mainAlg(\AORS)}]\leq \tilde{O}(\frac{nT}{\sqrt{\tau}}+ (nk+\sqrt{\gamma}n^2+\nicefrac{1}{\gamma})T^{\nicefrac{2}{3}}).
\]
\end{theorem}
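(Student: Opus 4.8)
The plan is to reuse the analysis of $\mainAlg(\ORS)$ almost verbatim, replacing only the exhaustive search for $Z^\star$ by the Greedy submodular maximizer, and to pay for this substitution with the robust-approximation slack of Theorem~\ref{thm: Greedy is Robust Approximation}. I would first dispatch the runtime: Greedy under a $k$-cardinality constraint performs at most $O(k^2)$ marginal-gain queries, and each query is a single evaluation of $\DOAlg$ costing $O(\tau^3 k^3)$; hence computing $Z'$ costs $O(\tau^3 k^5)$. The exploration stage of $\mainAlg$ together with the $O(1)$ per-round work of $\OR(Z')$ adds $O(T)$, giving the stated $O(T+\tau^3 k^5)$.

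For the regret I condition throughout on the clean event of Proposition~\ref{shortappendix_propReg1}, whose complement contributes only $O(1)$ and can be ignored. Write $f(Z)=\DOAlg_{\Mmu,\ro}(\cv^+,Z)$ for the planning value under the \emph{true} parameters, which is submodular by Theorem~\ref{thm: from omer google submodularity}, and let $\hat f(Z)=\DOAlg_{\hat{\Mmu},\hat{\ro}}(\hat{\cv}^+,Z)$ be the oracle that $\AORS$ actually evaluates from its estimates. The key quantitative step is to bound the oracle error $\abs{f(Z)-\hat f(Z)}\le\epsilon$ uniformly in $Z$ with $\epsilon=\tilde O(n\tau T^{-\nicefrac{1}{3}})$: a single-phase matching touches $\tau$ edges, each edge weight is perturbed by at most $\norm{\Mmu-\hat{\Mmu}}_\infty=\tilde O(T^{-\nicefrac{1}{3}})$, while the aggregate $\hat{\cv}^+$ differs from $\cv^+$ coordinate-wise by at most $\norm{\ro-\hat{\ro}}_\infty\tau=\tilde O(\tau T^{-\nicefrac{1}{3}})$, each unit of which shifts the optimal matching value by $O(1)$ (stability, Property~\ref{prop:stability}). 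Feeding this $\epsilon$ into Theorem~\ref{thm: Greedy is Robust Approximation} with $\alpha=1-\nicefrac{1}{e},\beta=2k$ yields, for the Greedy output $Z'$, that $f(Z')\ge(1-\nicefrac{1}{e})\max_{Z\subseteq K}f(Z)-2k\epsilon$.

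It then remains to turn this subset-selection guarantee into an end-to-end reward bound by recombining the $\ORS$ and $\mainAlg$ machinery. Since $\AORS$ satisfies Properties~\ref{prop_PI12343}--\ref{prop:stability} (it runs the phase-independent, committed, stable $\OR(Z')$; cf. Proposition~\ref{prop:DOlalgiscommitted}), Proposition~\ref{shortappendix_propReg2} transfers its reward from the estimated environment to the true one at cost $\tilde O(\sqrt\gamma n^2 T^{\nicefrac{2}{3}})$, reducing the task to lower-bounding its reward in the self-consistent estimated world, where Equation~\eqref{eq:phaseToT} and Lemma~\ref{lemma:lcbz and pico z} give per-phase reward at least $\hat f(Z')-\nicefrac{2n}{\tau}\ge f(Z')-\epsilon-\nicefrac{2n}{\tau}$. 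Chaining the robust-approximation inequality with the other direction of Lemma~\ref{lemma:lcbz and pico z}, namely $\max_{Z}f(Z)\ge\E_{\q}[\tilde r^{\PIOPT}_{1:\tau}]-\tilde O(n\sqrt\tau)$, and multiplying through by $\nicefrac{T}{\tau}$ scales the per-phase slack $2k\epsilon$ into $\nicefrac{T}{\tau}\cdot 2k\cdot\tilde O(n\tau T^{-\nicefrac{1}{3}})=\tilde O(nk\,T^{\nicefrac{2}{3}})$ and the LCB error $\tilde O(n\sqrt\tau)$ into $\tilde O(\nicefrac{nT}{\sqrt\tau})$. Finally, adding the exploration cost $O(\nicefrac{1}{\gamma}T^{\nicefrac{2}{3}})$ exactly as in the proof of Theorem~\ref{theoremMainRegret1} (via Proposition~\ref{shortappendix_propReg3}) and collecting the three error sources produces the claimed $\tilde O\!\left(\nicefrac{nT}{\sqrt\tau}+(nk+\sqrt\gamma n^2+\nicefrac{1}{\gamma})T^{\nicefrac{2}{3}}\right)$.

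The main obstacle is the careful bookkeeping of which probability space each inequality inhabits: the robust-approximation theorem is cleanest for the true planning function $f$, yet $\AORS$ both chooses $Z'$ and executes $\OR(Z')$ from the estimated parameters, so the oracle error $\epsilon$ (driving the subset-selection loss through $2k\epsilon$) and the execution-reward perturbation (driving the $\sqrt\gamma n^2$ loss through Proposition~\ref{shortappendix_propReg2}) must be tracked as separate, non-overlapping contributions rather than collapsed. Pinning down $\epsilon=\tilde O(n\tau T^{-\nicefrac{1}{3}})$ precisely---so that the $\nicefrac{T}{\tau}$ phase rescaling produces exactly the $nk\,T^{\nicefrac{2}{3}}$ term and nothing larger---is the crux; everything else is a direct reassembly of results already established for $\ORS$, $\PIOPT$, and $\mainAlg$.
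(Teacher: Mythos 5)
Your proposal is correct and follows essentially the same route as the paper's proof: the same $O(k^2)$-query runtime accounting, the same uniform oracle-error bound $\epsilon=\tilde O(n\tau T^{-\nicefrac{1}{3}})$ (the paper's Proposition~\ref{prop: match is robust}), the same application of the $(1-\nicefrac{1}{e},2k)$-robust approximation guarantee, and the same chaining through Lemma~\ref{lemma:lcbz and pico z}, Proposition~\ref{shortappendix_propReg2}, and the exploration cost to assemble the three error terms. The only cosmetic difference is the order in which you chain the estimated-vs-true-environment transfer, which the paper packages as Proposition~\ref{prop: approximate OR near to match}; the content is identical.
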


Overall, $\mainAlg(\AORS)$ incurs a $(1-\nicefrac{1}{e})$-regret of $\tilde{O}(\frac{T}{\sqrt{\tau}}+T^{\nicefrac{2}{3}})$ with respect to $\PIOPT$ and $\tilde{O}(\tau+ \frac{T}{\sqrt{\tau}}+T^{\nicefrac{2}{3}})$ with respect to $\OPT$. Consequently, the $(1-\nicefrac{1}{e})$-regret of $\mainAlg(\AORS)$ is same as the $1$-regret of $\mainAlg(\ORS)$.
However, $\AORS$'s time complexity is $O(T+\tau^3 k^5)$, which is \emph{polynomial} in $k$, in contrast to $\ORS$'s exponential runtime.

\begin{proof}[\textnormal{\textbf{Proof of Theorem~\ref{thm: AORS regret guar}}}]
To enhance clarity and maintain consistency, we explicitly denote the reward matrix inside the function $\DOAlg$. Namely,  we denote the algorithm $\DOAlg(\cv,Z)$ given the matrix $\Mmu$ as $\DOAlg(\cv,Z,\Mmu)$. Additionally, we define $\hat{\cv}^+$ as the lower confidence aggregates obtained from $\hat{\ro}$, mirroring the notation of $\cv^+$ for $\ro$. It is worth mentioning that when $\mainAlg(\AORS)$ calls the greedy algorithm, it already knows $\hat{\ro},\hat{\Mmu}$. Next, we prove the following claim.
\begin{proposition}\label{prop: match is robust}
Let $f(Z) = \DOAlg(\cv^+,Z,\Mmu)$ and $\hat{f}(Z) = \DOAlg(\hat{\cv}^+,Z,\hat{\Mmu})$ be two sub-modular functions. Then, for every $Z\subseteq K$, it holds that $\abs{f(Z)-\hat{f}(Z)}\leq\epsilon,$ where $\epsilon = \tilde{O}(n\tau\cdot T^{-\nicefrac{1}{3}})$.
\end{proposition}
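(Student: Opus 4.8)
The plan is to decompose the difference through a hybrid term that isolates the two sources of discrepancy, namely the perturbation of the reward matrix ($\Mmu$ versus $\hat{\Mmu}$) and the perturbation of the type aggregates ($\cv^+$ versus $\hat{\cv}^+$). I would condition throughout on the clean event of Proposition~\ref{shortappendix_propReg1}, under which $\norm{\Mmu-\hat{\Mmu}}_{\infty} \leq \max_u \epsilon_1^u = \tilde{O}(T^{-\nicefrac{1}{3}})$ and $\norm{\ro-\hat{\ro}}_{\infty} \leq \epsilon_2 = \tilde{O}(T^{-\nicefrac{1}{3}})$, and then write
\[
\abs{f(Z)-\hat{f}(Z)} \leq \abs{\DOAlg(\cv^+,Z,\Mmu)-\DOAlg(\cv^+,Z,\hat{\Mmu})} + \abs{\DOAlg(\cv^+,Z,\hat{\Mmu})-\DOAlg(\hat{\cv}^+,Z,\hat{\Mmu})}.
\]

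For the first term (fixed aggregates, perturbed rewards), I would invoke the standard matching-value perturbation bound. Letting $M^\star$ be optimal for $\hat{\Mmu}$ and $M$ optimal for $\Mmu$ (both under $\cv^+$ and $Z$), optimality of each matching gives $\DOAlg(\cv^+,Z,\hat{\Mmu}) \geq \mathrm{reward}_{\hat{\Mmu}}(M) \geq \mathrm{reward}_{\Mmu}(M) - \tau\norm{\Mmu-\hat{\Mmu}}_{\infty}$, and symmetrically in the other direction, because any fixed matching pulls exactly $\tau$ arms and each relevant entry of the reward matrix moves by at most $\norm{\Mmu-\hat{\Mmu}}_{\infty}$. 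Hence this term is at most $\tau\,\norm{\Mmu-\hat{\Mmu}}_{\infty}=\tilde{O}(\tau T^{-\nicefrac{1}{3}})$.

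For the second term (fixed rewards, perturbed aggregates) I would establish a Lipschitz property of the optimal matching value in the aggregate vector. The key observation is that both $\cv^+$ and $\hat{\cv}^+$ sum to $\tau$ and share the same slack coordinate ($c_{n+1}=\hat{c}_{n+1}=n\sqrt{\tau\log\tau}$, since $\ro$ and $\hat{\ro}$ are both distributions), so $\hat{\cv}^+$ is reachable from $\cv^+$ by a sequence of single \emph{moves}, each reassigning one user from one type to another while keeping the total at $\tau$. I would then show that a single move changes the optimal value by at most $1$: starting from an optimal matching for $\cv^+$, take the user whose type is relabelled, keep its assigned arm $a$ fixed, and only change its type label; the arm-pull counts are untouched, so all exposure constraints of $Z$ remain satisfied (feasibility of every aggregate summing to $\tau$ is guaranteed by Inequality~\eqref{eq:assumption1}), while the reward changes by $\MmuSub_{u',a}-\MmuSub_{u,a}\in[-1,1]$. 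Chaining these moves bounds the second term by the number of moves, which is $\tfrac{1}{2}\sum_{u=1}^n\abs{c_u-\hat{c}_u}=\tfrac{1}{2}\tau\sum_{u=1}^n\abs{\roSub_u-\hat{\roSub}_u}\leq \tfrac{1}{2} n\tau\,\epsilon_2=\tilde{O}(n\tau T^{-\nicefrac{1}{3}})$.

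Summing the two contributions yields $\abs{f(Z)-\hat{f}(Z)}\leq \tilde{O}(n\tau T^{-\nicefrac{1}{3}})=\epsilon$, uniformly over $Z\subseteq K$, as claimed. I expect the main obstacle to be the second term: the move argument must carefully preserve feasibility of the exposure constraints, which is exactly where I rely on fixing each relabelled user's arm and on Assumption~\ref{assumption:gamma}; by contrast, the first term is a routine perturbation estimate.
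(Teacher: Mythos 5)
Your proof is correct and follows essentially the same route as the paper's: both bound the value change by evaluating a near-optimal matching under the perturbed inputs, charging $\tau\norm{\Mmu-\hat{\Mmu}}_\infty$ for the reward perturbation and $\sum_u\abs{c_u-\hat c_u}=\tilde O(n\tau T^{-\nicefrac{1}{3}})$ for the aggregate perturbation, then symmetrizing via optimality. Your version is merely more explicit — the hybrid decomposition and the single-user ``move'' argument (which preserves feasibility because relabelling a user's type leaves arm-pull counts untouched) spell out what the paper's terser proof asserts in one step.
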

From the Greedy algorithm's guarantees, Proposition~\ref{prop: match is robust}, \Cref{thm: Greedy is Robust Approximation} and \Cref{thm: from omer google submodularity}, we conclude that using $O(k^2)$ calls to the oracle $\DOAlg(\hat{\cv}^+,Z,\hat{\Mmu})$, the Greedy algorithm outputs $Z'\subseteq K$ such that
\begin{align}\label{eq: robust approx}
\DOAlg(\cv^+,Z',\Mmu)\geq (1-\nicefrac{1}{e})\DOAlg(\cv^+,Z^{\star},\Mmu) - 2k \cdot \tilde{O}(n\tau\cdot T^{-\nicefrac{1}{3}}).    
\end{align}
In addition, we show in the proof of \Cref{thm:ors}  that 
\begin{align}\label{eq: insight from theorem 4}
\DOAlg(\cv^+,Z^{\star},\Mmu)\geq \E_q[\tilde{r}_{1:\tau}^{\PIOPT}]-\tilde{O}(n\sqrt{\tau}).    
\end{align}

From this point onward, we proceed to analyze the performance of $\mainAlg(\AORS)$. The runtime of the Greedy algorithm is $O(k^2)\cdot O(\tau^3k^3)$ so, the runtime of $\mainAlg(\AORS)$ is $O(T+\tau^3k^5)$. We overload the notation of $V^{\Mmu,\ro}(Alg)$ from the proof of Theorem~\ref{theoremMainRegret1} and denote by $V_{1:\tau}^{\Mmu,\ro}(Alg)$ the reward of the stochastic optimization algorithm $Alg$ in one phase where the instance parameters are $\Mmu,\ro$.  Now, we prove Proposition~\ref{prop: approximate OR near to match}.
\begin{proposition}\label{prop: approximate OR near to match}
It holds for every $Z\subseteq K$ that \[V_{1:\tau}^{\Mmu,\ro}(\OR(\hat{\ro},\hat{\Mmu},Z))\geq \DOAlg(\cv^+,Z,\Mmu) -\tilde{O}(\sqrt{\gamma}n^2\tau T^{-\nicefrac{1}{3}}).\]
\end{proposition}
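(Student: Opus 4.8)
The plan is to insert the intermediate quantity $\DOAlg(\hat{\cv}^+,Z,\hat{\Mmu})$ between the true-parameter reward of the algorithm and the target $\DOAlg(\cv^+,Z,\Mmu)$, and to bound the three resulting differences separately. Throughout I would condition on the clean estimation event of Proposition~\ref{shortappendix_propReg1}, on which $|\MmuSub_{u,a}-\hat{\MmuSub}_{u,a}|\leq \epsilon_1^u=\tilde{O}(T^{-\nicefrac{1}{3}})$ and $|\roSub_u-\hat{\roSub}_u|\leq \epsilon_2=\tilde{O}(\sqrt{\gamma}\,T^{-\nicefrac{1}{3}})$ hold for all $u,a$; the complementary event has probability $\tilde{O}(T^{-2})$ and contributes a negligible amount to the per-phase reward, so it can be ignored. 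The key structural observation is that $\OR(\hat{\ro},\hat{\Mmu},Z)$ is identical to the procedure $\OR(Z)$ of Subsection~\ref{subsec:lcb_approx}, only with $\hat{\ro},\hat{\Mmu}$ playing the roles of $\ro,\Mmu$.

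The two ``easy'' legs reuse existing results. First (Leg B), evaluating the algorithm under the very parameters it was built for, Proposition~\ref{prop_short1234321} applies verbatim with $\hat{\ro},\hat{\Mmu}$ in place of $\ro,\Mmu$, giving $V_{1:\tau}^{\hat{\Mmu},\hat{\ro}}(\OR(\hat{\ro},\hat{\Mmu},Z))\geq \DOAlg(\hat{\cv}^+,Z,\hat{\Mmu})-\nicefrac{2n}{\tau}$, since the clean/bad event is now measured against $\hat{\ro}$, which is exactly the sampling distribution in $V^{\hat{\Mmu},\hat{\ro}}$. Second (Leg C), Proposition~\ref{prop: match is robust} directly bounds the offline-objective mismatch: $\DOAlg(\hat{\cv}^+,Z,\hat{\Mmu})\geq \DOAlg(\cv^+,Z,\Mmu)-\epsilon$ with $\epsilon=\tilde{O}(n\tau T^{-\nicefrac{1}{3}})$.

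The main work, and the expected obstacle, is Leg A: controlling the change of \emph{evaluation} distribution from $(\hat{\Mmu},\hat{\ro})$ to $(\Mmu,\ro)$ while the algorithm stays fixed, i.e. bounding $|V_{1:\tau}^{\Mmu,\ro}(\OR(\hat{\ro},\hat{\Mmu},Z))-V_{1:\tau}^{\hat{\Mmu},\hat{\ro}}(\OR(\hat{\ro},\hat{\Mmu},Z))|$. I would replay the one-phase analog of Proposition~\ref{shortappendix_propReg2}: represent each arrival through the inverse-CDF maps $f_{\ro},f_{\hat{\ro}}$ of a common $\mathrm{Unif}[0,1)$ source (Observation~\ref{corollary657}), so that the two length-$\tau$ arrival sequences differ in $l\sim \mathrm{Bin}(\tau,p)$ coordinates with $p\leq \epsilon_2 n^2$ (Observation~\ref{coromini1241}); hence $\E[l]\leq \epsilon_2 n^2 \tau=\tilde{O}(\sqrt{\gamma}n^2\tau T^{-\nicefrac{1}{3}})$. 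Since $\OR(\hat{\ro},\hat{\Mmu},Z)$ is stable (Property~\ref{prop:stability}), changing the arrival sequence one coordinate at a time alters $\tilde{r}_{1:\tau}$ by $O(1)$ per coordinate, so the arrival-induced reward gap is $O(1)\cdot\E[l]=\tilde{O}(\sqrt{\gamma}n^2\tau T^{-\nicefrac{1}{3}})$; the remaining discrepancy from evaluating with $\Mmu$ versus $\hat{\Mmu}$ is at most $\max_u\epsilon_1^u\cdot\tau=\tilde{O}(\tau T^{-\nicefrac{1}{3}})$, which is dominated. The subtlety to get right is that perturbing the arrival sequence also changes the algorithm's own actions, which is precisely what stability is designed to absorb; I would make this rigorous via a hybrid (telescoping) argument over the $l$ differing coordinates rather than a naive term-by-term comparison.

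Finally, chaining Leg A, Leg B and Leg C yields $V_{1:\tau}^{\Mmu,\ro}(\OR(\hat{\ro},\hat{\Mmu},Z))\geq \DOAlg(\cv^+,Z,\Mmu)-\tilde{O}(\sqrt{\gamma}n^2\tau T^{-\nicefrac{1}{3}})-\epsilon-\nicefrac{2n}{\tau}$; since both $\epsilon=\tilde{O}(n\tau T^{-\nicefrac{1}{3}})$ and $\nicefrac{2n}{\tau}$ are absorbed into the leading $\tilde{O}(\sqrt{\gamma}n^2\tau T^{-\nicefrac{1}{3}})$ term (treating $n,k,\gamma$ as constants), the claimed bound follows.
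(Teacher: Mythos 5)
Your proposal is correct and matches the paper's proof essentially step for step: the paper chains the same three inequalities through the same intermediate quantities $V_{1:\tau}^{\hat{\Mmu},\hat{\ro}}(\OR(\hat{\ro},\hat{\Mmu},Z))$ and $\DOAlg(\hat{\cv}^+,Z,\hat{\Mmu})$, citing Proposition~\ref{shortappendix_propReg2} for the change of evaluation parameters, Proposition~\ref{prop_short1234321} for the clean-event comparison to the offline matching, and Proposition~\ref{prop: match is robust} for the offline-objective mismatch. Your fuller unpacking of Leg A (the coupling via $f_{\ro},f_{\hat{\ro}}$ plus stability) is exactly the content of the cited Proposition~\ref{shortappendix_propReg2} specialized to one phase, so there is no substantive difference.
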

Overall, we get that
\begin{align}\label{eq: AORS approx PICO}
&V_{1:\tau}^{\Mmu,\ro}(\OR(\hat{\ro},\hat{\Mmu},Z'))
\geq \DOAlg(\cv^+,Z',\Mmu) -\tilde{O}(\sqrt{\gamma}n^2\tau T^{-\nicefrac{1}{3}})\nonumber
\\ \geq& (1-\nicefrac{1}{e})\DOAlg(\cv^+,Z^{\star},\Mmu) - 2k \cdot \tilde{O}(n\tau\cdot T^{-\nicefrac{1}{3}}) -\tilde{O}(\sqrt{\gamma}n^2\tau T^{-\nicefrac{1}{3}})\nonumber
\\ \geq& (1-\nicefrac{1}{e})(\E_q[\tilde{r}_{1:\tau}^{\PIOPT}]-\tilde{O}(n\sqrt{\tau})) - 2k \cdot \tilde{O}(n\tau\cdot T^{-\nicefrac{1}{3}}) -\tilde{O}(\sqrt{\gamma}n^2\tau T^{-\nicefrac{1}{3}})\nonumber
\\ =& (1-\nicefrac{1}{e})\E_q[\tilde{r}_{1:\tau}^{\PIOPT}] - \tilde{O}(n\sqrt{\tau}+ (nk+\sqrt{\gamma}n^2)\tau T^{-\nicefrac{1}{3}}).
\end{align}
The first inequality stems from Proposition~\ref{prop: approximate OR near to match}. The second follows from Inequality~\eqref{eq: robust approx}, and the last inequality is due to Inequality~\eqref{eq: insight from theorem 4}. Ultimately, we get 
\begin{align*}
    &\E[r^{\mainAlg(\AORS)}] = \E[r_{1:\nicefrac{1}{\gamma}T^{\nicefrac{2}{3}}}^{explore}] + \E[r_{\nicefrac{1}{\gamma}T^{\nicefrac{2}{3}}:T}^{\AORS}] 
    \geq \E[\tilde{r}^{\OR(\hat{\ro},\hat{\Mmu},Z')}]- \nicefrac{1}{\gamma}T^{\nicefrac{2}{3}} 
    \\ &= \frac{T}{\tau}V_{1:\tau}^{\Mmu,\ro}(\OR(\hat{\ro},\hat{\Mmu},Z')) - \nicefrac{1}{\gamma}T^{\nicefrac{2}{3}} 
    \geq (1-\nicefrac{1}{e})\E[\tilde{r}^{\PIOPT}]- \tilde{O}\left(\frac{nT}{\sqrt{\tau}}+ (nk+\sqrt{\gamma}n^2+\nicefrac{1}{\gamma})T^{\nicefrac{2}{3}}\right),
\end{align*}
where the last equality is due to Equation~\eqref{eq:phaseToT}, and the last inequality is due to Inequality~\eqref{eq: AORS approx PICO}. 
\end{proof}

\begin{proof}[\textnormal{\textbf{Proof of Proposition~\ref{prop: match is robust}}}]
In this proof, we assume that we are in the "clean" event. I.e., we assume that for every $u\in U$ it holds that  $|\roSub_u-\hat{\roSub}_u|\leq \tilde{O}(T^{-\nicefrac{1}{3}})$. We can assume it since we show in the proof of Proposition~\ref{shortappendix_propReg1} that the "bad" event happens with probability $\leq \nicefrac{2n}{T^3}$. Consequently, its effect on the expected rewards is negligible. It stems directly from the definition of $\cv^+$ that
\[\sum_{u\in U} \abs{\cv^+_u-\hat{\cv}^+_u} =\sum_{u\in U} \abs{\roSub_u\tau-\hat{\roSub}_u\tau} = \tilde{O}(n\tau T^{-\nicefrac{1}{3}}).\]

Let's fix $Z\subseteq K$. Assuming that $\DOAlg(\hat{\cv}^+, Z, \hat{\Mmu})$ chooses the same matching as $\DOAlg(\cv^+, Z, \Mmu)$, its expected reward is at least $\DOAlg(\cv^+, Z, \Mmu)$'s expected reward minus two factors: The difference in aggregate values ($\tilde{O}(n\tau T^{-\nicefrac{1}{3}})$) and the difference in preferences scaled by the matching size ($\tau \cdot \tilde{O}(T^{-\nicefrac{1}{3}})$).

As $\DOAlg$ is an optimal algorithm, if $\DOAlg(\hat{\cv}^+, Z, \hat{\Mmu})$ does not choose the same match as $\DOAlg(\cv^+, Z, \Mmu)$, it means that its reward would be better than the reward it could achieve by selecting the match of $\DOAlg(\cv^+, Z, \Mmu)$. Overall, we get that
\[
\DOAlg(\hat{\cv}^+,Z,\hat{\Mmu})\geq \DOAlg(\cv^+,Z,\Mmu) - \tilde{O}(n\tau T^{-\nicefrac{1}{3}}) - \tau\cdot \tilde{O}(T^{-\nicefrac{1}{3}}) = \DOAlg(\cv^+,Z,\Mmu) - \tilde{O}(n\tau T^{-\nicefrac{1}{3}}).
\]
From symmetry, it holds for every $Z\subseteq K$ that \[\abs{\DOAlg(\cv^+,Z,\Mmu)- \DOAlg(\hat{\cv}^+,Z,\hat{\Mmu})}\leq \tilde{O}(n\tau T^{-\nicefrac{1}{3}}).\] 
\end{proof}

\begin{proof}[\textnormal{\textbf{Proof of Proposition~\ref{prop: approximate OR near to match}}}]
We prove the proposition by showing the following inequalities.
\begin{align*}
    &V_{1:\tau}^{\Mmu,\ro}(\OR(\hat{\ro},\hat{\Mmu},Z)) \\ \geq& V_{1:\tau}^{\hat{\Mmu},\hat{\ro}}(\OR(\hat{\ro},\hat{\Mmu},Z))- \tilde{O}(\sqrt{\gamma}n^2\tau T^{-\nicefrac{1}{3}}) \\ \geq& \DOAlg(\hat{\cv}^+,Z,\hat{\Mmu})-\nicefrac{2n}{\tau}-  \tilde{O}(\sqrt{\gamma}n^2\tau T^{-\nicefrac{1}{3}}) \\ \geq& \DOAlg(\cv^+,Z,\Mmu)- \tilde{O}(n\tau\cdot T^{-\nicefrac{1}{3}}) - \tilde{O}(\sqrt{\gamma}n^2\tau T^{-\nicefrac{1}{3}}) =  \DOAlg(\cv^+,Z,\Mmu) - \tilde{O}(\sqrt{\gamma}n^2\tau T^{-\nicefrac{1}{3}}),
\end{align*}
where the first inequality is due to Proposition~\ref{shortappendix_propReg2}. The second inequality follows from Proposition~\ref{prop_short1234321} and the last inequality stems from Proposition~\ref{prop: match is robust}.
\end{proof}

}\fi}

\end{document}